\theoremstyle{plain}
\newtheorem{theorem}{Theorem}[section]
\newtheorem{proposition}[theorem]{Proposition}
\newtheorem{lemma}[theorem]{Lemma}
\newtheorem{corollary}[theorem]{Corollary}
\newtheorem{claim}[theorem]{Claim}
\newtheorem{observation}[theorem]{Observation}
\theoremstyle{definition}
\newtheorem{definition}[theorem]{Definition}
\newtheorem{assumption}[theorem]{Assumption}
\theoremstyle{remark}
\newtheorem{remark}[theorem]{Remark}
\newtheorem{example}{Example}
\newtheorem{condition}{Condition}
\newcommand{\cmark}{\ding{51}}%
\newcommand{\mypara}[1]{\vspace{3pt} \noindent \textbf{#1} \hspace{0.02in}}
\newcolumntype{Z}{>{\centering\arraybackslash}p{1.42cm}}
\newcolumntype{Y}{>{\raggedright\arraybackslash}p{1.3cm}}
\newcommand{\diag}{\mathop{\mbox{\rm diag}}}
\newcommand{\Av}{\mathbf{A}}
\newcommand{\Bv}{\mathbf{B}}
\newcommand{\Dv}{\mathbf{D}}
\newcommand{\Ev}{\mathbf{E}}
\newcommand{\Hv}{\mathbf{H}}
\newcommand{\Iv}{\mathbf{I}}
\newcommand{\Mv}{\mathbf{M}}
\newcommand{\Pv}{\mathbf{P}}
\newcommand{\Qv}{\mathbf{Q}}
\newcommand{\Rv}{\mathbf{R}}
\newcommand{\Uv}{\mathbf{U}}
\newcommand{\Vv}{\mathbf{V}}
\newcommand{\Wv}{\mathbf{W}}
\newcommand{\Xv}{\mathbf{X}}
\newcommand{\Ac}{\mathcal{A}}
\newcommand{\Cc}{\mathcal{C}}
\newcommand{\Dc}{\mathcal{D}}
\newcommand{\Ec}{\mathcal{E}}
\newcommand{\Fc}{\mathcal{F}}
\newcommand{\Ic}{\mathcal{I}}
\newcommand{\Lc}{\mathcal{L}}
\newcommand{\Nc}{\mathcal{N}}
\newcommand{\Rc}{\mathcal{R}}
\newcommand{\Tc}{\mathcal{T}}
\newcommand{\Eb}{\mathbb{E}}
\newcommand{\Rb}{\mathbb{R}}
\newcommand{\Thetav}{\mathbf{\Theta}}
\newcommand{\Lambdav}{\mathbf{\Lambda}}
\newcommand{\Phiv}{\mathbf{\Phi}}
\def\res{{\text{res}}}
\def\dh{{\overline{d}}}
\def\dl{{\underline{d}}}
\DeclareMathOperator*{\argmin}{arg\,min}
\def \alg{\texttt{FLUTE}\xspace}
\def \algstar{\texttt{FLUTE}*\xspace}
\definecolor{clr}{rgb}{0.88,1,1}
\title{Federated Representation Learning in the Under-Parameterized Regime}
\author{
 Renpu Liu\thanks{School of EECS, The Pennsylvania State University, University Park, PA, USA. Correspondence to: Jing Yang <yangjing@psu.edu>.},~~~
 Cong Shen\thanks{Department of Electrical and Computer Engineering, University of Virginia, Charlottesville, VA, USA.},~~~
 Jing Yang\footnotemark[1] 
}
\date{}
\begin{document}

\maketitle


\begingroup
\renewcommand\thefootnote{}
\footnotetext{This work has been accepted to the 41st International Conference on Machine Learning (ICML 2024).}
\endgroup


\begin{abstract}
Federated representation learning (FRL) is a popular personalized federated learning (FL) framework where clients work together to train a common representation while retaining their personalized heads. Existing studies, however, largely focus on the over-parameterized regime. In this paper, we make the initial efforts to investigate FRL in the under-parameterized regime, where the FL model is insufficient to express the variations in all ground-truth models. 
We propose a novel FRL algorithm \alg, and theoretically characterize its sample complexity and convergence rate for linear models in the under-parameterized regime. To the best of our knowledge, this is the first FRL algorithm with provable performance guarantees in this regime. \alg features a data-independent random initialization and a carefully designed objective function that aids the distillation of subspace spanned by the global optimal representation from the misaligned local representations. On the technical side, we bridge low-rank matrix approximation techniques with the FL analysis, which may be of broad interest. We also extend \alg beyond linear representations. Experimental results demonstrate that \alg outperforms state-of-the-art FRL solutions in both synthetic and real-world tasks. 
\end{abstract}

\section{Introduction}

In the development of machine learning (ML), the role of representation learning has become increasingly essential. It transforms raw data into meaningful features, reveals hidden patterns and insights in data, and facilitates efficient learning of various ML tasks such as meta-learning \citep{tripuraneni2021provable}, multi-task learning \citep{wang2016distributed}, and few-shot learning \citep{du2020few}. 

Recently, representation learning has been introduced to the federated learning (FL) framework to cope with the heterogeneous local datasets at participating clients~\citep{liang2020think}. In the FL setting, it often assumes that all clients share a common representation, which works in conjunction with personalized local heads to realize personalized prediction while harnessing the collective training power~\citep{arivazhagan2019federated, collins2021exploiting, zhong2022feddar, shen2023share}.

Existing theoretical analysis of representation learning usually assumes the adopted model is over-parameterized to almost fit the ground-truth model~\citep{tripuraneni2021provable,wang2016distributed}. While this may be valid for expressive models like Deep Neural Networks~\citep{he2016deep,liu2017survey} or Large Language Models~\citep{openai2023gpt4, touvron2023llama}, it may be too restrictive for FL on resource-constrained devices, as adopting over-parameterized models in such a framework faces several significant challenges, as elaborated below.
\vspace{-0.05in}

\begin{itemize}[topsep=0pt, leftmargin=10pt] 
    \item \textbf{Computation limitation.} In FL, edge devices like smartphones and Internet of Things (IoT) devices often have limited memory and lack computational power, which are not capable of either storing or training over-parameterized models~\citep{8664630,he2020group, kairouz2021advances}\footnote{For example, two of the widely adopted neural network models suitable for IoT or embedded devices, MobileNetV3 \citep{howard2019searching} and EfficientNet-B0 \citep{tan2019efficientnet}, only have a few million parameters and, as an example, typically process at most a few GFLOPS in a Raspberry Pi 4 \citep{ju2023efficient}. }. 
\vspace{-0.1in}

    \item \textbf{Communication overhead.} In FL, the clients need to communicate updated model information with the server frequently. It thus becomes prohibitive to transmit a huge number of model updates for devices operating with limited communication energy and bandwidth.
\vspace{-0.1in}

    \item \textbf{Privacy concern.} Existing works show that excessively expressive models may ``memorize'' relevant information from local datasets, increasing the model's susceptibility to reconstruction attacks ~\citep{hitaj2017deep, 8835269, wang2018inferring, li2020federated} or membership inference~\citep{tan2022parameters}.

\end{itemize}

Motivated by those concerns, in this work, we focus on federated representation learning (FRL) in the \textit{under-parameterized} regime, where {the parameterized model class is not rich enough to realize the ground-truth models across all clients}. This is arguably a more realistic setting for edge devices supporting FL. Meanwhile, due to the inherent limitation of the expressiveness of the under-parameterized models, the algorithm design and theoretical guarantees in the over-parameterized regime do not naturally translate to this setting. 
We summarize our main contributions as follows.
\begin{itemize}[topsep=0pt, leftmargin=10pt] 
    \item \textbf{Algorithm design.} A major challenge for FRL in the under-parameterized regime is the fact that the locally optimal representation may not be globally optimal. As a result, simply averaging the local representations may not converge to the global optimal solution. To cope with this challenge, we propose \alg, a novel FRL framework tailored for the under-parameterized setting.  To the best of our knowledge, this is the first FRL framework that focuses on the under-parameterized regime. Our algorithm design features two primary innovations. 
    First, we develop a new regularization term that generalizes the existing formulations in a non-trivial way. In particular, this new regularization term is designed to provably enhance the performance of FRL in the under-parameterized setting. 
Second, our algorithm contains a new and critical step of server-side updating by simultaneously optimizing both the representation layer and all local head layers. This represents a significant departure from existing approaches in FRL, particularly in over-parameterized settings where local heads are optimized solely on the client side. By leveraging information across these local heads, our approach could learn the ground-truth model more effectively.

\vspace{-0.1in}
\item \textbf{Theoretical guarantees.} In terms of theoretical performance, we specialize \alg to the linear setting and analyze the sample complexity required for \alg to recover a near-optimal model, as well as characterizing its convergence rate. \alg achieves a sample complexity that scales in $\tilde{\mathcal{O}}\left(\frac{\max\{d, M\}}{M\epsilon^2}\right)$ for recovering an $\epsilon$-optimal model, where $d$ is the dimension of the input data and $M$ is the number of clients. This result indicates a linear sample complexity speedup in terms of $M$ in the high dimensional setting (i.e., $d\geq M$) compared with its single-agent counterpart~\citep{hsu2014random}. Besides, it {outperforms} the sample complexity in the noiseless over-parameterized FRL setting~\citep{collins2021exploiting} in terms of both $M$ and $d$. 
Moreover, we show that \alg converges to the optimal model exponentially fast when the number of samples is sufficiently large.

\vspace{-0.1in}
\item \textbf{Technical contributions.}

In the under-parameterized regime, we must analyze the convergence of both the representation and personalized heads toward their optimal estimations. This is in sharp contrast to the over-parameterized regime, where we only need to study the convergence of the representation column space to the ground truth~\citep{collins2021exploiting, zhong2022feddar}. Towards this end, we adopt a low-rank matrix approximation framework~\citep{chen2023fast}  of the ground-truth model. However, in contrast to conventional low-rank matrix approximation, in FRL, the global model is not accessible a priori but must be learned from distributed local datasets. Thus, the technical analysis needs to bound the unavoidable gradient discrepancy in the under-parameterized regime, as well as ensure that neither gradient discrepancy nor noise-induced errors accumulate over iterations. To address these technical challenges, we first provide new concentration results to ensure that the norm of the gradient discrepancy can be bounded when local datasets are sufficiently large. We then develop iteration-dependent upper bounds for sample complexity, which guarantee that the improvement in the estimation, i.e., the 'distance' between our estimated model and the optimal low-rank model, can mitigate potential disturbances caused by gradient discrepancy and noise.

\vspace{-0.1in}
\item \textbf{Empirical evaluation.\footnote{Main experiments can be reproduced with the code provided under the following link: \url{https://github.com/RenpuLiu/flute}}} 
We conduct a series of experiments utilizing both synthetic datasets for linear \alg and real-world datasets, specifically CIFAR-10 and CIFAR-100 \citep{krizhevsky2009learning}, for general \alg. The empirical results demonstrate the advantages of \alg, as evidenced by its superior performance over baselines, particularly in the scenarios where the level of under-parameterization is significant.

\end{itemize}

\section{Related Work}
 
\textbf{Representation learning.} 
Representation learning focuses on acquiring a representation across diverse tasks to effectively extract feature information{~\citep{lecun2015deep, tripuraneni2021provable,wang2016distributed,finn2017model}}. In the linear multi-task learning setting, \citet{du2020few} characterize the optimal solution of the empirical risk minimization (ERM) problem, demonstrating that the gap between the solution and the ground-truth representation is upper bounded by ${\mathcal{O}}\big(\sqrt{\frac{M+d}{MN}}\big)$, where $d$ is the dimension of data, $M$ is the number of clients and $N$ is the number of samples per task. \citet{tripuraneni2021provable} give an upper bound 
$\mathcal{O}\big(\sqrt{\frac{d}{MN}}\big)$ using the Method-of-Moment estimator. \citet{thekumparampil2021sample} also show the $\mathcal{O}\big(\sqrt{\frac{d}{MN}}\big)$ upper bound in their work. \citet{duchi2022subspace} consider data-dependent noise and show that the sample complexity required to recover the shared subspace of the linear models scales in $O\big(\log^3(Nd)\sqrt{\frac{d}{MN}}\big)$.
These works, however, only focus on the over-parameterized regime in a centralized setting.  

\textbf{Federated representation learning.}  
Recently, representation learning has been introduced to FL 
\citep{arivazhagan2019federated,liang2020think,collins2021exploiting,yu2020salvaging}. \citet{liang2020think} propose an FRL framework named Fed-LG, where the distinct representations are stored locally and the common prediction head is forwarded to the server for aggregation. In contrast, \citet{arivazhagan2019federated} propose FedPer, where a common representation is shared among clients, with personalized local heads kept at the client side. A similar setting is adopted by FedRep \citep{collins2021exploiting}, where exponential convergence to the optimal representation in the linear setting is proved.
These works focus on the over-parameterized regime, while the under-parameterized regime has largely been overlooked.

\if{0}
\begin{table*}[t]
\centering
\vspace{-0.2cm}
\caption{Comparison of the setup with related works on the low-rank matrix factorization problem.}
\centering 
\begin{tabular}{c|c|c|c|c}
\midrule
\midrule
&  Under-Param. & Algorithm-Based & Data-Based  & Federated\\
\midrule
\citet{du2020few}                             &                     &                & \cmark     &              \\
\citet{ye2021global}                          &                     & \cmark         &            &              \\
\citet{tripuraneni2021provable}               &                     &                & \cmark     &              \\
\citet{collins2021exploiting}                 &                     & \cmark         & \cmark     & \cmark       \\
\citet{pitaval2015convergence}                & \cmark              & \cmark         &            &              \\
{\citet{ge2017spurious}}                      & \cmark              &          &            &              \\
\citet{zhu2021global}                         & \cmark              & \cmark         &            &              \\
\citet{chen2023fast}                          & \cmark              & \cmark         &            &              \\
This work                                     & \cmark              & \cmark         & \cmark     & \cmark       \\
\midrule
\midrule
\end{tabular}
\label{tab:related-low-rank}
\scriptsize
\\
The 'Under-Param.' entry denotes whether the under-parameterized setting is considered.
The 'Algorithm-based' entry specifies whether the objective function optimization relies on an iterative approach.
The 'Data-Based' entry highlights if the objective function is a learning problem over the observed data or an optimization problem directly leveraging the ground-truth model.
The 'Federated' entry indicates whether the algorithm is designed for a federated learning setting.

\end{table*}
\fi


\textbf{Low-rank matrix factorization.} Under-parameterized representation learning problem considered in this work is closely related to low-rank matrix factorization, where the objective is to find two low-rank matrices whose product is closest to a given matrix $\Phiv$.  
\citet{pitaval2015convergence} prove the global convergence of gradient search with infinitesimal step size for this problem. \citet{ge2017spurious} demonstrate that no spurious minima exists in such a problem and all saddle points are strict. Based on a revised robust strict saddle property, \citet{zhu2021global} show that the local search method such as {gradient descent} leads to a linear convergence rate with good initialization with a regularity condition on $\Phiv$. 
\citet{chen2023fast} extend the analysis in \citet{zhu2021global} to general $\Phiv$, and show that with a moderate random initialization, the gradient descent method will converge globally at a linear rate.
In the over-parameterized regime, \citet{ye2021global} proves that the gradient descent method will converge to a global minimum at a polynomial rate with random initialization. We note, however, that these works assume the perfect knowledge of $\Phiv$, which is different from the data-based {\it representation learning} problem considered in this work.

\section{Problem Formulation}
\textbf{Notations.} 
We use $\diag (x_1,\cdots,x_d)$ to denote a $d$-dimension diagonal matrix with diagonal entries $x_1,\cdots,x_d$. 
$\langle x,y\rangle$ denotes the inner product of $x$ and $y$, and $\|x\|$ denotes the Euclidean norm of vector $x$. 
We use $f \circ \psi$ to denote the composition of functions $f:\Rb^{k}\rightarrow \Rb^{m}$ and $\psi:\Rb^{d}\rightarrow \Rb^k$, i.e., $(f \circ \psi)(x)=f(\psi(x))$. 
$\mathbf{I}_d$ represents a $d\times d$ identify matrix, and $\mathbf{0}$ is a $d$-dimensional all-zero vector.

\textbf{FL with common representation.}
We consider an FL system consisting of $M$ clients and one server. Client $i$ has a local dataset $\mathcal{D}_i$ that consists of $n_i$ training samples $(x,y)$ where $x\in\Rb^d$ and $y\in\Rb^m$. For simplicity, we assume $n_i=N$ for all client $i\in[M]$. 
For $(x_{i,j}, y_{i,j})\in\Dc_i$, we assume $y_{i,j} = g_i(x_{i,j})+\xi_{i,j}$, where $x_{i,j}$ is randomly drawn according to a sub-Gaussian distribution $P_{X}$ with mean $\mathbf{0}$ and covariance matrix $\mathbf{I}_d$, $g_i: \Rb^{d}\rightarrow \Rb^m$ is a deterministic function, and $\xi_{i,j}\in\Rb^{m}$ is an independent and identically distributed (IID) centered sub-Gaussian noise vector with covariance matrix $\sigma^2\mathbf{I}_d$.
 
\if{0}
\begin{align}\label{def:FLerm}
    \min_{g_i}\frac{1}{M}\sum_{i\in[M]}\frac{1}{|\Dc_i|}\sum_{(x,y)\in\Dc_i}\ell(g_i(x),y).  
\end{align}
\fi

Federated representation learning (FRL) aims at learning both a common {representation} that suits all clients and an individual head that only fits client $i$. 
{An FL framework adopting this principle was proposed by \citet{arivazhagan2019federated}, and we follow the same framework in this paper.}
More specifically, we assume that the local model of client $i$ can be decomposed into two parts: a common representation $\psi_\Bv:\Rb^{d}\rightarrow\Rb^{k}$ shared by all clients and a local head $f_{w_i}:\Rb^{k}\rightarrow \Rb^m$, where $\Bv$ and $w_i$ are the parameters of the corresponding functions. 
Then, the ERM problem considered in this FRL framework can be formulated as: 
\begin{align}\label{def:genral-emp-risk}
    \min_{\Bv,\{w_i\}}\frac{1}{M}\sum_{i\in[M]}\frac{1}{N}\sum_{(x,y)\in\Dc_i}\ell((f_{w_i}\circ\psi_{\Bv})(x),y).  
\end{align} 
This formulation leverages the common representation while accommodating data heterogeneity among clients, facilitating efficient personalized model training~\citep{arivazhagan2019federated, collins2021exploiting}.

In this work, we focus on the under-parameterized setting in FRL, which is formally defined as follows.
\if{0}
\begin{definition}[Under-Parameterization in FRL.]\label{def:under-param}
    Given a local head class $\Fc$ and a representation class $\Psi$, an FRL model (class) is called under-parameterized if it does not contain all ground-truth models, namely, there exists at least one client $i$ such that $\psi\circ f\neq g_i$ for any $\psi\in\Psi$, $f\in\Fc$.
\end{definition}
\fi

\begin{definition}[Under-Parameterization in FRL]\label{def:under-param}
    Given a common representation class $\Psi$ and a collection of local head classes $\{\Fc_i\}_{i=1}^M$, an FRL problem is under-parameterized if there does not exist a representation $\psi\in\Psi$, and a collection of functions $f_1\times f_2\ldots\times f_M\in \Fc_1\times\Fc_2\ldots\times \Fc_M$ such that $f_i\circ \psi = g_i$ for all $i\in [M]$.
\end{definition}

The over-parameterization in FRL can be defined in a symmetric form. 
This definition aligns with the over-parameterized frameworks in matrix approximation, as detailed in \citet{jiang2022algorithmic, ye2021global}, where over-parameterization is characterized by the rank of the representation being no-less than that of the ground-truth model. It also encompasses the definition in central statistical learning~\citep{Belkin_2019, oneto2023we}, where over-parameterization is defined as the predictor's function class being sufficiently rich to approximate the global minimum.

While various algorithms have been developed and analyzed in the over-parameterized setting~\citep{arivazhagan2019federated,liang2020think,collins2021exploiting}, to the best of our knowledge, under-parameterized FRL has not been studied in the literature before. This is, however, arguably a more practical setting in large-scale FRL supported by a massive number of resources-scarce IoT devices, as such IoT devices usually cannot support the storage, computation, and communication of models parameterized by a large number of parameters, while the task heterogeneity across massive devices imposes significant challenges on the model class to reconstruct $M$ different local models perfectly\footnote{Continuing the previous example of MobileNet, which can be adapted for object detection for autonomous driving \citep{chen2021deep}, it is known that a single model may not capture very detailed or complex features of the complete environment, including pedestrians, cyclists, and various road signs~\citep{chen2022mobile}.}.

\mypara{Low-dimensional linear representation.}
We first focus on the \emph{linear} setting in which all local models $g_i$ are linear, i.e., $y_{i,j} = \phi_i^\top x_{i,j} +\xi_{i,j}$ for $(x_{i,j}, y_{i,j})\in\Dc_i$. Denote $\Phiv := [\phi_1,\cdots,\phi_M]\in\Rb^{d\times M}$ and assume its rank is $r$. Then, similar to the works of \citet{collins2021exploiting,arivazhagan2019federated}, we consider a linear prediction model where {$(f_{w_i}\circ \psi_{\Bv})(x)$} can be expressed as {$x^\top\Bv w_i$}. Here, $\Bv\in\Rb^{d\times k}$ is the common linear representation shared across clients, and $w_i\in\Rb^k$ is the local head maintained by client $i$. We denote $\Wv = [w_1,\cdots,w_M]$. Then, if we further consider the $\ell_2$ loss function, the ERM problem becomes 
\begin{align}\label{def:linear-emp-risk}
    \min_{\Bv,\Wv}\frac{1}{M}\sum_{i\in[M]}\frac{1}{N}\sum_{(x,y)\in\Dc_i}\|x^\top\Bv w_i-y\|^2.
\end{align}
\if{0}
The corresponding population risk minimization problem is
\begin{align}\label{def:linear-popu-risk}
    \min_{\Bv,\Wv}\frac{1}{M}\sum_{i\in[M]}\mathop{\mathbb{E}}_{\substack{ x\sim P_x,\\y\sim P_{i,Y|X}}}\|x^\top\Bv w_i-y\|^2.
\end{align}
\fi
We note that the existing literature usually assumes that $r\leq k$, which falls in the {over-parameterized} regime \citep{zhu2021global}. The over-parameterized assumption implies the existence of a pair of $\Bv$ and $\Wv$ that can accurately recover the ground-truth model $\Phiv$, i.e., $\Bv\Wv = \Phiv$. Thus, the learning goal in the over-parameterized regime is to identify such a pair using available training data{~\citep{du2020few, tripuraneni2021provable,collins2021exploiting, shen2023share}}. 

In contrast to the existing works, in the \emph{under-parameterized} regime given in \Cref{def:under-param}, we have $r > k$, i.e., there does not exist matrices $\Bv\in \Rb^{d\times k}$ and $\Wv\in\Rb^{k\times M}$ such that $\Bv\Wv=\Phiv$. Our objective is to learn a common representation and local heads $(\Bv,\Wv)$ in the federated learning framework such that $\|\Bv\Wv-\Phiv\|_F^2$ reaches its minimum, although $\Phiv$ is not explicitly given but embedded in local datasets.

\section{The \alg Algorithm}\label{sec:linear-flute}\vspace{-0.01in}
In this section, we present the \alg algorithm for the linear model. We will first highlight the unique challenges the under-parameterized setting brings, and then introduce our algorithm design.

\subsection{Challenges}
In order to understand the fundamental differences between the over- and under-parameterized regimes, we first assume $\Phiv$ is known beforehand, and consider solving the following optimization problem: 
\begin{align}\label{eqn:ama}
    (\Bv^*,\Wv^*)=\mathop{\arg\min}_{
    \substack{\Bv\in\Rb^{d\times k}, \Wv\in\Rb^{k\times M}}
    }
    \|\Bv\Wv-\Phiv\|_F^2.
\end{align}
Denote the singular value decomposition (SVD) of $\Phiv$ as $\Uv\Lambdav\Vv^\top$, where $\Uv$ and $\Vv$ are two unitary matrices, and $\Lambdav$ is a diagonal matrix. When $k\geq r$, i.e., the model is over-parameterized, $\Bv^*$ and $\Wv^*$ can be explicitly constructed from the  SVD of $\Phiv$, i.e., any $(\Bv,\Wv)$ satisfying $\Bv\Wv = \Uv\Lambdav\Vv^\top$ is an optimizer to \Cref{eqn:ama}. When $k<r$, i.e., in the under-parameterized regime, we can no longer recover the full matrix $\Phiv$ with $\Bv^*$ and $\Wv^*$.  Instead, existing result~\citep{golub2013matrix} states that we can only determine that the solution must satisfy $\Bv^*\Wv^* = \Uv_k\Lambdav_k\Vv^\top_k$, where $\Lambdav_k$ is a $k\times k$ diagonal matrix with the $k$ largest singular values of $\Phiv$ as the diagonal entries. 

Compared with the over-parameterized setting, learning $\Bv^*$ and $\Wv^*$ from decentralized datasets is more challenging in the under-parameterized setting. Let {$\Bv_i^\diamond$} be the locally optimized representation at client $i$, i.e., $(\Bv_i^\diamond,w_i^\diamond)=\argmin \|\Bv_i w_i-\phi_i\|^2$. Then, in the over-parameterized setting, {$\Bv_i^\diamond$} will always stay in the same column space as $\Bv^*$, i.e., $\text{span}(\Bv_i^{\diamond})\subseteq\text{span}(\Bv^*)$, $\forall i\in[M]$. However, for the under-parameterized setting, it is possible that $\text{span}(\Bv_i^\diamond)\not\subset\text{span}(\Bv^*)$, $\exists i\in[M]$. How to aggregate the locally obtained {$\Bv_i^\diamond$} to correctly {span} the column space of $\Bv^*$ thus becomes a unique challenge in the under-parameterized setting and requires novel techniques different from those in the existing over-parameterized literature.

\if{0}
existing result~\citep{golub2013matrix} states that the solution satisfies $\Bv^*\Wv^* = \Uv_k\Lambdav_k\Vv^T_k$, where $\Lambdav_k$ is a $k\times k$ diagonal matrix, its first $k$ diagonal entries equal to the first $k$ largest singular values of $\Phiv$. \congc{say a couple of sentences why we cannot analytically get $\Bv^*$ and $\Wv^*$ in this regime even if we know $\Phiv$.}
\rpc{I don't quite get the point}
\congc{For UP regime, you can get closed-form expressions for $\Bv^*$ and $\Wv^*$, but here we can only say $\Bv^*$ and $\Wv^*$ satisfy $\Bv^*\Wv^* = \Uv_k\Lambdav_k\Vv^T_k$. Explain why we cannot similarly get closed-form expressions for $\Bv^*$ and $\Wv^*$.}
\rpc{like over-param., for any $\Pv$, $\Bv^* = \Uv_r\Lambdav_k^{-\frac{1}{2}}[\Pv\quad \mathbf{0}]$ and $\Wv^* = [\Pv\quad \mathbf{0}]^T\Lambdav_k^{-\frac{1}{2}}\Vv_r^T$ is a pair of optimizer for under-param. problem.} \congc{Then we have the same closed-form sol for over- and under-param? What is the point of having this paragraph?}
\rpc{Here I want to highlight that for over-param, $\Bv^*$ and $\Wv^*$ can recover $\Phiv$: $\Bv^*\Wv^* = \Phiv$, but for under-param case, optimal solution can only recover the top k principal component of $\Phiv$: $\Bv^*\Wv^* = \Uv_k\Lambda_k\Vv^k$}

Compared with the over-parameterized setting, learning $\Bv^*$ and $\Wv^*$ from decentralized datasets is more challenging in the under-parameterized setting, since in the over-parameterized setting, locally optimizing the representation will stay in the same column space as $\Bv^*$, i.e., $\text{span}(\Bv)\subseteq\text{span}(\Bv^*)$, but for the under-parameterized setting, it is possible that locally optimizing the representation will make it deviate from the column space of $\Bv^*$, $\text{span}(\Bv)\not\subset\text{span}(\Bv^*)$.
\fi
\if{0}
These include FedPer~\citep{arivazhagan2019federated}, FedRep~\citep{collins2021exploiting} and \texttt{CENTAUR}~\citep{shen2023share}, in which clients retain their local head and only transmit their updated representation to the server for an aggregation. 
The intuitive explanation is {the following.} In the over-parameterized setting, the locally updated representation will align with the column space of $\Bv^*$, i.e., $\text{span}(\Bv^*)$, then the server has the capacity to accurately recover $\text{span}(\Bv^*)$ by only receiving representations from local clients. However, the under-parameterized setting {has a unique} challenge that {\bf locally updated representations can deviate significantly from $\text{span}(\Bv^*)$}. In this case, the absence of local heads during the {server aggregation will negatively impact the server's capability to reconstruct $\text{span}(\Bv^*)$.
To illustrate this phenomenon, we present a toy example, which shows the potential limitation of algorithms that only transmit the representation in the under-parameterized context.
\fi
\begin{example}\label{exp:example1}
Consider a scenario {that $\Phiv\in\Rb^{d\times M}$ with $M<d$}. We assume 
$        \Phiv = \Uv \diag (\lambda_1,\cdots,\lambda_M),$
where $\Uv:=[u_1,\ldots,u_M]$ is a unitary matrix and $\lambda_1>\lambda_2>\cdots>\lambda_M>0$. Assume $k=1$. Then, we have $\Bv^*\Wv^*=u_1\lambda_1$. Assume each client $i$ can perfectly recover its local model $\phi_i=u_i\lambda_i$ with $\Bv_i=u_i\lambda_i/w_i$. Then, depending on the value of $w_i$'s, the aggregated representation $\Bv:=\frac{1}{M}\sum_i \Bv_i$ may exhibit different properties. For example, if $w_i=\lambda_i$, we have $\Bv = \frac{1}{M}\sum_i u_i$, which deviates significantly from the column space of $\Bv^*$. On the other hand, if $w_i=\sqrt{\lambda_i/M}$, then $\Bv_i=u_i\sqrt{M\lambda_i}$, while  $\Bv = \sum_i u_i \sqrt{\lambda_i/M}$. Thus, $u_1$ will have a heavier weight in the aggregated representation, which will eventually help recover the column space of $\Bv^*$. Intuitively, to accurately recover the column space of $\Bv^*$, in the under-parameterized setting, it requires a more sophisticated algorithm design not just to estimate the column space of $\Phiv$, but also \textit{distill the most significant components} of it from distributed datasets in each aggregation. 
\end{example}

\subsection{A New Loss Function}\label{subsec:4.2}
Motivated by the observation in \Cref{exp:example1}, instead of considering the original problem in \eqref{def:linear-emp-risk}, we introduce {two new regularization terms and consider} the following ERM problem:
\begin{align}\label{def:AMA-emp}\vspace{-0.2in}
  \min_{\Bv,\{w_i\}_{i=1}^M}
       & \frac{1}{M}\sum_{i\in[M]}\frac{1}{N}\sum_{(x,y)\in\Dc_i}
        \|x^\top\Bv w_i-y\|^2
        \underbrace{-\gamma_1\|\Bv\Wv\|_F^2}_{\text{(I)}}+\underbrace{\gamma_2(\|\Bv^\top\Bv\|_F^2+\|\Wv\Wv^\top\|_F^2)}_{\text{(II)}}.
\end{align}
In \Cref{def:AMA-emp}, we introduce the regularization term (I) into the loss function, with the purpose of preserving the top-$k$ significant components of $\Bv\Wv$. By preserving the significant components in \(\Bv\), the term (I) mitigates local over-fitting induced during local updates. However, minimizing term (I) alone would result in a uniform enlargement of all $k$ singular values of $\Bv\Wv$. To address this, we further incorporate the regularization term (II). This term is specifically formulated to promote the $k$ most significant components and suppress the less significant ones. By doing so, it aids the server in accurately distilling the correct subspace spanned by the optimal representation. We note that when $\gamma_1=2\gamma_2$, (I) and (II) together recover the conventional penalty term $\|\Bv^\top\Bv-\Wv\Wv^\top\|_F^2$, which has been previously adopted for low-rank matrix approximation~\citep{chen2023fast,zhu2021global,wang2016unified} and multi-task learning~\citep{tripuraneni2021provable}.

\if 0
The regularization term $\|\Bv^\top\Bv-\Wv\Wv^\top\|_F^2$ has been previously adopted for low-rank matrix approximation~\citep{chen2023fast,zhu2021global,wang2016unified} and multi-task learning 
\citep{tripuraneni2021provable}. We adopt it here to {\it help the server automatically distill the correct subspace spanned by the optimal representation in the under-parameterized setting}.

Returning to Example \ref{exp:example1}, we observe that when the locally learned optimal representation and head are $\Bv_i =\sqrt{M\lambda_i}u_i$ and $w_i=\sqrt{\lambda_1/M}$, respectively, 
$\|\Bv^\top\Bv-\Wv\Wv^\top\|_F^2$ equals zero. 
Thus, including $\|\Bv^\top\Bv-\Wv\Wv^\top\|_F^2$ facilitates the proper selection of the column space of the representation during aggregation, which is critical in the under-parameterized setting. 
\fi

\begin{algorithm}[t]
\caption{\alg Linear}\label{alg:fedUP}
\begin{algorithmic}[1]
 \STATE {\bf Input:} Learning rates $\eta_{l}$ and $\eta_{r}$, regularization parameters $\gamma_1$ and $\gamma_2$, communication round $T$, constant $\alpha$.
\STATE {\bf Initialization:}  
{All entries of $\Bv^0$ and $\Wv^0$ are independently sampled form $\Nc(0,\alpha^2)$.}
\FOR{$t\in[T]$}
\STATE Server sends $\mathbf{B}^{t-1}$ and ${w}_i^{t-1}$ to client $i$, $\forall i\in [M]$.
\FOR{client $i\in[M]$ {in parallel}} 
\STATE Calculates $\nabla_{{w}_i^{t-1}}L_i\mleft({w}_i^{t-1},\mathbf{B}^{t-1}\mright)$ and $\nabla_{\mathbf{B}^{t-1}}L_i\mleft({w}_i^{t-1},\mathbf{B}^{t-1}\mright)$.
\STATE {Sends gradients to the server.}
\ENDFOR
\STATE {Server updates} according to Equations~\eqref{equ:server-update1} to \eqref{equ:server-update}.
\ENDFOR
\end{algorithmic}
\end{algorithm}

\subsection{\alg for Linear Model}

In order to solve the optimization problem given in \eqref{def:AMA-emp}, we introduce an algorithm named \alg (\textbf{F}erated \textbf{L}earning in \textbf{U}nder-parame\textbf{T}erized R\textbf{E}gime), which is compactly described in \Cref{alg:fedUP}. Specifically, for each epoch, the algorithm consists of three major steps, namely, \emph{server broadcast}, \emph{client update}, and \emph{server update}.

\textbf{Server broadcast.} At the beginning of epoch $t$, the server broadcasts the representation $\Bv^{t-1}$ to all clients, and $w_i^{t-1}$ (i.e., the $i$-th column of $\Wv^{t-1}$) to each individual client $i$.

\textbf{Client update.} 
Denoting the local loss function as \[L_i = \frac{1}{N}\sum_{(x,y)\in\Dc_i} \|x^\top\Bv w_i-y\|^2,\] the client calculates the gradient {of $L_i$} with respect to $w_i^{t-1}$ and $\Bv_i^{t-1}$, respectively, and uploads them to the server.
\if{0}
\begin{align}
    \nabla_{\mathbf{w}_i^{t-1}}L_i\mleft({w}_i^{t-1},\mathbf{B}^{t-1}\mright),\quad
    \nabla_{\mathbf{B}^{t-1}}L_i\mleft({w}_i^{t-1},\mathbf{B}^{t-1}\mright),
\end{align}
{Finally, client $i$ uploads them to the server.}
\fi

\textbf{Server update.}
After receiving $\nabla_{\mathbf{w}_i^{t-1}}L_i$ and $\nabla_{\mathbf{B}^{t-1}}L_i$ from all clients, the server first aggregates them to update the global representation and local heads as follows:
\begin{equation} \label{equ:server-update1}
\begin{aligned}
        \bar{\Bv}^t 
        &=
        \mathbf{B}^{t-1}-{\eta_l}\sum_{i\in[M]}\nabla_{\mathbf{B}^{t-1}}L_i\mleft({w}_i^{t-1},\mathbf{B}^{t-1}\mright),\\
             {w}^{t}_i &= w_i^{t-1}-\eta_{l}\nabla_{w_i^{t-1}}L_i\mleft({w}_i^{t-1},\mathbf{B}^{t-1}\mright), \forall i\in[M],
    \end{aligned}
\end{equation}
after which it constructs matrix $\bar{\Wv}^t$ by setting $\bar{\Wv}^t :=
        [w_1^t,\cdots,w_M^t]$.
It then performs another step of gradient descent with respect to the regularization term in \eqref{def:AMA-emp} to refine the global representation and local heads and obtain $\Bv^t$ and $\Wv^t$:
\begin{equation}
\begin{aligned}
&{\Bv}^t =\bar{\Bv}^t+\gamma_1\eta_r\nabla_{\Bv^{t-1}}\|\Bv^{t-1}\Wv^{t-1}\|_F^2-\gamma_2\eta_r\nabla_{\Bv^{t-1}}(\|(\Bv^{t-1})^\top\Bv^{t-1}\|_F^2+\|\Wv^{t-1}(\Wv^{t-1})^\top\|_F^2), \label{equ:server-update}\\
&{\Wv}^t =\bar{\Wv}^t+\gamma_1\eta_r\nabla_{\Wv^{t-1}}\|\Bv^{t-1}\Wv^{t-1}\|_F^2-\gamma_2\eta_r\nabla_{\Wv^{t-1}}(\|(\Bv^{t-1})^\top\Bv^{t-1}\|_F^2+\|\Wv^{t-1}(\Wv^{t-1})^\top\|_F^2),
\end{aligned}
\end{equation}
\if{0}
\begin{align}
    &{\Bv}^t=\bar{\Bv}^t-\rp{\lambda\eta_r} \nabla_{\Bv^{t-1}}\|(\Bv^{t-1})^T\Bv^{t-1}-\Wv^{t-1}(\Wv^{t-1})^T\|^2_F,\\
    &\Wv^t = \bar{\Wv}^t-\rp{\lambda\eta_r} \nabla_{\Wv^{t-1}}\|(\Bv^{t-1})^T\Bv^{t-1}-\Wv^{t-1}(\Wv^{t-1})^T\|^2_F.
\end{align}
\fi
The procedure repeats until some stop criterion is satisfied.

\if{0}
\begin{align*}
\bar{\Bv}^t &= \mathbf{B}^{t-1}-{\eta_l}\sum_{i\in[M]}\nabla_{\mathbf{B}^{t-1}}L_i\mleft({w}_i^{t-1},\mathbf{B}^{t-1}\mright)\\
{w}^{t}_i &= w_i^{t-1}-\eta_{l}\nabla_{\mathbf{w}_i^{t-1}}L_i\mleft({w}_i^{t-1},\mathbf{B}^{t-1}\mright),\quad \forall i\in[M]\\
(\bar{\Wv}^t)^T &\leftarrow [w_1^t,\cdots,w_M^t];\\
{\Bv}^t & \leftarrow \bar{\Bv}^t-{\lambda\eta_r}\nabla\|(\Bv^{t-1})^T\Bv^{t-1}-\Wv^{t-1}(\Wv^{t-1})^T\|^2_F\\
\Wv^t & \leftarrow \bar{\Wv}^t-{\lambda\eta_r}\nabla\|(\Bv^{t-1})^T\Bv^{t-1}-\Wv^{t-1}(\Wv^{t-1})^T\|^2_F
\end{align*}
\fi
\begin{remark}
    {When $\alpha$ is small,} the initialization of $\Bv^0$ and $\Wv^0$ would ensure that the largest singular value of $\Bv^0(\Wv^0)^T$ is sufficiently small with high probability. As we will show in the next section, such initialization guarantees that \alg converges to the global minimum. 
\end{remark}

The major differences between \alg and existing FRL algorithms such as FedRep~\citep{collins2021exploiting}, FedRod~\citep{chen2021bridging}, and FedCP~\citep{zhang2023fedcp} lie in the server-side model updating. While these existing algorithms typically involve transmitting only the shared representation layers of local models to the server, with local heads being optimized and utilized exclusively at the client side, \alg requires clients to transmit both the shared representation layers and the local heads to the server. {The increased communication cost is fundamentally necessary due to the unique nature of FRL in the under-parameterized regime, as it} allows for server-side optimization, not just aggregation, of the entire model. Furthermore, \alg introduces additional data-free penalty terms to the server-side updates. These terms are designed to guide the shared representation to converge toward the global minimum by leveraging the information in the local heads. This approach represents a significant paradigm shift in federated learning, aiming to enhance the overall global performance of the FRL model.

\section{Theoretical Guarantees}\label{sec:section5}
\if{0}
\begin{assumption}[IID data distribution]\label{assump:sub-g-x}
    For $(x,y)\in\Dc_i$ for client $i$, the samples $x\in\Rb^d$ are IID sub-gaussian vectors with $0$ mean and $\Iv_d$ covariance matrix. 
\end{assumption}

\rp{Also, in this work, we assume the noise term is sub-Gaussin, which is a standard assumption in the relevant literature.
\begin{assumption}[Sub-Gaussian noise]\label{assump:sub-g-noise}
    For $(x,x^\top \phi_i+\xi)\in\Dc_i$ for any client $i$, $\xi$ is a centered sub-Gaussian noise with covariance $\sigma^2$.
\end{assumption}
}
\fi

Before introducing our main theorem, 
we denote $\dl = \min\{d,M\}$ and $\dh = \max\{d,M\} $. We also denote $\lambda_1\geq\lambda_2\geq\cdots\geq\lambda_{\dl}$
as the ordered singular values of $\Phiv$ with $\Delta := 2(\lambda_k-\lambda_{k+1})$. Denote $E = \sum_{i}\lambda_i^2$.  We assume $\Delta>0$ throughout the analysis.

\subsection{Main Results}

\begin{theorem}[Sample complexity]\label{thm:thm1}
    Set $\gamma_1 = \frac{1}{4}$ and $\gamma_2 = \frac{1}{8}$ in \cref{def:AMA-emp}. Let $0<\alpha \lesssim \frac{1}{10d}$, and $\eta:=\eta_l=\eta_r\lesssim \frac{\Delta^2}{228{\lambda_1}^3}$. 
    Then, for any $\epsilon>0$ and \(0\leq\delta\leq 1\), under \Cref{alg:fedUP}, there exists positive constants $c$ and $c'$ such that when the number of samples per client satisfies 
    $$
    N\geq c\frac{\lambda^4_1 k(\dh+\log\frac{1}{\delta}+\log \log \frac{1}{\epsilon})(\sqrt{k(\lambda_1)^2 +E}+\sqrt{k}\sigma)^2}{M\eta^2\Delta^6\epsilon^2},
    $$
 and $t\geq \frac{\log(\epsilon\sqrt{M}\eta\Delta^2/{c'\lambda_1^2\sqrt{k}})}{\log(1-\eta\Delta/{16})}$, with probability at least $1-\delta$, 
    \begin{align}
        \frac{1}{M}\sum_{i\in[M]}\|\Bv^t w^t_i-\Bv^* w_i^*\|\leq \epsilon.
    \end{align} 
\end{theorem}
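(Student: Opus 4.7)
The plan is to couple the trajectory produced by \Cref{alg:fedUP} to an idealized ``population'' trajectory that would result if each client could compute the true gradient with respect to $\phi_i$ rather than a sample-based surrogate. On the population side, the loss in \eqref{def:AMA-emp} reduces (up to constants independent of $(\Bv,\Wv)$) to $\|\Bv\Wv-\Phiv\|_F^2$ plus the balancing regularizer $\|\Bv^\top\Bv-\Wv\Wv^\top\|_F^2$ that is recovered when $\gamma_1=2\gamma_2$. With the choice $\gamma_1=\tfrac14, \gamma_2=\tfrac18$, this is exactly the low-rank matrix approximation objective studied by \citet{chen2023fast, zhu2021global}, so their analysis yields linear convergence $\|\Bv^t\Wv^t-\Uv_k\Lambdav_k\Vv_k^\top\|_F \leq (1-\eta\Delta/16)^t \cdot \mathrm{poly}$ from the small random initialization, provided $\alpha\lesssim 1/(10d)$ puts the iterate into the basin of attraction with high probability. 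The sample-based algorithm is then viewed as a perturbation of this reference dynamics, and the argument reduces to controlling the per-iteration perturbation and its propagation.

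\textbf{Gradient concentration.} The key quantitative step is to bound the discrepancy $\Delta_i^t := \nabla_{\Bv} L_i(\Bv^{t-1},w_i^{t-1}) - 2(\Bv^{t-1}w_i^{t-1}-\phi_i)(w_i^{t-1})^\top$ (and its $w_i$-analogue) uniformly over the iterates produced by the algorithm. Each $\Delta_i^t$ is an average of $N$ independent terms of the form $(x_{i,j}x_{i,j}^\top - \Iv_d)(\Bv^{t-1}w_i^{t-1}-\phi_i)(w_i^{t-1})^\top + \xi_{i,j}x_{i,j}^\top (w_i^{t-1})^\top/\cdots$, which, under the sub-Gaussian assumptions, concentrate at rate $\tilde{\mathcal{O}}\bigl((\|\Bv w_i-\phi_i\| + \sigma)\sqrt{(d+\log(1/\delta))/N}\bigr)$ per client via standard matrix Bernstein / Hanson–Wright style bounds. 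After aggregating across $M$ clients and exploiting independence, the effective rate on the server-side averaged gradient improves by a factor of $\sqrt{M}$ in the high-dimensional regime, producing the $\max\{d,M\}/M$ scaling that matches the theorem.

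\textbf{Induction across iterations.} With the one-step perturbation bound in hand, I would set up an induction over $t$ showing simultaneously: (i) the balancing quantity $\|(\Bv^t)^\top\Bv^t - \Wv^t(\Wv^t)^\top\|_F$ stays small; (ii) the spectral norms $\|\Bv^t\|,\|\Wv^t\|$ are uniformly bounded by $\mathcal{O}(\sqrt{\lambda_1})$; (iii) the error $\mathrm{err}^t := \|\Bv^t\Wv^t - \Uv_k\Lambdav_k\Vv_k^\top\|_F$ obeys a contraction of the form $\mathrm{err}^t \leq (1-\eta\Delta/16)\mathrm{err}^{t-1} + \eta\cdot\rho^t$, where $\rho^t$ is the aggregated gradient-discrepancy norm. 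A union bound across the $t^\star \asymp \log(1/\epsilon)/\log(1/(1-\eta\Delta/16))$ iterations contributes the $\log\log(1/\epsilon)$ factor in the sample complexity. Unrolling the recursion yields $\mathrm{err}^{t^\star} \lesssim (1-\eta\Delta/16)^{t^\star}\mathrm{err}^0 + \eta\rho/(\eta\Delta/16) \lesssim \epsilon$ once $N$ satisfies the stated bound, and translating from $\|\Bv^t\Wv^t-\Bv^*\Wv^*\|_F$ to the column-wise error $\tfrac1M\sum_i\|\Bv^t w_i^t - \Bv^* w_i^*\|$ uses Cauchy–Schwarz together with the column-pairing $w_i^* = [\Wv^*]_{\cdot,i}$.

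\textbf{Main obstacle.} The hardest part, I expect, is making the induction genuinely self-contained: the per-step perturbation $\rho^t$ depends on $\|\Bv^{t-1}w_i^{t-1}-\phi_i\|$, which in the under-parameterized regime does \emph{not} vanish as $t\to\infty$ (only its projection onto the top-$k$ subspace does). One therefore cannot bootstrap the discrepancy bound to shrink with $\mathrm{err}^t$; instead one must argue that the residual $\|\Bv w_i-\phi_i\|$ is deterministically bounded by $\|\Phiv\|_F$ plus a decaying correction, and absorb the resulting constant contribution into the sample-complexity threshold. This is precisely where the author's ``iteration-dependent upper bounds for sample complexity'' comment (end of the Technical Contributions bullet) becomes load-bearing. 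Handling the coupling between the balancing regularizer and this non-vanishing residual, while ensuring the contraction factor $(1-\eta\Delta/16)$ strictly dominates any amplification from the server-side regularization gradients, is where the bulk of the careful bookkeeping will lie.
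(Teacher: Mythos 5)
Your high-level plan — couple the sample-based iterates to the ``known-$\Phiv$'' low-rank matrix factorization dynamics of \citet{chen2023fast}, control the per-step gradient discrepancy by sub-exponential concentration, run an induction, and finish with Cauchy--Schwarz to pass from $\|\Bv^t\Wv^t-\Bv^*\Wv^*\|_F$ to $\tfrac1M\sum_i\|\Bv^t w_i^t-\Bv^* w_i^*\|$ — is recognizably the same overall strategy as the paper's, and your ``main obstacle'' paragraph is exactly on target. But there are two places where the sketch as written would not close.

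First, the one-step contraction you posit, $\mathrm{err}^t \leq (1-\eta\Delta/16)\,\mathrm{err}^{t-1}+\eta\rho^t$, is not what the paper proves and is not obviously true. The paper's proof first lifts to the symmetric parameterization $\Thetav^t=[(\Bv^t_\star)^\top\!\pm\Wv^t_\star]/\sqrt2$ and then decomposes $\Thetav^t$ into a top $k\times k$ block $\Thetav^t_k$ and a residual block $\Thetav^t_{\text{res}}$. Two quantities are tracked: the inverse SNR $\sigma_1^2(\Thetav^t_{\text{res}})/\sigma_k^2(\Thetav^t_k)$ (\Cref{lemma:lemma10}) and the top-block approximation $\Dv^t=\Thetav^t_k(\Thetav^t_k)^\top-\tilde\Lambdav_k$ (\Cref{lemma:lemma12}). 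The crucial recursion is $\sigma_1(\Dv^t)\leq(1-\eta\Delta/8)\sigma_1(\Dv^{t-1})+\sigma_1^2(\Thetav_{\text{res}}^{t-1})+\text{(noise)}$: the forcing term is $\sigma_1^2(\Thetav_{\text{res}}^{t-1})$, not a multiple of the overall error, and one must prove \emph{separately} that $\sigma_1(\Thetav_{\text{res}}^t)$ decays geometrically (via the SNR lemma) before the $\Dv^t$ recursion can be unrolled. Collapsing this two-variable, coupled induction into a single scalar contraction on $\mathrm{err}^t$ skips the step that is actually hard and is precisely where the ``distill the top-$k$ components'' difficulty of \Cref{exp:example1} lives. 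Also, the paper only establishes the contraction inside an absorbing region $\Rc$, and showing $\Rc$ remains absorbing despite noise (\Cref{lemma:lemma3,lemma:lemma4,lemma:lemma5}) is itself a nontrivial part of the induction you'd need.

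Second, your explanation of the $M$-speedup is not the mechanism the paper uses. You attribute the $\max\{d,M\}/M$ scaling to a $\sqrt{M}$ improvement in the concentration of the \emph{aggregated} gradient across clients; but in \Cref{equ:server-update1} the server \emph{sums} (not averages) the client gradients, and the paper's concentration lemmas (\Cref{lemma:concentration-of-Q,lemma:concentration-of-tildeQ}) bound that sum without a $1/\sqrt{M}$ gain — they produce a rate $\propto\sqrt{\dh}/\sqrt{N}$ with $\dh=\max\{d,M\}$. The $1/M$ enters only at the very end: the Frobenius-norm bound $\|\Bv^t\Wv^t-\Bv^*\Wv^*\|_F\lesssim\kappa\sqrt k(1-\eta\Delta/16)^t$ is $M$-free, and dividing by $M$ after Cauchy--Schwarz over the $M$ columns yields the $1/\sqrt M$ per-iteration factor, which then feeds into the sample-complexity threshold as $1/(M\epsilon^2)$ once you invert $(1-\eta\Delta/16)^T\lesssim\epsilon\sqrt{M/k}$. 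So the $M$-gain comes from the column-wise-to-Frobenius translation, not from gradient averaging.

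In short: right strategy, correct identification of the under-parameterized obstacle, but the induction as sketched omits the SNR/top-block decomposition that makes the per-step bound true, and the source of the $M$-dependence in the bound is misattributed.
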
 
\if{0}
\begin{remark}[Interpretation of \cref{thm:thm1}]
    As shown in \cref{thm:thm1}, in high-dimensional data settings, we observe an $M$-fold speedup in sample complexity compared to the single-agent case. However, 
    in scenarios where the data dimension $d$ is less than the number of clients $M$, the per-user sample requirement essentially stabilizes to a constant.
    One intuitive explanation is that in a lower-dimensional space ($d<M$), each additional agent might contribute diminishing marginal information. The available information needed to recover the top-$k$ significant components of ground-truth model $\Phiv$ might already be sufficiently captured by the existing agents, making the contribution of additional agents less significant.
\end{remark}

\jingc{when $d>M$, i.e., when the data is high-dimensional, the result indicates $M$-fold speedup compared with the single-agent case. However, when $d<M$, the required number of samples per user becomes a constant. Need to justify why linear speedup cannot be achieved in this regime.}
\jingc{should provide a better interpretation of the results and explain the dependency on all important parameters intuitively.}
\fi

\begin{remark}
\Cref{thm:thm1} indicates that the per-client sample complexity scales in $\tilde{\mathcal{O}}\left(\frac{\max\{d,M\}}{M\epsilon^2}\right)$. Compared with the single-client setting, which is essentially a noisy linear regression problem with sample complexity $\mathcal{O}(\frac{d}{\epsilon^2})$ \citep{hsu2014random}, \alg achieves a linear speedup in terms of $M$ in the high dimensional setting (i.e., $d>M$). When $d<M$, the sample complexity of \alg becomes independent with $M$, which is due to the fact that each client requires a minimum number of samples to have the local optimization problem non-ill-conditioned. Compared with the sample complexity $\mathcal{O}\left(\frac{d}{M}+\log(M)\right)$ of FRL in the noiseless over-parameterized setting \citep{collins2021exploiting}, \alg achieves more favorable dependency on $M$.   
\end{remark}

\begin{remark}
We note that the dependency on $\Delta$ and $\lambda_1$, especially $\Delta$, is unique for the under-parameterized FRL. 
For the special case when $\lambda_{k+1}^*=0$, the problem we consider essentially falls into the over-parameterized regime, and \alg can still be applied.   
\Cref{thm:thm1} shows that the sample complexity scales in
$\mathcal{O}(\frac{\max\{d,M\}}{M\epsilon^2}(\frac{\lambda_1^*}{\Delta})^{10})$.  We note that under the assumption that $\mathbf{B}^*$ consists of orthonormal columns, the SOTA sample complexity in the over-parameterized regime scales in $\mathcal{O}(\frac{\max\{d,M\}}{M\epsilon^2}\kappa^4)$ \citep{tripuraneni2021provable}, where $\kappa = {\sigma_1((\mathbf{W}^*)^\top\mathbf{W}^*)}/{\sigma_r((\mathbf{W}^*)^\top\mathbf{W}^*)}$.  Under the same assumption on $\mathbf{B}^*$, we have $\lambda_1^* =\sqrt{\sigma_1((\mathbf{W}^*)^\top\mathbf{W}^*)} $, $\Delta = \sqrt{\sigma_k((\mathbf{W}^*)^\top\mathbf{W}^*)}$, and our sample complexity then becomes $\mathcal{O}(\frac{\max\{d,M\}}{M\epsilon^2}\kappa^5)$. The additional order of $\kappa$ in the bound is due to an initial state-dependent quantity bounded by $\frac{\lambda_1^*}{\Delta}$. The detailed analysis can be found in \Cref{{sec:proof-s3}}.
\end{remark}

\begin{remark}
 The sample complexity in \Cref{thm:thm1} requires that the size of \textit{each local dataset} be sufficiently large. This is in stark contrast to the sample complexity result in existing works \citep{collins2021exploiting}, which imposes a requirement on the \textit{total number of samples in the system} instead of on each individual client/task.  
We need the size for each local dataset to be sufficiently large to ensure that every $\phi_i$ can be locally estimated with a small error so that the top $k$ components of the ground truth $\mathbf{\Phi}$ can be correctly recovered. 
\end{remark}

\begin{theorem}[Convergence rate]\label{prop:prop1}
        Set $\gamma_1$, $\gamma_2$ and $\eta$ as in \Cref{thm:thm1}. Denote $\kappa_T = (1-\frac{\eta\Delta}{16})^T$.
        Then, for a constant {$T_\Rc$} (defined in \Cref{def:TR} in \Cref{apdx:B}) and any $T>T_\Rc$, 
        there exist positive constants $c_1$ and $c_2$ such that for any \(0\leq\delta\leq 1\), when number of samples per client satisfies 
        \[N\geq
        c_1\frac{(\bar{d} +\log\frac{1}{\delta}+\log T)(\sqrt{k(\lambda_1)^2 +E}+\sqrt{k}\sigma)^2}{\kappa_T^2\Delta^2},\]
        for all $T_\Rc<t\leq T$,  with probability at least $1-\delta$, 
        we have
            \begin{align}
        \frac{1}{M}\sum_{i\in[M]}\|\Bv^t w^t_i-\Bv^* w_i^*\|
        \leq
        \frac{c_2\lambda_1^2\sqrt{k}}{\sqrt{M}\eta\Delta^2}\Big(1-\frac{\eta\Delta}{16}\Big)^t.
    \end{align}
\end{theorem}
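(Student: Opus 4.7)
My plan is to extract the iteration-level contraction that is implicit in Theorem~\ref{thm:thm1} and re-express the error explicitly as a function of $t$. The strategy is to couple the finite-sample \alg dynamics with an idealized population gradient flow on the rank-$k$ matrix approximation objective, and then import local linear-convergence results for the noiseless problem from the low-rank matrix factorization literature.

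The first step is to rewrite one \alg round as a single joint gradient descent step on the augmented objective $\Psi(\Bv,\Wv)$ appearing in \eqref{def:AMA-emp} and to decompose its gradient as a \emph{population} gradient plus a sampling residual. With $\gamma_1=1/4$, $\gamma_2=1/8$, the population gradient is exactly that of the regularized factorization loss
\begin{equation*}
L^{\mathrm{pop}}(\Bv,\Wv) \;=\; \tfrac{1}{M}\|\Bv\Wv-\Phiv\|_F^2 \;+\; \tfrac18\|\Bv^\top\Bv-\Wv\Wv^\top\|_F^2,
\end{equation*}
whose landscape, studied by \citet{chen2023fast,zhu2021global}, admits a basin of attraction of $\Mv^*:=\Uv_k\Lambdav_k\Vv_k^\top$ with local contraction factor $1-\eta\Delta/16$. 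Simultaneously, using sub-Gaussianity of $x_{i,j}$ and $\xi_{i,j}$, an $\epsilon$-net argument over matrices of bounded spectral norm, and a union bound over iterations $t\leq T$, I would establish the uniform gradient-discrepancy bound, with probability at least $1-\delta$ and for all such $t$,
\begin{equation*}
\bigl\|\nabla\widehat{L}(\Bv^t,\Wv^t)-\nabla L^{\mathrm{pop}}(\Bv^t,\Wv^t)\bigr\|_F \;\lesssim\; \bigl(\sqrt{k\lambda_1^2+E}+\sqrt{k}\sigma\bigr)\sqrt{\tfrac{d+\log(1/\delta)+\log T}{MN}}.
\end{equation*}

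The second step is an induction on $t\in\{T_\Rc,\dots,T\}$. The definition of $T_\Rc$ (given in Appendix~B as part of the proof of Theorem~\ref{thm:thm1}) ensures that $(\Bv^{T_\Rc},\Wv^{T_\Rc})$ has entered the Chen--Zhu basin. Once inside, combining the noiseless one-step contraction with the uniform residual yields
\begin{equation*}
\|\Mv^t-\Mv^*\|_F \;\leq\; \bigl(1-\tfrac{\eta\Delta}{16}\bigr)\|\Mv^{t-1}-\Mv^*\|_F \;+\; \eta\,\epsilon_{\mathrm{noise}},
\end{equation*}
where $\Mv^t:=\Bv^t\Wv^t$. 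The sample-size hypothesis is calibrated so that $\epsilon_{\mathrm{noise}}\lesssim\kappa_T\cdot\Delta\cdot\lambda_1^2\sqrt{k}/(\sqrt{M}\eta\Delta^2)$, which makes the resulting geometric series collapse to the claimed envelope $c_2\lambda_1^2\sqrt{k}(1-\eta\Delta/16)^t/(\sqrt{M}\eta\Delta^2)$. Converting this Frobenius bound on $\Mv^t-\Mv^*$ into the per-client quantity $\tfrac{1}{M}\sum_i\|\Bv^t w_i^t-\Bv^* w_i^*\|$ via Cauchy--Schwarz finishes the proof.

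The main obstacle is calibrating the sampling residual against the \emph{final} exponentially small target error: for the induction to carry over a \emph{single} high-probability event covering every $t\leq T$, the residual floor must lie below the target $\kappa_T$-scale contraction slack at each step. This is precisely what forces the $1/\kappa_T^2$ factor in the sample-size requirement and the additive $\log T$ term (from the union bound over iterations) that appear in the hypothesis. A secondary subtlety is verifying that the Chen--Zhu basin is preserved under the noisy update, so that the iterate does not escape once it enters at time $T_\Rc$; this is handled by checking that the per-step perturbation $\eta\epsilon_{\mathrm{noise}}$ is dominated by the contraction slack $(\eta\Delta/16)\|\Mv^{t-1}-\Mv^*\|_F$ throughout the induction, using the same uniform residual bound.
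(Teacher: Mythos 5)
Your high-level plan captures several ingredients that the actual proof does use: a uniform gradient-discrepancy bound obtained by an $\epsilon$-net argument plus a union bound over iterations (giving the $\log T$), a noise-floor calibration against $\kappa_T$ (forcing $1/\kappa_T^2$), and the observation that the absorbing region must be preserved under the noisy updates. Those pieces are correct and match the spirit of \Cref{lemma:concentration-of-Q}, \Cref{lemma:concentration-of-tildeQ}, and \Cref{thm:thm3-formal}. However, the step on which your entire induction rests is a genuine gap: you assert a one-step Frobenius-norm contraction
\begin{equation*}
\|\Mv^t-\Mv^*\|_F \;\leq\; \bigl(1-\tfrac{\eta\Delta}{16}\bigr)\|\Mv^{t-1}-\Mv^*\|_F \;+\; \eta\,\epsilon_{\mathrm{noise}},
\end{equation*}
attributing the noiseless part to the Chen--Zhu literature, but no such per-step contraction of $\|\Bv\Wv-\Bv^*\Wv^*\|_F$ is established there, and the paper deliberately does \emph{not} use one. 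The paper symmetrizes the iterates into $\Thetav^t$ and controls two heterogeneous quantities separately: the residual leakage $\sigma_1(\Thetav^t_{\text{res}})$ via an inverse-SNR recursion (\Cref{lemma:lemma10}), and the signal-part mismatch $\Dv^t=\Thetav^t_k(\Thetav^t_k)^\top-\tilde{\Lambdav}_k$ via a recursion that contracts at rate $1-\eta\Delta/8$ but carries an additive \emph{source term} $\sigma_1^2(\Thetav^{t-1}_{\text{res}})$ decaying at a different rate (\Cref{lemma:lemma12}). The final geometric envelope on $\|\Thetav^t(\Thetav^t)^\top-\diag(\tilde{\Lambdav}_k,\mathbf{0})\|_F\lesssim\sqrt{k}\|\Dv^t\|+\sqrt{k\lambda_1^*}\sigma_1(\Thetav^t_{\text{res}})$ arises from summing a geometric series across these two coupled recursions, not from a single contraction; indeed the proof sketch in \Cref{sec:proof-sketch} explicitly remarks that, unlike \citet{jiang2022algorithmic,chen2023fast}, ``one-step improvement is no longer guaranteed to exist'' here.

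To close your argument you would therefore need either to prove the asserted Frobenius contraction inside the basin (which I do not believe holds for the under-parameterized asymmetric problem, where the subspace alignment only improves through the eigengap $\Delta$ and exhibits the same non-monotone behavior as subspace iteration), or to replace the induction hypothesis $\|\Mv^t-\Mv^*\|_F\le a_t$ by the two-component potential the paper uses. As a corroborating sanity check: carrying your calibration through gives a per-client sample requirement of order $\tfrac{\eta^2\Delta^2}{\lambda_1^4 k}\cdot\tfrac{(\dh+\log\tfrac{1}{\delta}+\log T)(\sqrt{k\lambda_1^2+E}+\sqrt{k}\sigma)^2}{\kappa_T^2}$, which is \emph{smaller} by a factor $\eta^2\Delta^4/(\lambda_1^4 k)$ than the theorem's requirement $\tfrac{(\dh+\log\tfrac{1}{\delta}+\log T)(\ldots)^2}{\kappa_T^2\Delta^2}$; the discrepancy is exactly the price of the (non-existent) per-step contraction you assumed.
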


\begin{remark}
\Cref{prop:prop1} shows that when the number of samples per client $N$ is sufficiently large, \alg converges {\it exponentially} fast. We note that the required number of samples grows exponentially in the total number of iterations. Such an exponential increase in the required number of samples is essential to guarantee that the `noise' level, which is the gradient estimation error, decays at least as fast as the decay rate of the representation estimation error, which is exponential. Similar phenomenon has been observed in the literature \citep{mitra2021linear,zhang2023metalearning}. 
In our problem, there are essentially two parts of `noise' in the learning process. One is the sub-exponential label noise $\xi_{i,j}$, and the other is the gradient discrepancy arising from the {\it under-parameterized} nature. This discrepancy persists even when $\Bv^t$ and $\Wv^t$ are nearly optimal, leading to an unavoidable gap between $\Bv^t\Wv^t$ and $\Phiv$. This gap behaves similarly to the sub-Gaussian noise in the convergence analysis, as elaborated in \cref{sec:proof-sketch}. Therefore, an exponential increase in the number of samples is required to cope with both parts of the noise and ensure the one-step improvement of the estimation error as iteration grows.
\end{remark}

\begin{remark}
 We also note that both the sample complexity in \Cref{thm:thm1} and the convergence rate in \Cref{prop:prop1} are influenced by $\Delta$, the gap between $\lambda_k$ and $\lambda_{k+1}$. A smaller $\Delta$ signifies a growing challenge in correctly identifying the top-$k$ principal components of $\Phiv$, leading to increased sample complexity and slower convergence. This is due to the challenge of accurately distinguishing and recovering the $k$-th and $(k+1)$-th significant components from the dataset when $\Delta$ is small. Note that in order to successfully distinguish $\sigma_k$ and $\sigma_{k+1}$, we need to estimate them to be $\Delta/2$-accurate, i.e., $|\hat{\sigma}_k-\sigma_k|\leq \Delta/2$ and $|\hat{\sigma}_{k+1}-\sigma_{k+1}|\leq \Delta/2$. Hence, the required number of samples per client would grow significantly when $\Delta$ is small, and this is arguably inevitable.
\end{remark}
\if{0}
\begin{remark}[Generalization performance]
    The convergence result also implies that $\Bv^t$ and $\Wv^t$ will approach the model of the optimal generalization performance, since when Assumption \ref{assump:sub-g-x} holds, the population risk defined in \eqref{def:linear-popu-risk} can be rewritten as 
    \begin{align}
        \frac{1}{M}\sum_{i\in[M]}\mathop{\Eb}\|x^T\Bv w_i-y\|^2=\frac{1}{M}\|\Bv\Wv-\Phiv\|_F^2,
    \end{align}
    then the following corollary shows the generalization performance of $\Bv^t$ and $\Wv^t$ learned from \alg:
\end{remark}
\begin{corollary}
    For any constant $\epsilon\geq 0$, let $\eta\leq \Delta^2/(36{\lambda^*_1}^2)$ and $N$ large enough, then there exist positive constants $c_8,c_9 >0$ such that 
\begin{align}
    \Lc(\Bv^t,\Wv^t)
    \leq
    \Lc(\Bv^*,\Wv^*)
    +
    c_8\sqrt{\Big(1-\frac{\eta\Delta}{16}\Big)^{t}}\quad,
\end{align}
with probability at least $1-e^{-c_9(d+k)}$ when $t$ is large.
\end{corollary}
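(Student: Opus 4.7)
The plan is to reduce the excess population risk to a deterministic matrix-perturbation quantity and then invoke the geometric decay of that quantity established by the preceding convergence-rate proposition. Under the isotropic sub-Gaussian data assumption and independence of the zero-mean label noise $\xi_{i,j}$ with $\mathbb{E}\|\xi_{i,j}\|^2 = m\sigma^2$, a direct expansion gives $\mathbb{E}_{x,y}\|x^\top \Bv w_i - y\|^2 = \|\Bv w_i - \phi_i\|^2 + m\sigma^2$, so averaging over $i\in[M]$ yields $\Lc(\Bv,\Wv) = \tfrac{1}{M}\|\Bv\Wv - \Phiv\|_F^2 + m\sigma^2$. Since the noise variance is a parameter-free constant, the excess risk collapses to the purely deterministic comparison
\[
\Lc(\Bv^t,\Wv^t) - \Lc(\Bv^*,\Wv^*) = \tfrac{1}{M}\bigl(\|\Bv^t\Wv^t - \Phiv\|_F^2 - \|\Bv^*\Wv^* - \Phiv\|_F^2\bigr).
\]

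Next I would expand this difference around $\Bv^*\Wv^*$. Setting $\Delta_t := \Bv^t\Wv^t - \Bv^*\Wv^*$ and $R^\star := \Bv^*\Wv^* - \Phiv$, a one-line computation gives $\|\Bv^t\Wv^t - \Phiv\|_F^2 - \|R^\star\|_F^2 = 2\langle R^\star,\Delta_t\rangle + \|\Delta_t\|_F^2$, and Cauchy--Schwarz yields
\[
\Lc(\Bv^t,\Wv^t) - \Lc(\Bv^*,\Wv^*) \leq \tfrac{2}{M}\|R^\star\|_F\,\|\Delta_t\|_F + \tfrac{1}{M}\|\Delta_t\|_F^2.
\]
Since $\Bv^*\Wv^*$ is the best rank-$k$ approximation of $\Phiv$ (Eckart--Young), we have the tidy bound $\|R^\star\|_F^2 = \sum_{j>k}\lambda_j^2 \leq E$, so the coefficient multiplying $\|\Delta_t\|_F$ is a problem-dependent constant independent of $t$.

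The remaining task is to translate the column-wise $\ell_2$-decay furnished by the convergence-rate proposition into a Frobenius bound on $\Delta_t$. Writing $a_i := \|\Bv^t w_i^t - \Bv^* w_i^*\|$, the elementary inequality $\sum_i a_i^2 \leq \bigl(\sum_i a_i\bigr)^2$ combined with the proposition gives
\[
\|\Delta_t\|_F = \Bigl(\sum_i a_i^2\Bigr)^{1/2} \leq \sum_i a_i \leq \frac{c_2\lambda_1^2\sqrt{kM}}{\eta\Delta^2}\Bigl(1-\tfrac{\eta\Delta}{16}\Bigr)^t
\]
on the good event. Matching the target failure probability $e^{-c_9(d+k)}$ to the $1-\delta$ form of the underlying sample-complexity hypothesis amounts to setting $\delta = e^{-c_9(d+k)}$ and absorbing $\log(1/\delta) = c_9(d+k)$ into the ``$N$ large enough'' clause. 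Substituting the $\|\Delta_t\|_F$ bound into the excess-risk inequality produces
\[
\Lc(\Bv^t,\Wv^t) - \Lc(\Bv^*,\Wv^*) \leq C_1\Bigl(1-\tfrac{\eta\Delta}{16}\Bigr)^t + C_2\Bigl(1-\tfrac{\eta\Delta}{16}\Bigr)^{2t},
\]
where $C_1, C_2$ depend on $\lambda_1,\Delta,k,E,M$ but not on $t$. For $t$ large, the linear term dominates, and since $0 \leq 1-\eta\Delta/16 \leq 1$ one has $(1-\eta\Delta/16)^t \leq \sqrt{(1-\eta\Delta/16)^t}$, which yields the stated bound with $c_8$ taken as any constant at least $2C_1$.

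The main obstacle is essentially bookkeeping rather than conceptual: (i) checking that the step-size hypothesis $\eta \leq \Delta^2/(36{\lambda^*_1}^2)$ declared in the corollary is compatible with (or subsumed by) the step-size condition of the convergence-rate proposition; (ii) confirming that the Cauchy--Schwarz passage from column-sum to Frobenius norm loses nothing in the exponential rate; and (iii) reconciling the two failure-probability parametrizations. The $\sqrt{\cdot}$ appearing in the target bound is a deliberately loose surrogate for the actual linear-in-$(1-\eta\Delta/16)^t$ decay the proof produces, which is precisely what permits the clean qualitative ``$N$ large enough'' formulation of the corollary.
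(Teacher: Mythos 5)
Your proposal is correct and follows exactly the route the paper sketches in the remark preceding the corollary: rewrite the population risk as $\tfrac{1}{M}\|\Bv\Wv-\Phiv\|_F^2$ (plus a parameter-free noise constant that cancels in the excess risk), expand around the optimum, apply Cauchy--Schwarz, and invoke the geometric decay of $\|\Bv^t\Wv^t-\Bv^*\Wv^*\|_F$ from the convergence-rate theorem. The only cosmetic inefficiency is that you pass from the column-average bound back to the Frobenius norm via $(\sum_i a_i^2)^{1/2}\le\sum_i a_i$, costing a benign $\sqrt{M}$ in the constant, whereas the paper's proof of the convergence-rate theorem already produces the Frobenius-norm bound $\|\Bv^t\Wv^t-\Bv^*\Wv^*\|_F\le c_4\kappa\sqrt{k}(1-\eta\Delta/16)^t$ directly and could be cited to avoid the detour.
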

\fi

\subsection{Proof Sketch}\label{sec:proof-sketch}
In this subsection, we outline the major challenges and main steps in the proof of \Cref{prop:prop1} while deferring the complete analysis to \Cref{apdx:B}. \Cref{thm:thm1} can be proved once \Cref{prop:prop1} is established. 

\if{0}
First, we transform the asymmetric matrix factorization problem into a symmetric problem, {to help us jointly analyze $\Bv^t$ and $\Wv^t$.}  Let $\tilde{\Bv}^t=\Uv^\top\Bv^t$ and $\tilde{\Wv}^t=\Wv^t\Vv$. Then, from \alg we have the following updating rules for $\tilde{\Bv}^t$ and $\tilde{\Wv}^t$:
    \begin{align*}
        \tilde{\Bv}^{t+1}
        = &
    \tilde{\Bv}^{t}-\eta\big(\tilde{\Bv}^{t}\tilde{\Wv}^{t}
    -\Lambdav\big)\big(\tilde{\Wv}^{t}\big)^\top\\
    &-\frac{\eta}{2}\tilde{\Bv}^{t}\big((\tilde{\Bv}^{t})^\top\tilde{\Bv}^{t}-\tilde{\Wv}^{t-1}(\tilde{\Wv}^{t})^\top\big)+\Uv^\top\Qv^{t+1},\\
    \tilde{\Wv}^{t+1}
     = &
    \tilde{\Wv}^{t}-\eta(\tilde{\Bv}^{t})^\top\big(\tilde{\mathbf{B}}^{t} \tilde{\Wv}^{t}-\Lambdav\big)+\frac{\eta}{2}\big((\tilde{\Bv}^{t})^\top\tilde{\Bv}^{t}\\
    &-\tilde{\Wv}^{t}(\tilde{\Wv}^{t})^\top\big)(\tilde{\Wv}^{t})^\top+\tilde{\Qv}^{t+1}\Vv,
    \end{align*}
where $\Qv^{t+1}$ and $\tilde{\Qv}^{t+1}$ are defined in Appendix \ref{apdx:B-prelim} as \eqref{def:Q} and \eqref{def:tildeQ} respectively. 
{To convert the asymmetric problem to a symmetric problem, we construct $\Bv^t_\star\in\Rb^{\dh\times k}$ and $\Wv^t_\star\in\Rb^{k\times \dh}$ that are padded from $\tilde{\Bv}^t$ and $\tilde{\Wv}$ by adding $\mathbf{0}$ columns or rows. For $\Qv^t$ and $\tilde{\Qv}^t$, we construct $\Qv^t_\star\in\mathbb{R}^{\dh\times k}$ and $\tilde{\Qv}^t_\star\in\mathbb{R}^{k\times \dh}$ by padding}. Then we define 
\begin{align}\label{def:Thetav}
    \Thetav^t 
        &=
        \bigg[\frac{(\Bv^t_\star)^\top+\Wv_\star^t}{\sqrt{2}}
        \quad
        \frac{(\Bv^t_\star)^\top-\Wv^t_\star}{\sqrt{2}}\bigg]^\top,\\    
        \Rv^{t}&=\bigg[\frac{(\Qv^{t}_\star)^\top\Uv_\star+\Vv^\top_\star\tilde{\Qv}^{t}_\star}{\sqrt{2}}\quad
\frac{(\Qv^{t}_\star)^\top\Uv_\star-\Vv^\top_\star\tilde{\Qv}^{t}_\star}{\sqrt{2}}\bigg]^\top.
\end{align}
Then, the updating rule of $\Thetav^t$ can be described as
\begin{align}\label{equ:equ13}
    \Thetav^{t+1} = \Thetav^{t}+\frac{\eta}{2}\tilde{\Lambdav}\Thetav^{t}-\frac{\eta}{2}\Thetav^{t}(\Thetav^{t})^\top\Thetav^{t}
    +\Rv^{t+1}.
\end{align}
To prove $\Bv^t\Wv^t$ converges to $\Bv^*\Wv^*$, we start by showing that $\Thetav^t(\Thetav^t)^\top$ converges to $\tilde{\Lambdav}_k$, through three main steps:

\mypara{Step 1.} 
First, we show that with a small random initialization, $\Thetav^t$ will enter a region containing the optima with high probability. {Inspired by} \citet{chen2023fast}, we define the following region:
\begin{align}
    \mathcal{R}=\left\{\Thetav=
    \begin{bmatrix}
        \Thetav_{k}\\
        \Thetav_{\text{res}}
    \end{bmatrix}\in\Rb^{2\dh\times k}\Bigg| 
    \begin{array}{l}
      \sigma_1^2(\Thetav)\leq2\lambda_1^*,\\
    \sigma_1^2(\Thetav_{\text{res}})\leq\lambda^*_k-\Delta/2,\\
    \sigma_k^2(\Thetav_k)\geq\Delta/4
    \end{array}
    \right\}
\end{align}
where $\Thetav = [\Thetav_k^\top, \Thetav_{\text{res}}^T]^\top$ for $\Thetav_k\in\Rb^{k\times k}$ and $\Thetav_k\in\Rb^{(\dh-k)\times k}$. Then we have the following theorem to ensure that with high probability, $\Thetav$ will enter this region with small random initialization.

\begin{theorem}[Informal]
\label{thm:thm2-informal}
    Assume all entries of $\Bv^0$ and $\Wv^0$ are independently sampled form $\Nc(0,\alpha^2)$ with $\alpha$ {satisfying} $0<\alpha\lesssim 1/10d$. Then with high probability, $\Thetav^t$ will enter the region $\Rc$ in at most $t=T_{\Rc}$ iterations, where $T_{\Rc}$ is defined in Theorem \ref{thm:thm2-formal} in the Appendix \ref{apdx:B-initial}.
\end{theorem}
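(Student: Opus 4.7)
The goal is to track the dynamics of $\Thetav^t$ under the noisy cubic recursion \eqref{equ:equ13} starting from a small Gaussian initialization, and to show that the top-$k$ block $\Thetav_k^t$ (along the leading $k$ diagonal directions of $\tilde{\Lambdav}$) is amplified strictly faster than the residual block $\Thetav_{\text{res}}^t$, so that after $T_\Rc$ iterations the three defining inequalities of $\Rc$ all hold. The key structural observation is that the linear drift $\tfrac{\eta}{2}\tilde{\Lambdav}\Thetav^t$ has a spectral gap of $\Delta/2$ between the top-$k$ eigenvalues (bounded below by $\lambda_k^*$) and the residual eigenvalues (bounded above by $\lambda_{k+1}^*=\lambda_k^*-\Delta/2$), which drives the desired separation of scales once the cubic damping term and the noise $\Rv^{t+1}$ are shown to be negligible throughout the phase.

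\textbf{Initialization.} I would first invoke standard Gaussian concentration to establish that $\Thetav^0$ satisfies, with probability at least $1-\delta/3$, a three-way control: (i) $\sigma_1(\Thetav^0)\lesssim \alpha\sqrt{d}$ via the largest-singular-value bound for Wishart-type matrices, (ii) $\sigma_k(\Thetav_k^0)\gtrsim \alpha/\sqrt{k\,\mathrm{poly}(d)}$ via anti-concentration for the smallest singular value of a Gaussian block (this is where the requirement $\alpha\lesssim 1/(10d)$ provides room for the subsequent amplification without triggering the cubic barrier prematurely), and (iii) $\sigma_1(\Thetav_{\text{res}}^0)\lesssim \alpha\sqrt{d}$. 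These bounds are inherited by $\Thetav^0$ from the independent Gaussian entries of $\Bv^0$ and $\Wv^0$ through the orthogonal rotation and the construction in \eqref{def:Thetav}.

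\textbf{Signal amplification phase.} So long as $\sigma_1(\Thetav^t)\ll\sqrt{\lambda_1^*}$, the cubic term $\tfrac{\eta}{2}\Thetav^t(\Thetav^t)^\top\Thetav^t$ has operator norm $O(\eta\lambda_1^*\|\Thetav^t\|)$, which is dominated by the linear drift. I would therefore analyze the approximate block-diagonal recursions
\[
\Thetav_k^{t+1}\approx\bigl(\Iv+\tfrac{\eta}{2}\tilde{\Lambdav}_k\bigr)\Thetav_k^t,\qquad
\Thetav_{\text{res}}^{t+1}\approx\bigl(\Iv+\tfrac{\eta}{2}\tilde{\Lambdav}_{\text{res}}\bigr)\Thetav_{\text{res}}^t,
\]
and track the scalar potentials $\sigma_k(\Thetav_k^t)$ and $\sigma_1(\Thetav_{\text{res}}^t)$ via Weyl-type inequalities applied to each step. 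The first grows at rate at least $(1+\eta\lambda_k^*/2)$ and the second at rate at most $(1+\eta\lambda_{k+1}^*/2)$, so their ratio is amplified by $(1+\eta\Delta/4)$ per iteration. Choosing $T_\Rc\asymp \tfrac{\log(\lambda_1^*/\alpha^2)}{\eta\Delta}$ then guarantees $\sigma_k^2(\Thetav_k^{T_\Rc})\geq \Delta/4$ while $\sigma_1^2(\Thetav_{\text{res}}^{T_\Rc})\leq\lambda_k^*-\Delta/2$ and $\sigma_1^2(\Thetav^{T_\Rc})\leq 2\lambda_1^*$; the last bound is ensured because the cubic term activates precisely when the norm approaches $\sqrt{\lambda_1^*}$, bending growth back into the admissible region.

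\textbf{Main obstacle and noise handling.} The hard part is twofold. First, the cross-coupling in the cubic term $\Thetav^t(\Thetav^t)^\top\Thetav^t$ mixes $\Thetav_k^t$ into $\Thetav_{\text{res}}^t$; to prevent $\Thetav_{\text{res}}^t$ from being artificially inflated by the faster-growing top block, I would follow the strategy of \citet{chen2023fast} and work with the ratio-type potential $\sigma_1(\Thetav_{\text{res}}^t)/\sigma_k(\Thetav_k^t)$, showing it contracts at rate $(1+\eta\Delta/4)^{-1}$ per step. Second, the noise term $\Rv^{t+1}$ must be controlled \emph{uniformly} over all $t\leq T_\Rc$, not just pointwise. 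A Hanson--Wright/sub-exponential concentration argument applied to the gradient discrepancy, combined with a union bound that contributes only a $\log T_\Rc$ factor to the sample complexity, would ensure that $\|\Rv^{t+1}\|$ remains well below the amplified signal $\sigma_k(\Thetav_k^t)\cdot\eta\Delta$. Once both the cubic cross-coupling and the noise are shown to be lower-order relative to the linear separation engine, concluding $\Thetav^{T_\Rc}\in\Rc$ with probability at least $1-\delta$ reduces to a careful accounting of the three failure events and a union bound.
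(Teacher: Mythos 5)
Your overall plan — amplify $\sigma_k(\Thetav_k^t)$ faster than $\sigma_1(\Thetav_{\text{res}}^t)$ via the spectral gap in $\tilde{\Lambdav}$, control the cubic damping and the noise $\Rv^{t+1}$ as lower-order perturbations, and choose $T_\Rc$ so that $\sigma_k^2(\Thetav_k^{T_\Rc})$ first reaches $\Delta/4$ — matches the paper's two-lemma structure (\Cref{lemma:lemma13} for the initialization event, \Cref{lemma:lemma14} for the growth phase) and the race-between-blocks intuition from \citet{chen2023fast}. You correctly identify the phases and the role of the gap $\Delta$.

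However, there is a quantitative gap that your sketch does not close. You write that the ratio $\sigma_1(\Thetav_{\text{res}}^t)/\sigma_k(\Thetav_k^t)$ contracts by $(1+\eta\Delta/4)^{-1}$ per step, but you never account for what that ratio \emph{is} at initialization. From a Gaussian start, $\sigma_1(\Thetav_{\text{res}}^0)\sim \alpha^2\sqrt{\dh}$ while $\sigma_k(\Thetav_k^0)\sim \alpha^2/\sqrt{k}$ (smallest singular value of a $k\times k$ Gaussian), so the starting ratio is roughly $\sqrt{k\dh}$, which does \emph{not} shrink as $\alpha\to 0$. What \emph{does} change with $\alpha$ is the number of iterations $T_\Rc$ required for $\sigma_k$ to climb to $\sqrt{\Delta}/2$. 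The paper exploits this by imposing, as part of \Cref{cond:condition1}, the polynomial relation $\sigma_1^2(\Thetav_{\text{res}}^0)\leq c_1\,\sigma_k(\Thetav_k^0)^{1+\kappa}$, where $\kappa=\log(1+\tfrac{\eta}{2}\lambda_{k+1}^*+\tfrac{\eta}{8}\Delta)/\log(1+\tfrac{\eta}{2}(\lambda_k^*-\Delta/2))<1$ is precisely the ratio of the two blocks' per-step log-growth rates. Because $\sigma_1^2(\Thetav_{\text{res}}^0)\sim\alpha^4$ and $\sigma_k(\Thetav_k^0)^{1+\kappa}\sim\alpha^{2(1+\kappa)}$ with $2(1+\kappa)<4$, the condition is satisfiable by taking $\alpha$ small; \Cref{lemma:lemma13} makes this precise via the Gaussian smallest-singular-value anti-concentration of \citet{rudelson2008littlewoodofford}. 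This is exactly the estimate that guarantees $\sigma_1^2(\Thetav_{\text{res}}^{T_\Rc})\leq\frac{\Delta}{8\sqrt{\lambda_1^*}}\sigma_k(\Thetav_k^{T_\Rc})\leq\lambda_k^*-\Delta/2$ simultaneously with $\sigma_k^2(\Thetav_k^{T_\Rc})\geq\Delta/4$; without it, your ratio-contraction argument has no way to certify that $\Thetav_{\text{res}}$ hasn't already exceeded the admissible threshold by the end of the amplification phase. A secondary imprecision: the noise $\|\Rv^{t+1}\|$ must be controlled relative to the \emph{initial} $\sigma_k(\Thetav_k^0)$, not the amplified $\sigma_k(\Thetav_k^t)$, since the concentration bound in \Cref{lemma:concentration-of-Q}--\Cref{lemma:concentration-of-tildeQ} gives a fixed-size additive perturbation; the paper absorbs the additive term $\tfrac{1}{288}\eta\Delta\sigma_k(\Thetav_k^0)$ into the multiplicative one-step gain, which is what forces the sample complexity to scale inversely with $\sigma_k(\Thetav_k^0)$.
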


We give the formal statement of Theorem \ref{thm:thm2-informal} in Appendix \ref{apdx:B-initial} as Theorem \ref{thm:thm2-formal}, together with the proof.
Compared with the proof of Theorem 3.2 in~\citet{chen2023fast}, our $\Thetav$ is estimated from the decentralized datasets; hence the region $\Rc$ is not guaranteed to be an absorbing region. The novelty is we can prove that $\Rc$ is an absorbing region with high probability.

\mypara{Step 2.} Next we show that when $\Thetav^t$ enter the region $\Rc$, it will stay in this region with high probability. This equivalently states that $\Rc$ is an absorbing region with high probability.

\begin{theorem}[Informal] 
\label{thm:thm3-informal}
Assume that $\Thetav^0\in\Rc$. For $N$ sufficiently large, one has $\Thetav^t\in\Rc$ for $t\geq 0$ with high probability.
\end{theorem}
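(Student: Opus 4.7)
The plan is to argue by induction on $t$ that $\Thetav^t \in \mathcal{R}$ with high probability, by decomposing one step of the update into a deterministic map that contracts toward $\mathcal{R}$ with strict slack in each defining inequality, plus a small stochastic perturbation $\Rv^{t+1}$ that concentration will show is negligible when $N$ is sufficiently large. The three conditions defining $\mathcal{R}$ leave a buffer of order $\Delta$ on each side of the spectral gap $(\lambda_k-\lambda_{k+1})$, and I would exploit this buffer to absorb the noise.

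First, I would analyze the noiseless map
\[
\Phi(\Thetav) := \Thetav + \frac{\eta}{2}\tilde{\Lambdav}\Thetav - \frac{\eta}{2}\Thetav\Thetav^\top\Thetav,
\]
corresponding to one step of \eqref{equ:equ13} with $\Rv^{t+1}=0$, and show that if $\Thetav \in \mathcal{R}$ then $\Phi(\Thetav)$ satisfies each defining inequality of $\mathcal{R}$ strictly, with a margin of order $\eta\Delta$. Concretely, I aim to establish estimates of the form
\begin{align*}
\sigma_1^2(\Phi(\Thetav)) &\leq 2\lambda_1 - c_1\eta\Delta, \\
\sigma_1^2(\Phi(\Thetav)_{\mathrm{res}}) &\leq \lambda_k - \Delta/2 - c_2\eta\Delta, \\
\sigma_k^2(\Phi(\Thetav)_k) &\geq \Delta/4 + c_3\eta\Delta,
\end{align*}
for positive constants $c_1,c_2,c_3$ depending on $\lambda_1$ and $\Delta$. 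These estimates are essentially the noiseless ``absorbing region'' statement underlying Theorem~3.2 of \citet{chen2023fast}; they exploit the spectral gap between the top-$k$ diagonal entries of $\tilde{\Lambdav}$ and the rest to separate the expanding dynamics on $\Thetav_k$ from the contracting dynamics on $\Thetav_{\mathrm{res}}$.

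Next, I would derive a concentration bound on the perturbation $\Rv^{t+1}$, showing that, conditionally on $\Thetav^t \in \mathcal{R}$,
\[
\|\Rv^{t+1}\| \lesssim \eta\,\bigl(\sqrt{k\lambda_1^2 + E} + \sqrt{k}\sigma\bigr)\sqrt{\frac{d + \log(T/\delta)}{MN}}
\]
holds with probability at least $1 - \delta/T$. The boundedness of $\Thetav^t$ implied by membership in the region, together with the sub-Gaussianity of the inputs $x_{i,j}$ and label noise $\xi_{i,j}$, yields such a matrix concentration bound on the gradient estimation error that generates $\Rv^{t+1}$. Combining this estimate with Weyl's inequality applied to each of the three defining conditions above, one obtains that as long as the noise bound is $o(\eta\Delta)$, i.e.\ provided $N$ is large enough (with only logarithmic dependence on $T$), $\Thetav^{t+1}$ still lies in $\mathcal{R}$. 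A union bound over $t = 0, 1, \dots, T-1$ then closes the induction.

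The main obstacle is preserving the third condition, $\sigma_k^2(\Thetav_k^{t+1}) \geq \Delta/4$, which is a \emph{lower} bound on a singular value and therefore considerably more delicate than the Weyl-type upper bounds used for $\sigma_1^2(\Thetav^{t+1})$ and $\sigma_1^2(\Thetav_{\mathrm{res}}^{t+1})$. Maintaining this lower bound requires showing that the noise does not misalign the top block $\Thetav_k$ with $\Thetav_{\mathrm{res}}$, and thus needs a refined concentration inequality for the projection of $\Rv^{t+1}$ onto the current top-$k$ subspace rather than just its spectral norm. A secondary technical difficulty is measurability: $\Rv^{t+1}$ depends on $\Thetav^t$, so the concentration bound at step $t$ must be applied conditionally on the $\sigma$-algebra generated by $\Thetav^0,\dots,\Thetav^t$; this is handled by noting that $\mathcal{R}$ is a bounded set, so the sub-Gaussian parameters entering the concentration bound are uniformly controlled across the induction.
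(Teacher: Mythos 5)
Your plan is essentially the paper's argument: prove $\Thetav^{t}\in\Rc$ by induction, treating one step of the dynamics as the noiseless map $\Thetav\mapsto\Thetav+\tfrac{\eta}{2}\tilde{\Lambdav}\Thetav-\tfrac{\eta}{2}\Thetav\Thetav^\top\Thetav$ plus the gradient-discrepancy/noise term $\Rv^{t+1}$, show that the noiseless step preserves each defining inequality of $\Rc$ with a strict $\Oc(\eta\Delta)$-type margin (the exact orders in $\lambda_1^*$ and $\Delta$ differ between the three conditions, but this is a detail), and then use a concentration lemma on $\|\Rv^{t+1}\|$ together with Weyl-type perturbation bounds and a union bound over $t$. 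This is exactly what Lemmas~\ref{lemma:lemma3}--\ref{lemma:lemma5} and Proposition~\ref{thm:thm3-formal} do, supported by Lemmas~\ref{lemma:concentration-of-Q} and \ref{lemma:concentration-of-tildeQ}.

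The one place where your anticipated difficulty does not match the actual argument is the lower bound $\sigma_k^2(\Thetav_k^{t+1})\geq\Delta/4$. You expect to need a ``refined concentration inequality for the projection of $\Rv^{t+1}$ onto the current top-$k$ subspace rather than just its spectral norm,'' out of concern that the noise could misalign $\Thetav_k$ with $\Thetav_{\mathrm{res}}$. This worry does not arise here: $\Thetav_k^t$ is the \emph{fixed} top $k\times k$ coordinate block of $\Thetav^t$ (the coordinates come from the SVD of the ground-truth $\Phiv_\star$, set once and for all), not an adaptively tracked invariant subspace. Consequently, Lemma~\ref{lemma:lbd-of-theta-k} closes the argument with only $\sigma_k(\Thetav_k^{\tau+1})\geq\sigma_k(\tilde{\Thetav}_k^{\tau+1})-\sigma_1(\Rv_k^{\tau+1})$ (ordinary Weyl on the $k\times k$ block) and the trivial submatrix bound $\sigma_1(\Rv_k^{\tau+1})\leq\sigma_1(\Rv^{\tau+1})$ — the spectral norm of $\Rv$ is all that is used, and no separate alignment control is required. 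So you would be fine, but you are anticipating a harder problem than the one actually in front of you; the key structural simplification is that the decomposition $\Thetav=[\Thetav_k^\top\;\Thetav_{\mathrm{res}}^\top]^\top$ is in a fixed basis.
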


We formally state and prove the Theorem \ref{thm:thm3-informal} in Appendix \ref{apdx:proof-absorbing}.

\mypara{Step 3.} Finally we show that when $N$ is sufficiently large, with high probability it holds that $\|\Thetav^t(\Thetav^t)^\top-\diag(\tilde{\Lambdav}_k,\mathbf{0})\|$ converges to $0$ at a linear rate when the initialization condition satisfies $ \Thetav^0\in\Rc$. {We note that this step is technically non-trivial. The previous work of \citet{chen2023fast} addresses a similar low-rank matrix approximation problem but uses the ground-truth model, which we do not have. Instead, we develop a new approach by lower bounding the number of samples needed for the {inverse of the SNR} to converge exponentially fast with high probability. This is given in \Cref{lemma:snr-informal}.}

\begin{lemma}[Informal] 
\label{lemma:snr-informal}
   Suppose $\eta\leq \Delta^2/(36\lambda^{*3}_1)$ and assume $\Thetav^t\in\Rc$ for all $0\leq t \leq T$. Let $N$ satisfy $N\gtrsim \frac{d-log\delta}{(1-\eta\Delta/16)^{2T}}$. Then for constant c, with probability at least $1-cT\delta$, we have
\begin{align}
        \frac{\sigma_1^2(\Thetav_{\text{res}}^{t})}{\sigma_k^2(\Thetav_k^{t})}\leq
        \frac{8\lambda_1^*}{\Delta}
       \Big(1-\frac{\eta\Delta}{16}\Big)^{2t}.
\end{align}
\end{lemma}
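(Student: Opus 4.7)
The plan is to track the two blocks of $\Thetav^t$ separately by exploiting the block-diagonal structure of $\tilde{\Lambdav}$ in the dynamical equation \eqref{equ:equ13}. Partition $\tilde{\Lambdav}=\diag(\tilde{\Lambdav}_k,\tilde{\Lambdav}_{\res})$ and $\Rv^{t+1}=[(\Rv^{t+1}_k)^\top,(\Rv^{t+1}_\res)^\top]^\top$ conformally with $\Thetav^t=[(\Thetav_k^t)^\top,(\Thetav_\res^t)^\top]^\top$, and use Weyl's inequality together with the perturbation bound $|\sigma_i(A+E)-\sigma_i(A)|\leq\|E\|$ to turn the per-iteration matrix update into two scalar recursions: an upper bound on $\sigma_1(\Thetav_\res^{t+1})$ and a lower bound on $\sigma_k(\Thetav_k^{t+1})$.

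First I would isolate the deterministic (noiseless) contribution. Inside the absorbing region $\mathcal{R}$, the cubic term $\frac{\eta}{2}\Thetav^t(\Thetav^t)^\top\Thetav^t$ combines with the linear term $\frac{\eta}{2}\tilde{\Lambdav}\Thetav^t$ in a way that the effective per-step growth factor on the residual block is at most $1+\eta\lambda_{k+1}^*/2-\eta\sigma_k^2(\Thetav_k^t)/2\leq 1-\eta\Delta/16$, while the effective factor on the top block keeps $\sigma_k^2(\Thetav_k^t)$ bounded below by $\Delta/4$. The key algebraic observation is that the gap $\Delta=2(\lambda_k-\lambda_{k+1})$ creates a strict separation between the effective growth rates of the two blocks, so the noiseless dynamics already yield the desired contraction by the saddle-avoidance arguments of \citet{chen2023fast}; this step is mostly bookkeeping of constants from the conditions defining $\mathcal{R}$ together with the step-size condition $\eta\lesssim\Delta^2/\lambda_1^{*3}$.

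The main obstacle is controlling the stochastic perturbation $\Rv^{t+1}$, which combines the gradient discrepancy from the under-parameterized residual $\Uv\Lambdav_{\res}\Vv_{\res}^\top$ with the sub-Gaussian label noise $\xi_{i,j}$. I expect to show, via standard concentration inequalities for sub-exponential random matrices applied to the design $x_{i,j}x_{i,j}^\top-\Iv_d$ and to the noise cross-term, a per-iteration operator-norm bound of the form $\|\Rv^{t+1}\|\lesssim \eta(\sqrt{k(\lambda_1^*)^2+E}+\sqrt{k}\sigma)\sqrt{(d-\log\delta)/N}$ with probability at least $1-\delta$, and then union-bound over $t\leq T$ at a cost of a factor $T$, yielding the total failure probability $cT\delta$ stated in the lemma. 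The delicate point is that the accumulated perturbation $\sum_{s\leq t}(1-\eta\Delta/16)^{t-s}\|\Rv^s\|$ must remain dominated by the decaying signal $\sigma_1(\Thetav_\res^t)\lesssim(1-\eta\Delta/16)^t$; this forces the per-step noise to be of order $(1-\eta\Delta/16)^T$, which is precisely the lower bound $N\gtrsim(d-\log\delta)/(1-\eta\Delta/16)^{2T}$ in the hypothesis.

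Combining these pieces by induction on $t$, the contraction recursion plus the noise bound gives $\sigma_1^2(\Thetav_\res^t)\leq 2\lambda_1^*(1-\eta\Delta/16)^{2t}$, and the lower bound $\sigma_k^2(\Thetav_k^t)\geq\Delta/4$ holds on the induction event. Dividing the two yields the claimed bound $8\lambda_1^*/\Delta\cdot(1-\eta\Delta/16)^{2t}$ on the ratio, completing the proof. The hardest technical step will be verifying that the concentration bound for $\Rv^{t+1}$ genuinely captures the scaling with $\sqrt{k(\lambda_1^*)^2+E}+\sqrt{k}\sigma$ uniformly in the iterates $\Thetav^t$, since $\Rv^{t+1}$ depends on $\Thetav^t$; I plan to decouple this dependence by conditioning on the iterate and using the restriction $\Thetav^t\in\mathcal{R}$ to replace iterate-dependent quantities by the deterministic bound $\sigma_1(\Thetav^t)\leq\sqrt{2\lambda_1^*}$.
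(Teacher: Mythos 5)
There is a genuine gap in the deterministic contraction step. You assert that the per-step growth factor on the residual block is at most $1+\tfrac{\eta}{2}\lambda_{k+1}^*-\tfrac{\eta}{2}\sigma_k^2(\Thetav_k^t)\leq 1-\tfrac{\eta\Delta}{16}$, and then run a direct induction on $\sigma_1(\Thetav_{\text{res}}^t)$. But the second inequality requires $\sigma_k^2(\Thetav_k^t)\geq\lambda_{k+1}^*+\Delta/8$, whereas the absorbing region $\Rc$ only guarantees $\sigma_k^2(\Thetav_k^t)\geq\Delta/4$. In the under-parameterized regime $\lambda_{k+1}^*>0$ can be arbitrarily larger than $\Delta/4$, so the residual block can \emph{grow} in a single step while $\Thetav^t$ remains in $\Rc$ (this is exactly the regime the paper is designed for — in the over-parameterized case $\lambda_{k+1}^*=0$ your argument would go through). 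Consequently the induction $\sigma_1^2(\Thetav_{\text{res}}^{t+1})\leq(1-\eta\Delta/16)^2\sigma_1^2(\Thetav_{\text{res}}^t)+\text{noise}$ does not close.

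What the paper actually shows (Lemma~\ref{lemma:lemma10}) is a per-step contraction of the \emph{ratio} $\sigma_1^2(\Thetav_{\text{res}}^t)/\sigma_k^2(\Thetav_k^t)$, not of the numerator alone. The reason the ratio contracts is structural: the one-step bounds on the two blocks share the same iterate-dependent factor $1+\eta\big(\lambda_k^*-\Delta/2-\sigma_1^2(\Thetav_{\text{res}}^\tau)-\sigma_k^2(\Thetav_k^\tau)\big)$, with an extra $-\eta\Delta/8$ on the residual side and $+\eta\Delta/8$ on the top-block side (Eqs.~\eqref{ineq:93} and~\eqref{eqn:98}). Taking the quotient cancels the common factor — which can itself be larger than $1$ — and leaves the fixed contraction $(3/2-\eta\Delta/8)/(3/2+\eta\Delta/8)\leq(1-\eta\Delta/16)^2$. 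The claimed bound $\sigma_1^2(\Thetav_{\text{res}}^t)\lesssim\lambda_1^*(1-\eta\Delta/16)^{2t}$ is then \emph{derived} from the ratio bound by multiplying back by $\sigma_k^2(\Thetav_k^t)\leq 2\lambda_1^*$; it is not proved by direct induction. A second, smaller issue is that controlling the noise accumulation genuinely requires the two-case argument (whether $\sigma_1(\Thetav_{\text{res}}^\tau)$ is above or below the reference level $\sigma_{\text{ref}}^{t+1}$): when the residual is already small, the multiplicative contraction is vacuous and the noise must be re-absorbed differently; a geometric-series bound on $\sum_s(1-\eta\Delta/16)^{t-s}\|\Rv^s\|$ alone does not capture this.
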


We formally present  \Cref{lemma:snr-informal} in \Cref{apdx:aux-lemmas} together with the proof. 

Next, we give a lemma that establishes the number of samples needed for $\Thetav_k$ to converge to $\tilde{\Lambdav}_k$, which is based on the convergence of inverse SNR.

\begin{lemma}[Informal]
\label{lemma:lemma2-informal}
Suppose $\eta\leq \Delta^2/(36\lambda^{*3}_1)$ and assume $\Thetav^t\in\Rc$ for all $0\leq t \leq T$. Let $N$ satisfy $N\gtrsim \frac{d-log\delta}{(1-\eta\Delta/16)^{2T}}$. Then for constant c, with probability at least $1-cT\delta$, we have
\begin{align}
    \sigma_1(\Dv^{t})\leq\frac{200{\lambda_1^*}^2}{\eta\Delta^2}\Big(1-\frac{\eta\Delta}{16}\Big)^{t},
\end{align}
where $\Dv^t = \|\Thetav_k^t(\Thetav_k^t)^\top - \tilde{\Lambdav}_k\|$.
\end{lemma}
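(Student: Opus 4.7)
\textbf{Proof proposal for \Cref{lemma:lemma2-informal}.} The plan is to derive a one-step recursion for $\Dv^t := \Thetav_k^t(\Thetav_k^t)^\top - \tilde{\Lambdav}_k$ from the master update rule \eqref{equ:equ13}, show that each step is a contraction by the factor $1-\eta\Delta/16$ modulo a noise floor, and then use the sample-complexity hypothesis together with \Cref{lemma:snr-informal} to control that noise floor at every iteration up to $T$. Throughout, we may assume the hypotheses of \Cref{thm:thm3-informal,lemma:snr-informal} (so $\Thetav^t \in \Rc$ and the SNR decays geometrically), and we work on the high-probability event on which both hold.

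\textbf{Step 1: Extract the top block and isolate the contraction.} Partition $\Thetav^t = [(\Thetav_k^t)^\top, (\Thetav_{\text{res}}^t)^\top]^\top$ and $\Rv^{t+1} = [(\Rv_k^{t+1})^\top, (\Rv_{\text{res}}^{t+1})^\top]^\top$ conformally. Reading off the top $k$ rows of \eqref{equ:equ13} and using $(\Thetav^t)^\top\Thetav^t = (\Thetav_k^t)^\top\Thetav_k^t + (\Thetav_{\text{res}}^t)^\top\Thetav_{\text{res}}^t$ gives
\begin{equation*}
\Thetav_k^{t+1} = \Thetav_k^t + \tfrac{\eta}{2}\tilde{\Lambdav}_k\Thetav_k^t - \tfrac{\eta}{2}\Thetav_k^t(\Thetav_k^t)^\top\Thetav_k^t - \tfrac{\eta}{2}\Thetav_k^t(\Thetav_{\text{res}}^t)^\top\Thetav_{\text{res}}^t + \Rv_k^{t+1}.
\end{equation*}
Writing $\Mv^t := \Thetav_k^t(\Thetav_k^t)^\top = \tilde{\Lambdav}_k + \Dv^t$, the identity $\tfrac{\eta}{2}\tilde{\Lambdav}_k - \tfrac{\eta}{2}\Mv^t = -\tfrac{\eta}{2}\Dv^t$ yields $\Thetav_k^{t+1} = (\I - \tfrac{\eta}{2}\Dv^t)\Thetav_k^t + \Ev^{t+1}$, where $\Ev^{t+1}$ collects the residual-subspace term and the noise $\Rv_k^{t+1}$. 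Squaring this and subtracting $\tilde{\Lambdav}_k$ will produce a recursion of the form
\begin{equation*}
\Dv^{t+1} = \Dv^t - \tfrac{\eta}{2}(\tilde{\Lambdav}_k\Dv^t + \Dv^t\tilde{\Lambdav}_k) + \Nc^{t+1},
\end{equation*}
where $\Nc^{t+1}$ lumps together $O(\eta\|\Dv^t\|^2)$ terms, the cross contribution $\tfrac{\eta}{2}\Thetav_k^t(\Thetav_{\text{res}}^t)^\top\Thetav_{\text{res}}^t(\Thetav_k^t)^\top$ plus its transpose, and the stochastic piece built from $\Rv_k^{t+1}$.

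\textbf{Step 2: Spectral contraction.} Because $\tilde{\Lambdav}_k$ is diagonal with all eigenvalues at least $\lambda_k^*\geq\Delta/2$ (we are inside $\Rc$), the linear map $\Dv \mapsto \Dv - \tfrac{\eta}{2}(\tilde{\Lambdav}_k\Dv + \Dv\tilde{\Lambdav}_k)$ has operator norm at most $1 - \eta\lambda_k^*$ when $\eta$ is small enough, which, after absorbing the $O(\eta\|\Dv^t\|^2)$ term using $\|\Dv^t\|\lesssim\lambda_1^*$ from the region $\Rc$, I will loosen to the paper's contraction constant $1-\eta\Delta/16$. The desired bound then follows from Weyl's inequality applied to the recursion.

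\textbf{Step 3: Control the driving noise $\Nc^{t+1}$.} The cross term is bounded via $\sigma_1(\Thetav_k^t)\le\sqrt{2\lambda_1^*}$ (from $\Rc$) and the SNR bound of \Cref{lemma:snr-informal}, giving
\begin{equation*}
\|\Thetav_k^t(\Thetav_{\text{res}}^t)^\top\Thetav_{\text{res}}^t(\Thetav_k^t)^\top\| \;\le\; 2\lambda_1^*\cdot\sigma_1^2(\Thetav_{\text{res}}^t) \;\lesssim\; \lambda_1^{*}\cdot\tfrac{8\lambda_1^*}{\Delta}\sigma_k^2(\Thetav_k^t)\bigl(1-\tfrac{\eta\Delta}{16}\bigr)^{2t},
\end{equation*}
which already decays at twice the needed geometric rate. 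The stochastic piece $\Rv_k^{t+1}$ is the gradient-discrepancy term analyzed in the earlier concentration lemmas; it scales (with probability $1-\delta$) like $\eta(\sqrt{k\lambda_1^{*2}+E}+\sqrt{k}\sigma)\sqrt{(d+\log(T/\delta))/(MN)}$. Plugging in the hypothesized lower bound on $N$ makes this at most $\eta c\cdot\lambda_1^{*2}(1-\eta\Delta/16)^t/\sqrt{M}$ for each $t\le T$ simultaneously via a union bound.

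\textbf{Step 4: Unroll and sum.} Combining Steps 2 and 3 gives
\begin{equation*}
\sigma_1(\Dv^{t+1}) \;\le\; \bigl(1-\tfrac{\eta\Delta}{16}\bigr)\sigma_1(\Dv^t) + C\eta\lambda_1^{*2}\bigl(1-\tfrac{\eta\Delta}{16}\bigr)^t.
\end{equation*}
Iterating this inhomogeneous recursion produces, after telescoping the geometric series, a bound of the form $\sigma_1(\Dv^t) \le (1-\eta\Delta/16)^t\bigl[\sigma_1(\Dv^0) + C't\eta\lambda_1^{*2}\bigr]$, and a slightly sharper accounting (pulling a factor $(1-\eta\Delta/16)$ out of the sum so the geometric series contributes $1/(\eta\Delta/16)$ rather than $t$) yields exactly the target $\tfrac{200\lambda_1^{*2}}{\eta\Delta^2}(1-\eta\Delta/16)^t$.

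\textbf{Main obstacle.} The delicate point is Step 3: ensuring that the union bound over all $t\le T$ on the gradient-discrepancy concentration, \emph{and} the exponentially shrinking budget for the $\Rv^{t+1}$ term at step $t$, together require only the stated $N\gtrsim (d+\log(T/\delta))/(1-\eta\Delta/16)^{2T}$ samples. This is the place where the exponential-in-$T$ sample blow-up comes from, because each iteration's allowable noise is itself geometric in $t$, and the worst case at $t=T$ drives the requirement. Matching the constants so that the coefficient of the noise in Step 4 stays below $C\lambda_1^{*2}$ — rather than blowing up with $t$ — is where most of the bookkeeping effort will sit.
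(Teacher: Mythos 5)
Your high-level plan mirrors the paper's proof of \Cref{lemma:lemma12}: extract a one-step recursion for $\Dv^t$ from \eqref{equ:equ13}, control the residual-block contribution via the inverse-SNR decay of \Cref{lemma:lemma10}, control the stochastic piece $\Rv_k^{t+1}$ via \Cref{lemma:concentration-of-Q} and \Cref{lemma:concentration-of-tildeQ} with an iteration-dependent sample-size requirement, then unroll. So the architecture is the same. The gap is in Step~4, and it is not merely a matter of constants.

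Your recursion reads $\sigma_1(\Dv^{t+1})\le(1-\eta\Delta/16)\sigma_1(\Dv^t)+C\eta\lambda_1^{*2}(1-\eta\Delta/16)^t$, and you assert that a "slightly sharper accounting" telescopes the inhomogeneous part to $O(1/(\eta\Delta))$ rather than $O(t)$. That telescoping only happens when the driving term decays at \emph{twice} the contraction rate, i.e., like $(1-\eta\Delta/16)^{2t}$, which is exactly what the SNR lemma gives for $\sigma_1^2(\Thetav_{\text{res}}^t)$; for that piece your argument is fine. But your stochastic piece does \emph{not} decay with $t$: with a fixed per-client $N$, the concentration lemmas yield a bound on $\|\Rv_k^\tau\|$ that is \emph{uniform} over $\tau\le T$ (roughly $\propto\eta(1-\eta\Delta/16)^T$ once you plug in the hypothesized $N$). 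Writing that uniform bound as "$\lesssim(1-\eta\Delta/16)^t$ for each $t\le T$" is a valid inequality but a strict weakening, and feeding a genuinely $(1-\eta\Delta/16)^t$-decaying noise into the recursion produces the $t(1-\eta\Delta/16)^t$ bound, not the target. The paper closes this gap with a mechanism you do not mention: it introduces a threshold $\sigma_D^{t+1}=\min\{3\lambda_1^*,\ (1-\eta\Delta/16)^{t+1}\lambda_1^{*2}/(\eta\Delta^2)\}$, sets $N$ so that the stochastic driving term is at most $\frac{\eta\Delta}{16}\sigma_D^{t+1}$ at every step, and then runs a two-case argument. In the case $\sigma_1(\Dv^\tau)\ge\sigma_D^{t+1}$ for all $\tau$, the stochastic noise is at most $\frac{\eta\Delta}{16}\sigma_1(\Dv^{\tau-1})$ and is absorbed directly into the contraction factor, turning Chen et al.'s $(1-\eta\Delta/8)$ into $(1-\eta\Delta/16)$ and leaving only the $(1-\eta\Delta/16)^{2\tau}$-decaying SNR term in the sum, which telescopes to the stated constant. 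In the complementary case one restarts the recursion from the last time $\sigma_1(\Dv^{t'})<\sigma_D^{t+1}$. Without this dichotomy (or an equivalent absorption argument that exploits the gap between the raw contraction rate $1-\eta\Delta/8$ and the target rate $1-\eta\Delta/16$), the unrolling in your Step~4 does not close.

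Two smaller points. First, the contraction factor you claim for the linearized map, $1-\eta\lambda_k^*$, is stronger than what the nonlinear update actually yields; after the quadratic corrections from squaring $\Thetav_k^{t+1}$ you only get $1-\eta\Delta/8$ (imported in the paper from Chen et al.'s Section A.3), so the slack you plan to "loosen" into is genuinely needed and should be derived, not assumed. Second, the $\sqrt M$ you insert in the stochastic bound does not appear in the concentration lemmas (the $M$-dependence there is already absorbed into $E=\sum_i\lambda_i^2$), and the cross-term bound you write carries an extra factor $\sigma_1^2(\Thetav_k^t)\le 2\lambda_1^*$ relative to the paper's $\sigma_1^2(\Thetav_{\text{res}}^{\tau-1})$; neither is fatal, but both affect the final constants.
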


The formal statement and proof of \Cref{lemma:lemma2-informal} are presented in \Cref{apdx:aux-lemmas}. Combining the two lemmas leads to the local convergence property of $\Thetav^t$:

\begin{theorem}
\label{thm:thm4-informal}
Suppose $\eta\leq \Delta^2/(36\lambda^{*3}_1)$ and assume $\Thetav^t\in\Rc$ for all $0\leq t \leq T$. Let $N$ satisfy $N\gtrsim \frac{d-log\delta}{(1-\eta\Delta/16)^{2T}}$. Then for constant c, with probability at least $1-cT\delta$, we have
\begin{align}
    \|\Thetav^t(\Thetav^t)^T-\diag(\tilde{\Lambdav}_k,\mathbf{0})\|_F\leq \kappa\sqrt{k}\Big(1-\frac{\eta\Delta}{16}\Big)^{t}.
\end{align}
\end{theorem}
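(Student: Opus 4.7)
The plan is to combine the two auxiliary lemmas (the inverse SNR bound in \Cref{lemma:snr-informal} and the $\sigma_1(\Dv^t)$ bound in \Cref{lemma:lemma2-informal}) via a block decomposition of $\Thetav^t(\Thetav^t)^\top$. Writing $\Thetav^t = \begin{bmatrix}\Thetav_k^t \\ \Thetav_{\text{res}}^t\end{bmatrix}$, we have
\begin{equation*}
\Thetav^t(\Thetav^t)^\top - \diag(\tilde{\Lambdav}_k,\mathbf{0})
=
\begin{bmatrix}
\Thetav_k^t(\Thetav_k^t)^\top - \tilde{\Lambdav}_k & \Thetav_k^t(\Thetav_{\text{res}}^t)^\top \\
\Thetav_{\text{res}}^t(\Thetav_k^t)^\top & \Thetav_{\text{res}}^t(\Thetav_{\text{res}}^t)^\top
\end{bmatrix},
\end{equation*}
so that by the triangle inequality the Frobenius norm splits into four blocks, which can be controlled separately using the hypothesis $\Thetav^t\in\Rc$ and the two lemmas.

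First I would bound the top-left block directly by \Cref{lemma:lemma2-informal}: since $\Dv^t = \Thetav_k^t(\Thetav_k^t)^\top - \tilde{\Lambdav}_k$ is a $k\times k$ matrix, $\|\Dv^t\|_F \leq \sqrt{k}\,\sigma_1(\Dv^t) \leq \frac{200(\lambda_1^*)^2\sqrt{k}}{\eta\Delta^2}(1-\eta\Delta/16)^t$. For the other three blocks I would use the inverse-SNR bound from \Cref{lemma:snr-informal} together with the region constraint $\sigma_1^2(\Thetav_k^t)\leq \sigma_1^2(\Thetav^t)\leq 2\lambda_1^*$ to derive $\sigma_1^2(\Thetav_{\text{res}}^t) \leq \frac{8\lambda_1^*}{\Delta}\sigma_k^2(\Thetav_k^t)(1-\eta\Delta/16)^{2t} \leq \frac{16(\lambda_1^*)^2}{\Delta}(1-\eta\Delta/16)^{2t}$. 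Since $\Thetav_{\text{res}}^t(\Thetav_{\text{res}}^t)^\top$ has rank at most $k$, its Frobenius norm is at most $\sqrt{k}\,\sigma_1^2(\Thetav_{\text{res}}^t)$, giving a contribution of order $\frac{(\lambda_1^*)^2\sqrt{k}}{\Delta}(1-\eta\Delta/16)^{2t}$. The two off-diagonal blocks are controlled by $\|\Thetav_k^t(\Thetav_{\text{res}}^t)^\top\|_F \leq \sqrt{k}\,\sigma_1(\Thetav_k^t)\sigma_1(\Thetav_{\text{res}}^t) \leq \sqrt{k}\sqrt{2\lambda_1^*}\cdot\sqrt{16(\lambda_1^*)^2/\Delta}\,(1-\eta\Delta/16)^t$, which again decays as $(1-\eta\Delta/16)^t$.

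Combining the four contributions, all of them are $O((1-\eta\Delta/16)^t)$, and the dominant coefficient comes from the top-left block, so the total bound takes the desired form $\kappa\sqrt{k}(1-\eta\Delta/16)^t$ for an appropriate constant $\kappa = \Theta\!\bigl(\tfrac{(\lambda_1^*)^2}{\eta\Delta^2}\bigr)$ that absorbs the other $\Delta$- and $\lambda_1^*$-dependent factors. The sample-size hypothesis $N\gtrsim (d-\log\delta)/(1-\eta\Delta/16)^{2T}$ and probability bound $1 - cT\delta$ inherit directly from the hypotheses of the two lemmas, applied on the same probability event via a single union bound.

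The main obstacle is not the algebra of combining the bounds but ensuring the \emph{rate} of the cross and residual blocks does not get worse than the top-left block — notice that the inverse-SNR lemma gives a decay of $(1-\eta\Delta/16)^{2t}$, but after multiplying by $\sigma_k^2(\Thetav_k^t)$ (which does not vanish, only $\sigma_1^2(\Thetav_{\text{res}}^t)$ does), the residual singular value only decays as $(1-\eta\Delta/16)^t$, and the off-diagonal blocks do as well. Thus one must be careful that every block indeed decays at least at the claimed exponential rate and that the accompanying prefactors can be absorbed into a single $\kappa$; this is where the region constraint $\Thetav^t\in\Rc$ (specifically the bound $\sigma_1^2(\Thetav^t)\leq 2\lambda_1^*$ that persists along the entire trajectory by \Cref{thm:thm3-informal}) is essential to keep the prefactors iteration-independent.
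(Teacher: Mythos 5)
Your proposal is correct and follows essentially the same route as the paper: the paper's proof of this result also bounds $\sigma_1(\Thetav_{\text{res}}^t)$ by combining the inverse-SNR lemma with the region constraint $\sigma_k^2(\Thetav_k^t)\le\sigma_1^2(\Thetav^t)\le 2\lambda_1^*$, bounds the top-left block directly via the $\sigma_1(\Dv^t)$ lemma, and then combines the pieces via the block inequality $\|\Thetav^t(\Thetav^t)^\top-\diag(\tilde{\Lambdav}_k,\mathbf{0})\|_F\le\sqrt{k}\|\Dv^t\|+O(\sqrt{k\lambda_1^*})\,\sigma_1(\Thetav_{\text{res}}^t)$, with $\kappa=\Theta({\lambda_1^*}^2/(\eta\Delta^2))$ emerging exactly as you note from the $\Dv^t$ block dominating.
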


The formal statement and proof of Theorem \ref{thm:thm4-informal} is given in  Appendix \ref{apdx:proof-absorbing}. 

We highlight that the most challenging part in the proof of Theorem \ref{thm:thm4-informal} is that the updating rule of $\Thetav^t$ {depends on} $\Rv^{t+1}$ in \eqref{equ:equ13}, which is caused by the distributed datasets.  {This is very different from the updating rule in \citet{jiang2022algorithmic,chen2023fast}, which relies on \emph{guaranteed} one-step improvement to establish convergence. In our case, however,  the one-step improvement is no longer guaranteed to exist.}

Finally, combining Theorems \ref{thm:thm2-informal}, \ref{thm:thm3-informal}, and \ref{thm:thm4-informal}, it is straightforward to show that when $t>T_{\Rc}$ and $N$ is sufficiently large, if $\Thetav^0$ satisfies the small random initialization stated in Theorem \ref{thm:thm2-informal}, it holds that $\|\Thetav^t(\Thetav^t)^\top-\diag(\tilde{\Lambdav}_k,\mathbf{0})\|\leq c_4\Big(1-\frac{\eta\Delta}{16}\Big)^{t}$ for constant $c_4 = \kappa\Big(1-\frac{\eta\Delta}{16}\Big)^{-T_{\Rc}}$.  

By applying Lemma \ref{lemma:bridge} in Appendix \ref{apdx:B-local-convg} we get
\begin{equation*}
    \begin{aligned}
        \|\tilde{\Bv}^t\tilde{\Wv}^t-\diag (\Lambdav_k,\mathbf{0})\|_F
        \leq
        \kappa\sqrt{k}\Big(1-\frac{\eta\Delta}{16}\Big)^{t}
    \end{aligned}
\end{equation*}
with high probability. Note that
$ \|\Bv^t\Wv^t-\Bv^*\Wv^*\|_F= \|\tilde{\Bv}^t\tilde{\Wv}^t-\diag (\Lambdav_k,\mathbf{0})\|_F$. Combing with the fact
$\sum_{i\in[M]}\|\Bv^t w_i^t-\Bv^*w_i^*\|^2= \|\Bv^t\Wv^t-\Bv^*\Wv^*\|_F^2$, we have
\begin{equation*}
    \begin{aligned}
    \sum_{i\in[M]}\|\Bv^t w_i^t-\Bv^*w_i^*\|^2 
    \leq
    \kappa^2k\Big(1-\frac{\eta\Delta}{16}\Big)^{2t}.
    \end{aligned}
\end{equation*}
Then \Cref{prop:prop1} follows by applying the Cauchy-Schwarz inequality.

\fi

\textbf{Challenges of the analysis.}
The analytical frameworks proposed by \citet{collins2021exploiting} and \citet{zhong2022feddar} for over-parameterized learning scenarios, as well as by \citet{chen2023fast} for low-rank matrix approximation, cannot handle the unique challenges that arise in the under-parameterized FRL framework, as elaborated below.

The first major challenge we encounter is to bound the gradient discrepancy on the update of $\Bv^t$, denoted as 
\( (\Bv^t\Wv^t - \Phiv)(\Wv^t)^\top - \sum_{i \in [M]} \frac{\Xv_i \Xv_i^\top}{N} \big(\Bv^t {w}_i^t - \phi_i\big)({w}_i^t)^\top. \)
Such difficulty is absent in the analyses in \citet{collins2021exploiting} and \citet{zhong2022feddar} because, in the over-parameterized regime and with a fixed number of samples per client per iteration, the error caused by the gradient discrepancy decays at a rate comparable to that of the representation estimation error. Therefore, the gradient discrepancy will gradually converge to zero. However, for the under-parameterized setting, even with the optimal $(\Bv^t, \Wv^t)$, i.e., when $\Bv^t\Wv^t = \Bv^*\Wv^*$, gradient discrepancy can still be non-zero, as the optimal representation cannot recover all local models, i.e., \(\Bv^*\Wv^*\neq \Phiv\). Instead, it only decreases when the number of samples \(N\) increases. This phenomenon indicates that an increase in the number of samples is essential to ensure one-step improvements of the estimated representations toward the ground-truth representation as the iteration progresses.

Another main challenge is ensuring that neither the gradient discrepancy nor noise-induced errors accumulate over iterations. This is critical as error accumulation can lead to significant deviation from the optimal solution, resulting in poor convergence and degraded model performance. To achieve this, we need to ensure the improvement of the estimation can dominate the effect of potential disturbances.

To tackle these new challenges, we first prove two concentration lemmas (\cref{lemma:concentration-of-Q} and \cref{lemma:concentration-of-tildeQ} in \cref{sec:aux-lemmas}) to ensure that the norm of the gradient discrepancy can be bounded when local datasets are sufficiently large. Next, to address the second challenge of avoiding the accumulation of gradient discrepancy and noise-induced errors over iterations, we develop iteration-dependent upper bounds for sample complexity (\cref{lemma:lemma10} and \cref{lemma:lemma12} in \cref{sec:proof-s3}). These bounds guarantee that the improvement in estimation, i.e., the 'distance' improvement between our estimated model and the optimal low-rank model, can mitigate potential disturbances caused by gradient discrepancy and noise. We establish this by introducing a novel approach to derive an accuracy-dependent upper bound for the per-client sample complexity, ensuring the error caused by the gradient discrepancy decays as fast as the {increase of the signal-to-noise ratio (SNR), formally introduced in \Cref{apdx:B}.}

\textbf{Main steps of the proof.} First, we transform the asymmetric matrix factorization problem into a symmetric problem by appropriately padding $\mathbf{0}$ columns or rows to $\Bv^t$ and $\Wv^t$ and constructing the updating matrices $\Thetav^t$ (see \Cref{apdx:B}). Our goal is then to prove that $\Thetav^t(\Thetav^t)^\top$ converges. We first show that, with a small random initialization, $\Thetav^t$ will enter a region containing the optima with high probability. Then, utilizing \cref{lemma:concentration-of-Q} and \cref{lemma:concentration-of-tildeQ}, we demonstrate that when $\Thetav^t$ enters the region $\Rc$, it will remain in this region with high probability despite gradient discrepancy and noise. Finally, utilizing \cref{lemma:lemma10} and \cref{lemma:lemma12}, we show that when $N$ is sufficiently large, $\Thetav^t(\Thetav^t)^\top$ converges at a linear rate with high probability under the influence of gradient discrepancy and noise, provided that the initialization condition satisfies $\Thetav^0 \in \Rc$.

\section{General \alg}\label{sec:gen-fedup}
In this section, we extend \alg designed for linear models to more general settings. 
Specifically, we use $\psi_{\Bv}$ to denote the representation, and assume linear local heads $f_{i}(z) = \Hv_i^\top z+b_i$, where $\Hv_i\in \Rb^{k\times m}, b_i\in \Rb^{m}$. This is motivated by the neural network architecture where all layers before the last layer are abstracted as the representation layer, and the last layer is linear.
Then, the objective function becomes
\begin{align}\label{obj:general-flute-obj}
\min_{\Bv,\{\Hv_i\},\{b_i\}}\frac{1}{M}\sum_{i\in[M]}\frac{1}{N}\sum_{(x,y)\in\Dc_i}
\ell\big(\Hv_i^\top \psi_{\Bv}(x)+b_i, y\big)+\lambda R(\{\Hv_i\}, \Bv),
\end{align}
where $R(\{\Hv_i\}, \Bv)$ is the regularization term to encourage the alignment of local models with the global optimum structure.

The general \alg algorithm for solving problem~\eqref{obj:general-flute-obj} is provided in Algorithm \ref{alg:general-flute} in \Cref{appx:general}. 
 Given the non-linearity of {$\psi_{\Bv}$}, the penalty introduced in linear \alg is not directly applicable to the general problem. We thus formulate and design new penalty terms, following the same principles that motivated the design in the linear setting. This is to mitigate the local over-fitting induced by local updates and to encourage a structure benefit to global optimization. 
{As a concrete example, we present a design of the penalty term for the classification problem with CNN as a prediction model in \Cref{sec:real-world-datasets}.}

\section{Experimental Results}

\subsection{Synthetic Datasets}
We generate a  synthetic dataset as follows. First, we randomly generate $\phi_i$ according to a $d$-dimensional standard Gaussian distribution. For each $\phi_i$, we then randomly generate $N$ pairs of $(x,y)$, where $x$ is sampled from a standard Gaussian distribution, $\xi$ is sampled from a centered Gaussian distribution with variance $\sigma^2$, and $y = \phi_i^\top x+\xi$.  

In \Cref{fig:synthetic}, we compare \alg with FedRep~\citep{collins2021exploiting}. We measure the quality of the learned representation $\Bv^t$ and $\Wv^t$ over the metric $\frac{1}{M}\sum_{i\in[M]}\|\Bv^t w_i^t-\phi_i\|$.
We emphasize that FedRep requires {empirical covariance estimated from the local datasets} to be transmitted to the server for the initialization. Thus, it begins with a good estimate of the subspace spanned by $\Bv^*$. In contrast, \alg commences with a random initialization of both the representation and the heads. As a result, FedRep converges to a relatively small error within the few initial epochs, while \alg needs to go through more epochs to obtain a good estimate of the representation. However, as the learning progresses over more epochs, \alg eventually outperforms FedRep.
{To validate this hypothesis, we introduce FedRep(RI) in our experiments, which has the same initialization as \alg but is otherwise identical to FedRep. We see from \Cref{fig:synthetic} that when FedRep is randomly initialized, \alg outperforms FedRep(RI) in much fewer iterations.} 

We also observe that the performance gain of \alg is more pronounced in highly under-parameterized scenarios, i.e., where $k$ is relatively small. 
As $k$ increases, the gap between the convergence rates of \alg and FedRep narrows. 
These results demonstrate that \alg achieves better performance in the under-parameterized regime. In the additional experimental results included in \cref{apdx:experiment}, we also observe that when the number of participating clients $M$ increases, the average error of the model learned from \alg decreases, which is consistent with \Cref{prop:prop1}.

\begin{figure}[t]
\begin{subfigure}{\linewidth}
 \centering
    \hspace{-0.05\linewidth}
     \includegraphics[width=0.6\textwidth]{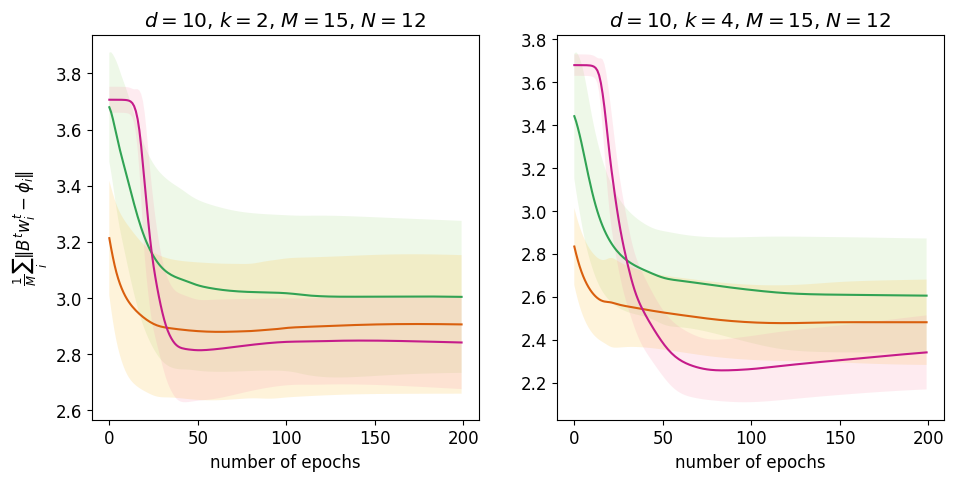}
 \end{subfigure}
 \begin{subfigure}{\linewidth}
 \centering
 \hspace{-0.05\linewidth}
     \includegraphics[width=0.6\textwidth]{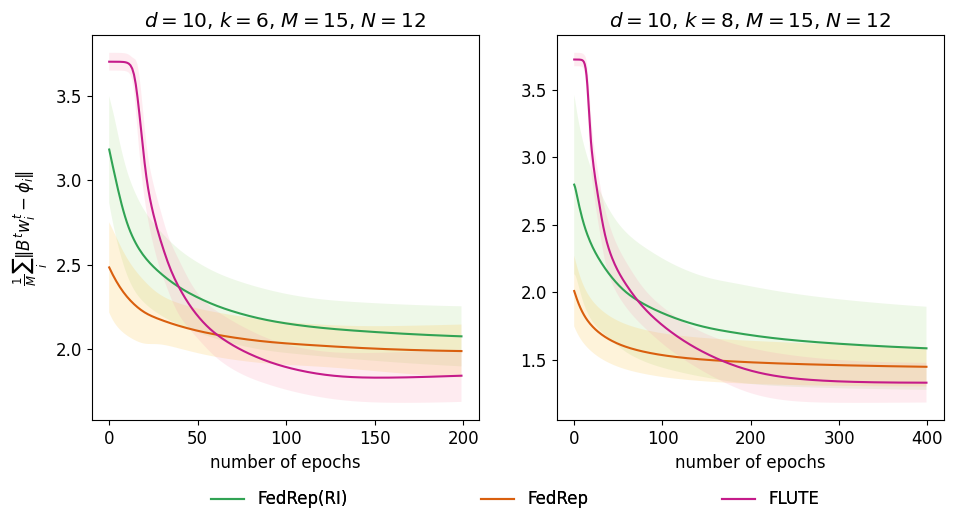}
 \end{subfigure}
 \vspace{-0.2in}
 \caption{Experimental results with synthetic datasets.}
 \label{fig:synthetic}
\end{figure}

\subsection{Real World Datasets}
\label{sec:real-world-datasets}
\textbf{Datasets and models.}
We now evaluate the performance of general \alg on multi-class classification tasks with real-world datasets CIFAR-10 and CIFAR-100~\citep{krizhevsky2009learning}. 
For all experiments, we adopt a convolutional neural network (CNN) with two convolution layers, two fully connected layers with ReLU activation, and a final fully connected layer with a softmax activation function. A detailed description of the CNN structure is deferred to \Cref{apdx:experiment} of the Appendix.

\textbf{Algorithms for comparison.} 
We compare \alg with several baseline algorithms, including FedAvg \citep{mcmahan2017communication}, Fed-LG \citep{liang2020think}, FedPer \citep{arivazhagan2019federated}, FedRep \citep{collins2021exploiting}, FedRod \citep{chen2021bridging} and FedCP \citep{zhang2023fedcp}. Fed-LG is designed to learn a common head shared across clients while allowing for localized representations, while FedPer and FedRep both assume shared representation and personalized local heads. FedRod extends the model considered in FedRep by adding another head layer into the local model, and FedCP further equips a conditional policy network into the local model. 
We also consider variants of \alg and FedRep, denoted as \algstar and FedRep*, respectively, under which we vary the number of updates of the local heads in each communication round, as elaborated later.

\textbf{Loss function and penalty.} 
For algorithms other than \alg and \algstar, the local loss function is chosen as
 $\Lc_i = \frac{1}{N}\sum_{(x,y)\in\Dc_i}\Lc_{\text{CE}}(\Hv_i^\top \psi_\Bv(x)+b_i,y)$,
where $\Lc_{\text{CE}}$ is the cross entropy loss.  
The local loss function for {\alg and \algstar} are specialized as
\begin{align}
&\Lc_i(\Bv,b,\Hv) 
    =
\frac{1}{N}\sum_{(x,y)\in\Dc_i}
\Lc_{\text{CE}}\big(\Hv_i^\top \psi_{\Bv}(x)+b_i, y\big)+\quad\lambda_1\|\psi_{\Bv}(x)\|^2+\lambda_2\|\Hv_i\|_F^2+\lambda_3\Nc\Cc_i(\Hv_i),
\end{align}
where $y\in\Rb^m$ is a one-hot vector whose $k$-th entry is $1$ if the corresponding $x$ belongs to class $k$, $\lambda_1$, $\lambda_2$ and $\lambda_3$ are non-negative regularization parameters. $\Nc\Cc_i(\Hv_i)$ is motivated by \citet{Papyan_2020} and set as
\[
\Nc\Cc_i(\Hv_i)=\norm{\frac{\Hv^\top_i\Hv_i}{\|\Hv^\top_i\Hv_i\|_F} -
    \frac{1}{\sqrt{m-1}}\mathbf{u}_i\mathbf{u}_i^\top\odot\mleft(\Iv_{m}-\frac{1}{m}\mathbf{1}_m\mathbf{1}^\top_{m}\mright)}_F,
\]
where $\mathbf{u}_i$ is an $m$-dimensional one-hot vector whose $c$-th entry is $1$ if $c\in\Cc_i$, and $\odot $ denotes the Hadamard product. We specialize the regularization term optimized on the server side as $R(\{\Hv_i\}) = \sum_{i}\Nc\Cc_i(\Hv_i)$. Note that for general \alg specified to a classification problem, we penalize $\|\psi_\Bv(x)\|$ instead of directly penalizing the parameter $\Bv$. Since $\|\psi_\Bv(x)\|^2$ depends on data, the regularization term is optimized partially on the client side and partially on the server side. 

Compared with the objective function in \cref{def:AMA-emp} for the linear case, the term $\lambda_3\mathcal{N}\mathcal{C}_i(\mathbf{H}_i)$ replaces term (I) and $\lambda_1\|\psi_{\mathbf{B}}(x)\|^2 + \lambda_2\|\mathbf{H}_i\|_F^2$ replaces term (II). The primary goal of introducing $\lambda_3\mathcal{N}\mathcal{C}_i(\mathbf{H}_i)$ is to mitigate local over-fitting that occurs during local updates in the training process. As elaborated in \Cref{appx:nc}, such a regularization term promotes a beneficial structure for the global model, facilitating efficient learning performance. This term shares the same motivation as the term (I) in the linear scenario, which focuses on distilling significant components from the model to mitigate local over-fitting effects. For term (II), we only replace $\|\mathbf{B}^\top\mathbf{B}\|_F^2$ with $\|\psi_{\mathbf{B}}(x)\|^2$, since the representation is not linear in general. 

\begin{table*}[t]
\footnotesize
\centering
 \vspace{-0.15in}
\caption{Average test accuracy on CIFAR-10 and CIFAR-100.}
\centering 
\hspace*{-0.5cm}
\begin{tabular}{c|Z|Z|Z|Z|Z|Z|Z|Z}
\midrule
\midrule
Dataset & \multicolumn{4}{|c|}{CIFAR-10}&\multicolumn{4}{c}{CIFAR-100}\\
\midrule
Partition & 50 $\times$ 2 & 50 $\times$ 5 & 100 $\times$ 2 & 100 $\times$ 5 & 100 $\times$ 5 & 100 $\times$ 10 & 100 $\times$ 20 & 100 $\times$ 40\\
\midrule     
FedAvg & 34.460{\tiny $\pm$1.083} & 47.217{\tiny $\pm$0.395} & 41.584{\tiny $\pm$0.433} & 51.876{\tiny $\pm$0.675} & 20.212{\tiny $\pm$0.574} & 31.533{\tiny $\pm$0.519} & 34.659{\tiny $\pm$0.482} & 32.902{\tiny $\pm$0.195}\\
FedAvg-FT & 83.996{\tiny $\pm$0.948} & 71.465{\tiny $\pm$0.701} & 84.688{\tiny $\pm$0.437} & 70.884{\tiny $\pm$0.697} & 78.342{\tiny $\pm$0.574} & 66.660{\tiny $\pm$0.370} & 54.464{\tiny $\pm$0.178} & 44.858{\tiny $\pm$0.119}\\
Fed-LG & 82.724{\tiny $\pm$2.137} & 61.820{\tiny $\pm$0.409} & 83.019{\tiny $\pm$0.431 }& 62.957{\tiny $\pm$0.895} & 72.526{\tiny $\pm$0.692}   & 53.526{\tiny $\pm$0.151}   & 34.445{\tiny $\pm$0.375}   & 22.702{\tiny $\pm$0.315} \\
FedPer   & 85.173{\tiny $\pm$1.082}            & 74.015{\tiny $\pm$0.724}            & 86.168{\tiny $\pm$0.703}   & 73.666{\tiny $\pm$0.281}     & 76.001{\tiny $\pm$0.454}   & 67.100{\tiny $\pm$0.229}   & 56.066{\tiny $\pm$0.389}   & 44.689{\tiny $\pm$0.411} \\
FedRep   & 86.133{\tiny $\pm$0.775}            & 71.737{\tiny $\pm$0.296}            & 86.685{\tiny $\pm$0.766}   & 73.808{\tiny $\pm$0.561}    & 78.621{\tiny $\pm$0.159}   & 68.530{\tiny $\pm$0.255}   & 56.360{\tiny $\pm$0.245}   & 43.061{\tiny $\pm$0.476}    \\
FedRep*   & 87.320{\tiny $\pm$1.485}            & 75.766{\tiny $\pm$0.220}            & 87.177{\tiny $\pm$0.489}   & 75.296{\tiny $\pm$0.505}    & 78.892{\tiny $\pm$0.410}   & 68.630{\tiny $\pm$0.705}   & 56.654{\tiny $\pm$0.609}   & 42.025{\tiny $\pm$0.404}      \\
FedRoD & 79.476{\tiny $\pm$2.966} & 68.728{\tiny $\pm$1.750} & 83.296{\tiny $\pm$1.545} & 72.116{\tiny $\pm$0.788} & 74.299{\tiny $\pm$0.338} & 66.462{\tiny $\pm$0.284} & 57.280{\tiny $\pm$0.105} & 48.120{\tiny $\pm$0.186}\\
FedCP & 85.361{\tiny $\pm$1.605} & 71.603{\tiny $\pm$0.885} & 84.798{\tiny $\pm$0.489} & 71.344{\tiny $\pm$0.587} & 74.266{\tiny $\pm$0.559} & 66.426{\tiny $\pm$0.372} & 57.067{\tiny $\pm$0.483} & 43.638{\tiny $\pm$0.415}\\
\rowcolor{clr}
\alg    & 87.012{\tiny $\pm$0.453}     & 76.478{\tiny $\pm$0.484}     & 86.128{\tiny $\pm$1.007}   & 76.918{\tiny $\pm$0.712}    & 77.750{\tiny $\pm$0.615}   & 70.598{\tiny $\pm$0.282}   & 59.243{\tiny $\pm$0.334} &   48.169{\tiny $\pm$0.597}   \\
\rowcolor{clr}
\algstar     & 
{87.713{\tiny $\pm$1.365}}
   & {76.543{\tiny $\pm$0.921}}   & {88.567{\tiny $\pm$0.457}}   & {78.255{\tiny $\pm$0.688}}   & {79.560{\tiny $\pm$0.627}}   & {70.844{\tiny $\pm$0.419}}   & {59.714{\tiny $\pm$0.448}}   & {48.170{\tiny $\pm$0.440}}  \\
\midrule
\end{tabular}
\label{tab:table-cifar}
\vspace{-0.3cm}
\end{table*}

\textbf{Implementation and evaluation.}
{We use $m$ to denote the number of classes assigned to each client.}
For CIFAR-10 dataset, we consider four $(N,m)$ pairs: $(50,2)$, $(50,5)$, $(100,2)$ and $(100,5)$; For CIFAR-100 dataset, we consider four $(N,m)$ pairs: $(100,5)$, $(100,10)$, $(100,20)$ and $(100,40)$.

For experiments conducted on the CIFAR-10 dataset, all algorithms are executed over 100 communication rounds. For LG-Fed, FedPer, FedRoD, FedCP and \alg, each client performs one round of local updates in each communication round. 
FedRep performs one epoch of local head update and an additional epoch for the local representation update. 
Compared with FedRep, FedRep* processes 10 epochs to update its local heads and one epoch to update its representation.
For comparison, {\algstar} also runs 11 rounds of local updates, updating both representation and local head in the first round, followed by 10 rounds of only updating the local head.

The experiments on the CIFAR-100 dataset also use 100 communication rounds. The number of local updates for LG-Fed, FedPer, FedRoD, FedCP and \alg are set to 5. FedRep is configured to update the local representation and head for 5 epochs each, while FedRep* allocates 5 epochs for updating the local representation and 10 for updating the local head. \algstar runs 15 epochs of local updates, where the initial 5 epochs update both the representation and local head while the subsequent 10 epochs solely update the local head. 

\textbf{Averaged performance.} The results are reported in \Cref{tab:table-cifar}. It is evident that \alg and \algstar consistently outperform other baseline algorithms in all experiments conducted on CIFAR-10 and CIFAR-100 datasets. This superior performance is attributed to the tailored design that encourages the locally learned models to move towards a global optimal solution rather than a local optimum. We also observe that the gain of \alg and \algstar becomes more prominent with larger $N$ and $m$. Intuitively, larger $N$ and $m$ implies more severe under-parameterization for the given CNN model, and our algorithms exhibit more advantage for such cases. 

\section{Conclusion}
{
To the best of our knowledge, this paper represents the first effort in the study of federated representation learning in the under-parameterized regime, which is of great practical importance. We have proposed a novel FRL algorithm \alg that was inspired by asymmetric low-rank matrix approximation. \alg incorporates a novel regularization term in the loss function and solves the corresponding ERM problem in a federated manner. We proved the convergence of \alg and established the per-client sample complexity that is comparable to the over-parameterized result but with very different proof techniques. We also extended \alg to general (non-linear) settings which are of practical interest. \alg demonstrated superior performance over existing FRL solutions in both synthetic and real-world tasks, highlighting its advantages for efficient learning in the under-parameterized regime.
}

\section*{Acknowledgements}
The work of R. Liu and J. Yang was supported in part by the U.S. National Science Foundation under grants CNS-1956276,  CNS-2003131 and CNS-2114542. The work of C. Shen was supported in part by the U.S. National Science Foundation under grants ECCS-2332060, CPS-2313110, ECCS-2143559, and ECCS-2033671.

\bibliography{FRL_in_the_Under-Param_Regime}
\bibliographystyle{icml2024}

\onecolumn

\appendix
\allowdisplaybreaks

\textbf{Notations.} 
Throughout this paper, bold capital letters (e.g., $\Xv$) denote matrices, and calligraphic capital letters (e.g., $\Cc$) denote sets. We use $\text{tr}(\mathbf{X})$ to denote the trace of matrix $\Xv$, $\sigma_{\min}(\Xv)$ and $\sigma_{\max}(\Xv)$ to denote the minimum and maximum singular values of $\Xv$, respectively, and $\diag (x_1,\cdots,x_d)$ to denote a $d$-dimensional diagonal matrix with diagonal entries $x_1,\cdots,x_d$. $|\Cc|$ denotes the cardinality of set $\Cc$, 
and $\{X_i\}_{i\in[N]}$ denotes the set $\{X_1, \cdots, X_N\}$.
We use $\langle x,y\rangle$ to denote the inner product of $x$ and $y$, and $\|x\|$ to denote the Euclidean norm of vector $x$. 
We use $f \circ \psi$ to denote the composition of functions $f:\Rb^{k}\rightarrow \Rb^{m}$ and $\psi:\Rb^{d}\rightarrow \Rb^k$, i.e., $(f \circ \psi)(x)=f(\psi(x))$. $a\lesssim b$ indicates $a\leq Cb$ for a positive constant $C$.
$\mathbf{I}_d$ represents a $d\times d$ identity matrix, and $\mathbf{0}$ is a $d$-dimensional all-zero vector.

Denote $\dh := \max\{d,M\}$, $\dl:=\min\{d,M\}$, and $\Phiv_\star\in\Rb^{\dh\times \dh}$ as the matrix constructed from $\Phiv\in \Rb^{d\times M}$ by padding all-zero columns or rows. Define its SVD as $\Phiv_\star = \Uv_\star\Lambdav_\star\Vv^\top_\star$. Denote
$\tilde{\Lambdav} = \diag(2\Lambdav_\star, -2\Lambdav_\star)$ and
let $\lambda_1^* \geq \lambda_2^* \geq \cdots \geq \lambda_{2\dh}^*$ be the eigenvalues of $\tilde{\Lambdav}$, with $\Delta = \lambda_k^* - \lambda_{k+1}^*$. Note that the definition of $\Delta$ is consistent with the definition in \Cref{sec:section5}. For clarity of presentation, we use $\sigma_{\xi}$ to denote the standard deviation of the noise $\xi$ instead of $\sigma$ that is used in the main paper.

\section{Analysis of the \alg Linear Algorithm}\label{apdx:B}

\subsection{Preliminaries}\label{apdx:B-prelim}

We start with the updating rule of $\Bv^t$ and $\Wv^t$ in {Algorithm \ref{alg:fedUP}}. 

For $\Bv^t$, from \alg we have the following updating rule:
\begin{align*}
    {\Bv}^{t+1} 
    &= 
    \Bv^{t}-\frac{{\eta}}{N}\sum_{i\in[M]}\sum_{j\in[N]}x_{i,j}\big(x_{i,j}^\top\mathbf{B}^{t}{w}_i^{t}-y_{i,j}\big)({w}_i^{t})^\top-\frac{\eta}{2}\Bv^{t}\big((\Bv^{t})^\top\Bv^{t}-\Wv^{t}(\Wv^{t})^\top\big)\\
    &=
    \Bv^{t}-\frac{{\eta}}{N}\sum_{i\in[M]}\Xv_i\Xv_i^\top\big(\mathbf{B}^{t} {w}_i^{t}-\phi_i\big)({w}_i^{t})^\top-\frac{\eta}{2}\Bv^{t}\big((\Bv^{t})^\top\Bv^{t}-\Wv^{t}(\Wv^{t})^\top\big).
\end{align*}
Since data points $\{x_{i,j}\}$ are sampled from a standard Gaussian distribution, for large $N$, it holds that $\Xv_i\Xv_i^\top/N\approx \Iv$. Then, we introduce the following definition:
\begin{align}\label{def:Q}
    \Qv^{t+1} 
    \coloneqq 
    \eta\sum_{i\in[M]}\big(\mathbf{B}^{t} {w}_i^{t}-\phi_i\big)({w}_i^{t})^\top
    -
    \eta\sum_{i\in[M]}\frac{\Xv_i\Xv_i^\top}{N}\big(\mathbf{B}^{t} {w}_i^{t}-\phi_i\big)({w}_i^{t})^\top
    +
    \eta\sum_{i\in[M]}\frac{\Xv_{i}\Ev_{i}({w}_i^{t})^\top}{N}
    .
\end{align}
With this definition, the updating rule of $\Bv^t$ can be rewritten as 
\begin{align*}
    {\Bv}^{t+1} 
    =
     \Bv^{t}-\eta\big(\Bv^{t}\Wv^{t}-\Phiv\big)(\Wv^{t})^\top-\frac{\eta}{2}\Bv^{t}\big((\Bv^{t})^\top\Bv^{t}-\Wv^{t}(\Wv^{t})^\top\big)+\Qv^{t+1}.
\end{align*}
Now, we consider the updating rule of $\Wv$. Observe that each of its columns satisfies
\begin{align*}
    w_i^{t+1}
    &= 
    w_i^{t}-\frac{\eta}{N}\sum_{j\in[N]}(\Bv^{t})^\top x_{i,j}\big((x_{i,j})^\top\mathbf{B}^{t}{w}_i^{t}-y_{i,j}\big)\\
    &=w_i^{t}-\frac{\eta}{N}(\Bv^{t})^\top\Xv_i\Xv_i^\top\big(\mathbf{B}^{t} {w}_i^{t}-\phi_i\big).
\end{align*}
We define $\tilde{\Qv}^{t+1} \coloneqq [\tilde{q}_1^{t+1},\cdots,\tilde{q}_M^{t+1}]$, where each of its columns is given by
\begin{align}\label{def:tildeQ}
    \tilde{q}_i^{t+1}
    \coloneqq
    \eta(\Bv^{t})^\top\big(\mathbf{B}^{t} w_i^{t}-\phi_i\big)
    -
    \frac{\eta}{N}(\Bv^{t})^\top\Xv_i\Xv_i^\top\big(\mathbf{B}^{t} w_i^{t}-\phi_i\big)
    +
    \frac{\eta}{N}(\Bv^t)^\top\Xv_{i}\Ev_{i}
    .
\end{align}
Then, $\Wv^t$ is updated according to
\begin{align*}
    \Wv^{t+1}=\Wv^{t}-\eta(\Bv^{t})^\top(\mathbf{B}^{t} \Wv^{t}-\Phiv)+\frac{\eta}{2}\big((\Bv^{t})^\top\Bv^{t}-\Wv^{t}(\Wv^{t})^\top\big)\Wv^{t}+\tilde{\Qv}^{t+1}.
\end{align*}
Recall the SVD of $\Phiv$ is denoted as $\Phiv=\Uv\Lambdav\Vv^\top$. Further denote $\tilde{\Bv}^t=\Uv^\top\Bv^t$ and $\tilde{\Wv}^t=\Wv^t\Vv$. Then, we have 
    \begin{align*}
        \tilde{\Bv}^{t+1}
       & =
    \tilde{\Bv}^{t}-\eta\big(\tilde{\Bv}^{t}\tilde{\Wv}^{t}-\Lambdav\big)\big(\tilde{\Wv}^{t}\big)^\top-\frac{\eta}{2}\tilde{\Bv}^{t}\big((\tilde{\Bv}^{t})^\top\tilde{\Bv}^{t}-\tilde{\Wv}^{t-1}(\tilde{\Wv}^{t})^\top\big)+\Uv^\top\Qv^{t+1},\\
    \tilde{\Wv}^{t+1}
    &=
    \tilde{\Wv}^{t}-\eta(\tilde{\Bv}^{t})^\top\big(\tilde{\mathbf{B}}^{t} \tilde{\Wv}^{t}-\Lambdav\big)-\frac{\eta}{2}\big((\tilde{\Bv}^{t})^\top\tilde{\Bv}^{t}-\tilde{\Wv}^{t}(\tilde{\Wv}^{t})^\top\big)\tilde{\Wv}^{t}+\tilde{\Qv}^{t+1}\Vv.
    \end{align*}
Similar to the definition of $\Phiv_\star$, we construct $\Bv^t_\star\in\Rb^{\dh\times k}$ and $\Wv^t_\star\in\Rb^{k\times \dh}$ by padding all-zero columns or rows to $\tilde{\Bv}^t$ and $\tilde{\Wv}$, respectively. Similarly, we obtain $\Qv^t_\star\in\mathbb{R}^{\dh\times k}$ and $\tilde{\Qv}^t_\star\in\mathbb{R}^{k\times \dh}$ by padding all-zero columns or rows to $\Qv^t$ and $\tilde{\Qv}^t$, respectively. Then, we define $\Thetav^t$ and $\Rv^{t}$ as
\begin{align*}
    \Thetav^t 
        &=
        \bigg[\frac{(\Bv^t_\star)^\top+\Wv_\star^t}{\sqrt{2}}
        \quad
        \frac{(\Bv^t_\star)^\top-\Wv^t_\star}{\sqrt{2}}\bigg]^\top,\\    
        \Rv^{t}&=\bigg[\frac{(\Qv^{t}_\star)^\top\Uv_\star+\tilde{\Qv}^{t}_\star\Vv_\star}{\sqrt{2}}\quad
\frac{(\Qv^{t}_\star)^\top\Uv_\star-\tilde{\Qv}^{t}_\star \Vv_\star}{\sqrt{2}}\bigg]^\top.
\end{align*}
Then, the updating rule of $\Thetav^t$ can be described as
\begin{align}
    \Thetav^{t+1} = \Thetav^{t}+\frac{\eta}{2}\tilde{\Lambdav}\Thetav^{t}-\frac{\eta}{2}\Thetav^{t}(\Thetav^{t})^\top\Thetav^{t}
    +\Rv^{t+1}.\label{equ:44}
\end{align}
Let $\Thetav^t = [(\Thetav^t_{k})^\top(\Thetav^t_{\text{res}})^\top]^\top$ and $\Rv^t = [(\Rv^t_{k})^\top\quad(\Rv^t_{2\dh-k})^\top]^\top$
where $\Thetav^t_{k}\in\Rb^{k\times k}$, $\Thetav^t_{\text{res}}\in\Rb^{(2\dh-k)\times k}$,
$\Rv^t_{k}\in\Rb^{k\times k}$ and $\Rv^t_{2\dh-k}\in\Rb^{(2\dh-k)\times k}$.
Then, we decompose the updating rule of $\Thetav^t$ as
\begin{align}
    \Thetav^t_{k} &= \Thetav^{t-1}_{k}+\frac{\eta}{2}\tilde{\Lambdav}_k\Thetav^{t-1}_{k}-\frac{\eta}{2}\Thetav^{t-1}_{k}(\Thetav^{t-1})^\top\Thetav^{t-1}+\Rv^{t}_k,\label{eqn:Theta-k}\\
    \Thetav^t_{\text{res}} &= \Thetav^{t-1}_{\text{res}}+\frac{\eta}{2}\tilde{\Lambdav}_{\text{res}}\Thetav^{t-1}_{\text{res}}-\frac{\eta}{2}\Thetav^{t-1}_{\text{res}}(\Thetav^{t-1})^\top\Thetav^{t-1}+\Rv^{t}_{2d-k}.\label{eqn:Theta-res}
\end{align}

\subsection{Proof of \Cref{prop:prop1}}

First, we restate \Cref{prop:prop1} as follows.
\begin{theorem}[Restatement of \Cref{prop:prop1}]
        Set $\lambda$ and $\eta$ as in \Cref{thm:thm1}. 
        Then for constant $T_\Rc$ and any $T>T_\Rc$, 
        there exist positive constants $c_1$ and $c_2$ such that when the number of samples per client satisfies $N\geq c_1\frac{(\dh-\log\delta+\log T)(k\sqrt{(\lambda_1^*)^2 +E}+\sqrt{k}\sigma_{\xi})^2}{\kappa_T^2\Delta^2}$, for all $T_\Rc<t\leq T$
        we have
            \begin{align}
        \frac{1}{M}\sum_{i\in[M]}\|\Bv^t w^t_i-\Bv^* w_i^*\|
        \leq
        \frac{c_2\kappa\sqrt{k}}{\sqrt{M}}\Big(1-\frac{\eta\Delta}{16}\Big)^t,\nonumber
    \end{align}
 with probability at least $1-\delta$, where $\kappa_T = (1-\frac{\eta\Delta}{16})^T$.
\end{theorem}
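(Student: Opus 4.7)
The plan is to carry out the three-stage trajectory argument that is already sketched in Section~5.2, but fleshed out with the concentration machinery the proof needs. First I would translate the problem into symmetric form as in the preliminaries: combining the padded versions of $\tilde\Bv^t$ and $\tilde\Wv^t$ into $\Thetav^t\in\Rb^{2\dh\times k}$ reduces the analysis to showing that $\Thetav^t(\Thetav^t)^\top$ converges linearly to $\diag(\tilde\Lambdav_k,\mathbf{0})$, since by unitary invariance this Frobenius distance equals $\|\Bv^t\Wv^t-\Bv^*\Wv^*\|_F$. The update rule \eqref{equ:44} is a symmetric gradient step plus a perturbation $\Rv^{t+1}$ built from $\Qv^{t+1}$ and $\tilde\Qv^{t+1}$, which aggregate the finite-sample gradient discrepancy and the label noise. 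So the whole argument hinges on controlling $\Rv^{t+1}$ tightly enough at every iteration.

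The second stage is the concentration step. I would prove two matrix-Bernstein-type bounds for $\|\Qv^{t+1}\|$ and $\|\tilde\Qv^{t+1}\|$: the first term in each is of the form $\sum_i (\Iv-\Xv_i\Xv_i^\top/N)(\Bv^t w_i^t-\phi_i)(w_i^t)^\top$, which I bound using the covariance-deviation concentration for sub-Gaussian $x$; the second is $\sum_i \Xv_i\Ev_i (w_i^t)^\top/N$, which I bound using the sub-Gaussian-times-sub-Gaussian product tail. The crucial feature is that, unlike in the over-parameterized setting, the residual $\Bv^t w_i^t-\phi_i$ does \emph{not} vanish at the optimum: its norm plateaus at an irreducible $\sqrt{E-\sum_{j\le k}\lambda_j^2}$. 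Consequently the resulting bound scales like $\sqrt{(\dh+\log(T/\delta))/N}\cdot(\sqrt{k(\lambda_1^*)^2+E}+\sqrt{k}\sigma_\xi)$ after a union bound over $t\le T$, which is what forces the $N$-lower-bound in the theorem.

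With these bounds in hand, I execute the three substeps on $\Thetav^t$. (a) Starting from the $\mathcal N(0,\alpha^2)$ initialization, $\sigma_1(\Thetav^0)=O(\alpha\sqrt{\dh})$ is tiny, and the signal term $\frac{\eta}{2}\tilde\Lambdav\Thetav^t$ geometrically amplifies the first $k$ singular values until $\sigma_k^2(\Thetav^t_k)\ge\Delta/4$; since $\alpha\lesssim 1/(10d)$, this happens after at most $T_\Rc$ steps, placing $\Thetav^{T_\Rc}\in\Rc$. (b) Using the block decomposition \eqref{eqn:Theta-k}--\eqref{eqn:Theta-res}, I show each of the three inequalities defining $\Rc$ is preserved in one step provided $\|\Rv^{t+1}\|$ is smaller than the slack in that inequality, which is ensured by the concentration bound. (c) Inside $\Rc$ I control two quantities jointly: the inverse signal-to-noise ratio $\sigma_1^2(\Thetav_\text{res}^t)/\sigma_k^2(\Thetav_k^t)$ and the signal error $\Dv^t:=\|\Thetav_k^t(\Thetav_k^t)^\top-\tilde\Lambdav_k\|$. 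A careful one-step computation using \eqref{eqn:Theta-k}--\eqref{eqn:Theta-res} shows both contract by a factor $(1-\eta\Delta/16)$ per iteration, modulo an additive $O(\|\Rv^{t+1}\|)$ perturbation.

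The main obstacle, and the reason the sample-complexity bound carries the factor $\kappa_T^{-2}=(1-\eta\Delta/16)^{-2T}$, is closing the induction in substep (c). The contraction compounded over $T$ iterations multiplies the initial error by the tiny factor $(1-\eta\Delta/16)^T$, so the per-step perturbation $\|\Rv^{t+1}\|$ must be smaller than the \emph{current} (exponentially decayed) error rather than the initial one. This necessitates choosing $N$ large enough that $\|\Rv^{t+1}\|\lesssim \kappa_T\cdot\Delta$ uniformly over $t\in[T_\Rc,T]$, after a single union bound, yielding exactly the stated lower bound on $N$. Once the joint contraction is proved, I get $\|\Thetav^t(\Thetav^t)^\top-\diag(\tilde\Lambdav_k,\mathbf{0})\|_F\le \kappa\sqrt{k}(1-\eta\Delta/16)^t$ with the required probability. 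Translating via $\|\Bv^t\Wv^t-\Bv^*\Wv^*\|_F^2=\sum_i\|\Bv^t w_i^t-\Bv^* w_i^*\|^2$ and applying Cauchy--Schwarz gives the claimed $\sqrt{k/M}$-scaled bound on $\frac{1}{M}\sum_i\|\Bv^t w_i^t-\Bv^* w_i^*\|$.
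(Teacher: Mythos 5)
Your proposal follows the same three-stage trajectory argument as the paper's proof (enter the region $\Rc$ from small random initialization, show $\Rc$ is absorbing despite the perturbation $\Rv^{t+1}$, then establish joint linear contraction of the inverse SNR $\sigma_1^2(\Thetav^t_{\text{res}})/\sigma_k^2(\Thetav^t_k)$ and the signal error $\|\Thetav^t_k(\Thetav^t_k)^\top - \tilde\Lambdav_k\|$), with the same concentration machinery (sub-exponential tail plus $\epsilon$-net for the two pieces of $\Qv^{t+1}$ and $\tilde\Qv^{t+1}$), the same key observation that the gradient discrepancy does not vanish at the optimum in the under-parameterized regime, the same iteration-dependent lower bound on $N$ needed to close the induction, and the same final passage to the claim via the Frobenius identity and Cauchy--Schwarz. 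This matches the paper's proof essentially step by step.
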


\textbf{Overview of the proof.} The proof of \Cref{prop:prop1} consists of three main steps.  
\begin{itemize}[topsep=0pt, leftmargin=10pt] 
\item \textbf{Step 1:} We show that with a small random initialization, $\Thetav^t$ will enter a region containing the optima with high probability (see \Cref{sec:proof-s1}).
\item \textbf{Step 2:} We show that once $\Thetav^t$ enters this region, it will stay in it with high probability (see \Cref{sec:proof-s2}). 
\item \textbf{Step 3:} We show that when $N$ is sufficiently large, with high probability it holds that $\|\Thetav^t(\Thetav^t)^\top-\diag(\tilde{\Lambdav}_k,\mathbf{0})\|$ converges to $0$ at a linear rate when the initialization satisfies $\Thetav^0\in\Rc$ (see \Cref{sec:proof-s3}).
\end{itemize}
We then put pieces together and prove \Cref{prop:prop1} in \Cref{sec:proof-prop1}. We introduce some auxiliary lemmas in \Cref{sec:aux-lemmas}.

\subsubsection{Step 1: Entering a Region with Small Random Initialization}\label{sec:proof-s1}

We first introduce the following definitions, adapted from the proof in~\citet{chen2023fast}.
Recall that $\Delta = \lambda_k^*-\lambda_{k+1}^*$. We define
\begin{align*}
    \Rc
   & =
    \bigg\{\Thetav^t=
    \begin{bmatrix}
        \Thetav^t_{k}\\
        \Thetav^t_{\text{res}}
    \end{bmatrix}\in\Rb^{2d\times k}\bigg|\sigma_1^2(\Thetav^t)\leq2\lambda_1^*,\quad
    \sigma_1^2(\Thetav^t_{\text{res}})\leq\lambda^*_k-\Delta/2,\quad
    \sigma_k^2(\Thetav^t_k)\geq\Delta/4\bigg\},\\
    \Rc_s
    & =
    \bigg\{\Thetav^t=
    \begin{bmatrix}
        \Thetav^t_{k}\\
        \Thetav^t_{\text{res}}
    \end{bmatrix}\in\Rb^{2d\times k}\bigg|\sigma_1^2(\Thetav^t)\leq2\lambda_1^*,\quad
    \sigma_1^2(\Thetav^t_{\text{res}})\leq\lambda^*_k-\Delta/2\bigg\}.
\end{align*}

Then, we establish the following proposition. 

\begin{proposition}
\label{thm:thm2-formal}
    Assume $\eta\leq\frac{1}{6\lambda_1^*}$ and all entries of $\Bv^0$ and $\Wv^0$ are independently sampled from $\Nc(0,\alpha^2)$ with a sufficiently small $\alpha$. Then, if 
    \begin{align*}
        \sqrt{N}\geq\max\Bigg\{\frac{\sqrt{\dh-\log\delta}}{\sqrt{c_1}}, \frac{3456\sqrt{\dh-\log\delta}\sqrt{\lambda_1^*}(\sqrt{k(\lambda_1^*)^2 +E}+\sqrt{k}\sigma_{\xi})}{\min\{\sigma_1(\Thetav_{\text{res}}^0),\sigma_1(\Thetav_{\text{res}}^0)\}\Delta\sqrt{c_1}}\Bigg\},
    \end{align*}
with probability at least $1-ct\delta$ for some constant $c>0$, $\Thetav^t$ will enter region $\Rc$ for some $t\in [T_\Rc]$, where
\begin{equation}\label{def:TR}
    \begin{aligned}
    T_\Rc = \frac{\log(\Delta/{(4\sigma_k^2(\Thetav_k^0))})}{2\log(1+\frac{\eta}{2}(\lambda_k^*-\Delta/2))}.
    \end{aligned}
\end{equation}
\end{proposition}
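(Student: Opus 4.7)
The plan is to exploit the fact that when the entries of $\Bv^0$ and $\Wv^0$ are drawn from $\Nc(0,\alpha^2)$ with very small $\alpha$, the initial $\Thetav^0$ has all singular values of order $\alpha$, so the cubic term $\Thetav^t(\Thetav^t)^\top\Thetav^t$ in \eqref{equ:44} is negligible for an extended time window and the block-wise dynamics \eqref{eqn:Theta-k} and \eqref{eqn:Theta-res} are well approximated by linearised recursions $\Thetav_\bullet^t \approx (\Iv + \frac{\eta}{2}\tilde{\Lambdav}_\bullet)\Thetav_\bullet^{t-1} + \Rv_\bullet^t$. Because the diagonal block $\tilde{\Lambdav}_k$ has eigenvalues at least $\lambda_k^*$ while $\tilde{\Lambdav}_{\text{res}}$ has eigenvalues at most $\lambda_{k+1}^*$, the top-$k$ block $\Thetav_k^t$ grows at a strictly larger geometric rate than the residual $\Thetav_{\text{res}}^t$, and the spectral gap $\Delta$ allows this separation to accumulate exponentially before the cubic correction activates.

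First I would invoke \Cref{lemma:concentration-of-Q} and \Cref{lemma:concentration-of-tildeQ} from \Cref{sec:aux-lemmas} to obtain, with probability $1-c\delta$ per iteration, a uniform bound of the form $\|\Rv^t\|\lesssim \eta\sqrt{(\dh-\log\delta)/N}\bigl(\sqrt{k(\lambda_1^*)^2+E}+\sqrt{k}\sigma_\xi\bigr)$. The first term in the stipulated lower bound on $\sqrt{N}$ then ensures $\|\Rv^t\|$ is uniformly small relative to $\sqrt{\lambda_1^*}$, and the second (signal-dependent) term ensures $\|\Rv^t\|\ll \eta\Delta\cdot\min\{\sigma_k(\Thetav_k^0),\sigma_1(\Thetav_{\text{res}}^0)\}/\sqrt{\lambda_1^*}$. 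Standard Gaussian-matrix concentration further gives that $\sigma_k(\Thetav_k^0)$ and $\sigma_1(\Thetav_{\text{res}}^0)$ are both of order $\alpha$ with high probability, so a sufficiently small $\alpha$ guarantees $\sigma_1^2(\Thetav^0) \leq 2\lambda_1^*$ and $\sigma_1^2(\Thetav_{\text{res}}^0) \leq \lambda_k^*-\Delta/2$ at the start of the induction.

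Next I would carry out an induction over $t=1,\dots,T_\Rc$. Assuming $\Thetav^{t-1}\in\Rc_s$, Weyl-type perturbation bounds applied to \eqref{eqn:Theta-k} and \eqref{eqn:Theta-res} yield
\begin{align*}
\sigma_k(\Thetav_k^t) &\geq \bigl(1+\tfrac{\eta}{2}\lambda_k^* - \tfrac{\eta}{2}\sigma_1^2(\Thetav^{t-1})\bigr)\sigma_k(\Thetav_k^{t-1}) - \|\Rv_k^t\|,\\
\sigma_1(\Thetav_{\text{res}}^t) &\leq \bigl(1+\tfrac{\eta}{2}\lambda_{k+1}^*\bigr)\sigma_1(\Thetav_{\text{res}}^{t-1}) + \|\Rv_{2\dh-k}^t\|.
\end{align*}
Because $\eta\leq 1/(6\lambda_1^*)$, the cubic correction $\tfrac{\eta}{2}\sigma_1^2(\Thetav^{t-1})\leq \tfrac{\eta}{2}\cdot 2\lambda_1^*$ only shaves a bounded amount off the growth rate, so the effective lower rate for $\sigma_k(\Thetav_k^t)$ is at least $1+\tfrac{\eta}{2}(\lambda_k^*-\Delta/2)$. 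Telescoping and taking the ratio, the inverse signal-to-noise quantity $\sigma_1(\Thetav_{\text{res}}^t)/\sigma_k(\Thetav_k^t)$ contracts at geometric rate at least $1+\Omega(\eta\Delta)$, so $\sigma_1^2(\Thetav_{\text{res}}^t) \leq \lambda_k^*-\Delta/2$ is maintained throughout, and the upper bound $\sigma_1^2(\Thetav^t)\leq 2\lambda_1^*$ follows from the smallness of $\alpha$ together with the linearity of the dominant dynamics over this short window. The first $t$ at which the lower bound $\sigma_k^2(\Thetav_k^t)\geq \Delta/4$ is guaranteed is precisely the smallest index with $(1+\tfrac{\eta}{2}(\lambda_k^*-\Delta/2))^{2t}\sigma_k^2(\Thetav_k^0) \geq \Delta/4$, which matches the formula for $T_\Rc$ in \eqref{def:TR}; at that point all three defining inequalities of $\Rc$ hold.

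The main obstacle I anticipate is preventing the additive noise $\|\Rv^t\|$ from overwhelming the very small signal $\sigma_k(\Thetav_k^0)=\Theta(\alpha)$ during the early iterations: the multiplicative lower bound on $\sigma_k(\Thetav_k^t)$ dominates the additive $\|\Rv^t\|$ only if $\|\Rv^t\|/\sigma_k(\Thetav_k^{t-1})$ stays below $\eta\Delta/2$ at every step, and pushing this requirement backwards to $t=1$ is exactly what forces the second, initialization-dependent term in the stated bound on $\sqrt{N}$. A union bound over the $O(T_\Rc)$ iterations at which the concentration estimates must hold accounts for the $1-ct\delta$ failure probability, completing the argument.
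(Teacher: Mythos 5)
Your proposal follows the same high-level route as the paper: invoke the concentration lemmas to bound $\|\Rv^t\|$, pad to the symmetric dynamics, run a Weyl-type induction on $\sigma_k(\Thetav_k^t)$ (growing at rate roughly $1+\tfrac{\eta}{2}(\lambda_k^*-\Delta/2)$) and $\sigma_1(\Thetav_{\text{res}}^t)$ (growing at the slower rate $1+\tfrac{\eta}{2}\lambda_{k+1}^*+\tfrac{\eta}{8}\Delta$), and read off $T_\Rc$ as the first hitting time of $\sigma_k^2(\Thetav_k^t)\geq\Delta/4$.

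There is, however, a genuine gap in how you justify that $\sigma_1^2(\Thetav_{\text{res}}^t)\leq\lambda_k^*-\Delta/2$ is maintained up to time $T_\Rc$. You argue that ``the inverse SNR contracts geometrically, so the residual bound is maintained.'' But both $\sigma_k(\Thetav_k^t)$ and $\sigma_1(\Thetav_{\text{res}}^t)$ grow geometrically from an $O(\alpha)$ seed over a window of length $T_\Rc\sim\log(1/\alpha)/(\eta\Delta)$; a decaying \emph{ratio} is perfectly compatible with the residual itself escalating past $\sqrt{\lambda_k^*-\Delta/2}$ before the signal crosses $\sqrt{\Delta}/2$, because the ratio at $t=0$ can be substantially larger than one (the residual block is $(2\dh-k)\times k$ while the signal block is $k\times k$). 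What closes this gap in the paper is \Cref{cond:condition1}, in particular the power-law requirement $\sigma_1^2(\Thetav_{\text{res}}^0)\leq c_1\,\sigma_k(\Thetav_k^0)^{1+\kappa}$ with $\kappa=\frac{\log(1+\frac{\eta}{2}\lambda_{k+1}^*+\frac{\eta}{8}\Delta)}{\log(1+\frac{\eta}{2}(\lambda_k^*-\Delta/2))}<1$: this is exactly the algebraic expression of ``the residual's head start is small enough, measured against the differential growth rates, that by the time the signal reaches $\sqrt{\Delta}/2$ the residual has not reached $\sqrt{\lambda_k^*-\Delta/2}$.'' The paper then proves (\Cref{lemma:lemma13}, using the Gaussian bounds of \Cref{lemma:lemma16} and the anti-concentration bound from \citet{rudelson2008littlewoodofford}) that this power-law condition is implied by sufficiently small $\alpha$. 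Without identifying this condition, the claim ``a sufficiently small $\alpha$ guarantees the residual bound is maintained'' is asserted rather than demonstrated, and your telescoping step from the SNR contraction to the stated conclusion does not go through on its own. Everything else in your sketch — the roles of \Cref{lemma:concentration-of-Q} and \Cref{lemma:concentration-of-tildeQ}, the per-step lower bound on $\sigma_k(\Thetav_k^t)$ dominating $\|\Rv_k^t\|$, the union bound over $T_\Rc$ iterations — matches the paper's \Cref{lemma:lemma14}.
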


The proof of \Cref{thm:thm2-formal} relies on \Cref{lemma:lemma13} and \Cref{lemma:lemma14}, which will be introduced shortly.  
Before that, we state the following claim introduced in \citet{chen2023fast}:


\begin{claim}\label{cond:condition1}
 $\sigma_1^2(\Thetav^0)\leq \lambda_1^*$, $\sigma_1^2(\Thetav_{\text{res}}^0)\leq\lambda_k^*-\Delta/2$, $ \sigma_k^2(\Thetav_k^0)\leq\Delta/4$ and
    $    \sigma_1^2(\Thetav_{\text{res}}^0)
        \leq
        c_1\cdot
        \sigma_k(\Thetav_k^0)^{1+\kappa},$ 
where $c_1 = \frac{\Delta^{1-\kappa/2}}{2^{3-\kappa}}$ and $\kappa = \frac{\log(1+\frac{\eta}{2}\lambda_{k+1}^*+\frac{\eta}{8}\Delta)}{\log(1+\frac{\eta}{2}(\lambda_k^*-\Delta/2))}<1$.
\end{claim}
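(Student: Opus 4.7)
The plan is to show that, when all entries of $\mathbf{B}^0$ and $\mathbf{W}^0$ are independently sampled from $\mathcal{N}(0,\alpha^2)$ with sufficiently small $\alpha$, the random matrix $\Thetav^0$ satisfies all four conditions of the claim simultaneously with high probability. The key observation is that $\Thetav^0$ is obtained from $\mathbf{B}^0$ and $\mathbf{W}^0$ by a linear transformation (orthogonal multiplications by $\Uv_\star, \Vv_\star$, zero-padding, and the $(\cdot + \cdot)/\sqrt{2}, (\cdot - \cdot)/\sqrt{2}$ map). Since Gaussians are rotation-invariant and the sum and difference of two independent Gaussians are themselves independent, the non-padded entries of $\Thetav^0$ are iid $\mathcal{N}(0,\alpha^2)$; padded rows only force some rows to zero, which can only decrease singular values. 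Thus, each of the four conditions reduces to a singular-value statement about a Gaussian random matrix or one of its submatrices.

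Conditions 1, 2, and 3 are all upper bounds on singular values, so I would handle them with a single tool: for an $n \times k$ matrix $G$ with iid $\mathcal{N}(0,\alpha^2)$ entries, $\sigma_1(G) \le \alpha(\sqrt{n} + \sqrt{k} + t)$ with probability at least $1 - 2e^{-t^2/2}$. Applying this to $\Thetav^0 \in \mathbb{R}^{2\bar d \times k}$, to $\Thetav^0_{\mathrm{res}} \in \mathbb{R}^{(2\bar d - k)\times k}$, and to the $k \times k$ block $\Thetav^0_k$ (with $\sigma_k(\Thetav^0_k) \leq \sigma_1(\Thetav^0_k)$), gives in each case a bound of order $\alpha\sqrt{\bar d}$. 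Choosing $\alpha$ of order $\sqrt{\Delta/\bar d}$ (up to universal constants and $\lambda_k^*,\lambda_1^*$) then simultaneously enforces $\sigma_1^2(\Thetav^0) \le \lambda_1^*$, $\sigma_1^2(\Thetav^0_{\mathrm{res}}) \le \lambda_k^* - \Delta/2$, and $\sigma_k^2(\Thetav^0_k) \le \Delta/4$, after a union bound over the three events.

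The main obstacle is condition 4, which requires a matching \emph{lower} bound on $\sigma_k(\Thetav^0_k)$, since $\Thetav^0_k$ is effectively a square ($k \times k$) Gaussian matrix---precisely the regime in which the least singular value is delicate. I would invoke a Rudelson--Vershynin type anti-concentration bound: for a $k \times k$ matrix with iid $\mathcal{N}(0,\alpha^2)$ entries, $\sigma_k \ge c\,\alpha\,\varepsilon/\sqrt{k}$ with probability at least $1 - \varepsilon - e^{-c'k}$, for any $\varepsilon \in (0,1)$. Combined with the upper bound $\sigma_1(\Thetav^0_{\mathrm{res}}) \lesssim \alpha\sqrt{\bar d}$, condition 4 becomes $\alpha^2 \bar d \;\le\; c_1\bigl(c\,\alpha\,\varepsilon/\sqrt{k}\bigr)^{1+\kappa}$, which rearranges to $\alpha^{\,1-\kappa} \;\lesssim\; c_1\,\varepsilon^{\,1+\kappa}\,\bar d^{-1}\,k^{-(1+\kappa)/2}$. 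Since $\kappa < 1$ holds by definition of $\kappa$ (the ratio of logarithms in the claim), the exponent $1-\kappa$ is strictly positive and uniformly bounded away from zero, so the inequality is satisfied by choosing $\alpha$ polynomially small in $\bar d, k, \varepsilon^{-1}, c_1^{-1}$.

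Finally, a union bound over the four high-probability events and the choice of a small enough $\alpha$ (e.g.\ $\alpha \lesssim 1/(10d)$ as assumed in the proposition) yields the claim with probability at least $1 - O(\varepsilon + e^{-ck})$. The only bookkeeping I would flag is (i) checking that the padding does not spoil independence of the blocks fed into the $(\cdot + \cdot)/\sqrt{2}$ transformation, and (ii) tracking the explicit dependence of $\kappa$ on $\eta,\lambda_k^*,\Delta$ to confirm that $1-\kappa$ remains bounded away from zero under the regime $\eta \lesssim 1/(6\lambda_1^*)$ used downstream.
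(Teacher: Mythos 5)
Your proof mirrors the paper's proof of Lemma~\ref{lemma:lemma13}: concentration of extreme singular values of a Gaussian matrix (Lemma~\ref{lemma:lemma16}) for the three upper-bound conditions, a Rudelson--Vershynin least-singular-value estimate for the square block $\Thetav_k^0$, the observation that $1-\kappa>0$ makes condition~4 hold for $\alpha$ small enough, and a union bound. One small correction to your bookkeeping remark: when $d\neq M$, the zero-padding of $\Bv^0_\star$ and $(\Wv^0_\star)^\top$ produces \emph{duplicated} (up to sign) rows in positions $\dl<i\leq\dh$ of the two $\dh\times k$ blocks of $\Thetav^0$, not zero rows, so $\Thetav^0$ is not an iid Gaussian ensemble; this is harmless for the upper bounds (treat the two blocks separately and pay a factor $\sqrt2$) and irrelevant for the lower bound on $\sigma_k(\Thetav_k^0)$ since $\Thetav_k^0$ occupies the first $k\leq\dl$ rows, which genuinely are iid Gaussian.
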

The following lemma shows that with a small random initialization, \Cref{cond:condition1} holds with high probability.
\begin{lemma}\label{lemma:lemma13}
Assume all entries of $\Bv^0$ and $\Wv^0$ are independently sampled from $\Nc(0,\alpha^2)$. Then, for any $\delta\in[0,1]$, if $\alpha$ is sufficiently small, \Cref{cond:condition1} holds with probability at least $1-\delta$.
\end{lemma}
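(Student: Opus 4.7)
The plan is to translate Claim~\ref{cond:condition1} into concentration and anti-concentration statements about singular values of a Gaussian random matrix, then choose the initialization scale $\alpha$ small enough that all four inequalities hold simultaneously with probability at least $1-\delta$. The critical observation is that, under the given initialization, $\Thetav^0$ is (up to padding zeros) a matrix with independent Gaussian entries of variance $\alpha^2$: the orthogonal change of variables $(X,Y)\mapsto((X+Y)/\sqrt 2,(X-Y)/\sqrt 2)$ preserves independence of two independent Gaussians, so writing $U:=(\Bv^0_\star+(\Wv^0_\star)^\top)/\sqrt 2$ and $V:=(\Bv^0_\star-(\Wv^0_\star)^\top)/\sqrt 2$ for the top and bottom $\dh$ rows of $\Thetav^0$, every non-padded entry is an independent $\Nc(0,\alpha^2)$ variable and padded positions contribute independent $\Nc(0,\alpha^2/2)$ entries. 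In particular, assuming $k\le\dl$, the top $k\times k$ block $\Thetav^0_k$ is an i.i.d.\ $\Nc(0,\alpha^2)$ square Gaussian matrix, while $\Thetav^0_{\mathrm{res}}$ is a tall Gaussian whose entries have variance at most $\alpha^2$.

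I would then invoke standard tools. A Davidson--Szarek concentration bound yields $\sigma_1(\Thetav^0)\lesssim\alpha(\sqrt{\dh}+\sqrt k+\sqrt{\log(1/\delta)})$ and $\sigma_1(\Thetav^0_{\mathrm{res}})\lesssim\alpha(\sqrt{2\dh-k}+\sqrt k+\sqrt{\log(1/\delta)})$ with high probability, and Edelman's formula (or Rudelson--Vershynin anti-concentration) for the smallest singular value of a square Gaussian matrix gives $\sigma_k(\Thetav^0_k)\ge c\delta\alpha/\sqrt k$ with probability at least $1-\delta$. After a union bound, the first three inequalities in Claim~\ref{cond:condition1} are satisfied whenever $\alpha^2\dh$ is at most a small multiple of $\min\{\lambda_1^*,\lambda_k^*-\Delta/2,\Delta\}$, since $\sigma_k^2(\Thetav^0_k)\le\sigma_1^2(\Thetav^0)$; this is automatic for small enough $\alpha$.

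For the fourth inequality, substituting the bounds above reduces the required condition to $\alpha^{1-\kappa}\lesssim c_1\delta^{1+\kappa}/(\dh\,k^{(1+\kappa)/2})$. Comparing the coefficient of $\eta$ in the denominator argument of $\kappa$, namely $(\lambda_k^*-\Delta/2)/2=(\lambda_k^*+\lambda_{k+1}^*)/4$, with that in the numerator argument, $\lambda_{k+1}^*/2+\Delta/8=(4\lambda_{k+1}^*+\Delta)/8$, shows the former exceeds the latter by exactly $\Delta/8$, so $\kappa<1$ strictly. Hence $1-\kappa>0$ and the constraint can always be met by taking $\alpha$ below a threshold depending on $\dh,k,\Delta,\delta$.

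The main obstacle is precisely the fourth inequality, which couples an upper bound on $\sigma_1^2(\Thetav^0_{\mathrm{res}})$ with a lower bound on $\sigma_k(\Thetav^0_k)^{1+\kappa}$; this is why an anti-concentration estimate for the smallest singular value of a square Gaussian matrix is indispensable and cannot be replaced by the trivial bound $\sigma_k\le\sigma_1$, and why the strict inequality $\kappa<1$ is essential---were $\kappa=1$, both sides would scale as $\alpha^2$ and no choice of small $\alpha$ would help. Handling padding in the two cases $d\ge M$ and $d<M$ is only a minor bookkeeping step that affects neither the probabilistic bounds nor the final threshold on $\alpha$.
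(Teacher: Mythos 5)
Your proof is correct and follows essentially the same route as the paper: Davidson--Szarek concentration for the upper bounds on $\sigma_1(\Thetav^0)$, $\sigma_1(\Thetav^0_{\mathrm{res}})$ and hence $\sigma_k^2(\Thetav^0_k)$, a Rudelson--Vershynin/Edelman anti-concentration estimate for the lower bound on $\sigma_k(\Thetav^0_k)$, and the crucial fact that $\kappa<1$ so that shrinking $\alpha$ makes the fourth inequality of \Cref{cond:condition1} hold. Your argument is if anything slightly more careful than the paper's: you explicitly justify that $\Thetav^0$ has independent Gaussian entries via the orthogonal change of variables and track the padded entries, you explicitly verify $\kappa<1$ (the paper asserts it inside \Cref{cond:condition1}), and your bounds carry the correct power of $\alpha$ for entries with variance $\alpha^2$.
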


\begin{proof}[Proof of \Cref{lemma:lemma13}]
Using \Cref{lemma:lemma16}, we have $\sigma_1(\Thetav^0)\leq\sqrt{\lambda_1^*}$ and $\sigma_1(\Thetav_{\text{res}}^0)\leq\sqrt{\lambda_k^*-\Delta/2}$ hold
with probability at least $1-2\exp(-(\frac{1}{\alpha^2}\sqrt{\lambda_k^*-\Delta/2}-2\sqrt{d})^2/2)$, 
and $\sigma_k(\Thetav_k^0)\leq\sqrt{\Delta}/2$ holds with probability at least $1-2\exp(-(\sqrt{\Delta}/(2\alpha^2)-2\sqrt{d})^2/2)$. Then for $\alpha$ small enough such that 
\begin{align*}
    \alpha\leq\min\Big\{\frac{\sqrt{\Delta}}{4\sqrt{d}+2\sqrt{2\log(2/\delta')}},\;
    \frac{\sqrt{\lambda_1^*-\Delta/2}}{2\sqrt{d}+\sqrt{2\log(2/\delta')}}
    \Big\},
\end{align*}
$\sigma_1(\Thetav^0)\leq\sqrt{\lambda_1^*}$, $\sigma_1(\Thetav_{\text{res}}^0)\leq\sqrt{\lambda_k^*-\Delta/2}$ and $\sigma_k(\Thetav_k^0)\leq\sqrt{\Delta}/2$ hold with probability at least $1-2\delta'$ for any $\delta'\in[0,1]$.

From \citet{rudelson2008littlewoodofford}, there exists
a constant $K$ that only depends on $\delta'$ such that with probability at least $1-\delta'$, we have $\sigma_k(\Thetav_k^0)\geq {\alpha^2}{K\sqrt{k}}$. 

Thus, when $\alpha$ is sufficiently small such that $\alpha^{4-2(1+\kappa)}\leq \frac{c_1(K\sqrt{k})^{1+\kappa}}{8d+4\log(2/\delta')}$, with probability at least $1-\delta'$, we have
\begin{align}
    c_1\sigma_k(\Thetav_k^0)^{1+\kappa}\geq c_1(\alpha^2 K\sqrt{k})^{1+\kappa}\geq \alpha^4(2\sqrt{d}+\sqrt{2\log(2/\delta')})^2.\nonumber
\end{align}
Note that from \Cref{lemma:lemma16}, with probability at least $1-\delta'$ we have
\begin{align*}
    \sigma_1^2(\Thetav_{\text{res}}^0)\leq
    \alpha^4(2\sqrt{d}+\sqrt{2\log(2/{\delta'})})^2.
\end{align*}
Then we conclude that with probability at least $1-4\delta'$, we have $\sigma_1(\Thetav^0)\leq\sqrt{\lambda_1^*}$, $\sigma_1(\Thetav^0_{\text{res}})\leq\sqrt{\lambda_1^*-\Delta/2}$, $\sigma_{k}(\Thetav_k^0)\leq\sqrt{\Delta}/2$ and $    \sigma_1^2(\Thetav_{\text{res}}^0)
        \leq
        c_1\cdot
        \sigma_k(\Thetav_k^0)^{1+\kappa}$.
Finally, the lemma follows by setting $\delta = 4\delta'$.
\end{proof}

Next, we introduce the following lemma, which shows that when \Cref{cond:condition1} holds, $\Thetav^t$ will enter the region $\Rc$ in a short time period.
\begin{lemma}\label{lemma:lemma14}
    Assume $\eta\leq\frac{1}{6\lambda_1^*}$ and \Cref{cond:condition1} holds. Then, if 
\begin{equation}
    \begin{aligned}
        \sqrt{N}\geq\max\Big\{\frac{\sqrt{\dh-\log\delta}}{\sqrt{c_1}}, \frac{3456  \sqrt{\dh-\log\delta} \sqrt{\lambda_1^*}(\sqrt{k(\lambda_1^*)^2 +E}+\sqrt{k}\sigma_{\xi})}{\min\{\sigma_1(\Thetav_{\text{res}}^0),\sigma_1(\Thetav_{\text{res}}^0)\}\Delta\sqrt{c_1}}\Big\},\label{ineq:ineq19}
    \end{aligned}
\end{equation}
with probability at least $1-ct\delta$, we have $\sigma_k(\Thetav^{t}_{k}) \geq\sqrt{\Delta}/2$ for some $t\in[0,\frac{\log(\Delta/{4\sigma_k^2(\Thetav_k^0)})}{2\log(1+\frac{\eta}{2}(\lambda_k^*-\Delta/2))}]$.
\end{lemma}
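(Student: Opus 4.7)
The plan is to track the growth of $\sigma_k(\Thetav_k^t)$ through the block-recursion \eqref{eqn:Theta-k}, interpreting $\tfrac{\eta}{2}\tilde{\Lambdav}_k\Thetav_k^{t-1}$ as the driving amplification, the cubic term $\tfrac{\eta}{2}\Thetav_k^{t-1}(\Thetav^{t-1})^\top\Thetav^{t-1}$ as a small drag while $\Thetav^{t-1}$ remains of small operator norm, and $\Rv_k^t$ as a stochastic perturbation controlled by the concentration bounds for $\Qv^t$ and $\tilde{\Qv}^t$ (\Cref{lemma:concentration-of-Q} and \Cref{lemma:concentration-of-tildeQ}). The target is to show that $\sigma_k(\Thetav_k^t)$ is magnified by at least $1+\tfrac{\eta}{2}(\lambda_k^*-\Delta/2)$ per step, so it crosses $\sqrt{\Delta}/2$ within $T_\Rc$ iterations, matching the stated bound.

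First I would derive a one-step lower bound. Applying Weyl's inequality to \eqref{eqn:Theta-k} together with the fact that $\mathbf{I}+\tfrac{\eta}{2}\tilde{\Lambdav}_k$ is positive definite with minimum eigenvalue $1+\tfrac{\eta}{2}\lambda_k^*$ yields
\[
\sigma_k(\Thetav_k^t)
\;\geq\;
(1+\tfrac{\eta}{2}\lambda_k^*)\,\sigma_k(\Thetav_k^{t-1})
\;-\;\tfrac{\eta}{2}\,\sigma_1(\Thetav_k^{t-1})\,\sigma_1^2(\Thetav^{t-1})
\;-\;\|\Rv_k^t\|.
\]
Under \Cref{cond:condition1} the ratio $\sigma_1^2(\Thetav_{\text{res}}^0)\leq c_1\sigma_k^{1+\kappa}(\Thetav_k^0)$ together with the different driving eigenvalues in \eqref{eqn:Theta-k} and \eqref{eqn:Theta-res} ($\lambda_k^*$ versus $\lambda_{k+1}^*$, adapting the deterministic analysis of \citet{chen2023fast}) propagate forward to yield an invariant $\sigma_1^2(\Thetav^{t-1})\leq\Delta/2$ throughout the initial phase in which $\sigma_k(\Thetav_k^{t-1})<\sqrt{\Delta}/2$. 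Plugging this invariant back produces the cleaner per-step inequality $\sigma_k(\Thetav_k^t)\geq(1+\tfrac{\eta}{2}(\lambda_k^*-\Delta/2))\sigma_k(\Thetav_k^{t-1})-\|\Rv_k^t\|$.

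Next I would control the perturbation $\|\Rv_k^t\|$. Inserting the ceiling on $\sigma_1(\Thetav^{t-1})$ (together with \Cref{cond:condition1}) into \Cref{lemma:concentration-of-Q} and \Cref{lemma:concentration-of-tildeQ} gives, with probability at least $1-\delta$,
\[
\|\Rv_k^t\|
\;\lesssim\;
\eta\,\sigma_1(\Thetav^{t-1})\cdot\frac{\sqrt{\dh-\log\delta}\,\bigl(\sqrt{k(\lambda_1^*)^2+E}+\sqrt{k}\sigma_{\xi}\bigr)}{\sqrt{N}}.
\]
Substituting the $\sqrt{N}$ hypothesis of \eqref{ineq:ineq19} makes $\|\Rv_k^t\|$ at most a constant fraction of the clean one-step gain $\tfrac{\eta}{2}(\lambda_k^*-\Delta/2)\,\sigma_k(\Thetav_k^{t-1})$, for every $t$. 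This is where the initial signal floor in the denominator of \eqref{ineq:ineq19} plays its role: the noise must be small relative to the smallest initial singular value we are trying to amplify.

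The main obstacle is compounding these bounds over the $T_\Rc$ iterations while preserving the invariant $\sigma_1^2(\Thetav^{t-1})\leq\Delta/2$. I would do this via a joint induction in which both the signal bound on $\sigma_k(\Thetav_k^t)$ and the ceiling on $\sigma_1(\Thetav_{\text{res}}^t)$ are preserved at each step, combined with a union bound over $T_\Rc$ realizations of the concentration event at level $\delta$ each (producing the $1-cT_\Rc\delta\leq 1-ct\delta$ probability in the statement). Unrolling the one-step inequality then yields
\[
\sigma_k(\Thetav_k^t)
\;\geq\;
(1+\tfrac{\eta}{2}(\lambda_k^*-\Delta/2))^{t}\sigma_k(\Thetav_k^0)\;-\;\sum_{s=1}^{t}(1+\tfrac{\eta}{2}(\lambda_k^*-\Delta/2))^{t-s}\|\Rv_k^{s}\|,
\]
the noise sum being absorbed by at most half the signal sum under the $N$ hypothesis. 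Consequently $\sigma_k(\Thetav_k^{T_\Rc})^2\geq\Delta/4$, i.e.\ $\sigma_k(\Thetav_k^{T_\Rc})\geq\sqrt{\Delta}/2$, completing the argument.
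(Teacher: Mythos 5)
Your one-step lower bound via Weyl's inequality,
\[
\sigma_k(\Thetav_k^t)\;\geq\;(1+\tfrac{\eta}{2}\lambda_k^*)\,\sigma_k(\Thetav_k^{t-1})-\tfrac{\eta}{2}\,\sigma_1(\Thetav_k^{t-1})\,\sigma_1^2(\Thetav^{t-1})-\|\Rv_k^t\|,
\]
is correct, but the two steps you take to simplify it break down. First, the claimed invariant $\sigma_1^2(\Thetav^{t-1})\leq\Delta/2$ is not true: the absorbing region $\Rc_s$ only guarantees $\sigma_1^2(\Thetav^t)\leq 2\lambda_1^*$, and indeed the top component of $\Thetav_k$ is driven at rate $1+\tfrac{\eta}{2}\lambda_1^*$ (much faster than the $k$-th), so $\sigma_1(\Thetav_k^t)$ typically saturates near $\sqrt{2\lambda_1^*}$ long before $\sigma_k(\Thetav_k^t)$ reaches $\sqrt{\Delta}/2$. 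What \Cref{cond:condition1} and the different driving eigenvalues actually propagate is a scale-separation between $\Thetav_{\text{res}}$ and $\sigma_k(\Thetav_k)$, namely $\sigma_1^2(\Thetav_{\text{res}}^\tau)\lesssim\Delta\,\sigma_k(\Thetav_k^\tau)/\sqrt{\lambda_1^*}$, not a bound on the full operator norm. Second, even if your invariant held, absorbing the drag into a multiplicative factor requires $\sigma_1(\Thetav_k^{t-1})\sigma_1^2(\Thetav^{t-1})\leq\tfrac{\Delta}{2}\sigma_k(\Thetav_k^{t-1})$, which with $\sigma_1^2(\Thetav)\leq\Delta/2$ would force $\sigma_1(\Thetav_k)/\sigma_k(\Thetav_k)\leq 1$; that fails for any non-isotropic $\Thetav_k$, and in particular for the random initialization and throughout the dynamics.

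The paper avoids both obstacles by decomposing the cubic term as $\Thetav_k(\Thetav^\top\Thetav)=\Thetav_k\Thetav_k^\top\Thetav_k+\Thetav_k\Thetav_{\text{res}}^\top\Thetav_{\text{res}}$ and treating the two pieces separately. For the pure $\Thetav_k$ part it invokes Lemma D.4 of \citet{jiang2022algorithmic}, a structural (not Weyl) lower bound on $\sigma_k\bigl(\Thetav_k+\tfrac{\eta}{2}\tilde{\Lambdav}_k\Thetav_k-\tfrac{\eta}{2}\Thetav_k\Thetav_k^\top\Thetav_k\bigr)$ that produces the multiplicative factor $(1-\tfrac{\eta}{2}\sigma_k^2(\Thetav_k))$ rather than an additive $\sigma_1(\Thetav_k)\sigma_1^2(\Thetav)$ drag, which is exactly what neutralizes the potentially large condition number of $\Thetav_k$. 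For the cross term it uses \Cref{lemma:second-upb-for-res} to control $\sigma_1(\Thetav_{\text{res}}^\tau)$ at the slower rate $1+\tfrac{\eta}{2}\lambda_{k+1}^*+\tfrac{\eta}{8}\Delta$, and then the exponent relation in \Cref{cond:condition1} to conclude $\sigma_1^2(\Thetav_{\text{res}}^\tau)\leq\tfrac{\Delta}{8\sqrt{\lambda_1^*}}\sigma_k(\Thetav_k^\tau)$ for all $\tau\leq T_\Rc$; this converts $\tfrac{\eta}{2}\sigma_1(\Thetav_k^\tau\Thetav_{\text{res}}^{\tau\top}\Thetav_{\text{res}}^\tau)$ into a small multiple of $\sigma_k(\Thetav_k^\tau)$ using the ceiling $\sigma_1(\Thetav_k^\tau)\leq\sqrt{2\lambda_1^*}$ rather than a ceiling on $\sigma_1(\Thetav)$. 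Your control of $\|\Rv_k^t\|$ via the concentration lemmas, the union bound over iterations, and the final threshold argument all match the paper's structure; the genuine gap is the missing decomposition and the use of the $\Thetav_{\text{res}}$-versus-$\sigma_k(\Thetav_k)$ scale separation in place of the false operator-norm bound.
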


\begin{proof}[Proof of \Cref{lemma:lemma14}]

With \Cref{cond:condition1} holds, we have $\sigma_1^2(\Thetav^0)\leq2\lambda_1^*$, $
    \sigma_1^2(\Thetav_{\text{res}}^0)\leq\lambda^*_k-\Delta/2$. Then, 
    based on \Cref{lemma:second-upb-for-res}, for $N$ satisfying inequality \eqref{ineq:ineq19}, we have
   \begin{align}
    \sigma_1(\Thetav^t_{\text{res}})
       \leq
       \Big(1+\frac{\eta}{2}\lambda^*_{k+1}+\frac{\eta}{8}\Delta\Big)^t\sigma_1(\Thetav^{0}_{\text{res}}),\nonumber
\end{align}
holds with probability at least $1-ct\delta$.
Combining with \Cref{cond:condition1}, we obtain
\begin{align}
    \Big(1+\frac{\eta}{2}\lambda_{k+1}^*+\frac{\eta}{8}\Delta\Big)^{T_\Rc}\sigma_1^2(\Thetav_{\text{res}}^0)
    =
    \Big(\frac{\Delta}{4\sigma_k^2(\Thetav_k^0)}\Big)^{\kappa/2}\sigma_1^2(\Thetav_{\text{res}}^0)
    \leq
    \frac{\Delta}{8\sqrt{\lambda_1^*}}\sigma_k(\Thetav_k^0).\nonumber
\end{align}
Then, for all $t\leq T_\Rc$, we have 
\begin{align}
    \Big(1+\frac{\eta}{2}\lambda_{k+1}^*+\frac{\eta}{8}\Delta\Big)^{2t}\sigma_1^2(\Thetav_{\text{res}}^0)
    \leq
    \Big(1+\frac{\eta}{2}\lambda_{k+1}^*+\frac{\eta}{8}\Delta\Big)^{t+T_\Rc}\sigma_1^2(\Thetav_{\text{res}}^0)
    \leq
    \Big(1+\frac{\eta}{2}\lambda_{k}^*-\frac{\eta}{4}\Delta\Big)^t\frac{\Delta}{8\sqrt{\lambda_1^*}}\sigma_k(\Thetav_k^0).\nonumber
\end{align}
Let $T' = \min\{t>0|\sigma_k^2(\Thetav_k^t)\geq\Delta/4\}$. We then aim to prove that 
\begin{align}
    \sigma_k(\Thetav_k^{t})
    \geq
    \Big(1+\frac{\eta}{2}\lambda_{k}^*-\frac{\eta}{4}\Delta\Big)^t\sigma_k(\Thetav_k^0),\quad\forall t\leq\min\{T_\Rc,T'\}.\label{ineq:ineq23L}
\end{align}
We prove it by induction.

Assume \Cref{ineq:ineq23L} holds for some
$\tau\leq t$, where $t\leq\min\{T_\Rc,T'\}$. Then we have
\begin{align*}
    \sigma_1^2(\Thetav_{\text{res}}^\tau)
    \leq
    \Big(1+\frac{\eta}{2}\lambda_{k+1}^*+\frac{\eta}{8}\Delta\Big)^{2\tau}\sigma_1^2(\Thetav^0_{\text{res}})
    \leq
    \frac{\Delta}{8\sqrt{\lambda_1^*}}\Big(1+\frac{\eta}{2}\lambda_{k+1}^*-\frac{\eta}{4}\Delta\Big)^\tau \sigma_k(\Thetav_k^0)
    \leq
    \frac{\Delta}{8\sqrt{\lambda_1^*}}\sigma_k(\Thetav^\tau_k).
\end{align*}
We consider the next time step $\tau+1$.
Note that $\sigma_k(\Thetav^{\tau+1}_{k})$ can be lower bounded as
\begin{align*}
    \sigma_k(\Thetav^{\tau+1}_{k}) 
    &\geq
    \sigma_k(\Thetav^{\tau}_{k}+\frac{\eta}{2}\tilde{\Lambdav}_k\Thetav^{\tau}_{k}-\frac{\eta}{2}\Thetav^{\tau}_{k}(\Thetav^{\tau})^\top\Thetav^{\tau})-\sigma_1(\Rv^{\tau+1}_k)\\
    &\geq
    \sigma_k(\Thetav^{\tau}_{k}+\frac{\eta}{2}\tilde{\Lambdav}_k\Thetav^{\tau}_{k}-\frac{\eta}{2}\Thetav^{\tau}_{k}(\Thetav^{\tau}_k)^\top\Thetav^{\tau}_k)
    -
    \frac{\eta}{2}\sigma_1(\Thetav_k^{\tau}(\Thetav^{\tau}_{\text{res}})^\top\Thetav^{\tau}_{\text{res}})
    -\sigma_1(\Rv^{\tau+1}).
\end{align*}
Applying Lemma D.4 in \citet{jiang2022algorithmic} gives
\begin{align*}
   &\sigma_k(\Thetav^{\tau}_{k}+\frac{\eta}{2}\tilde{\Lambdav}_k\Thetav^{\tau}_{k}-\frac{\eta}{2}\Thetav^{\tau}_{k}(\Thetav^{\tau}_k)^\top\Thetav^{\tau}_k)
   -
    \frac{\eta}{2}\sigma_1(\Thetav_k^{\tau}(\Thetav^{\tau}_{\text{res}})^\top\Thetav^{\tau}_{\text{res}})\\
    &\geq
    \big(1-\frac{(\eta)^2}{2}\sigma_1(\tilde{\Lambdav}_k(\Thetav^{\tau}_k)^\top\Thetav^{\tau}_k)\big)(1+\frac{\eta}{2}\lambda_k^*)\sigma_k(\Thetav_k^{\tau})\big(1-\frac{\eta}{2}\sigma_k^2(\Thetav_k^{\tau})\big)
    -
    \frac{\eta}{2}\sigma_1(\Thetav_k^{\tau}(\Thetav^{\tau}_{\text{res}})^\top\Thetav^{\tau}_{\text{res}})\\
    &\geq
    \Big(1-\frac{(\eta)^2}{2}{\lambda_1^*}^2\Big)\Big(1+\frac{\eta\lambda_k^*}{2}\Big)\Big(1-\frac{\eta\Delta}{4}\Big)\sigma_k(\Thetav_k^\tau)-\frac{\eta\Delta}{8}\sigma_k(\Thetav_k^\tau).
\end{align*}
Then, for $\eta\leq\frac{\Delta^2}{18{\lambda_1^*}^3}$, we have
\begin{align}
   \sigma_k(\Thetav^{\tau}_{k}+\frac{\eta}{2}\tilde{\Lambdav}_k\Thetav^{\tau}_{k}-\frac{\eta}{2}\Thetav^{\tau}_{k}(\Thetav^{\tau}_k)^\top\Thetav^{\tau}_k)
   -
    \frac{\eta}{2}\sigma_1(\Thetav_k^{\tau}(\Thetav^{\tau}_{\text{res}})^\top\Thetav^{\tau}_{\text{res}})
    \geq
    \Big(1+\frac{\eta\lambda_k^*}{2}-\eta\Delta\frac{71}{288}\Big)\sigma_k(\Thetav_k^\tau).\label{ineq:130}
\end{align}
According to \Cref{lemma:concentration-of-Q} and \Cref{lemma:concentration-of-tildeQ}, if
\begin{align*}
    \sqrt{N}
    \geq
    \max\Big\{\frac{\sqrt{\dh-\log\delta}}{\sqrt{c_1}}, 
    \frac{3456 \sqrt{\lambda_1^*}(\sqrt{k(\lambda_1^*)^2 +E}+\sqrt{k}\sigma_{\xi})\sqrt{\dh-\log\delta}}{\sigma_k(\Thetav_k^0)\Delta\sqrt{c_1}}
    \Big\},
\end{align*}
then, with probability at least $1-2\delta$, we have $\sigma_1(\Rv^{\tau+1})\leq \frac{1}{288}\eta\Delta\sigma_k(\Thetav_{k}^0)$. 

Combining with \Cref{ineq:130} gives 
\begin{align} 
    \sigma_k(\Thetav^{\tau+1}_{k}) 
    &\geq
    \Big(1+\frac{\eta\lambda_k^*}{2}-\eta\Delta\frac{71}{288}\Big)\sigma_k(\Thetav_k^\tau)
    -\frac{1}{288}\eta\Delta\sigma_k(\Thetav_{k}^0)\nonumber \\
    &\geq
    \Big(1+\frac{\eta\lambda_k^*}{2}-\eta\Delta\frac{71}{288}\Big)\Big(1+\frac{\eta}{2}\lambda_{k}^*-\frac{\eta}{4}\Delta\Big)^\tau\sigma_k(\Thetav_k^0)
    -\frac{1}{288}\eta\Delta\sigma_k(\Thetav_{k}^0)\nonumber \\
    &\geq
    \Big(1+\frac{\eta\lambda_k^*}{2}-\eta\Delta\frac{71}{288}\Big)\Big(1+\frac{\eta}{2}\lambda_{k}^*-\frac{\eta}{4}\Delta\Big)^\tau\sigma_k(\Thetav_k^0)
    -\frac{1}{288}\Big(1+\frac{\eta}{2}\lambda_{k}^*-\frac{\eta}{4}\Delta\Big)^\tau\eta\Delta\sigma_k(\Thetav_{k}^0)\nonumber \\
    &=
    \Big(1+\frac{\eta}{2}\lambda_{k}^*-\frac{\eta}{4}\Delta\Big)^{\tau+1}\sigma_k(\Thetav_k^0).\nonumber
\end{align}
Then, we conclude that with probability at least $1-ct\delta$ for some constant $c$, we have
\begin{align*}
    \sigma_k(\Thetav^{t}_{k}) 
    \geq
    \Big(1+\eta\lambda_k^*/2-\eta\Delta/4\Big)^{t}\sigma_k(\Thetav_k^0).
\end{align*}
Here we claim $T_\Rc\leq T'$ always holds, since if $T_\Rc> T'$, we must have
\begin{align*}
    \sigma_k(\Thetav^{T_\Rc}_{k}) 
    \geq
    \Big(1+\eta\lambda_k^*/2-\eta\Delta/4\Big)^{T_\Rc}\sigma_k(\Thetav_k^0)\geq\sqrt{\Delta}/2,
\end{align*}
which contradicts the definition of $T'$. The proof is thus complete.
\end{proof}

\subsubsection{Step 2: Trapped in the Absorbing Region }\label{sec:proof-s2}
We start by introducing \Cref{lemma:lemma3}, \Cref{lemma:lemma4} and \Cref{lemma:lemma5}.

\begin{lemma}\label{lemma:lemma3}
    Assume $\eta\leq\frac{2}{5\lambda_1^*}$ and $\Thetav^0\in\Rc_s$. Then, if $\sqrt{N}\geq \frac{12\sqrt{2} \sqrt{\dh{k}- {k}\log\delta}}{\sqrt{c_1}}$ for constant $c_1$, with probability at least $1-2t\delta$, we have $\sigma_1(\Thetav^{\tau})\leq\sqrt{2\lambda_1^*}$ hold for all $\tau\leq t$.
\end{lemma}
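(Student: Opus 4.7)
The plan is to prove this by induction on $\tau$. The base case $\tau=0$ is immediate because $\Thetav^0\in\Rc_s$ already enforces $\sigma_1(\Thetav^0)\leq\sqrt{2\lambda_1^*}$. For the inductive step, I would split the update in \eqref{equ:44} as $\Thetav^{\tau+1}=\hat\Thetav^\tau+\Rv^{\tau+1}$, where the deterministic iterate is $\hat\Thetav^\tau := (\mathbf{I}+\frac{\eta}{2}\tilde{\Lambdav}-\frac{\eta}{2}\Thetav^\tau(\Thetav^\tau)^\top)\Thetav^\tau$, and then apply the triangle inequality $\sigma_1(\Thetav^{\tau+1})\leq \sigma_1(\hat\Thetav^\tau)+\sigma_1(\Rv^{\tau+1})$.

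The first key subgoal is a purely deterministic bound $\sigma_1(\hat\Thetav^\tau)\leq\sqrt{2\lambda_1^*}$ whenever $\sigma_1(\Thetav^\tau)\leq \sqrt{2\lambda_1^*}$. I would reduce this to a scalar analog: the map $f(\sigma)=\sigma(1+\eta\lambda_1^*-\frac{\eta}{2}\sigma^2)$ is monotonically increasing on $[0,\sqrt{2\lambda_1^*}]$ (its derivative $1+\eta\lambda_1^*-\frac{3\eta}{2}\sigma^2$ is positive there precisely because $\eta\lambda_1^*\leq 2/5$) and fixes the endpoint, i.e.\ $f(\sqrt{2\lambda_1^*})=\sqrt{2\lambda_1^*}$. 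To lift this to the matrix level, I would write $\hat\Thetav^\tau(\hat\Thetav^\tau)^\top=\mathbf{A}\mathbf{P}\mathbf{A}$ with $\mathbf{A}=\mathbf{I}+\frac{\eta}{2}\tilde{\Lambdav}-\frac{\eta}{2}\mathbf{P}$ and $\mathbf{P}=\Thetav^\tau(\Thetav^\tau)^\top$, and control $\|\mathbf{A}\mathbf{P}\mathbf{A}\|_2$ via Weyl-type spectral inequalities that exploit the worst-case alignment of $\mathbf{P}$'s top eigenspace with the diagonal of $\tilde{\Lambdav}$, so that the scalar bound transfers.

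The second subgoal is stochastic control of $\sigma_1(\Rv^{\tau+1})$. Here I would invoke the concentration results \Cref{lemma:concentration-of-Q} and \Cref{lemma:concentration-of-tildeQ} from \Cref{sec:aux-lemmas}: under the stated sample complexity $\sqrt{N}\geq 12\sqrt{2}\sqrt{\dh k -k\log\delta}/\sqrt{c_1}$, one obtains a sufficiently small bound on $\sigma_1(\Rv^{\tau+1})$ with probability at least $1-2\delta$ per round. Combining this with the deterministic bound from the previous step and taking a union bound over $\tau=1,\ldots,t$ yields the claimed joint probability $1-2t\delta$.

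The main obstacle I anticipate is that the scalar map $f$ fixes $\sqrt{2\lambda_1^*}$, so on the boundary the noiseless iterate already saturates the target bound and leaves no buffer to absorb $\sigma_1(\Rv^{\tau+1})$. The resolution exploits strict contraction away from the boundary: a one-step Taylor expansion gives $f(\sqrt{2\lambda_1^*}-\epsilon)\leq \sqrt{2\lambda_1^*}-(1-2\eta\lambda_1^*)\epsilon\leq \sqrt{2\lambda_1^*}-\epsilon/5$ under $\eta\lambda_1^*\leq 2/5$. I would therefore implicitly strengthen the inductive hypothesis by carrying a margin of size proportional to $\sigma_1(\Rv^{\tau+1})$, and the given sample complexity is calibrated precisely so that this margin closes consistently across all $t$ iterations, giving the stated conclusion.
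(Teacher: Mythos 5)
Your high-level structure matches the paper's: induction, decomposing $\Thetav^{\tau+1}$ into a deterministic iterate plus $\Rv^{\tau+1}$, bounding the deterministic part via a scalar monotonicity argument, and controlling $\sigma_1(\Rv^{\tau+1})$ via \Cref{lemma:concentration-of-Q} and \Cref{lemma:concentration-of-tildeQ} with a union bound. However, you have a factor-of-two error in the scalar surrogate that creates a phantom obstacle, and your proposed workaround for that obstacle does not actually close.

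The update rule \eqref{equ:44} has coefficient $\frac{\eta}{2}$ in front of $\tilde{\Lambdav}\Thetav^t$, so the correct scalar map is $g(\sigma)=\sigma\bigl(1+\frac{\eta}{2}\lambda_1^*-\frac{\eta}{2}\sigma^2\bigr)$, not $f(\sigma)=\sigma(1+\eta\lambda_1^*-\frac{\eta}{2}\sigma^2)$ as you wrote. With the correct $g$, the endpoint is \emph{not} a fixed point: $g(\sqrt{2\lambda_1^*})=\sqrt{2\lambda_1^*}\bigl(1-\frac{\eta}{2}\lambda_1^*\bigr)$, which leaves a \emph{constant} buffer of size $\frac{\sqrt{2}}{2}\eta(\lambda_1^*)^{3/2}$ between the deterministic iterate and the boundary. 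The concentration lemmas, under the stated sample-complexity condition, give exactly $\sigma_1(\Rv^{\tau+1})\leq\frac{\sqrt{2}}{2}\eta(\lambda_1^*)^{3/2}$, so the buffer absorbs the noise term without any margin bookkeeping. (Monotonicity of $g$ on $[0,\sqrt{2\lambda_1^*}]$ likewise requires the stationary point $\sqrt{\frac{2+\eta\lambda_1^*}{3\eta}}\geq\sqrt{2\lambda_1^*}$, which is where $\eta\leq\frac{2}{5\lambda_1^*}$ is used.)

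Because of the factor-of-two error you believe $f(\sqrt{2\lambda_1^*})=\sqrt{2\lambda_1^*}$ and so you try to repair the argument by carrying a strengthened inductive margin $\epsilon$ that contracts by a factor $(1-2\eta\lambda_1^*)$ per step. Even granting your $f$, that repair does not work: a geometrically shrinking margin must absorb a per-step noise term that \Cref{lemma:concentration-of-Q} and \Cref{lemma:concentration-of-tildeQ} bound by a \emph{constant} (for the fixed $N$ in the statement), so the margin is eventually overwhelmed. The claim that "the given sample complexity is calibrated precisely so that this margin closes consistently across all $t$ iterations" is not true; the stated sample complexity is independent of $t$ precisely because the correct argument needs only a constant buffer, not a $t$-dependent one. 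Once you use the correct map $g$, no such strengthening of the inductive hypothesis is needed and your proof plan becomes the paper's proof.
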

\begin{proof}[Proof of \Cref{lemma:lemma3}]
    Assume that $\Thetav^{\tau-1}\in\Rc_s$. Then, utilizing \Cref{equ:44}, we have
    \begin{align}
        \sigma_1(\Thetav^\tau)
        \leq
        \sigma_1(\Thetav^{\tau-1})\big(1+\frac{\eta}{2}\lambda_1^*-\frac{\eta}{2}\sigma_1^2(\Thetav^{\tau-1})\big)+\sigma_1(\Rv^{\tau}).\nonumber
    \end{align}
Note that $\sigma_1(\Thetav^{\tau-1})\big(1+\frac{\eta}{2}\lambda_1^*-\frac{\eta}{2}\sigma_1^2(\Thetav^{\tau-1})\big)$ reaches its maximum at $\sigma_1(\Thetav^{\tau-1}) = \sqrt{\frac{2+\eta\lambda_1^*}{3\eta}}$. For $\eta\leq\frac{2}{5\lambda_1^*}$, we have $\sqrt{\frac{2+\eta\lambda_1^*}{3\eta}}\geq\sqrt{2\lambda_1^*}$. Thus, $\sigma_1(\Thetav^{\tau-1})\big(1+\frac{\eta}{2}\lambda_1^*-\frac{\eta}{2}\sigma_1^2(\Thetav^{\tau-1})\big)$ is monotonically increasing for $\sigma_1(\Thetav^{\tau-1})\in[0,\sqrt{2\lambda_1^*}]$. Then,
\begin{align}
    \sigma_1(\Thetav^t\tau)
    \leq
    \sqrt{2\lambda_1^*}\big(1-\frac{\eta}{2}\lambda_1^*\big)+\sigma_1(\Rv^{\tau}).\nonumber
\end{align} 
We prove $\sigma_1(\Thetav^\tau)\leq \sqrt{2\lambda_1^*}$ by induction. First, since $\Thetav^0\in\Rc_s$, we have $\sigma_1(\Thetav^0)\leq \sqrt{2\lambda_1^*}$. 
Then, assume $\sigma_1(\Thetav^\tau)\leq \sqrt{2\lambda_1^*}$ holds for time step $0\leq\tau<t$. According to \Cref{lemma:concentration-of-Q} and \Cref{lemma:concentration-of-tildeQ}, if $\sigma_1(\Thetav^\tau)\leq \sqrt{2\lambda_1^*}$, and $\sqrt{N}\geq \frac{12\sqrt{2} \sqrt{\dh {k}- {k}\log\delta}}{\sqrt{c_1}}$, with probability at least $1-2\delta$, we have $\sigma_1(\Rv^{\tau+1})\leq \frac{\sqrt{2}}{2}\eta(\lambda_1^*)^{\frac{3}{2}}$. Thus,
\begin{align}
    \sigma_1(\Thetav^{\tau+1})
    \leq
    \sqrt{2\lambda_1^*}\big(1-\frac{\eta}{2}\lambda_1^*\big)+\sigma_1(\Rv^{\tau+1})\leq\sqrt{2\lambda_1^*}.\nonumber
\end{align}
Then, by induction, with probability at least $1-2t\delta$, we have $\sigma_1(\Thetav^\tau)\leq\sqrt{2\lambda_1^*}$ for all $\tau\leq t$. Then the proof is complete.
\end{proof}

\begin{lemma}\label{lemma:lemma4}
    Assume $\eta\leq\frac{1}{6\lambda_1^*}$ and $\Thetav^0\in\Rc_s$. Then, if $$\sqrt{N}\geq\max\left\{\frac{\sqrt{\dh-\log\delta}}{\sqrt{c_1}}, \frac{48 \sqrt{\lambda_1^*}(\sqrt{k(\lambda_1^*)^2 +E}+\sqrt{k}\sigma_{\xi})\sqrt{\dh -\log\delta}}{\Delta\sqrt{c_1(\lambda_k^*-\Delta/2)}}\right\}$$ for constant $c_1$, with probability at least $1-2t\delta$, we have $\sigma_1(\Thetav_{\text{res}}^{\tau})\leq\sqrt{\lambda_k^*-\Delta/2}$ holds for all $\tau\leq t$.
\end{lemma}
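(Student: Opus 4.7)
\textbf{Proof plan for \Cref{lemma:lemma4}.} I will prove the claim by induction on $\tau$, together with a union bound over $\tau \in \{1,\ldots,t\}$. The base case $\tau=0$ is immediate from $\Thetav^0 \in \Rc_s$, which already gives $\sigma_1^2(\Thetav^0_{\text{res}}) \leq \lambda_k^* - \Delta/2$. For the inductive step, I combine a deterministic recursion on $\sigma_1(\Thetav_{\text{res}}^{\tau-1})$ with a high-probability bound on the noise matrix $\Rv^\tau$, mirroring the structure used for the full matrix $\Thetav^\tau$ in \Cref{lemma:lemma3}, but now specialized to the residual block.

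The first step of the inductive argument is to derive, from the update \eqref{eqn:Theta-res}, a recursion of the form
\begin{align*}
\sigma_1(\Thetav^\tau_{\text{res}}) \leq \sigma_1(\Thetav^{\tau-1}_{\text{res}})\Big(1 + \tfrac{\eta}{2}\lambda_{k+1}^* - \tfrac{\eta}{2}\sigma_1^2(\Thetav^{\tau-1}_{\text{res}})\Big) + \sigma_1(\Rv^\tau_{2\dh-k}).
\end{align*}
Here the $\lambda_{k+1}^*$ factor enters as the top eigenvalue of $\tilde{\Lambdav}_{\text{res}}$, and the cubic correction $-\tfrac{\eta}{2}\sigma_1^3(\Thetav^{\tau-1}_{\text{res}})$ comes from the cancellation in $\Thetav^{\tau-1}_{\text{res}}(\Thetav^{\tau-1})^\top\Thetav^{\tau-1}$, exploiting the block decomposition $(\Thetav^{\tau-1})^\top\Thetav^{\tau-1} = (\Thetav^{\tau-1}_k)^\top\Thetav^{\tau-1}_k + (\Thetav^{\tau-1}_{\text{res}})^\top\Thetav^{\tau-1}_{\text{res}} \succeq (\Thetav^{\tau-1}_{\text{res}})^\top\Thetav^{\tau-1}_{\text{res}}$. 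Once this recursion is in place, define $f(x) := x\bigl(1 + \tfrac{\eta}{2}\lambda_{k+1}^* - \tfrac{\eta}{2}x^2\bigr)$ and verify that, under the step-size bound $\eta \leq 1/(6\lambda_1^*)$, the critical point of $f$ lies above $\sqrt{\lambda_k^* - \Delta/2}$, so $f$ is monotonically increasing on $[0,\sqrt{\lambda_k^* - \Delta/2}]$. The boundary evaluation gives
\begin{align*}
f\bigl(\sqrt{\lambda_k^* - \Delta/2}\bigr) = \sqrt{\lambda_k^* - \Delta/2}\Big(1 - \tfrac{\eta\Delta}{4}\Big),
\end{align*}
so applying monotonicity to the inductive hypothesis yields $f(\sigma_1(\Thetav^{\tau-1}_{\text{res}})) \leq \sqrt{\lambda_k^* - \Delta/2}\bigl(1 - \tfrac{\eta\Delta}{4}\bigr)$.

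It remains to absorb the perturbation. Using \Cref{lemma:concentration-of-Q} and \Cref{lemma:concentration-of-tildeQ} with the stated lower bound on $\sqrt{N}$, and noting that the inductive hypothesis together with \Cref{lemma:lemma3} ensures $\Thetav^{\tau-1} \in \Rc_s$, I obtain $\sigma_1(\Rv^\tau_{2\dh-k}) \leq \tfrac{\eta\Delta}{4}\sqrt{\lambda_k^* - \Delta/2}$ with probability at least $1 - 2\delta$. Combining yields $\sigma_1(\Thetav^\tau_{\text{res}}) \leq \sqrt{\lambda_k^* - \Delta/2}$, which closes the induction. A union bound across $\tau = 1,\ldots,t$ then gives the stated probability $1 - 2t\delta$. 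The main technical obstacle is the first step—the deterministic recursion—because operator norms do not interact cleanly with the signed sum in \eqref{eqn:Theta-res}. I expect to handle this by applying the inequality to a right singular pair of $\Thetav^{\tau-1}_{\text{res}}$ that attains both its top singular value and the dominant direction of the cubic term, a strategy made available precisely by the block-inclusion $\Thetav_{\text{res}}^\top\Thetav_{\text{res}} \preceq \Thetav^\top\Thetav$ that is unique to the residual sub-block.
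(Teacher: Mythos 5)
Your proposal is correct and follows essentially the same path as the paper: induction on $\tau$, the one-step recursion $\sigma_1(\Thetav^\tau_{\text{res}}) \leq \bigl(1 + \tfrac{\eta}{2}(\lambda_{k+1}^* - \sigma_1^2(\Thetav^{\tau-1}_{\text{res}}))\bigr)\sigma_1(\Thetav^{\tau-1}_{\text{res}}) + \sigma_1(\Rv^\tau)$, monotonicity of the scalar map up to its critical point (using $\eta\leq 1/(6\lambda_1^*)$), the boundary evaluation yielding the $(1-\eta\Delta/4)$ factor, and absorption of the noise via the concentration of $\Rv^\tau$. The deterministic recursion you flag as the main obstacle is exactly what the paper isolates as \Cref{lemma:upb-of-theta-res}; its proof proceeds by the additive decomposition of \eqref{eqn:Theta-res} and Lemma A.5 of \citet{chen2023fast} rather than the singular-pair argument you sketch, but the enabling observation you identify --- that the cubic contraction comes from $\Thetav_{\text{res}}^\top\Thetav_{\text{res}}\preceq\Thetav^\top\Thetav$ so the term $\Thetav_{\text{res}}(\Thetav_k)^\top\Thetav_k$ only helps --- is the same mechanism.
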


\begin{proof}[Proof of \Cref{lemma:lemma4}]
We prove it by induction. Note that $\sigma_1(\Thetav^{0}_{\text{res}})\leq\sqrt{\lambda_k^*-\Delta/2}$ since $\Thetav^0\in\Rc_s$. Assume $\sigma_1(\Thetav_{\text{res}}^{\tau-1})\leq\sqrt{\lambda_k^*-\Delta/2}$ holds for some $\tau$. We aim to show that the inequality holds for $\sigma_1(\Thetav_{\text{res}}^{\tau})$ as well. 

Based on \Cref{lemma:upb-of-theta-res}, for $\eta\leq 1/6\lambda^*_1$ and $\sigma_1(\Thetav^\tau)\leq\sqrt{2\lambda_1^*}$, we have
   \begin{align}
       \sigma_1(\Thetav^\tau_{\text{res}})
       &\leq
       \Big(1+\frac{\eta}{2}\big(\lambda^*_{k+1}-\sigma_1^2(\Thetav^{\tau-1}_{\text{res}})-\sigma_k^2(\Thetav^{\tau-1}_{k})\big)\Big)\sigma_1(\Thetav^{\tau-1}_{\text{res}})+\sigma_1(\Rv^{\tau}_{2\dh-k})\nonumber \\ 
       &\leq
       \Big(1+\frac{\eta}{2}\big(\lambda^*_{k+1}-\sigma_1^2(\Thetav^{\tau-1}_{\text{res}})\big)\Big)\sigma_1(\Thetav^{\tau-1}_{\text{res}})+\sigma_1(\Rv^{\tau}_{2\dh-k}).\nonumber
   \end{align}
Note that when $\sigma_1(\Thetav_{\text{res}}^{\tau-1})\geq 0$, $\Big(1+\frac{\eta}{2}\big(\lambda^*_{k+1}-\sigma_1^2(\Thetav^{\tau-1}_{\text{res}})\big)\Big)\sigma_1(\Thetav^{\tau-1}_{\text{res}})$ is maximized at $\sigma_1(\Thetav^{\tau-1}_{\text{res}}) = \sqrt{\frac{2+\eta\lambda_{k+1}^*}{3\eta}}$. Since we assume $\eta\leq\frac{1}{6\lambda_1^*}\leq\frac{2}{3\lambda_k^*+\lambda_{k+1}^*}$, it holds that $\sqrt{\frac{2+\eta\lambda_{k+1}^*}{3\eta}}\geq\sqrt{\lambda_k^*-\Delta/2}$. Then, $\Big(1+\frac{\eta}{2}\big(\lambda^*_{k+1}-\sigma_1^2(\Thetav^{\tau-1}_{\text{res}})\big)\Big)\sigma_1(\Thetav^{\tau-1}_{\text{res}})$ is monotonically increasing for $0\leq\sigma_1(\Thetav^{\tau-1}_{\text{res}})\leq\sqrt{\lambda^*_k-\Delta/2}$. We thus have
\begin{align}
    \sigma_1(\Thetav^\tau_{\text{res}})
    \leq
    \sqrt{\lambda^*_k-\Delta/2}\big(1-\frac{\eta\Delta}{4}\big)+\sigma_1(\Rv^{\tau}).\label{ineq:ineq22}
\end{align}
According to \Cref{lemma:concentration-of-Q} and \Cref{lemma:concentration-of-tildeQ}, if $\sqrt{N}\geq\max\big\{\frac{\sqrt{\dh-\log\delta}}{\sqrt{c_1}}, \frac{48 \sqrt{\lambda_1^*}(\sqrt{k(\lambda_1^*)^+E}+\sqrt{k}\sigma_{\xi})\sqrt{\dh-\log\delta}}{\Delta\sqrt{c_1(\lambda_k^*-\Delta/2)}}\big\}$, with probability at least $1-2\delta$, we have
\begin{align}
    \sigma_1(\Rv^\tau)
    \leq
    \frac{\Delta \eta\sqrt{\lambda_k^*-\Delta/2}}{4}.\label{ineq:ineq23}
\end{align}
Then by combining \eqref{ineq:ineq22} and \eqref{ineq:ineq23}, we have $\sigma_1(\Thetav^\tau_{\text{res}})\leq\sqrt{\lambda^*_k-\Delta/2}$. The proof is thus complete.
\end{proof}

\begin{lemma}\label{lemma:lemma5}
    Assume $\eta\leq\frac{\Delta^2}{32{\lambda_1^*}^3}$, $\sigma_k(\Thetav_k^0){\geq}\sqrt{\Delta}/2$ and $\Thetav^\tau\in\Rc_s$ for all $0<\tau\leq t$. Then, if 
    \begin{align*}
    \sqrt{N}
    \geq
    \max\bigg\{\frac{\sqrt{\dh-\log\delta}}{\sqrt{c_1}}
    ,\quad
    \frac{6144\sqrt{\lambda_1^*}(\sqrt{k(\lambda_1^*)^2 +E}+\sqrt{k}\sigma_{\xi})\sqrt{\dh-\log\delta}}{\Delta\sqrt{c_1}}\bigg\},
\end{align*}
    for some contact $c_1$, with probability at least $1-2t\delta$, we have $\sigma_k(\Thetav^{\tau}_k)\geq\sqrt{\Delta}/2$ hold for all $0<\tau\leq t$.
\end{lemma}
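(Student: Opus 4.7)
The plan is to proceed by induction on $\tau$. The base case $\sigma_k(\Thetav_k^0)\geq\sqrt{\Delta}/2$ is given by hypothesis. For the inductive step, we will assume $\sigma_k(\Thetav_k^{\tau-1})\geq\sqrt{\Delta}/2$ and show $\sigma_k(\Thetav_k^{\tau})\geq\sqrt{\Delta}/2$ with probability at least $1-2\delta$; a union bound over $\tau\leq t$ then yields the $1-2t\delta$ guarantee. This mirrors the induction structure of \Cref{lemma:lemma3} and \Cref{lemma:lemma4}, but now targets a lower bound on a $k$-th singular value rather than upper bounds on a largest singular value.

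\textbf{Per-iteration estimate.} Starting from the decomposed update \eqref{eqn:Theta-k} and applying Weyl's inequality, one obtains $\sigma_k(\Thetav_k^\tau)\geq \sigma_k\bigl(\Thetav_k^{\tau-1}+\tfrac{\eta}{2}\tilde{\Lambdav}_k\Thetav_k^{\tau-1}-\tfrac{\eta}{2}\Thetav_k^{\tau-1}(\Thetav^{\tau-1})^\top\Thetav^{\tau-1}\bigr)-\sigma_1(\Rv_k^\tau)$. The stochastic piece $\sigma_1(\Rv_k^\tau)$ is controlled by \Cref{lemma:concentration-of-Q} and \Cref{lemma:concentration-of-tildeQ}, which, under the stated lower bound on $N$ and using $\sigma_1(\Thetav^{\tau-1})\leq\sqrt{2\lambda_1^*}$ from $\Thetav^{\tau-1}\in\Rc_s$, will yield $\sigma_1(\Rv_k^\tau)\lesssim\eta\Delta^{3/2}/\sqrt{\lambda_1^*}$ with probability at least $1-2\delta$. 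The deterministic piece is then split into an intra-block contribution and a cross-block contribution by writing $(\Thetav^{\tau-1})^\top\Thetav^{\tau-1}=(\Thetav_k^{\tau-1})^\top\Thetav_k^{\tau-1}+(\Thetav_{\text{res}}^{\tau-1})^\top\Thetav_{\text{res}}^{\tau-1}$; the intra-block piece is bounded below via Lemma D.4 of \citet{jiang2022algorithmic} by the multiplicative expression $\bigl(1-\tfrac{\eta^2(\lambda_1^*)^2}{2}\bigr)\bigl(1+\tfrac{\eta\lambda_k^*}{2}\bigr)\bigl(1-\tfrac{\eta}{2}\sigma_k^2(\Thetav_k^{\tau-1})\bigr)\sigma_k(\Thetav_k^{\tau-1})$, exactly as used around \eqref{ineq:130} in the proof of \Cref{lemma:lemma14}.

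\textbf{Case split.} To conclude $\sigma_k(\Thetav_k^\tau)\geq\sqrt{\Delta}/2$ in the inductive step, we will split on the magnitude of $\sigma_k^2(\Thetav_k^{\tau-1})$. In the boundary regime $\sigma_k^2(\Thetav_k^{\tau-1})\leq\Delta/2$, the combined constraint $\sigma_k^2(\Thetav_k^{\tau-1})+\sigma_1^2(\Thetav_{\text{res}}^{\tau-1})\leq\Delta/2+(\lambda_k^*-\Delta/2)=\lambda_k^*$ means the intra-block dynamics push $\sigma_k$ upward by a factor of at least $1+\Omega(\eta\Delta)$ modulo $O(\eta^2(\lambda_1^*)^2)$ corrections, and we argue this upward push dominates the cross-block and noise disturbances thanks to the stringent step-size $\eta\leq\Delta^2/(32(\lambda_1^*)^3)$. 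In the interior regime $\sigma_k^2(\Thetav_k^{\tau-1})>\Delta/2$, the current $\sigma_k$ already exceeds $\sqrt{\Delta/2}$, leaving an $\Omega(\sqrt{\Delta})$ slack above the target $\sqrt{\Delta}/2$; the same small step size guarantees the per-iteration decrease of $\sigma_k$ is much smaller than this slack.

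\textbf{Main obstacle.} The principal difficulty is controlling the cross-block contribution $\tfrac{\eta}{2}\sigma_1\bigl(\Thetav_k^{\tau-1}(\Thetav_{\text{res}}^{\tau-1})^\top\Thetav_{\text{res}}^{\tau-1}\bigr)$ precisely at the boundary $\sigma_k(\Thetav_k^{\tau-1})\approx\sqrt{\Delta}/2$: unlike in the proof of \Cref{lemma:lemma14}, a pointwise SNR bound of the form $\sigma_1^2(\Thetav_{\text{res}}^{\tau-1})\lesssim\sigma_k(\Thetav_k^{\tau-1})$ is not available here, so the worst-case absolute bound $\sigma_1^2(\Thetav_{\text{res}}^{\tau-1})\leq\lambda_k^*-\Delta/2$ inherited from $\Thetav^{\tau-1}\in\Rc_s$ must be compared carefully against the intra-block growth. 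This is precisely why \Cref{lemma:lemma5} requires the substantially more restrictive step-size condition $\eta\leq\Delta^2/(32(\lambda_1^*)^3)$ and the large $6144\sqrt{\lambda_1^*}/\Delta$ prefactor in the sample-size bound, which together make the per-iteration cross-block-plus-noise perturbation small enough to be absorbed into the growth or slack provided by the intra-block dynamics, thereby closing the induction.
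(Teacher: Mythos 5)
Your overall scaffolding — induction on $\tau$ with base case from the hypothesis, controlling $\sigma_1(\Rv_k^\tau)$ via \Cref{lemma:concentration-of-Q} and \Cref{lemma:concentration-of-tildeQ}, and using $\Thetav^{\tau-1}\in\Rc_s$ to bound the relevant singular values — matches the paper. The gap is in your per-iteration estimate. You propose the unsquared-singular-value route from the proof of \Cref{lemma:lemma14}: apply Weyl to strip off $\sigma_1(\Rv_k^\tau)$, split $(\Thetav^{\tau-1})^\top\Thetav^{\tau-1}$ into $k$-block and residual-block parts, lower-bound the intra-block piece via Lemma D.4 of \citet{jiang2022algorithmic}, and subtract the cross-block contribution $\frac{\eta}{2}\sigma_1\bigl(\Thetav_k^{\tau-1}(\Thetav_{\text{res}}^{\tau-1})^\top\Thetav_{\text{res}}^{\tau-1}\bigr)$. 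That decomposition works in \Cref{lemma:lemma14} only because \Cref{cond:condition1} provides the running SNR bound $\sigma_1^2(\Thetav_{\text{res}}^{\tau-1})\leq\frac{\Delta}{8\sqrt{\lambda_1^*}}\sigma_k(\Thetav_k^{\tau-1})$, which makes the cross-block term commensurate with the intra-block growth. Here that bound is \emph{not} available: the only control on the residual block is $\sigma_1^2(\Thetav_{\text{res}}^{\tau-1})\leq\lambda_k^*-\Delta/2$ from $\Rc_s$. The cross-block term is therefore as large as $\frac{\eta}{2}\sqrt{2\lambda_1^*}(\lambda_k^*-\Delta/2)=\Theta(\eta\lambda_1^{*3/2})$, whereas at the boundary $\sigma_k(\Thetav_k^{\tau-1})=\sqrt{\Delta}/2$ the intra-block upward push is only $\Theta(\eta\Delta)\cdot\sqrt{\Delta}=\Theta(\eta\Delta^{3/2})$. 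Both quantities are first order in $\eta$, so the ``stringent step-size'' $\eta\lesssim\Delta^2/\lambda_1^{*3}$ does nothing to help — it suppresses only the $O(\eta^2)$ corrections, not the ratio of the two first-order terms. Whenever $\Delta\ll\lambda_1^*$ the cross-block perturbation dominates, and your inductive step fails at exactly the boundary case you identified as the ``main obstacle.''

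The paper's proof sidesteps this by invoking \Cref{lemma:lbd-of-theta-k} (which appeals to Lemma A.6 and Lemma 2.3 of \citet{chen2023fast}), a \emph{squared}-singular-value estimate:
$$
\sigma_k^2(\Thetav_k^{\tau+1})\geq\Bigl(1+\eta\bigl(\lambda_k^*-\sigma_1^2(\Thetav_{\text{res}}^\tau)-\sigma_k^2(\Thetav_k^\tau)\bigr)\Bigr)\sigma_k^2(\Thetav_k^\tau)-\eta^2\lambda_1^{*3}-4\sqrt{\lambda_1^*}\,\sigma_1(\Rv_k^\tau).
$$
In this form the residual-block influence enters multiplicatively inside the factor $1+\eta(\cdots)$ rather than as an additive perturbation of size $\propto\sigma_1(\Thetav_k^\tau)$; after substituting $\sigma_1^2(\Thetav_{\text{res}}^\tau)\leq\lambda_k^*-\Delta/2$ the coefficient of $\eta$ becomes $\Delta/2-\sigma_k^2(\Thetav_k^\tau)$, and evaluating the resulting (increasing on the relevant range) map $x\mapsto(1+\eta(\Delta/2-x))x$ at $x=\Delta/4$ closes the induction in one line, with no case split. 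So the missing ingredient in your plan is \Cref{lemma:lbd-of-theta-k}: the squared one-step bound is genuinely stronger than squaring the unsquared decomposition, and it is indispensable here.
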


\begin{proof}[Proof of \Cref{lemma:lemma5}]
We prove it by induction. First note that $\sigma_k(\Thetav^{0}_k)\geq\sqrt{\Delta}/2$ under the assumption of \Cref{lemma:lemma5}. Assume that $\sigma_k(\Thetav^{\tau}_k)\geq\sqrt{\Delta}/2$ holds for some $t$. We then show that $\sigma_k(\Thetav^{\tau+1}_k)\geq\sqrt{\Delta}/2$ holds as well.

Since for all $\tau\leq t$ it holds that $\Thetav^\tau\in\Rc_s$ and $N$ satisfies the condition described in \Cref{lemma:lemma5}, based on \Cref{lemma:concentration-of-Q,lemma:concentration-of-tildeQ}, with probability at least $1-2t\delta$, it holds that $\sigma_1(\Rv^{\tau}_{k})\leq\frac{\eta\Delta^2}{512\sqrt{\lambda^*_1}}$ for all $0\leq\tau \leq t$. From the intermediate result of \Cref{lemma:lbd-of-theta-k}, for $\eta\leq\frac{\Delta^2}{16{\lambda^*_1}^3}$, we have
\begin{align*}
        \sigma_k^2({\Thetav}^{\tau+1}_{k})
        &\geq 
        \Big(1+\eta\big(\lambda^*_k-\sigma_1^2(\Thetav^{\tau}_{\text{res}})-\sigma_k^2(\Thetav^{\tau}_{k})\big)\Big)\sigma_k^2(\Thetav^{\tau}_{k})-{\eta}^2{\lambda_1^*}^3-4\sqrt{\lambda_1^*}\sigma_1(\Rv_k^\tau).
\end{align*}
Combining with the fact that
\begin{align*}
    \sigma_k^2(\Thetav^{\tau+1}_{k}) 
    \geq
    \Big(1+\eta\big(\lambda^*_k-\sigma_1^2(\Thetav^{\tau}_{\text{res}})-\sigma_k^2(\Thetav^{\tau}_{k})\big)\Big)\sigma_k^2(\Thetav^{\tau}_{k})-{\eta}^2{\lambda_1^*}^3
    -\frac{\eta\Delta^2}{128},
\end{align*}
we have
\begin{align*}
    \sigma_k^2(\Thetav^{\tau+1}_{k}) 
    &\geq
    \Big(1+\eta\big(\Delta/2-\sigma_k^2(\Thetav^{\tau}_{k})\big)\Big)\sigma_k^2(\Thetav^{\tau}_{k})-{\eta}^2{\lambda_1^*}^3-\frac{\eta\Delta^2}{128}\\
    &\geq
    (1+\eta(\Delta/2-\Delta/4))\Delta/4-{\eta}^2{\lambda_1^*}^3-\frac{\eta\Delta^2}{128}\\
    &\geq
    \frac{\Delta}{4}+\eta\frac{\Delta^2}{16}-\eta\frac{\Delta^2}{32}-\frac{\eta\Delta^2}{128}\\
    &\geq
    \frac{\Delta}{4}.
\end{align*}
The proof is thus complete.
\end{proof}


Combining \Cref{lemma:lemma3,lemma:lemma4}, we conclude that for $N$ sufficiently large, $\Rc_s$ is an absorbing region with high probability, i.e., starting from $\Thetav^0\in\Rc_s$, the subsequent $\Thetav^t$ will stay in $\Rc_s$ for all $t>0$ with high probability, which is summarized in the following proposition. 
  
\begin{proposition}
\label{thm:thm3-formal}
Assume $\Thetav^0\in\Rc$. If
$
\sqrt{N}
\geq
c\frac{\sqrt{\lambda_1^*}(\sqrt{k(\lambda_1^*)^2 +E}+\sqrt{k}\sigma)\sqrt{\dh-\log\delta}}{\Delta^{\frac{3}{2}}}
$
for some constant $c$, then, with probability at least $1-t\delta$, we have $\Thetav^\tau\in\Rc$ for all $0\leq\tau\leq t$. 
\end{proposition}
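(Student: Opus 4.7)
The plan is to obtain Proposition~\ref{thm:thm3-formal} by combining the three per-condition guarantees already established in Lemmas~\ref{lemma:lemma3}, \ref{lemma:lemma4}, and \ref{lemma:lemma5}, then running a joint induction over $\tau$ with a union bound. Since the region $\Rc$ is defined by precisely the three inequalities $\sigma_1^2(\Thetav^\tau)\leq 2\lambda_1^*$, $\sigma_1^2(\Thetav_{\text{res}}^\tau)\leq \lambda_k^*-\Delta/2$, and $\sigma_k^2(\Thetav_k^\tau)\geq \Delta/4$, and since each of these inequalities is preserved (in a one-step sense) by one of the three lemmas, the result reduces to coupling these one-step guarantees together.

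First I would note the inclusion $\Rc\subseteq \Rc_s$, so the assumption $\Thetav^0\in\Rc$ immediately furnishes the initializations required by all three lemmas, and additionally $\sigma_k(\Thetav_k^0)\geq \sqrt{\Delta}/2$, which is the extra initialization hypothesis of Lemma~\ref{lemma:lemma5}. I would then verify that the stated $N$ bound, namely $\sqrt{N}\geq c\,\sqrt{\lambda_1^*}(\sqrt{k(\lambda_1^*)^2+E}+\sqrt{k}\sigma_\xi)\sqrt{\dh-\log\delta}/\Delta^{3/2}$, dominates each of the three sample-complexity requirements in Lemmas~\ref{lemma:lemma3}--\ref{lemma:lemma5} simultaneously: Lemma~\ref{lemma:lemma3} needs only $\sqrt{N}\gtrsim \sqrt{(\dh k-k\log\delta)/c_1}$, Lemma~\ref{lemma:lemma4} needs a factor proportional to $1/\Delta$, and Lemma~\ref{lemma:lemma5} needs a factor proportional to $1/\Delta$; the extra $1/\sqrt{\Delta}$ in the proposition's bound (together with an appropriately chosen absolute constant $c$) covers the worst of these three.

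Next I would set up the induction. At step $\tau$, the inductive hypothesis is that $\Thetav^s\in\Rc$ for every $s\leq\tau-1$; this in particular gives $\Thetav^s\in\Rc_s$ and $\sigma_k(\Thetav_k^s)\geq\sqrt{\Delta}/2$ for those $s$, which are exactly the running hypotheses of Lemmas~\ref{lemma:lemma3}, \ref{lemma:lemma4}, and \ref{lemma:lemma5}. Applying each lemma with the per-step failure probability $2\delta$, one obtains at step $\tau$ the three bounds $\sigma_1(\Thetav^\tau)\leq\sqrt{2\lambda_1^*}$, $\sigma_1(\Thetav_{\text{res}}^\tau)\leq \sqrt{\lambda_k^*-\Delta/2}$, and $\sigma_k(\Thetav_k^\tau)\geq\sqrt{\Delta}/2$, and these together are exactly the membership $\Thetav^\tau\in\Rc$. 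A union bound over the three conditions and over the $t$ time steps bounds the total failure probability by $O(t\delta)$, matching the $1-t\delta$ guarantee (after rescaling $\delta$ into the absolute constant).

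The main obstacle I expect is avoiding circularity in the joint induction: Lemma~\ref{lemma:lemma5} presupposes $\Thetav^s\in\Rc_s$ for all $s\leq\tau$, which is in turn a consequence of Lemmas~\ref{lemma:lemma3} and \ref{lemma:lemma4}. The clean way to handle this is to advance all three induction hypotheses simultaneously, so that at each step $\tau$ the lemmas are only invoked against the inductive hypothesis at times $s\leq \tau-1$, after which their one-step conclusions at time $\tau$ are combined to re-establish $\Thetav^\tau\in\Rc$. A secondary technical point is to make the union bound uniform across time while ensuring the sample-complexity bound depends only on $\delta$ (not on $t\delta$); this is cosmetic and absorbed by relabeling $\delta$ up to an absolute constant.
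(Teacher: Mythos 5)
Your proposal matches the paper's approach essentially verbatim: the paper proves this proposition by combining Lemmas~\ref{lemma:lemma3}, \ref{lemma:lemma4}, and \ref{lemma:lemma5}, using the inclusion $\Rc\subseteq\Rc_s$, checking the stated $N$ bound dominates all three requirements (your observation that $\lambda_k^*-\Delta/2\geq\Delta/2$ explains the $\Delta^{-3/2}$), and then taking a union bound over the $t$ steps with the failure probability absorbed into the constant. Your careful treatment of the joint induction (advancing the conclusions of Lemmas~\ref{lemma:lemma3} and \ref{lemma:lemma4} together, since each step of one uses the other's conclusion at the previous step, and then layering Lemma~\ref{lemma:lemma5} on top) fills in a detail the paper leaves implicit but does not change the underlying argument.
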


\if{0}
\begin{proof}[Proof of \Cref{thm:thm3-formal}]
    Combining \Cref{lemma:lemma3,lemma:lemma4}, we prove that for $N$ sufficiently large, $\Rc_s$ is an absorbing region with high probability, which states that starting from $\Thetav^0\in\Rc_s$, the subsequent $\Thetav^t$ will stay in $\Rc_s$ for all $t>0$ with high probability:

In \Cref{lemma:lemma5}, we will \jingc{?} show that for $N$ large enough, it holds that $\sigma_k(\Thetav_k^\tau)\geq\sqrt{\Delta}/2$. Note that by combing with lemma \ref{lemma:lemma3}, lemma \ref{lemma:lemma4}, and lemma \ref{lemma:lemma5}, we show that $\Rc^t$ is an absorbing region with high probability.
\end{proof}
\fi

\subsubsection{Step 3: Local Convergence of \(\Thetav^t(\Thetav^t)^\top\)}\label{sec:proof-s3}
We next show that when $N$ is sufficiently large, with high probability, $\|\Thetav^t(\Thetav^t)^\top-\diag(\tilde{\Lambdav}_k,\mathbf{0})\|$ converges to $0$ {exponentially fast} when $ \Thetav^0\in\Rc$.

Firstly, we establish the following lemma that lower bounds the number of samples needed for the {inverse SNR} to converge exponentially fast with high probability.

\begin{lemma}\label{lemma:lemma10}
 Denote $\sigma_{\text{ref}}^{t+1} = \sqrt{\lambda_1^*}\Big(1-\frac{\eta\Delta}{16}\Big)^{t+1}$. Assume $\eta\leq \Delta^2/(36\lambda^{*3}_1)$, $\Thetav^\tau\in\Rc$ for all $0\leq \tau \leq t$, and 
   \begin{equation*}
   \begin{aligned}
    \sqrt{N}
    \geq
    c\cdot\max\bigg\{\sqrt{\dh-\log\delta}
    ,
    \frac{\sqrt{\dh-\log\delta}\sqrt{\lambda_1^*}(\sqrt{k(\lambda_1^*)^2 +E}+\sqrt{k}\sigma_{\xi})}{\sigma_{\text{ref}}^{t+1}\Delta}
    ,
    \frac{\sqrt{\dh-\log\delta}{\lambda_1^*}(\sqrt{k(\lambda_1^*)^2+E}+\sqrt{k}\sigma)}{\Delta^2}\bigg\}
   \end{aligned}
   \end{equation*}
for some constant $c$. Then, with probability at least $1-(t+1)\delta$, we have
   \begin{align}
    {\sigma_1(\Thetav_{\text{res}}^{t+1})}
    \leq  
    \frac{2\sqrt{2}{\lambda_1^*}}{\sqrt{\Delta}}\Big(1-\frac{\eta\Delta}{16}\Big)^{t+1}.\nonumber
\end{align}
\end{lemma}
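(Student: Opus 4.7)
\textbf{Proof proposal for \Cref{lemma:lemma10}.} The plan is to prove the bound by induction on $t$, combining (i) a one-step contraction estimate for $\Thetav_{\text{res}}^{\tau}$ derived from the updating rule \eqref{eqn:Theta-res} and the structural properties of the region $\Rc$, with (ii) a concentration bound on the stochastic error $\Rv^{\tau+1}_{2\dh-k}$ obtained from \Cref{lemma:concentration-of-Q} and \Cref{lemma:concentration-of-tildeQ}. The overall strategy is analogous to the proof of \Cref{lemma:lemma4,lemma:lemma5}, but now the target bound decays exponentially in $t$, so the contraction factor from the deterministic part must \emph{dominate} the stochastic noise at every scale.

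For the base case ($t+1 = 0$), I would use $\Thetav^0 \in \Rc$ to get $\sigma_1(\Thetav_{\text{res}}^0) \leq \sqrt{\lambda_k^* - \Delta/2} \leq \sqrt{\lambda_1^*} \leq 2\sqrt{2}\lambda_1^*/\sqrt{\Delta}$ (since $\Delta \leq 2\lambda_1^*$). For the inductive step, suppose the bound holds for iterations $0,\ldots,t$. Applying \eqref{eqn:Theta-res} and the sub-multiplicativity of the spectral norm gives
\begin{equation*}
\sigma_1(\Thetav_{\text{res}}^{t+1}) \;\leq\; \Bigl\|\,\Thetav_{\text{res}}^{t} + \tfrac{\eta}{2}\tilde{\Lambdav}_{\text{res}}\Thetav_{\text{res}}^{t} - \tfrac{\eta}{2}\Thetav_{\text{res}}^{t}(\Thetav^{t})^{\top}\Thetav^{t}\,\Bigr\| \;+\; \sigma_1(\Rv^{t+1}_{2\dh-k}).
\end{equation*}
Since $\Thetav^{t}\in\Rc$, I have $\sigma_k^{2}(\Thetav_k^{t})\geq \Delta/4$, so $(\Thetav^{t})^{\top}\Thetav^{t} \succeq \sigma_k^{2}(\Thetav_k^{t}) \Iv_k$ on its range, while $\tilde{\Lambdav}_{\text{res}}$ has spectral norm at most $\lambda_{k+1}^{*}$. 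Combining these, a calculation analogous to the one in the proof of \Cref{lemma:lemma4} yields a contraction factor of the form $1 + \tfrac{\eta}{2}\lambda_{k+1}^{*} - \tfrac{\eta}{2}\sigma_k^{2}(\Thetav_k^{t}) \leq 1 - \eta\Delta/8$ (using also $\sigma_1(\Thetav_{\text{res}}^{t})^2 \geq 0$ and picking up the extra margin from $\Thetav^t\in\Rc$). Invoking the inductive hypothesis, the deterministic part is bounded by $(1-\eta\Delta/8)\cdot \tfrac{2\sqrt{2}\lambda_1^{*}}{\sqrt{\Delta}}(1-\eta\Delta/16)^{t}$.

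It then remains to bound the stochastic term $\sigma_1(\Rv^{t+1}_{2\dh-k})$ so that after adding it to the deterministic part, the sum is at most $\tfrac{2\sqrt{2}\lambda_1^{*}}{\sqrt{\Delta}}(1-\eta\Delta/16)^{t+1}$. For this, I would apply \Cref{lemma:concentration-of-Q} and \Cref{lemma:concentration-of-tildeQ} to get a bound of the form $\sigma_1(\Rv^{t+1}) \lesssim \eta\,\tfrac{\sqrt{\dh-\log\delta}(\sqrt{k(\lambda_1^*)^2+E}+\sqrt{k}\sigma_\xi)\sqrt{\lambda_1^*}}{\sqrt{N}}$ (noting $\sigma_1(\Thetav^{t}) \leq \sqrt{2\lambda_1^*}$ from $\Thetav^{t}\in\Rc$). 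The sample complexity threshold involving $\sigma_{\text{ref}}^{t+1}$ is exactly what forces this bound to be at most roughly $\eta\Delta\cdot \sigma_{\text{ref}}^{t+1}$, i.e., at most a $\tfrac{\eta\Delta}{16}$ fraction of the target budget at iteration $t+1$; the other two sample-size requirements ensure the concentration lemmas are themselves valid (controlling $\|\Xv_i\Xv_i^\top/N - \Iv\|$) and that the deterministic contraction is not overwhelmed globally.

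The main obstacle will be the iteration-dependent sample complexity: because the target bound shrinks like $(1-\eta\Delta/16)^{t+1}$, the admissible noise must also shrink at this rate, but the per-iteration noise $\sigma_1(\Rv^{t+1})$ only scales like $1/\sqrt{N}$, which is $t$-independent. This forces $N$ to grow (through $\sigma_{\text{ref}}^{t+1}$) with $t$, and one must be careful not to accumulate failure probability across iterations — a union bound over the $t+1$ applications of the concentration lemmas explains the $(t+1)\delta$ factor in the conclusion. A secondary subtlety is that the contraction estimate depends on $\sigma_k(\Thetav_k^{\tau})$ remaining $\geq \sqrt{\Delta}/2$, which is guaranteed exactly by the hypothesis $\Thetav^{\tau}\in\Rc$ for all $\tau\leq t$; without this, the argument would collapse, so the lemma is logically downstream of \Cref{thm:thm3-formal}.
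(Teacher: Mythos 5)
Your proposal has a genuine gap in the deterministic contraction step, and identifying it clarifies why the paper's proof looks the way it does.

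You claim a per-step contraction factor of the form $1 + \tfrac{\eta}{2}\lambda_{k+1}^{*} - \tfrac{\eta}{2}\sigma_k^{2}(\Thetav_k^{\tau}) \leq 1 - \eta\Delta/8$, and you justify this from $\Thetav^\tau\in\Rc$. But membership in $\Rc$ only gives $\sigma_k^{2}(\Thetav_k^{\tau}) \geq \Delta/4$, which yields $1 + \tfrac{\eta}{2}\bigl(\lambda_{k+1}^{*} - \Delta/4\bigr)$; this is $\leq 1$ only when $\lambda_{k+1}^{*} \leq \Delta/4$, which is false in general in the under-parameterized regime where $\lambda_{k+1}^{*} = 2\lambda_{k+1} > 0$ can be arbitrarily close to $\lambda_k^*$. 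In other words, the bound you get from \Cref{lemma:upb-of-theta-res} is $\sigma_1(\Thetav_{\text{res}}^{\tau+1}) \leq \bigl(1+\tfrac{\eta}{2}(\lambda_{k+1}^{*}-\sigma_1^2(\Thetav_{\text{res}}^{\tau})-\sigma_k^2(\Thetav_k^{\tau}))\bigr)\sigma_1(\Thetav_{\text{res}}^{\tau}) + \sigma_1(\Rv_{2\dh-k}^{\tau+1})$, and the sign of the bracketed correction is \emph{not} controlled by $\Rc$ alone: within $\Rc$, $\sigma_1(\Thetav_{\text{res}})$ may actually grow for a while. Your direct induction therefore does not close.

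The paper's actual argument sidesteps this by tracking the \emph{ratio} $\sigma_1^2(\Thetav_{\text{res}}^{\tau})/\sigma_k^2(\Thetav_k^{\tau})$ rather than $\sigma_1(\Thetav_{\text{res}}^{\tau})$ itself. The key observation is that the uncontrolled quantity $a := \lambda_k^{*}-\Delta/2-\sigma_1^2(\Thetav_{\text{res}}^{\tau})-\sigma_k^2(\Thetav_k^{\tau})$ appears in \emph{both} the upper bound for $\sigma_1^2(\Thetav_{\text{res}}^{\tau+1})$ (via \eqref{ineq:93}) and the lower bound for $\sigma_k^2(\Thetav_k^{\tau+1})$ (via \eqref{eqn:98}), so that the ratio satisfies a bound of the form $\tfrac{3/2-\eta\Delta/8}{3/2+\eta\Delta/8}$ regardless of the sign of $a$, once $\eta|a|\leq 1/2$. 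The final bound on $\sigma_1(\Thetav_{\text{res}}^{t+1})$ is then obtained from the ratio bound together with $\sigma_k^2(\Thetav_k^{t+1})\leq 2\lambda_1^*$ and the initial ratio $\sigma_1^2(\Thetav_{\text{res}}^0)/\sigma_k^2(\Thetav_k^0)\leq 8\lambda_1^*/\Delta$. A secondary feature you would also need is the paper's two-case split on whether $\sigma_1(\Thetav_{\text{res}}^{\tau})\geq\sigma_{\text{ref}}^{t+1}$ throughout: the noise bound $\sigma_1(\Rv_{2\dh-k}^{\tau+1})\leq\tfrac{\eta\Delta}{16}\sigma_{\text{ref}}^{t+1}$ is folded into the ratio argument as a \emph{multiplicative} perturbation $\tfrac{\eta\Delta}{16}\sigma_1(\Thetav_{\text{res}}^{\tau})$ only when the current value exceeds the reference; when it falls below $\sigma_{\text{ref}}^{t+1}$, the argument restarts from that step. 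Your proposal omits both the ratio trick and the case split, and the first of these is essential.
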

\begin{proof}
[Proof of \Cref{lemma:lemma10}]
Based on \Cref{lemma:interlacing}, we have $\sigma_1(\Rv^{\tau+1}_{2\dh-k})\leq\sigma_1(\Rv^{\tau+1})$ and $\sigma_1(\Rv^{\tau+1}_{k})\leq\sigma_1(\Rv^{\tau+1})$. Then, 
it follows that
\begin{align*}
    \sigma_1(\Rv^{\tau+1}_{2\dh-k})
    \leq
    \sqrt{\sigma^2_1((\Qv^{\tau+1}_\star)^\top\Uv_\star)+\sigma_1^2(\tilde{\Qv}^{\tau+1}_\star\Vv_\star)}
    \quad \text{and}\quad
    \sigma_1(\Rv^{\tau+1}_{k})
    \leq
    \sqrt{\sigma^2_1((\Qv^{\tau+1}_\star)^\top\Uv_\star)+\sigma_1^2(\tilde{\Qv}_\star^{\tau+1}\Vv_\star)}.
\end{align*}

Substitute the $\sigma$ in \Cref{lemma:concentration-of-Q,lemma:concentration-of-tildeQ} by $\sigma_{\text{ref}}^{t+1}$. Then, if
\begin{align*}
    \sqrt{N}
    \geq
    \max\bigg\{\frac{\sqrt{\dh-\log\delta} }{\sqrt{c_1}}
    ,
    \frac{192\sqrt{\dh-\log\delta} \sqrt{\lambda_1^*}(\sqrt{k(\lambda_1^*)^2 +E}+\sqrt{k}\sigma_{\xi})}{\sigma_{\text{ref}}^{t+1}\Delta\sqrt{c_1}}
    ,
    \frac{6144\sqrt{\dh-\log\delta} {\lambda_1^*}(\sqrt{k(\lambda_1^*)^2 +E}+\sqrt{k}\sigma_{\xi})}{\Delta^2\sqrt{c_1}}\bigg\},
\end{align*}
with probability at least $1-2\delta$, we have
\begin{align}
    \sigma_1(\Rv^{\tau}_{2d-k})
    \leq
    \frac{\sigma_{\text{ref}}^{ t+1}\eta\Delta}{16}
    \quad \text{and} \quad
    \sigma_1(\Rv^{\tau}_{k})
    \leq
    \frac{\eta\Delta^2}{512\sqrt{\lambda^*_1}}.\label{ineq:upb-of-two-Rs}
\end{align}

We prove the lemma by considering two cases. In the first case, we assume $\sigma_{1}(\Thetav^\tau_{\text{res}})\geq \sigma_{\text{ref}}^{t+1}$ for all $0\leq \tau\leq t$. In the second case, we assume there exists at least one time step in $[0, t]$ such that $\sigma_{1}(\Thetav^\tau_{\text{res}})<\sigma_{\text{ref}}^{t+1}$, and we denote the last time step satisfying this condition as $t'$.

We start from the first case.
Combining \Cref{ineq:upb-of-two-Rs} with \Cref{lemma:upb-of-theta-res} gives 
\begin{align}
        \sigma_1(\Thetav^{\tau+1}_{\text{res}})
        &\leq
        \Big(1+\frac{\eta}{2}\big(\lambda^*_{k+1}-\sigma_1^2(\Thetav^{\tau}_{\text{res}})-\sigma_k^2(\Thetav^{\tau}_{k})\big)\Big)\sigma_1(\Thetav^{\tau}_{\text{res}})
        +
        \frac{\sigma_1(\Thetav^\tau_{\text{res}})\eta\Delta}{16}\label{ineq:ineq42a}\\
        &=
       \Big(1+\frac{\eta}{2}\big(\lambda^*_{k+1}
       +\Delta/8
       -\sigma_1^2(\Thetav^{\tau}_{\text{res}})-\sigma_k^2(\Thetav^{\tau}_{k})\big)\Big)\sigma_1(\Thetav^{\tau}_{\text{res}}),\nonumber
\end{align}
where in \Cref{ineq:ineq42a} we use the assumption that $\sigma_{\text{ref}}^{t+1}\leq \sigma_1(\Thetav_{\text{res}}^\tau)$. 

Then, using the fact that $\sigma_1^2(\Thetav^\tau)\leq2\lambda_1^*$ and $\eta\leq\frac{\Delta}{16{\lambda^*_1}^2}$ we obtain
\begin{align}
    \sigma_1^2(\Thetav^{\tau+1}_{\text{res}})
    &\leq
    \Big(1+\eta\big(\lambda^*_{k+1}
       +\Delta/8
       -\sigma_1^2(\Thetav^{\tau}_{\text{res}})-\sigma_k^2(\Thetav^{\tau}_{k})\big)+4{\eta}^2{\lambda_1^*}^2\Big)\sigma_1^2(\Thetav^{\tau}_{\text{res}})\label{ineq:ineq44a}\\
       &\leq
       \Big(1+\eta\big(\lambda^*_{k+1}
       +\Delta/8
       -\sigma_1^2(\Thetav^{\tau}_{\text{res}})-\sigma_k^2(\Thetav^{\tau}_{k})\big)+\frac{\eta\Delta}{4}\Big)\sigma_1^2(\Thetav^{\tau}_{\text{res}})\nonumber\\
       &\leq
       \Big(1-\eta\Delta/8+\eta\big(\lambda^*_{k}
       -\Delta/2
       -\sigma_1^2(\Thetav^{\tau}_{\text{res}})-\sigma_k^2(\Thetav^{\tau}_{k})\big)\Big)\sigma_1^2(\Thetav^{\tau}_{\text{res}}),\label{ineq:93}
\end{align}
where \Cref{ineq:ineq44a} holds since $\big(\lambda^*_{k+1}+\Delta/8-\sigma_1^2(\Thetav^{\tau}_{\text{res}})-\sigma_k^2(\Thetav^{\tau}_{k})\big)^2\leq16{\lambda_1^*}^2$. 

Next, combining \Cref{lemma:lbd-of-theta-k} and \Cref{ineq:upb-of-two-Rs} leads to
\begin{align}
        \sigma_k^2(\Thetav^{\tau+1}_{k}) 
        &\geq
        \sigma_k^2(\tilde{\Thetav}_k^{\tau+1})-4\sqrt{\lambda^*_1}\sigma_1(\Rv^{\tau+1}_k)\nonumber\\
        &\geq
        \sigma_k^2(\tilde{\Thetav}_k^{\tau+1})-\frac{\eta\Delta^2}{128}\nonumber\\
        &\geq
        \sigma_k^2(\tilde{\Thetav}_k^{\tau+1})-\frac{\eta\Delta}{8}\sigma_k^2(\Thetav^{\tau}_{k})\label{ineq:ineq49a}\\
        &\geq
        \Big(1+\eta\big(\lambda^*_k-\sigma_1^2(\Thetav^{\tau}_{\text{res}})-\sigma_k^2(\Thetav^{\tau}_{k})\big)-\frac{\eta\Delta}{4}\Big)\sigma_k^2(\Thetav^{\tau}_{k})-\frac{\eta\Delta}{8}\sigma_k^2(\Thetav^{\tau}_{k})\nonumber\\
        &=
        \Big(1+\eta\Delta/8+\eta\big(\lambda^*_k-\Delta/2-\sigma_1^2(\Thetav^{\tau}_{\text{res}})-\sigma_k^2(\Thetav^{\tau}_{k})\big)\Big)\sigma_k^2(\Thetav^{\tau}_{k}),\label{eqn:98}
\end{align}
where in \Cref{ineq:ineq49a} we use the fact $\sigma_k(\Thetav^{\tau}_k)\geq\frac{\Delta}{4}$. 

Then, combining \Cref{ineq:93} with \Cref{eqn:98} we have
\begin{align}
    \frac{\sigma_1^2(\Thetav_{\text{res}}^{\tau+1})}{\sigma_k^2(\Thetav_k^{\tau+1})}
    &\leq
    \frac{\Big(1-\eta\Delta/8+\eta\big(\lambda^*_{k}
       -\Delta/2
       -\sigma_1^2(\Thetav^{\tau}_{\text{res}})-\sigma_k^2(\Thetav^{\tau}_{k})\big)\Big)\sigma_1^2(\Thetav^{\tau}_{\text{res}})}{\Big(1+\eta\Delta/8+\eta\big(\lambda^*_k-\Delta/2-\sigma_1^2(\Thetav^{\tau}_{\text{res}})-\sigma_k^2(\Thetav^{\tau}_{k})\big)\Big)\sigma_k^2(\Thetav^{\tau}_{k})}\nonumber\\
       &\leq
       \frac{3/2-\eta\Delta/8}{3/2+\eta\Delta/8}
       \cdot
       \frac{\sigma_1^2(\Thetav^{\tau}_{\text{res}})}{\sigma_k^2(\Thetav^{\tau}_{k})}\label{ineq:ineq53a}\\
       &\leq
       \Big(1-\frac{\eta\Delta}{6}\Big)\frac{\sigma_1^2(\Thetav^{\tau}_{\text{res}})}{\sigma_k^2(\Thetav^{\tau}_{k})}\nonumber\\
       &\leq
       \Big(1-\frac{\eta\Delta}{16}\Big)^2\frac{\sigma_1^2(\Thetav^{\tau}_{\text{res}})}{\sigma_k^2(\Thetav^{\tau}_{k})}\label{ineq:ineq55a},
\end{align}
where \Cref{ineq:ineq53a} holds when $-1/2\leq\eta\big(\lambda^*_{k}-\Delta/2-\sigma_1^2(\Thetav^{\tau}_{\text{res}})-\sigma_k^2(\Thetav^{\tau}_{k})\big)\leq1/2$, which is valid when $\eta\leq \Delta^2/(36{\lambda_1^*}^3)$, and \Cref{ineq:ineq55a} holds since $(1-\eta\Delta/6)\leq(1-\eta\Delta/16)$ is valid for positive $\eta$. 

Then, with probability at least $1-2(t+1)\delta$, we have
\begin{align*}
    {\sigma_1^2(\Thetav_{\text{res}}^{t+1})}
    \leq
    \Big(1-\frac{\eta\Delta}{16}\Big)^{2(t+1)}
    \frac{\sigma_1^2(\Thetav^{0}_{\text{res}})}{\sigma_k^2(\Thetav^{0}_{k})}{\sigma_k^2(\Thetav_k^{t+1})}
    \leq  
    \frac{8{\lambda_1^*}^2}{\Delta}\Big(1-\frac{\eta\Delta}{16}\Big)^{2(t+1)}.
\end{align*}

For the second case, note at time step $t'$ we have $\sigma_{1}(\Thetav_{\text{res}}^{t'})<\sigma_{\text{ref}}^{ t+1}$, and for all $t'<\tau\leq t$ we have $\sigma_{1}(\Thetav_{\text{res}}^\tau)\geq\sigma_{\text{ref}}^{t+1}$. Similar to the previous analysis, we show that with probability at least $1-2(t+1-t')\delta$, we have
\begin{align*}
    {\sigma_1^2(\Thetav_{\text{res}}^{t+1})}
    &\leq
    \Big(1-\frac{\eta\Delta}{6}\Big)^{t+1-t'}
    \frac{\big(\sigma_{\text{ref}}^{t+1}\big)^2}{\sigma_k^2(\Thetav^{t'}_{k})}{\sigma_k^2(\Thetav_k^{t+1})}\\
&\leq
    \frac{8\lambda_1^*}{\Delta}\big(\sigma_{\text{ref}}^{  t+1}\big)^2\Big(1-\frac{\eta\Delta}{16}\Big)^{2(t+1-t')}\\
    &\leq
    \frac{8{\lambda_1^*}^2}{\Delta}\Big(1-\frac{\eta\Delta}{16}\Big)^{2(2t+2-t')}\\\
    &\leq  
    \frac{8{\lambda_1^*}^2}{\Delta}
    \Big(1-\frac{\eta\Delta}{16}\Big)^{2(t+1)}.
\end{align*}
The proof is complete by combining the two cases.
\end{proof}
The following lemma characterizes the number of samples needed for $\Thetav_k$ to converge to $\tilde{\Lambdav}_k$, which is based on the convergence of the inverse SNR.
\begin{lemma}\label{lemma:lemma12}
   Assume $\eta\leq \Delta^2/(36\lambda^{*3}_1)$, $\Thetav^t\in\Rc$ for all $0\leq \tau \leq t$ and $N$ satisfies $\sqrt{N}
    \geq 
    c\cdot \max\bigg\{
    \sqrt{\dh-\log\delta}
    ,
    \frac{\sqrt{\dh-\log\delta}{\lambda_1^*}(\sqrt{k(\lambda_1^*)^2 +E}+\sqrt{k}\sigma_{\xi})}{\sigma_{D}^{t+1}\Delta}
    \bigg\}$ for some constant $c$. Then, with probability at least $1-(t+1)\delta$, we have
\begin{align}
    \sigma_1(\Dv^{t+1})\leq\frac{200{\lambda_1^*}^2}{\eta\Delta^2}\Big(1-\frac{\eta\Delta}{16}\Big)^{t+1},\nonumber
\end{align}
where $ \sigma_{  D}^{ t+1} = \min\Big\{3\lambda_1^*,(1-\frac{\eta\Delta}{16})^{t+1}\frac{{\lambda_1^*}^2}{\eta\Delta^2}\Big\}$.
\end{lemma}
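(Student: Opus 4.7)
The plan is to derive a one-step recursive inequality for $\sigma_1(\Dv^{t+1})$ and then close it by induction, paralleling the strategy used in Lemma~\ref{lemma:lemma10}. Starting from the update rule \eqref{eqn:Theta-k} and expanding $\Thetav_k^{t+1}(\Thetav_k^{t+1})^\top$ while splitting $(\Thetav^t)^\top\Thetav^t = (\Thetav_k^t)^\top\Thetav_k^t+(\Thetav_{\text{res}}^t)^\top\Thetav_{\text{res}}^t$, one obtains
\begin{equation*}
\Dv^{t+1}=\Dv^t-\tfrac{\eta}{2}\bigl(\tilde{\Lambdav}_k\Dv^t+\Dv^t\tilde{\Lambdav}_k\bigr)-\eta(\Dv^t)^2-\eta\,\Thetav_k^t(\Thetav_{\text{res}}^t)^\top\Thetav_{\text{res}}^t(\Thetav_k^t)^\top+\Ec^{t+1},
\end{equation*}
after substituting $\Thetav_k^t(\Thetav_k^t)^\top=\tilde{\Lambdav}_k+\Dv^t$. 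Here $\Ec^{t+1}$ collects all terms involving the gradient-discrepancy/noise matrix $\Rv_k^{t+1}$ together with the $O(\eta^2)$ higher-order corrections, and can be bounded in operator norm by $O(\sqrt{\lambda_1^*}\,\sigma_1(\Rv_k^{t+1}))$ using $\Thetav^t\in\Rc$.

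The first three deterministic terms give the contractive backbone. Since $\Thetav^t\in\Rc$, we have $\sigma_1(\Dv^t)\leq\sigma_1^2(\Thetav_k^t)+\sigma_1(\tilde{\Lambdav}_k)\leq 3\lambda_1^*$, so the quadratic self-term $\eta(\Dv^t)^2$ can be absorbed into the linear Sylvester-type operator using $\eta\lesssim\Delta^2/{\lambda_1^*}^3$; the operator $\Dv\mapsto \Dv-\tfrac{\eta}{2}(\tilde{\Lambdav}_k\Dv+\Dv\tilde{\Lambdav}_k)$ is then a spectral contraction with factor at most $1-\eta\Delta/8$. The residual-subspace cross term is a rank-$\leq k$ perturbation of norm at most $\eta\sigma_1^2(\Thetav_k^t)\sigma_1^2(\Thetav_{\text{res}}^t)\leq 2\eta\lambda_1^*\cdot\tfrac{8{\lambda_1^*}^2}{\Delta}(1-\eta\Delta/16)^{2(t+1)}$ by Lemma~\ref{lemma:lemma10}; crucially this decays at rate $(1-\eta\Delta/16)^{2(t+1)}$, strictly faster than the target envelope, so it contributes at most a negligible fraction of the target bound after summation.

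The main obstacle is controlling the stochastic/discrepancy term $\Ec^{t+1}$ so that the induction closes. The key technical observation is that the concentration lemmas \ref{lemma:concentration-of-Q} and \ref{lemma:concentration-of-tildeQ} produce an upper bound on $\sigma_1(\Rv_k^{t+1})$ whose size scales with a free reference parameter; the specific choice $\sigma_D^{t+1}=\min\{3\lambda_1^*,(1-\eta\Delta/16)^{t+1}{\lambda_1^*}^2/(\eta\Delta^2)\}$ is engineered precisely so that (i) the sample-size requirement on $N$ stated in the lemma yields $\sigma_1(\Rv_k^{t+1})\leq c\,\eta\Delta^2 \sigma_D^{t+1}/\lambda_1^*$ with probability at least $1-\delta$, and (ii) this translates into $\|\Ec^{t+1}\|\leq \tfrac{\eta\Delta}{32}\cdot\tfrac{200{\lambda_1^*}^2}{\eta\Delta^2}(1-\eta\Delta/16)^{t+1}$, i.e.\ a small constant fraction of the target envelope at iteration $t+1$. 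The truncation at $3\lambda_1^*$ in the definition of $\sigma_D^{t+1}$ is what handles the early iterations before the envelope has contracted below the a priori bound.

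Putting the pieces together via induction on $t$: the base case holds because $\sigma_1(\Dv^0)\leq 3\lambda_1^*$ and the envelope is at least $3\lambda_1^*$ for small $t$ by the $\min$ in $\sigma_D^{t+1}$. For the inductive step, combining the contractive linear term, the (faster-decaying) cross term and the (controlled) noise term yields
\begin{equation*}
\sigma_1(\Dv^{t+1})\leq \bigl(1-\tfrac{\eta\Delta}{8}\bigr)\sigma_1(\Dv^t)+\tfrac{\eta\Delta}{16}\cdot\tfrac{200{\lambda_1^*}^2}{\eta\Delta^2}\bigl(1-\tfrac{\eta\Delta}{16}\bigr)^{t+1}\leq\tfrac{200{\lambda_1^*}^2}{\eta\Delta^2}\bigl(1-\tfrac{\eta\Delta}{16}\bigr)^{t+1},
\end{equation*}
which closes the induction. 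A union bound over $\tau=0,\dots,t$ turns the per-iteration $1-\delta$ guarantee into the stated $1-(t+1)\delta$ bound.
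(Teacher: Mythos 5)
Your proof takes a genuinely different route from the paper's. The paper establishes the envelope via the decomposition $\Dv^\tau=\tilde{\Thetav}_k^\tau(\tilde{\Thetav}_k^\tau)^\top-\tilde{\Lambdav}_k+\tilde{\Thetav}_k^\tau(\Rv_k^\tau)^\top+\Rv_k^\tau(\tilde{\Thetav}_k^\tau)^\top+\Rv_k^\tau(\Rv_k^\tau)^\top$ of \eqref{equ:equ83}, quotes the one-step recursion of \citet{chen2023fast} in the form $\sigma_1(\Dv^\tau)\le(1-\tfrac{\eta\Delta}{8})\sigma_1(\Dv^{\tau-1})+\sigma_1^2(\Thetav_{\text{res}}^{\tau-1})+\tfrac{\eta\Delta}{16}\sigma_D^{t+1}$, and then performs a two-case analysis: in the case $\sigma_1(\Dv^\tau)\ge\sigma_D^{t+1}$ for all $\tau\le t$, the constant noise floor $\tfrac{\eta\Delta}{16}\sigma_D^{t+1}$ is absorbed into the contraction using $\sigma_D^{t+1}\le\sigma_1(\Dv^{\tau-1})$, and the remaining residual geometric series is summed explicitly to produce the factor $130$; in the complementary case the recursion is restarted at the last time $t'$ with $\sigma_1(\Dv^{t'})<\sigma_D^{t+1}$. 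You instead re-derive the exact one-step expansion of $\Dv^{t+1}$ from scratch and close the bound by a \emph{direct induction} on the envelope $\sigma_1(\Dv^\tau)\le\tfrac{200\lambda_1^{*2}}{\eta\Delta^2}(1-\tfrac{\eta\Delta}{16})^\tau$, avoiding the case split entirely. The key observation — that the uniform-in-$\tau$ noise bound from the concentration lemmas, $\tfrac{\eta\Delta}{16}\sigma_D^{t+1}$ with $\sigma_D^{t+1}\le(1-\tfrac{\eta\Delta}{16})^{t+1}\tfrac{\lambda_1^{*2}}{\eta\Delta^2}\le(1-\tfrac{\eta\Delta}{16})^{\tau+1}\tfrac{\lambda_1^{*2}}{\eta\Delta^2}$ for every $\tau\le t$, is dominated by the $\tau$-dependent target — is the same insight, but your induction expresses it more transparently. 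A side benefit is that your cross term $\eta\sigma_1^2(\Thetav_k^t)\sigma_1^2(\Thetav_{\text{res}}^t)$ carries an extra $O(\eta\lambda_1^*)$ factor compared with the paper's $\sigma_1^2(\Thetav_{\text{res}}^{\tau-1})$, so it is absorbed with even more slack.

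One step you should repair: you state that $\Ec^{t+1}$, which "collects all terms involving $\Rv_k^{t+1}$ together with the $O(\eta^2)$ higher-order corrections," can be bounded by $O(\sqrt{\lambda_1^*}\,\sigma_1(\Rv_k^{t+1}))$. This cannot be correct, because the $O(\eta^2)$ corrections arising from squaring $\Thetav_k^t+\tfrac{\eta}{2}\tilde{\Lambdav}_k\Thetav_k^t-\tfrac{\eta}{2}\Thetav_k^t(\Thetav^t)^\top\Thetav^t$ are deterministic and do not vanish when $\Rv_k^{t+1}=\mathbf{0}$, so they are not controlled by $\sigma_1(\Rv_k^{t+1})$. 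To dispose of them correctly, observe that $\tfrac{1}{2}\tilde{\Lambdav}_k\Thetav_k^t-\tfrac{1}{2}\Thetav_k^t(\Thetav^t)^\top\Thetav^t=-\tfrac{1}{2}\bigl(\Dv^t\Thetav_k^t+\Thetav_k^t(\Thetav_{\text{res}}^t)^\top\Thetav_{\text{res}}^t\bigr)$, so the deterministic $O(\eta^2)$ piece has operator norm $O\!\bigl(\eta^2\lambda_1^*(\sigma_1(\Dv^t)+\sigma_1^2(\Thetav_{\text{res}}^t))^2\bigr)$. It is therefore a higher-order perturbation of the $\sigma_1(\Dv^t)$-proportional (contraction) term and the $\sigma_1^2(\Thetav_{\text{res}}^t)$-proportional (cross) term you have already isolated, and it is absorbable into them under $\eta\lesssim\Delta^2/\lambda_1^{*3}$; it must not be grouped with the noise $\Rv_k^{t+1}$. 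With that bookkeeping corrected, the induction you outline closes.
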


\begin{proof}
[Proof of \Cref{lemma:lemma12}]
We denote $\Dv^\tau = {\Thetav}_k^{\tau}({\Thetav}_k^{\tau})^\top-\tilde{\Lambdav}_k$. For $\tilde{\Thetav}^{\tau}_k$ defined in \Cref{lemma:lbd-of-theta-k}, we have
\begin{align}
    \Dv^\tau = \tilde{\Thetav}_k^{\tau}(\tilde{\Thetav}_k^{\tau})^\top-\tilde{\Lambdav}_k   +\tilde{\Thetav}_k^{\tau}(\Rv^{\tau}_k)^\top+\Rv^{\tau}_k(\tilde{\Thetav}_k^{\tau})^T +
    \Rv^{\tau}_k(\Rv^{\tau}_k)^\top.\label{equ:equ83}
\end{align}
Let $\sigma_{  D}^{t+1} = \min\Big\{3\lambda_1^*,(1-\frac{\eta\Delta}{16})^{t+1}\frac{{\lambda_1^*}^2}{\eta\Delta^2}\Big\}$. 
Then, if 
\begin{align}
    \sqrt{N}
    \geq 
    \max\bigg\{
    \frac{\sqrt{\dh-\log\delta} }{\sqrt{c_1}}
    ,
    \frac{1152\sqrt{\dh-\log\delta} {\lambda_1^*}(\sqrt{k(\lambda_1^*)^2 +E}+\sqrt{k}\sigma_{\xi})}{\sigma_{D}^{t+1}\Delta\sqrt{c_1}}
    \bigg\},\label{ineq:N-lemma12}
\end{align}
we have $\|\Rv^\tau_k\|
    \leq
    \frac{\sigma_{  D}^{ t+1}\eta\Delta}{96\sqrt{\lambda_1^*}}$. It follows that
\begin{align}
    \|\tilde{\Thetav}_k^\tau(\Rv_k^\tau)^\top\|
    \leq
    \frac{\sigma_{  D}^{t+1}\eta\Delta}{48},\nonumber
\end{align}
and
\begin{align}
    \|\Rv^\tau_k(\Rv_k^\tau)^\top\|
    \leq
    \Big(\frac{\sigma_{  D}^{t+1}\eta\Delta}{96\sqrt{\lambda_1^*}}\Big)^2
    \leq
    \frac{\sigma_{  D}^{t+1}\eta\Delta}{48}\label{ineq:ineq61a},
\end{align}
where \Cref{ineq:ineq61a} holds since $\frac{\sigma_{  D}^{  t+1}\eta\Delta}{96\sqrt{\lambda_1^*}}\leq 1$. 
Then, with probability at least $1-2\delta$, we have
     \begin{align}
         \sigma_{1}\Big(\tilde{\Thetav}_k^{\tau}(\Rv^{\tau}_k)^\top+\Rv^{\tau}_k(\tilde{\Thetav}_k^{\tau})^\top +
         \Rv^{\tau}_k(\Rv^{\tau}_k)^\top\Big)\leq\frac{\eta\Delta}{16}\sigma_{D}^{t+1}.\nonumber
     \end{align}
We prove the lemma by considering two cases: In the first case, we assume $\sigma_{1}(\Dv^\tau)\geq \sigma_{D}^{t+1}$ for all $0\leq \tau\leq t$; In the second case, we assume there is at least one time step in $[0, t]$ such that $\sigma_{1}(\Dv^\tau)<\sigma_{D}^{t+1}$, and we denote the latest time step satisfies this condition as $t'$.

We start from the first case.
From Section A.3 in \citet{chen2023fast}, we have
\begin{align*}
    \sigma_{1}(\Dv^{\tau})
    &\leq
    (1-\frac{\eta\Delta}{8})\sigma_{1}(\Dv^{\tau-1})+
    \sigma_1^2(\Thetav_{\res}^{\tau-1})+ 
     {\frac{\eta\Delta}{16}}\sigma_{  D}^{t+1}\\
    &\leq
    (1-\frac{\eta\Delta}{8})\sigma_{1}(\Dv^{\tau-1})+(1-\frac{\eta\Delta}{16})^{2(\tau-1)}\frac{8{{\lambda_1^*}^2}}{{\Delta}}+\frac{\eta\Delta}{16}\sigma_1(\Dv^{\tau-1})\\
    &\leq
    (1-\frac{\eta\Delta}{16})\sigma_{1}(\Dv^{\tau-1})+(1-\frac{\eta\Delta}{16})^{2(\tau-1)}\frac{8{\lambda_1^*}^2}{{\Delta}}.
\end{align*}
Then, for $N$ satisfying \Cref{ineq:N-lemma12}, with probability at least $1-2(t+1)\delta$, we have
\begin{align}
    \frac{\sigma_1(\Dv^{t+1})}{(1-\frac{\eta\Delta}{16})^{t+1}}
    &\leq
    \sigma_1(\Dv^0)+\sum_{i=0}^{t}\bigg({1-\eta\Delta/16}\bigg)^i\frac{8{\lambda^*_1}^2}{(1-\eta\Delta/16){\Delta}}\nonumber\\
    &\leq \sigma_1(\Dv^0)+\frac{130{\lambda_1^*}^2}{\eta\Delta^2}\nonumber\\
    &\leq \frac{200{\lambda_1^*}^2}{\eta\Delta^2},  \label{ineq:ineq69a}
\end{align}
where \Cref{ineq:ineq69a} follows from the fact that $\lambda_1^*/\eta\Delta^2\geq1$ and $\sigma_1(\Dv^0)\leq 3 \lambda_1^*\leq3 {\lambda_1^*}^2/\eta\Delta^2$.  
Therefore, we conclude that $\sigma_1(\Dv^{t+1})\leq(1-\eta\Delta/16)^{ t+1}\cdot\frac{200{\lambda_1^*}^2}{\eta\Delta^2}$.

For the second case, note at time step $t'$ we have $\sigma_{1}(\Dv^{t'})<\sigma_{D}^{t+1}$, and for all $t'<\tau\leq t$ we have $\sigma_{1}(\Dv^{ {\tau}})\geq\sigma_{D}^{t+1}$. Similar to the previous analysis, we show that with probability at least $1-2(t+1-t')\exp(-c_2(d+k))$, it has
\begin{align*}
    \frac{\sigma_1(\Dv^{t+1})}{(1-\frac{\eta\Delta}{16})^{t+1-t'}}
    &\leq
    \sigma_1(\Dv^{t'})
    +
    \sum_{i=0}^{t+1-t'}\bigg({1-\eta\Delta/16}\bigg)^i\frac{8{\lambda^*_1}^2}{(1-\eta\Delta/16)\Delta},\\
    &\leq
    \frac{{\lambda_1^*}^2}{\eta\Delta^2}(1-\frac{\eta\Delta}{16})^{t+1}+\frac{130{\lambda_1^*}^2}{\eta\Delta^2}\\
    &\leq
    \frac{200{\lambda_1^*}^2}{\eta\Delta^2}.
\end{align*}
The proof is complete by combining the two cases.
\end{proof}

Then, we aim to show the local convergence property of $\Thetav^t$ stated in the following proposition.
\begin{proposition}\label{thm:thm4-formal}
Assume $\Thetav^0\in\Rc^0$, $\eta\leq \Delta^2/(36\lambda^{*3}_1)$ and $N$ satisfies
\begin{equation}
    \begin{aligned}
    \sqrt{N}
    \geq
    \max\bigg\{\frac{ \sqrt{\dh-\log\delta} }{\sqrt{c_1}}
    ,
    \frac{1152  \sqrt{\dh-\log\delta} (\sqrt{k(\lambda_1^*)^2 +E}+\sqrt{k}\sigma_{\xi})}{\kappa^t \Delta\sqrt{c_1}}
    ,
    \frac{6144 \sqrt{\dh-\log\delta} {\lambda_1^*}(\sqrt{k(\lambda_1^*)^2 +E}+\sqrt{k}\sigma_{\xi})}{\Delta^2\sqrt{c_1}}\bigg\},\label{ineq:ineq31}
    \end{aligned}
\end{equation}
for constant $c_1$ and
$\kappa^{t}= \big(1-\frac{\eta\Delta}{16}\big)^{t+1}$. 
Define  {$\kappa = \frac{{\lambda_1^*}^2}{\eta\Delta^2}$}. Then, with probability at least $1-ct\delta$, we have
    \begin{align}
    \|\Thetav^t(\Thetav^t)^\top-\diag(\tilde{\Lambdav}_k,\mathbf{0})\|_F
    \leq
     {400}\kappa\sqrt{k}
    \Big(1-\frac{\eta\Delta}{16}\Big)^{t}.\nonumber
    \end{align}
\end{proposition}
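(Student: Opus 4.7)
\textbf{Proof proposal for Proposition \ref{thm:thm4-formal}.}

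The plan is to combine the three previous ingredients—the absorbing region result in Proposition \ref{thm:thm3-formal}, the inverse-SNR contraction in Lemma \ref{lemma:lemma10}, and the $\Dv^t$-contraction in Lemma \ref{lemma:lemma12}—via the natural block decomposition of $\Thetav^t(\Thetav^t)^\top-\diag(\tilde{\Lambdav}_k,\mathbf{0})$, and then take a union bound. Since $\Thetav^t=[(\Thetav_k^t)^\top\;(\Thetav_{\text{res}}^t)^\top]^\top$, the error matrix admits the block form
\begin{equation*}
\Thetav^t(\Thetav^t)^\top-\diag(\tilde{\Lambdav}_k,\mathbf{0})=
\begin{bmatrix}
\Dv^t & \Thetav_k^t(\Thetav_{\text{res}}^t)^\top\\[2pt]
\Thetav_{\text{res}}^t(\Thetav_k^t)^\top & \Thetav_{\text{res}}^t(\Thetav_{\text{res}}^t)^\top
\end{bmatrix},
\end{equation*}
whose squared Frobenius norm splits into $\|\Dv^t\|_F^2+2\|\Thetav_k^t(\Thetav_{\text{res}}^t)^\top\|_F^2+\|\Thetav_{\text{res}}^t(\Thetav_{\text{res}}^t)^\top\|_F^2$.

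First, I would verify that the sample-complexity condition \eqref{ineq:ineq31} dominates the hypotheses required by Proposition \ref{thm:thm3-formal}, Lemma \ref{lemma:lemma10}, and Lemma \ref{lemma:lemma12} simultaneously (the third term in \eqref{ineq:ineq31} is precisely what is needed for the $\sigma_{\text{ref}}^{t+1}$ and $\sigma_D^{t+1}$ thresholds). Applying Proposition \ref{thm:thm3-formal} gives $\Thetav^\tau\in\Rc$ for every $\tau\le t$ with high probability, which in particular yields $\sigma_1(\Thetav_k^\tau)\le\sigma_1(\Thetav^\tau)\le\sqrt{2\lambda_1^*}$ and validates the standing hypotheses of the two contraction lemmas. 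Conditioning on this event, Lemma \ref{lemma:lemma10} delivers $\sigma_1(\Thetav_{\text{res}}^t)\le \tfrac{2\sqrt{2}\lambda_1^*}{\sqrt{\Delta}}(1-\eta\Delta/16)^t$ and Lemma \ref{lemma:lemma12} delivers $\sigma_1(\Dv^t)\le \tfrac{200\lambda_1^{*2}}{\eta\Delta^2}(1-\eta\Delta/16)^t$.

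Next I would bound each of the three blocks. Each block has rank at most $k$ (since $\Thetav^t$ has only $k$ columns and $\Dv^t$ is a $k\times k$ matrix), so $\|\cdot\|_F\le\sqrt{k}\,\|\cdot\|_{\mathrm{op}}$ applies to all three. This gives
\begin{align*}
\|\Dv^t\|_F &\le \sqrt{k}\,\sigma_1(\Dv^t)\le \tfrac{200\sqrt{k}\,\lambda_1^{*2}}{\eta\Delta^2}(1-\eta\Delta/16)^t,\\
\|\Thetav_k^t(\Thetav_{\text{res}}^t)^\top\|_F &\le \sqrt{k}\,\sigma_1(\Thetav_k^t)\,\sigma_1(\Thetav_{\text{res}}^t)\le \sqrt{k}\cdot\tfrac{4\lambda_1^{*3/2}}{\sqrt{\Delta}}(1-\eta\Delta/16)^t,\\
\|\Thetav_{\text{res}}^t(\Thetav_{\text{res}}^t)^\top\|_F &\le \sqrt{k}\,\sigma_1^2(\Thetav_{\text{res}}^t)\le \tfrac{8\sqrt{k}\,\lambda_1^{*2}}{\Delta}(1-\eta\Delta/16)^{2t}.
\end{align*}
Under the step-size restriction $\eta\le\Delta^2/(36\lambda_1^{*3})$, the $\|\Dv^t\|_F$ term is clearly dominant (the cross term is smaller by a factor of order $\eta\Delta^{3/2}\lambda_1^{*-1/2}$, and the $(2,2)$ block decays twice as fast and has a strictly smaller prefactor than $\kappa$). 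Summing the three contributions and collecting constants into a single constant that is bounded by $400$ yields the claimed bound $400\kappa\sqrt{k}(1-\eta\Delta/16)^t$ with $\kappa=\lambda_1^{*2}/(\eta\Delta^2)$.

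The main obstacle is bookkeeping rather than a new analytical idea: one must make the union bound across $t+1$ iterations and across the two high-probability events produced by Lemmas \ref{lemma:lemma10}–\ref{lemma:lemma12} compatible with a single failure probability of order $ct\delta$, and one must check that $\sigma_{\text{ref}}^{t+1}$ and $\sigma_D^{t+1}$ defined inside those lemmas are dominated by the sample bound \eqref{ineq:ineq31} (this is where the $\kappa^t$ factor in the denominator of \eqref{ineq:ineq31} is essential—it precisely cancels the $(1-\eta\Delta/16)^{t+1}$ decay in $\sigma_{\text{ref}}^{t+1}$ and $\sigma_D^{t+1}$, so that the noise level the lemmas can tolerate remains achievable for all iterations up to $t$). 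Once this calibration is in place, the proposition follows by substitution.
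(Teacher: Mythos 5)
Your proposal matches the paper's proof in both structure and substance: both first invoke the absorbing-region result to keep $\Thetav^\tau\in\Rc$ for all $\tau\le t$, then apply Lemma \ref{lemma:lemma10} to decay $\sigma_1(\Thetav_{\text{res}}^t)$, Lemma \ref{lemma:lemma12} to decay $\sigma_1(\Dv^t)$, and finally combine via the block decomposition of $\Thetav^t(\Thetav^t)^\top-\diag(\tilde\Lambdav_k,\mathbf 0)$ using $\|\cdot\|_F\le\sqrt{k}\|\cdot\|_{\mathrm{op}}$ for rank-$\le k$ blocks. The only cosmetic difference is that you spell out all three blocks and verify explicitly that the cross and $(2,2)$ terms are dominated by $\kappa\sqrt{k}$ under $\eta\lesssim\Delta^2/\lambda_1^{*3}$, whereas the paper folds those into the shorthand bound $\sqrt{k}\|\Dv^t\|+2\sqrt{k\lambda_1^*}\|\Thetav_{\text{res}}^t\|$ and then takes a max — this is the same estimate, just organized differently.
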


\begin{proof}
First, by combining \Cref{lemma:lemma3,lemma:lemma4,lemma:lemma5}, we conclude that if $N$ satisfies \eqref{ineq:ineq31}, $\Thetav^\tau\in\Rc^\tau$ holds for all $\tau\leq t$ with probability at least $1-ct\delta$. 

Then, based on \Cref{lemma:lemma10}, if $N$ satisfies \Cref{ineq:ineq31}, with probability at least $1-2t\delta$, we have
\begin{align}
    {\sigma_1(\Thetav_{\text{res}}^{t})}
    \leq  
    \frac{2\sqrt{2}{\lambda_1^*}}{\sqrt{\Delta}}\Big(1-\frac{\eta\Delta}{16}\Big)^{t}.\label{ineq:63}
\end{align}
Define $\Dv^t = {\Thetav}_k^{t}({\Thetav}_k^{t})^\top-\tilde{\Lambdav}_k$. Based on \Cref{lemma:lemma12}, 
we have
\begin{align*}
    \sqrt{N}
    \geq
    \frac{1152  \sqrt{\dh-\log\delta} (\sqrt{k(\lambda_1^*)^2 +E}+\sqrt{k}\sigma_{\xi})}{\kappa^t \Delta\sqrt{c_1}}
    \geq
    \frac{1152\sqrt{\dh-\log\delta} {\lambda_1^*}(\sqrt{k(\lambda_1^*)^2 +E}+\sqrt{k}\sigma_{\xi})}{\sigma_{D}^{t+1}\Delta\sqrt{c_1}}.
\end{align*}
Under the same conditions in \Cref{ineq:ineq31}, it holds that
\begin{align}
    \sigma_1(\Dv^{t})
    \leq
    \frac{200{\lambda_1^*}^2}{\eta\Delta^2}
    \Big(1-\frac{\eta\Delta}{16}\Big)^{t}.\label{ineq:64}
\end{align}
By combining \Cref{ineq:63} and \Cref{ineq:64}, we have
\begin{align}
    \|\Thetav^t(\Thetav^t)^\top-\diag(\tilde{\Lambdav}_k,\mathbf{0})\|_F
    &\leq
    \sqrt{k}\|\Dv^{t}\|+2\sqrt{k\lambda_1^*}\|\Thetav_{\text{res}}^t\|\\\label{ineq:ineq99}
    &\leq
     {400}\max\Big\{\frac{\sqrt{k}{\lambda_1^*}^\frac{3}{2}}{\sqrt{\Delta}},\frac{\sqrt{k}{\lambda_1^*}^2}{\eta\Delta^2}\Big\}
    \Big(1-\frac{\eta\Delta}{16}\Big)^{t}.\nonumber
\end{align}
Note that the randomness {in $\{\Thetav^t\}_t$} comes from $\{\Rv^t\}_t$. If $N$ satisfies \Cref{ineq:ineq31}, we have \Cref{ineq:63} \Cref{ineq:64} and the event $\Thetav^\tau\in\Rc^\tau,\forall 0<\tau\leq t$ holds with probability at least $1-ct\delta$. Noting that
$\max\Big\{\frac{\sqrt{k}{\lambda_1^*}^\frac{3}{2}}{\sqrt{\Delta}},\frac{\sqrt{k}{\lambda_1^*}^2}{\eta\Delta^2}\Big\} = \frac{\sqrt{k}{\lambda_1^*}^2}{\eta\Delta^2}$, the proof is complete.
\end{proof}

\subsubsection{Putting All Together}\label{sec:proof-prop1}
Combining \Cref{thm:thm2-formal,thm:thm3-formal,thm:thm4-formal}, it is straightforward to show that if $t>T_{\Rc}$, $N$ satisfies $N\geq c_2\frac{(\dh-\log\delta+\log T)(\sqrt{k(\lambda_1^*)^2 +E}+\sqrt{k}\sigma_{\xi})^2}{\kappa_T^2\Delta^2}$ for some constant $c_2$, and $\Thetav^0$ satisfies the small random initialization condition stated in \Cref{thm:thm2-formal}, then it holds that $\|\Thetav^t(\Thetav^t)^\top-\diag(\tilde{\Lambdav}_k,\mathbf{0})\|_F\leq c_4\kappa\sqrt{k}\Big(1-\frac{\eta\Delta}{16}\Big)^{t}$ for all $t$ satisfies $T_\Rc<t<T$ with probability at least $1-\delta$, where $\kappa_T = (1-\frac{\eta\Delta}{16})^T$ and constant $c_4 =  {400}\Big(1-\frac{\eta\Delta}{16}\Big)^{-T_{\Rc}}$.  

Applying \Cref{lemma:bridge}, with probability at least $1-\delta$, we have
\begin{equation*}
    \begin{aligned}
        \|\tilde{\Bv}^t\tilde{\Wv}^t-\diag (\Lambdav_k,\mathbf{0})\|_F
        \leq
        c_4\kappa\sqrt{k}\Big(1-\frac{\eta\Delta}{16}\Big)^{t}.
    \end{aligned}
\end{equation*}
Note that
$ \|\Bv^t\Wv^t-\Bv^*\Wv^*\|_F= \|\tilde{\Bv}^t\tilde{\Wv}^t-\diag (\Lambdav_k,\mathbf{0})\|_F$. Combing with the fact that
$\sum_{i\in[M]}\|\Bv^t w_i^t-\Bv^*w_i^*\|^2= \|\Bv^t\Wv^t-\Bv^*\Wv^*\|_F^2$, we have
\begin{equation*}
    \begin{aligned}
    \sum_{i\in[M]}\|\Bv^t w_i^t-\Bv^*w_i^*\|^2 
    \leq
    c_4^2\kappa^2k\Big(1-\frac{\eta\Delta}{16}\Big)^{2t}.
    \end{aligned}
\end{equation*}
Then, applying the Cauchy-Schwarz inequality gives
\begin{equation*}
    \begin{aligned}
    \Big(\sum_{i\in[M]}\|\Bv^t w_i^t-\Bv^*w_i^*\|\Big)^2 
    \leq
    c_4^2M\kappa^2k\Big(1-\frac{\eta\Delta}{16}\Big)^{2t},
    \end{aligned}
\end{equation*}
which immediately implies that
\begin{equation*}
        \begin{aligned}
    \frac{1}{M}\sum_{i\in[M]}\|\Bv^t w_i^t-\Bv^*w_i^*\|
    \leq
    c_4\kappa\sqrt{\frac{k}{M}}\Big(1-\frac{\eta\Delta}{16}\Big)^{t}.
    \end{aligned}
\end{equation*}

\subsubsection{Auxiliary Lemmas}\label{sec:aux-lemmas}

\begin{lemma}[Concentration of $\|\Uv^\top_\star\Qv^{\tau+1}_\star\|$]\label{lemma:concentration-of-Q}
    For any $T\geq0$, assume $\Thetav^\tau\in\Rc_s$ holds for all $0<\tau\leq t$. Then, we have the following results for any $0\leq \tau\leq t$ and $c_2\geq0$ with probability at least $1-2\delta$: 
    \begin{itemize}
        \item If $\sqrt{N}\geq\max\big\{\frac{ \sqrt{\dh-\log\delta} }{\sqrt{c_1}}, \frac{192  \sqrt{\dh-\log\delta} \sqrt{\lambda_1^*}(\sqrt{k(\lambda_1^*)^2 +E}+\sqrt{k}\sigma_{\xi})}{\sigma\Delta\sqrt{c_1}}\big\}$, then it holds that
        \begin{align}
            \|\Uv^\top_\star\Qv^{\tau+1}_\star\|
            \leq
            \frac{\sigma\eta\Delta}{16\sqrt{2}}.
        \end{align}
        \item If $\sqrt{N}\geq\max\big\{\frac{ \sqrt{\dh-\log\delta} }{\sqrt{c_1}}, \frac{6144 \sqrt{\dh-\log\delta}\lambda_1^*(\sqrt{k(\lambda_1^*)^2 +E}+\sqrt{k}\sigma_{\xi})}{\Delta^2\sqrt{c_1}}\big\}$, then it holds that
        \begin{align}
            \|\Uv^\top_\star\Qv^{\tau+1}_\star\|  
            \leq
            \frac{\eta\Delta^2}{512\sqrt{2}\sqrt{\lambda^*_1}}.            
        \end{align}
    \end{itemize}
\end{lemma}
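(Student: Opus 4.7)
The plan is to exploit that $\Uv_\star$ is orthogonal and that padding with zeros preserves the operator norm, so $\|\Uv_\star^\top\Qv_\star^{\tau+1}\|=\|\Qv^{\tau+1}\|$, and then to bound $\Qv^{\tau+1}$ by splitting it into the two independent sources of randomness that are visible in its defining expression \eqref{def:Q}. Concretely I would decompose
\[
\Qv^{\tau+1}=\underbrace{\eta\sum_{i\in[M]}\Big(\Iv-\tfrac{\Xv_i\Xv_i^\top}{N}\Big)\big(\Bv^{\tau}w_i^{\tau}-\phi_i\big)(w_i^{\tau})^\top}_{\Qv_{\mathrm{GD}}^{\tau+1}}+\underbrace{\eta\sum_{i\in[M]}\tfrac{\Xv_i\Ev_i}{N}(w_i^{\tau})^\top}_{\Qv_{\mathrm{N}}^{\tau+1}},
\]
and bound the two pieces separately. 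All deterministic structural bounds come from $\Thetav^{\tau}\in\Rc_s$: the identity $(\Thetav^{\tau})^\top\Thetav^{\tau}=(\Bv^{\tau})^\top\Bv^{\tau}+\Wv^{\tau}(\Wv^{\tau})^\top$ gives $\|\Bv^{\tau}\|,\|\Wv^{\tau}\|\leq\sqrt{2\lambda_1^*}$, and combining $\|\Bv^{\tau}\Wv^{\tau}\|_F\leq\sqrt{k}\,\|\Bv^{\tau}\|\|\Wv^{\tau}\|\lesssim\sqrt{k}\,\lambda_1^*$ with $\|\Phiv\|_F=\sqrt{E}$ yields $\|\Bv^{\tau}\Wv^{\tau}-\Phiv\|_F\lesssim\sqrt{k(\lambda_1^*)^2+E}$.

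For $\Qv_{\mathrm{GD}}^{\tau+1}$ I would rewrite it as $\eta\,[A_1 b_1\mid\cdots\mid A_M b_M]\,(\Wv^{\tau})^\top$ with $A_i=\Iv-\Xv_i\Xv_i^\top/N$ and $b_i=\Bv^{\tau}w_i^{\tau}-\phi_i$, and bound the spectral norm by
\[
\|\Qv_{\mathrm{GD}}^{\tau+1}\|\leq\eta\,\max_{i\in[M]}\|A_i\|\cdot\|\Bv^{\tau}\Wv^{\tau}-\Phiv\|_F\cdot\|\Wv^{\tau}\|.
\]
Standard sub-Gaussian sample-covariance concentration, together with a union bound over $i\in[M]$, gives $\max_i\|A_i\|\lesssim\sqrt{(d+\log(M/\delta))/N}\lesssim\sqrt{(\dh-\log\delta)/N}$ (absorbing the $\log M$ into the $\sqrt{\dh-\log\delta}$ factor since $\dh\geq M$). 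Substituting the region-based bounds produces
\[
\|\Qv_{\mathrm{GD}}^{\tau+1}\|\lesssim\eta\sqrt{\lambda_1^*}\,\sqrt{k(\lambda_1^*)^2+E}\,\sqrt{(\dh-\log\delta)/N}.
\]

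For $\Qv_{\mathrm{N}}^{\tau+1}$, conditioning on $\{\Xv_i\}$ makes the matrix linear in the independent sub-Gaussian noise $\xi_{i,j}$. A Hanson--Wright / Gaussian-concentration argument applied to $\|\Qv_{\mathrm{N}}^{\tau+1}\|_F^2$ (whose conditional expectation is of order $\eta^2\sigma_\xi^2(d/N)\|\Wv^{\tau}\|_F^2$) yields, with probability $\geq1-\delta$,
\[
\|\Qv_{\mathrm{N}}^{\tau+1}\|\leq\|\Qv_{\mathrm{N}}^{\tau+1}\|_F\lesssim\eta\,\sigma_\xi\,\|\Wv^{\tau}\|_F\sqrt{(d+\log(1/\delta))/N}\lesssim\eta\sqrt{k}\,\sigma_\xi\sqrt{\lambda_1^*}\sqrt{(\dh-\log\delta)/N},
\]
using $\|\Wv^{\tau}\|_F\leq\sqrt{k}\|\Wv^{\tau}\|\lesssim\sqrt{k\lambda_1^*}$ and $d\leq\dh$. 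Adding the two contributions gives
\[
\|\Qv^{\tau+1}\|\lesssim\eta\sqrt{\lambda_1^*}\big(\sqrt{k(\lambda_1^*)^2+E}+\sqrt{k}\sigma_\xi\big)\sqrt{(\dh-\log\delta)/N},
\]
and the two stated conclusions follow by requiring this RHS to be at most $\sigma\eta\Delta/(16\sqrt{2})$ (first bullet) and at most $\eta\Delta^2/(512\sqrt{2}\sqrt{\lambda_1^*})$ (second bullet, which is simply the first specialized to $\sigma=\Delta/(32\sqrt{\lambda_1^*})$) and solving for $N$.

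The main obstacle I anticipate is the noise term: a naive client-by-client bound such as $\|\Qv_{\mathrm{N}}^{\tau+1}\|\leq\eta\sum_i\|\Xv_i\Ev_i/N\|\,\|w_i^{\tau}\|$ carries a spurious $\sqrt{M}$ (or even $M$) factor and loses the required $\sqrt{k}$ dependence. Extracting the correct $\sqrt{k}$ scaling requires working at the Frobenius-norm level and exploiting the joint independence of all $MN$ noise variables through $\|\Wv^{\tau}\|_F^2\leq k\|\Wv^{\tau}\|^2$; handling the conditional-Gaussian concentration cleanly (and absorbing the $x$-randomness via $\sum_j(u^\top x_{i,j})^2\approx N$) is the delicate step of the argument.
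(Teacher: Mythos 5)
Your proof is correct and lands on the same bounds as the paper, but the route is genuinely different. The paper's proof of \cref{lemma:concentration-of-Q} treats both the gradient-discrepancy term $\Ac_1$ and the noise term $\Ac_2$ uniformly: a $\tfrac14$-net over $S^{d-1}\times S^{k-1}$, then Bernstein's inequality for the scalar sub-exponential sum $\frac1N\sum_{i,j}Z_{ij}$, then a union bound over the $9^{d+k}$ net points. You instead split the work asymmetrically. For $\Qv_{\mathrm{GD}}$ you factor deterministically,
\[
\|\Qv_{\mathrm{GD}}\|\le \eta\big\|[A_1b_1\mid\cdots\mid A_Mb_M]\big\|\,\|\Wv^{\tau}\| \le \eta\max_i\|A_i\|\,\|\Bv^{\tau}\Wv^{\tau}-\Phiv\|_F\,\|\Wv^{\tau}\|,
\]
and push all the randomness into the textbook sample-covariance bound $\max_i\|\Iv-\Xv_i\Xv_i^\top/N\|\lesssim\sqrt{(d+\log(M/\delta))/N}$; since $\log M\le M\le\dh$, this absorbs into the paper's $\sqrt{(\dh-\log\delta)/N}$, and the $\sqrt{\lambda_1^*}\,\|\Bv^\tau\Wv^\tau-\Phiv\|_F$ prefactor is identical to the paper's $\sqrt{\sum_i(c_i^\tau c_w^\tau)^2}$, so the two bounds agree. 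What this buys you is elementariness (no $\epsilon$-net bookkeeping); what it loses is tightness constants, since the Frobenius-to-spectral step on $\tilde{\Zv}$ and taking $\max_i$ do not exploit cancellation across clients — but neither does the paper's bound, because $\sum_i(c_i c_w)^2$ can be $M\cdot\max_i$ anyway. For $\Qv_{\mathrm{N}}$ you again diverge from the paper: rather than the $\epsilon$-net/Bernstein route for $u^\top x_{i,j}\xi_{i,j}(w_i^\tau)^\top v$, you pass to $\|\cdot\|_F$, extract $\sqrt{k}$ from $\|\Wv^\tau\|_F\le\sqrt{k}\|\Wv^\tau\|$, and condition on $\Xv$ so the remaining randomness is a quadratic form in $\Ev$. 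This is sound, but as you note it is the delicate part: a clean implementation needs Hanson--Wright on the block matrix $\Gv$ with $(i,j)$-block $\frac{\eta^2}{N^2}\langle w_i^\tau,w_j^\tau\rangle \Xv_i^\top\Xv_j$, plus a preliminary event on $\{\Xv_i\}$ controlling $\text{tr}(\Xv_i\Xv_i^\top)$ and $\max_i\|\Xv_i\|$; the paper's Bernstein argument avoids these block-matrix norm computations entirely by reducing to scalar sums. Finally, your observation that the second bullet is the first one specialized to $\sigma=\Delta/(32\sqrt{\lambda_1^*})$ is exactly right and is also how the paper obtains the two displayed thresholds.
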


\begin{proof}
[Proof of \Cref{lemma:concentration-of-Q}]
Recall that $\Qv^{\tau+1}_\star$ is defined as
\begin{align}
    \Qv^{\tau+1}_\star 
    =
    \eta\sum_{i\in[M]}\big(\mathbf{B}^{\tau}_\star {w}_{i}^{\tau}-\phi_{\star i}\big)({w}_{i}^{\tau})^\top
    -
    \eta\sum_{i\in[M]}\frac{\Xv_{\star i}\Xv_{\star i}^\top}{N}\big(\mathbf{B}^{\tau}_{\star} {w}_{i}^{\tau}-\phi_{\star i}\big)({w}_i^{\tau})^\top
    +
    \eta\sum_{i\in[M]}\frac{\Xv_{\star i}\Ev_{i}({w}_i^{\tau})^\top}{N}
    ,
\end{align}
where $\phi_{\star i}$ and $\Xv_{\star i}$ are padded versions of $\phi_i$ and $\Xv_{i}$, respectively. 
To upper bound the norm of $\Qv_{\star}^{\tau+1}$, we decompose it into two parts:
\begin{align*}
    \frac{1}{\eta}\|\Qv^{\tau+1}_\star\|
    &\leq
    \underbrace{\norm{\sum_{i\in[M]}\big(\mathbf{B}^{\tau}_\star {w}_{i}^{\tau}-\phi_{\star i}\big)({w}_{i}^{\tau})^\top-\sum_{i\in[M]}\frac{\Xv_{\star i}\Xv_{\star i}^\top}{N}\big(\mathbf{B}^{\tau}_{\star} {w}_{i}^{\tau}-\phi_{\star i}\big)({w}_i^{\tau})^\top}}_{\Ac_{1}^{\tau+1}}
    +
    \underbrace{\norm{\sum_{i\in[M]}\frac{\Xv_{\star i}\Ev_{i}({w}_{i}^{\tau})^\top}{N}}}_{\Ac_{2}^{\tau+1}}.
\end{align*}
For $\Ac_1^{\tau+1}$,by applying Lemma 5.4 in \citet{vershynin2010introduction}, there exists a $\frac{1}{4}$-net $\mathcal{N}_k$ on the unit sphere $S^{k-1}$ and a $\frac{1}{4}$-net $\mathcal{N}_d$ on the unit sphere $S^{d-1}$ such that
\begin{align*}
    \Ac_1^{\tau+1}\leq
    2\max_{u\in\mathcal{N}_d,v\in\mathcal{N}_k}\mleft|
    \sum_{i\in[M]}\frac{1}{N}\sum_{j\in[N]}u^\top\big(\mathbf{B}^{\tau} {w}_i^{\tau}-\phi_i\big)({w}_i^{\tau})^\top v
    -
    \sum_{i\in[M]}\frac{1}{N}\sum_{j\in[N]}u^\top x_{i,j}x_{i,j}^\top\big(\mathbf{B}^{\tau} {w}_i^{\tau}-\phi_i\big)({w}_i^{\tau})^\top v
    \mright|.
\end{align*}
Denote $c^{\tau}_i = \|{\Bv}^{\tau}w_i^\tau-\phi_i\|$ and $c_w^{\tau} = \max_{i}\{\|w_i^\tau\|\}$. Observe that $u^\top x_{i,j}x_{i,j}^\top\big(\mathbf{B}^{\tau} {w}_i^{\tau}-\phi_i\big)({w}_i^{\tau})^\top v-u^{\top}(\Bv^\tau w_i^\tau-\phi_i)(w_i^{\tau})^\top v$ is a sub-exponential random variable with sub-exponential norm $c'c^\tau_i c^\tau_w$ for some constant $c'$, where \(c'\) depends on the distribution of \(x\). Then, based on the tail bound for sub-exponential random variables, 
there exists a constant $c_2'>0$ such that for any $s\geq0$, 
\begin{align*}
    &\mathbb{P}\mleft\{
    \frac{1}{N}\mleft(\sum_{i\in[M]}\sum_{j\in[N]}u^\top\big(\mathbf{B}^{\tau} {w}_i^{\tau}-\phi_i\big)({w}_i^{\tau})^\top v
    -
    \sum_{i\in[M]}\sum_{j\in[N]}u^\top x_{i,j}x_{i,j}^\top\big(\mathbf{B}^{\tau} {w}_i^{\tau}-\phi_i\big)({w}_i^{\tau})^\top v\mright)\geq s
    \mright\}\nonumber\\
    &\leq
    \text{exp}\Bigg(-Nc_2' \text{min}\Bigg(\frac{s^2}{\sum_{i\in[M]}(c^{\tau}_ic^{\tau}_w)^2},\frac{s}{\max_{i}\{c_i^{\tau}c^{\tau}_w\}}\Bigg)\Bigg).
\end{align*}
Taking the union bound over all $u\in\mathcal{N}_d$ and $v\in\mathcal{N}_k$, with probability at least $1-9^{d+k}\text{exp}\Big(-Nc_2' \text{min}\Big\{\frac{s^2}{\sum_{i\in[M]}(c^{\tau}_ic^{\tau}_w)^2},\frac{s}{\max_i\{c^{\tau}c^{\tau}_w\}}\Big\}\Big)$, we have
\begin{align*}
    \norm{\sum_{i\in[M]}\big(\mathbf{B}^{\tau} {w}_i^{\tau}-\phi_i\big)({w}_i^{\tau})^\top
    -
    \sum_{i\in[M]}\frac{\Xv_i\Xv_i^\top}{N}\big(\mathbf{B}^{\tau} {w}_i^{\tau}-\phi_i\big)({w}_i^{\tau})^\top}
    \leq
    2s.
\end{align*}
Since $\sigma_1^2(\Thetav^\tau)\leq2\lambda_1^*$, we have
\begin{align*}
\|(\tilde{\Bv}^\tau)^\top\tilde{\Bv}^\tau+\tilde{\Wv}^\tau(\tilde{\Wv}^\tau)^\top\|
    =\sigma_1^2(\Thetav^\tau)
    \leq
    2\lambda_1^*.
\end{align*}
Note that $(\tilde{\Bv}^\tau)^\top\tilde{\Bv}^\tau$ and $\tilde{\Wv}^\tau(\tilde{\Wv}^\tau)^\top$ are PSD matrices. It follows that
\begin{align*}
    \|\tilde{\Bv}^\tau\|\leq\sqrt{2\lambda_1^*}
    \quad\text{and}\quad
    \|\tilde{\Wv}^\tau\|\leq\sqrt{2\lambda_1^*},
\end{align*}
which implies that $\|\Bv^\tau\|\leq\sqrt{2\lambda_1^*}$ and $\|\Wv^\tau\|\leq\sqrt{2\lambda_1^*}$. Since $c^{\tau}_i = \|{\Bv}^{\tau}w_i^\tau-\phi_i\|$ and $c_w^{\tau} = \max_{i}\{\|w_i^\tau\|\}$,  we have $\sum_{i\in[M]}(c_i^{\tau}c_w^{\tau})^2\leq {2\lambda_1^*}\|\Bv^{\tau}\Wv^{\tau}-\Phiv\|_F^2\leq 4\lambda_1^*(k(\lambda_1^*)^2 +E)$ and $\max_{i}\{c_i^{\tau}c_w^{\tau}\}\leq3\sqrt{2}(\lambda_1^*)^{\frac{3}{2}}$, where $E = \sum_{i}(\lambda_i)^2$. 
Let $s=\sqrt{18\lambda_1^*(k(\lambda_1^*)^2 +E)}\cdot\sqrt{\log(1/\delta)/d+6}\cdot\sqrt{d}/\sqrt{Nc_2'}$. Then, if $N$ is sufficiently large such that $(\sqrt{\log(1/\delta)/d+2})\sqrt{d}/\sqrt{Nc_2'}\leq 1$, we have
\begin{equation*}
    \begin{aligned}
        \frac{s}{3\sqrt{2}(\lambda_1^*)^{\frac{3}{2}}}
        \geq
        \frac{s}{\sqrt{18\lambda_1^*(k(\lambda_1^*)^2 +E)}}
        \geq
        \frac{s^2}{18\lambda_1^*(k(\lambda_1^*)^2 +E)}.
    \end{aligned}
\end{equation*}
Then, with probability at least $1-\delta$, we have
\begin{align*}
    \norm{\sum_{i\in[M]}\big(\mathbf{B}^{\tau} {w}_i^{\tau}-\phi_i\big)({w}_i^{\tau})^\top
    -
    \sum_{i\in[M]}\frac{\Xv_i\Xv_i^\top}{N}\big(\mathbf{B}^{\tau} {w}_i^{\tau}-\phi_i\big)({w}_i^{\tau})^\top}
    \leq
    2\sqrt{18\lambda_1^*(k(\lambda_1^*)^2 +E)}\cdot\sqrt{\log(1/\delta)/d+6}\cdot\sqrt{\frac{d}{Nc}}.
\end{align*}
Therefore, with probability at least $1-\delta$, we have
\begin{align*}
    \Ac^{\tau+1}_1\leq \sqrt{\lambda_1^*\big(k(\lambda_1^*)^2 +E\big)}\frac{6\sqrt{2}\sqrt{d-\log\delta}}{\sqrt{Nc_2}},
\end{align*}
where $c_2 = \frac{c_2'}{6}$. 

Next, we consider $\Ac_2^{\tau+1}$. Similar to the above analysis, note that $u^\top x_{i,j} \xi_{i,j}(w_i^\tau)^\top v$ is a centered sub-exponential random variable with sub-exponential norm $c''\sigma_{\xi}\|w_i^\tau\|$ {for some constant $c''$}. Based on the tail bound for sub-exponential random variables, 
there exists a constant $c_3'>0$ such that for any $s\geq0$, 
\begin{align*}
    \mathbb{P}\mleft\{
    \sum_{i\in[M]}u^\top\frac{\Xv_{\star i}\Ev_{\star i}({w}_i^{\tau})^\top}{N}v\geq s
    \mright\}\nonumber
    \leq
    \text{exp}\Bigg(-Nc_3' \text{min}\Bigg(\frac{s^2}{\sigma^2_{\xi}\|\Wv^\tau\|_F^2},\frac{s}{\sigma_{\xi}\sqrt{2\lambda_1^*}}\Bigg)\Bigg).
\end{align*}
Combining with the fact $\|\Wv^\tau\|_F^2\leq 2k\lambda_1^* $ and taking the union bound over all $u\in\Nc_d$ and $v\in\Nc_k$, we have that inequality 
$\norm{\sum_{i\in[M]}\frac{\Xv_{\star i}\Ev_{\star i}^\tau({w}_i^{\tau})^\top}{N}}\leq 2s$ holds with probability at least $1-9^{d+k}\text{exp}\Big(-Nc \text{min}\Big\{\frac{s^2}{2\sigma^2_{\xi} k\lambda_1^*},\frac{s}{\sigma_{\xi}\sqrt{2\lambda_1^*}}\Big\}\Big)$. Then, let $s=\sqrt{2\sigma^2k\lambda_1^*}\cdot\sqrt{\log(1/\delta)/d+6}\cdot\sqrt{d}/\sqrt{Nc}$. If $N$ is sufficiently large such that $(\sqrt{\log(1/\delta)/d+2})\sqrt{d}/\sqrt{Nc_3'}\leq 1$ we have $\min\{\frac{s^2}{2\sigma^2_{\xi} k\lambda_1^*}, \frac{s}{\sigma_{\xi}\sqrt{2\lambda_1^*}}\} = \frac{s^2}{2\sigma^2_{\xi}k\lambda_1^*}$. Therefore, with a probability at least $1-\delta$, we have
\begin{align*}
    \Ac_2^{\tau+1}\leq 2 \sqrt{2\sigma^2_{\xi}k\lambda_1^*}\cdot\sqrt{\log(1/\delta)/d+6}\sqrt{\frac{d}{Nc_3'}}
    \leq
    (\lambda_1^*)^{\frac{1}{2}}\sigma_{\xi}\frac{6\sqrt{2}\sqrt{dk-k\log\delta}}{\sqrt{Nc_3}},
\end{align*}
where $c_3 = \frac{c_3'}{24}$. Combining the upper bounds of $\Ac^{\tau+1}_1$ and $\Ac^{\tau+1}_2$, we conclude that following inequality holds with probability at least $1-\delta$:
\begin{align*}
    \|\Uv^\top_\star\Qv^{\tau+1}_\star\|\leq\|\Qv^{\tau+1}_\star\|
    &\leq\eta (\Ac^{\tau+1}_1+\Ac^{\tau+1}_2)\\
    &\leq
    \eta(\lambda_1^*)^{\frac{1}{2}}\sqrt{k(\lambda_1^*)^2 +E}\frac{6\sqrt{2}\sqrt{d-\log(\delta/2)}}{\sqrt{Nc_2}}
    +
    \eta(\lambda_1^*)^{\frac{1}{2}}\sigma_{\xi}\frac{6\sqrt{2}\sqrt{dk-k\log(\delta/2)}}{\sqrt{Nc_3}}\\
    &\leq
    \eta\sqrt{\lambda_1^*}\mleft(\sqrt{k(\lambda_1^*)^2 +E}+\sqrt{k}\sigma_{\xi}\mright)\frac{6\sqrt{2}\sqrt{d-\log\delta}}{\sqrt{Nc_1}},
\end{align*}
where $c_1 = \frac{1}{2}\min\{c_2,c_3\}$.
Thus, for any $\sigma\geq0$, if 
$\sqrt{N}\geq \frac{192 \sqrt{d-\log\delta}\sqrt{\lambda_1^*}(\sqrt{k(\lambda_1^*)^2 +E}+\sqrt{k}\sigma_{\xi})}{\sigma\Delta\sqrt{c_1}}$,  with probability at least $1-\delta$ it holds
\begin{align*}
    \|\Uv^\top_\star\Qv^{\tau+1}_\star\|
    \leq
    \frac{\sigma\eta\Delta}{16\sqrt{2}}.
\end{align*}
Similarly, if $\sqrt{N}\geq \frac{6144\sqrt{d-\log\delta}\lambda_1^*(\sqrt{k(\lambda_1^*)^2 +E}+\sqrt{k}\sigma_{\xi})}{\Delta^2\sqrt{c_1}}$, with probability at least $1-\delta$, 
\begin{align*}
    \|\Uv^\top_\star\Qv^{\tau+1}_\star\|  
    \leq
    \frac{\eta\Delta^2}{512\sqrt{2}\sqrt{\lambda^*_1}}.
\end{align*}
\end{proof}

\begin{lemma}[Concentration of $\|\tilde{\Qv}^{\tau+1}_\star\Vv_\star\|$]\label{lemma:concentration-of-tildeQ}
    For any $t\geq0$, assume $\Thetav^\tau\in\Rc_s$ holds for all $0<\tau\leq t$. Then, we have the following results for any $0\leq \tau\leq t$ and $c_2\geq0$ with probability at least $1-2\delta$:
    \begin{itemize}
        \item If $\sqrt{N}\geq\max\big\{\frac{ \sqrt{\dh-\log\delta} }{\sqrt{c_1}}, \frac{192  \sqrt{\dh-\log\delta} \sqrt{\lambda_1^*}(\sqrt{k(\lambda_1^*)^2 +E}+\sqrt{k}\sigma_{\xi})}{\sigma\Delta\sqrt{c_1}}\big\}$, then it holds that
        \begin{align*}
            \|\tilde{\Qv}^{\tau+1}_\star\Vv_\star\|
            \leq
            \frac{\sigma\eta\Delta}{16\sqrt{2}}.
        \end{align*}
        \item If $\sqrt{N}\geq\max\big\{\frac{ \sqrt{\dh-\log\delta} }{\sqrt{c_1}}, \frac{6144 \sqrt{\dh-\log\delta} \lambda_1^*(\sqrt{k(\lambda_1^*)^2 +E}+\sqrt{k}\sigma_{\xi})}{\Delta^2\sqrt{c_1}}\big\}$, then it holds that
        \begin{align*}
            \|\tilde{\Qv}^{\tau+1}_\star\Vv_\star\|
            \leq
            \frac{\eta\Delta^2}{512\sqrt{2}\sqrt{\lambda^*_1}}.            
        \end{align*}
    \end{itemize}
\end{lemma}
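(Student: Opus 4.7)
The plan is to mirror the argument of \Cref{lemma:concentration-of-Q} almost verbatim, exploiting the fact that $\tilde{\Qv}^{\tau+1}$ has exactly the same structure as $\Qv^{\tau+1}$ with the roles of $(\Bv^\tau)^\top$ and $({w}_i^\tau)^\top$ swapped. Specifically, I would first decompose
\begin{equation*}
\tfrac{1}{\eta}\|\tilde{\Qv}^{\tau+1}_\star\Vv_\star\|
\;\le\;
\underbrace{\Bigl\|\sum_{i\in[M]}(\Bv^{\tau}_\star)^\top\!\bigl(\Iv-\tfrac{\Xv_{\star i}\Xv_{\star i}^\top}{N}\bigr)\bigl(\Bv^{\tau}_\star {w}_{i}^{\tau}-\phi_{\star i}\bigr)e_i^\top\Vv_\star\Bigr\|}_{\Bc_1^{\tau+1}}
+
\underbrace{\Bigl\|\sum_{i\in[M]}\tfrac{(\Bv^{\tau}_\star)^\top\Xv_{\star i}\Ev_{i}}{N}e_i^\top\Vv_\star\Bigr\|}_{\Bc_2^{\tau+1}},
\end{equation*}
so the two terms again isolate the gradient discrepancy and the label noise.

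Next I would reduce each operator norm to a scalar concentration problem via a $\tfrac{1}{4}$-net: since $\tilde{\Qv}^{\tau+1}_\star\Vv_\star\in\Rb^{k\times\dh}$, I pick nets $\Nc_k\subset S^{k-1}$ and $\Nc_{\dh}\subset S^{\dh-1}$ of sizes at most $9^k$ and $9^{\dh}$, yielding a union-bound factor of $9^{k+\dh}$. For a fixed pair $(u,v)$ the summand of $\Bc_1^{\tau+1}$ is a centered sub-exponential random variable with sub-exponential norm $c'\|\Bv^{\tau}u\|\,\|\Bv^{\tau}w_i^{\tau}-\phi_i\|\,|v_i|$, and the summand of $\Bc_2^{\tau+1}$ is centered sub-exponential with norm $c''\sigma_\xi\|\Bv^{\tau}u\|\,|v_i|$. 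The hypothesis $\Thetav^\tau\in\Rc_s$ guarantees $\|\Bv^{\tau}\|\le\sqrt{2\lambda_1^*}$, and combining $\|\Bv^\tau\Wv^\tau-\Phiv\|_F^2\le 2(k(\lambda_1^*)^2+E)$ with $\|v\|=1$ controls the aggregated variance and maximum terms exactly as in the proof of \Cref{lemma:concentration-of-Q}. Bernstein's inequality followed by the union bound then yields, with probability at least $1-\delta$,
\begin{equation*}
\|\tilde{\Qv}^{\tau+1}_\star\Vv_\star\|
\;\le\;
\eta\sqrt{\lambda_1^*}\bigl(\sqrt{k(\lambda_1^*)^2+E}+\sqrt{k}\,\sigma_\xi\bigr)\cdot\frac{6\sqrt{2}\sqrt{\dh-\log\delta}}{\sqrt{Nc_1}}.
\end{equation*}
The two stated bounds then follow by choosing $N$ large enough so that the right-hand side is below $\sigma\eta\Delta/(16\sqrt{2})$ or $\eta\Delta^2/(512\sqrt{2}\sqrt{\lambda_1^*})$, respectively, which gives the thresholds appearing in the statement.

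\textbf{Main obstacle.} The only nontrivial point is bookkeeping on dimensions: the right-hand net must live on $S^{\dh-1}$ rather than $S^{M-1}$, because the padded matrix $\tilde{\Qv}^{\tau+1}_\star\Vv_\star$ has $\dh$ columns. This is what makes the $\dh$ factor inside the $\sqrt{\dh-\log\delta}$ match the one in \Cref{lemma:concentration-of-Q} (where the outer dimension was $d\le\dh$); padding with zeros does not alter the sub-exponential norms but does enlarge the ambient column dimension from $M$ to $\dh$. Beyond this, no new idea is required --- the Bernstein estimates, the uniform $\|\Bv^\tau\|\le\sqrt{2\lambda_1^*}$ control from $\Rc_s$, and the threshold calculations are entirely analogous to the companion lemma.
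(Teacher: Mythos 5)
Your proof plan is correct and mirrors the paper's own argument: decompose $\tilde{\Qv}^{\tau+1}_\star\Vv_\star$ into a gradient-discrepancy part and a label-noise part, pass to a $\tfrac14$-net, invoke the sub-exponential tail bound, control the variance and max terms via $\Thetav^\tau\in\Rc_s$ (so $\|\Bv^\tau\|\le\sqrt{2\lambda_1^*}$ and $\|\Bv^\tau\Wv^\tau-\Phiv\|_F^2\lesssim k(\lambda_1^*)^2+E$), and read off the two thresholds on $N$.

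One small inaccuracy in your "main obstacle" paragraph, though it does not invalidate the proof: the right-hand net does not \emph{have} to live on $S^{\dh-1}$. Since $\Vv_\star$ is orthonormal and zero-padding a matrix leaves its operator norm unchanged, $\|\tilde{\Qv}^{\tau+1}_\star\Vv_\star\|=\|\tilde{\Qv}^{\tau+1}\|$ with $\tilde{\Qv}^{\tau+1}\in\Rb^{k\times M}$, so the paper uses nets on $S^{M-1}\times S^{k-1}$ (union-bound factor $9^{M+k}$). The $\sqrt{\dh-\log\delta}$ in the statement comes from writing a single sample-size threshold strong enough to cover both this lemma and \Cref{lemma:concentration-of-Q}, whose $\Ac_1$ term genuinely requires a net on $S^{d-1}$; since $\dh=\max\{d,M\}$, this is just taking the weaker of the two sufficient conditions. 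Your choice of a net on $S^{\dh-1}$ is harmless because $\dh\ge M$ and is absorbed by the same $\sqrt{\dh}$ factor, but the justification that padding "enlarges the ambient column dimension" is not the right reason — padding does not change the effective dimension of the concentration problem.
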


\begin{proof}
[Proof of \Cref{lemma:concentration-of-tildeQ}]
This proof resembles the proof of \Cref{lemma:concentration-of-Q}.
According to Lemma 5.4 in \citet{vershynin2010introduction}, there exists a $\frac{1}{4}$-net $\mathcal{N}_k$ on the unit sphere $S^{k-1}$ and a $\frac{1}{4}$-net $\mathcal{N}_M$ on the unit sphere $S^{M-1}$ so that
\begin{align*}
    \frac{1}{\eta}\|\tilde{\Qv}^{\tau+1}_\star\|
    &\leq
    2\max_{u\in\mathcal{N}_M,v\in\mathcal{N}_k}\mleft|
    \sum_{i\in[M]}v^\top\tilde{q}_i^{\tau+1}u_i
    +
    v^\top\frac{1}{N}\sum_{i\in[M]}(\Bv^\tau)^\top\Xv_{i}\Ev_{i}u_i
    \mright|\\
    &\leq
    \underbrace{
    2\max_{u\in\mathcal{N}_M,v\in\mathcal{N}_k}\mleft|
    \frac{1}{N}\mleft(\sum_{i\in[M]}\sum_{j\in[N]}
    v^\top
    (\Bv^{\tau})^\top\big(\mathbf{B}^{\tau} w_i^{\tau}-\phi_i\big)
    u_i
    -
    \sum_{i\in[M]}\sum_{j\in[N]}
    v^\top
    (\Bv^{\tau})^\top x_{i,j}x_{i,j}^\top\big(\mathbf{B}^{\tau} w_i^{\tau}-\phi_i\big)
    u_i\mright)
    \mright|}_{\Ac^{\tau+1}_3}\\
    &+
    \underbrace{
    2\max_{u\in\mathcal{N}_M,v\in\mathcal{N}_k}\mleft|
    v^\top\frac{1}{N}\sum_{i\in[M]}(\Bv^\tau)^\top\Xv_{i}\Ev_{i}u_i
    \mright|}_{\Ac^{\tau+1}_4}.
\end{align*}

Let $c^\tau_B = \|\Bv^\tau\|$ and recall that $c^\tau_i = \|\Bv^\tau w_i^\tau-\phi_i\|$. Based on the tail bound for sub-exponential random variables, 
there exists a constant $c>0$ such that for any $s\geq0$, 
\begin{align}
    &\mathbb{P}\mleft\{
    \frac{1}{N}\mleft(\sum_{i\in[M]}\sum_{j\in[N]}
    v^\top
    (\Bv^{\tau})^\top\big(\mathbf{B}^{\tau} w_i^{\tau}-\phi_i\big)
    u_i
    -
    \sum_{i\in[M]}\sum_{j\in[N]}
    v^\top
    (\Bv^{\tau})^\top x_{i,j}x_{i,j}^\top\big(\mathbf{B}^{\tau} w_i^{\tau}-\phi_i\big)
    u_i\mright)\geq s
    \mright\}\nonumber\\
    &\leq
    \text{exp}\Bigg(-Nc \text{min}\Bigg(\frac{s^2}{\sum_{i\in[M]}(c^{\tau}_ic^{\tau}_B)^2},\frac{s}{\max_{i}\{c_i^{\tau}c^{\tau}_B\}}\Bigg)\Bigg).\nonumber
\end{align}
Taking the union bound over all $u\in\mathcal{N}_M$ and $v\in\mathcal{N}_k$, with probability at least $1-9^{M+k}\text{exp}\Big(-Nc \text{min}\Big\{\frac{s^2}{\sum_{i\in[M]}(c^{\tau}_ic^{\tau}_B)^2},\frac{s}{\max_i\{c^{\tau}c^{\tau}_B\}}\Big\}\Big)$, we have $\Ac^{\tau+1}_3\leq 2s$. Let $s=(\lambda_1^*)^{\frac{1}{2}}\sqrt{k(\lambda_1^*)^2 +E}\cdot\sqrt{\log(1/\delta)/\dh+6}\cdot\sqrt{\dh}/\sqrt{Nc}$. If $N$ is sufficiently large such that $(\sqrt{\log(1/\delta)/d+2})\sqrt{\dh}/\sqrt{Nc}\leq 1$, there exits constant $c_2$ such that with probability at least $1-\delta$, we have
\begin{align*}
    \Ac_3^{\tau+1}
    \leq
    \eta(\lambda_1^*)^{\frac{1}{2}}
    \sqrt{k(\lambda_1^*)^2 +E}
    \frac{6\sqrt{2}\sqrt{\dh-\log\delta}}{\sqrt{Nc_2}}.
\end{align*}
For term $\Ac^{\tau+1}_{4}$, from the tail bound for sub-exponential random variables,
there exists a constant $c>0$ such that for any $s\geq0$, 
\begin{align*}
    \mathbb{P}\mleft\{
    \sum_{i\in[M]}v^\top\frac{(\Bv^\tau)^\top\Xv_{i}\Ev_{i}}{N}u_i\geq s
    \mright\}\nonumber
    &\leq
    \text{exp}\Bigg(-Nc \text{min}\Bigg(\frac{s^2}{\sigma^2_{\xi}\sum_{i\in[M]}\|u_i\Bv^\tau\|^2},\frac{s}{\sigma_{\xi}\sqrt{2\lambda_1^*}}\Bigg)\Bigg)\\
    &\leq
    \text{exp}\Bigg(-Nc \text{min}\Bigg(\frac{s^2}{2\sigma^2_{\xi}\lambda_1^*},\frac{s}{\sigma_{\xi}\sqrt{2\lambda_1^*}}\Bigg)\Bigg).
\end{align*}
Let $s=\sqrt{2\sigma^2_{\xi} k\lambda_1^*}\cdot\sqrt{\log(1/\delta)/\dh+6}\cdot\sqrt{\dh}/\sqrt{Nc}$. If $N$ is sufficiently large such that $(\sqrt{\log(1/\delta)/\dh+2})\sqrt{\dh}/\sqrt{Nc}\leq 1$, there exists constant $c_3$ such that with a probability at least $1-\delta$, we have
\begin{align*}
    \Ac_4^{\tau+1}
    \leq
    (\lambda_1^*)^{\frac{1}{2}}\sigma_{\xi}\frac{6\sqrt{2}\sqrt{\dh k-k\log\delta}}{\sqrt{Nc_3}}.
\end{align*}
 Combining the upper bounds of $\Ac^{\tau+1}_3$ and $\Ac^{\tau+1}_4$, we conclude that following inequality holds with probability at least $1-\delta$: 
\begin{align*}
    \|\tilde{\Qv}^{\tau+1}_\star\Vv_\star\|
    \leq
    \|\tilde{\Qv}^{\tau+1}_\star\|
    &\leq
    \eta (\Ac^{\tau+1}_3+\Ac^{\tau+1}_4)\\
    &\leq
    \eta(\lambda_1^*)^{\frac{1}{2}}\sqrt{k(\lambda_1^*)^2 +E}\frac{6\sqrt{2}\sqrt{\dh-\log(\delta/2)}}{\sqrt{Nc_2}}
    +
    \eta(\lambda_1^*)^{\frac{1}{2}}\sigma_{\xi}\frac{6\sqrt{2}\sqrt{\dh k- k\log(\delta/2)}}{\sqrt{Nc_3}}\\
    &\leq
    \eta\sqrt{\lambda_1^*}\mleft(\sqrt{k(\lambda_1^*)^2 +E}+\sqrt{k}\sigma_{\xi}\mright)\frac{6\sqrt{2}\sqrt{\dh-\log\delta}}{\sqrt{Nc_1}},
\end{align*}
where $c_1 = \frac{1}{2}\min\{c_2,c_3\}$. 
Thus, for any $\sigma\geq0$, if $\sqrt{N}\geq \frac{192 \sqrt{\dh-\log\delta}\sqrt{\lambda_1^*}(\sqrt{k(\lambda_1^*)^2 +E}+\sqrt{k}\sigma_{\xi})}{\sigma\Delta\sqrt{c_1}}$,  with probability at least $1-\delta$ it holds that
\begin{align*}
    \|\tilde{\Qv}^{\tau+1}_\star\Vv_\star\|
    \leq
    \frac{\sigma\eta\Delta}{16\sqrt{2}}.
\end{align*}
Similarly, if $\sqrt{N}\geq \frac{6144\sqrt{\dh-\log\delta}\lambda_1^*(\sqrt{k(\lambda_1^*)^2 +E}+\sqrt{k}\sigma_{\xi})}{\Delta^2\sqrt{c_1}}$, with probability at least $1-\delta$, 
\begin{align*}
    \|\tilde{\Qv}^{\tau+1}_\star\Vv_\star\|
    \leq
    \frac{\eta\Delta^2}{512\sqrt{2}\sqrt{\lambda^*_1}}.
\end{align*}

\end{proof}


\begin{lemma}\label{lemma:upb-of-theta-res}
   Suppose $\eta\leq 1/6\lambda^*_1$ and $\sigma_1(\Thetav^\tau)\leq\sqrt{2\lambda_1^*}$. Then, it holds that
   \begin{align*}
       \sigma_1(\Thetav^{\tau+1}_{\text{res}})
       \leq
       \Big(1+\frac{\eta}{2}\big(\lambda^*_{k+1}-\sigma_1^2(\Thetav^{\tau}_{\text{res}})-\sigma_k^2(\Thetav^{\tau}_{k})\big)\Big)\sigma_1(\Thetav^{\tau}_{\text{res}})+\sigma_1(\Rv^{\tau}_{2\dh-k}).
   \end{align*}
\end{lemma}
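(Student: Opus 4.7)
Proof plan for Lemma \ref{lemma:upb-of-theta-res}. The plan is to start from the update rule \eqref{eqn:Theta-res}, peel off the noise term $\Rv^{\tau+1}_{2\dh-k}$ using Weyl's inequality, and then bound the spectral norm of the deterministic part via a direct eigenvector argument. Write
\begin{equation*}
\Thetav^{\tau+1}_{\text{res}} = M + \Rv^{\tau+1}_{2\dh-k}, \qquad M := \Thetav^\tau_{\text{res}} + \tfrac{\eta}{2}\tilde{\Lambdav}_{\text{res}}\Thetav^\tau_{\text{res}} - \tfrac{\eta}{2}\Thetav^\tau_{\text{res}}(\Thetav^\tau)^\top\Thetav^\tau,
\end{equation*}
so that $\sigma_1(\Thetav^{\tau+1}_{\text{res}}) \leq \sigma_1(M) + \sigma_1(\Rv^{\tau+1}_{2\dh-k})$. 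Note that it is equivalent to the factored form $M = (I + \tfrac{\eta}{2}\tilde{\Lambdav}_{\text{res}})\Thetav^\tau_{\text{res}}(I - \tfrac{\eta}{2}(\Thetav^\tau)^\top\Thetav^\tau) + \tfrac{\eta^2}{4}\tilde{\Lambdav}_{\text{res}}\Thetav^\tau_{\text{res}}(\Thetav^\tau)^\top\Thetav^\tau$, which will be useful for controlling $O(\eta^2)$ corrections.

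The main idea is to evaluate $\sigma_1^2(M) = \lambda_{\max}(MM^\top)$ using the variational principle, taking as a trial vector $u_0$, the top left singular vector of $\Thetav^\tau_{\text{res}}$. Two structural observations do the work. First, because $\lambda_{k+1}^*$ is the largest eigenvalue of the diagonal matrix $\tilde{\Lambdav}_{\text{res}}$, the matrix $\tilde{\Lambdav}_{\text{res}} - \lambda_{k+1}^* I$ is negative semi-definite, so $u_0^\top \tilde{\Lambdav}_{\text{res}} u_0 \leq \lambda_{k+1}^*$; together with $\eta \leq 1/(6\lambda_1^*)$, this also implies $\sigma_1(I + \tfrac{\eta}{2}\tilde{\Lambdav}_{\text{res}}) = 1 + \tfrac{\eta}{2}\lambda_{k+1}^*$. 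Second, writing $v_0$ for the corresponding right singular vector and using the block decomposition $\Thetav^\tau = [(\Thetav^\tau_k)^\top \; (\Thetav^\tau_{\text{res}})^\top]^\top$,
\begin{equation*}
v_0^\top (\Thetav^\tau)^\top\Thetav^\tau v_0 = \|\Thetav^\tau_k v_0\|^2 + \|\Thetav^\tau_{\text{res}} v_0\|^2 \geq \sigma_k^2(\Thetav^\tau_k) + \sigma_1^2(\Thetav^\tau_{\text{res}}),
\end{equation*}
since $v_0$ is a unit vector realizing $\sigma_1(\Thetav^\tau_{\text{res}})$.

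Combining these with the factored form, a short calculation gives
\begin{equation*}
u_0^\top M M^\top u_0 \leq \sigma_1^2(\Thetav^\tau_{\text{res}})\bigl[1 + \eta\bigl(\lambda_{k+1}^* - \sigma_1^2(\Thetav^\tau_{\text{res}}) - \sigma_k^2(\Thetav^\tau_k)\bigr)\bigr] + O(\eta^2),
\end{equation*}
where the $O(\eta^2)$ contribution is controlled by the assumption $\sigma_1(\Thetav^\tau) \leq \sqrt{2\lambda_1^*}$ combined with $\eta \leq 1/(6\lambda_1^*)$. Taking square roots and applying $\sqrt{1+x} \leq 1 + x/2$ yields the claimed multiplicative bound on $\sigma_1(M)$, which together with the additive noise term $\sigma_1(\Rv^{\tau+1}_{2\dh-k})$ proves the lemma.

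The main obstacle will be to argue cleanly that the value of the Rayleigh quotient at $u_0$ controls $\lambda_{\max}(MM^\top)$ up to higher-order terms, i.e., that the displacement of the true top eigenvector of $MM^\top$ away from $u_0$ is $O(\eta)$ and contributes only $O(\eta^2)$ to the maximum. This uses a standard perturbation estimate made rigorous by the step-size condition $\eta \leq 1/(6\lambda_1^*)$, which keeps the perturbation $\tfrac{\eta}{2}[\tilde{\Lambdav}_{\text{res}}\Thetav^\tau_{\text{res}} - \Thetav^\tau_{\text{res}}(\Thetav^\tau)^\top\Thetav^\tau]$ much smaller than the leading term $\Thetav^\tau_{\text{res}}$ in operator norm.
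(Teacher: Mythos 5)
Your strategy has a genuine gap at its core step. You write $\sigma_1^2(M)=\lambda_{\max}(MM^\top)$ and then propose to evaluate the Rayleigh quotient at the trial vector $u_0$, the top left singular vector of $\Thetav^\tau_{\text{res}}$. But the variational characterization $\lambda_{\max}(MM^\top)=\max_{\|u\|=1}u^\top MM^\top u$ means a trial vector only gives a \emph{lower} bound on $\sigma_1^2(M)$; you need an \emph{upper} bound. You acknowledge this as ``the main obstacle'' and invoke a ``standard perturbation estimate'' asserting the true top eigenvector of $MM^\top$ lies within $O(\eta)$ of $u_0$, contributing only $O(\eta^2)$ to the maximum. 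This does not close the gap: such eigenvector stability requires an eigengap between the top two eigenvalues of $\Thetav^\tau_{\text{res}}(\Thetav^\tau_{\text{res}})^\top$, which is nowhere assumed (and in general is false — $\Thetav^\tau_{\text{res}}$ can have repeated or nearly-repeated top singular values). Moreover, the lemma's conclusion is exact, with no $O(\eta^2)$ slack in the statement, so even granting the perturbation heuristic you would still need to verify the remainder is absorbed by the claimed coefficient. Your two structural observations (that $\tilde{\Lambdav}_{\text{res}}\preceq\lambda_{k+1}^*\Iv$, and the block identity $v_0^\top(\Thetav^\tau)^\top\Thetav^\tau v_0\geq\sigma_k^2(\Thetav^\tau_k)+\sigma_1^2(\Thetav^\tau_{\text{res}})$) are correct and in the right spirit, but they do not rescue the fundamental direction of the variational inequality.

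The paper sidesteps this entirely via a three-term decomposition rather than a factorization. Writing $\mathbf{1}=\tfrac12+\tfrac14+\tfrac14$ and splitting the cubic term $-\tfrac{\eta}{2}\Thetav^\tau_{\text{res}}(\Thetav^\tau)^\top\Thetav^\tau=-\tfrac{\eta}{2}\Thetav^\tau_{\text{res}}(\Thetav^\tau_{\text{res}})^\top\Thetav^\tau_{\text{res}}-\tfrac{\eta}{2}\Thetav^\tau_{\text{res}}(\Thetav^\tau_{k})^\top\Thetav^\tau_{k}$, the deterministic update decomposes into
\begin{equation*}
\tfrac12\Thetav^\tau_{\text{res}}-\tfrac{\eta}{2}\Thetav^\tau_{\text{res}}(\Thetav^\tau_{\text{res}})^\top\Thetav^\tau_{\text{res}}
\;+\;\bigl(\tfrac14\Iv+\tfrac{\eta}{2}\tilde{\Lambdav}_{\text{res}}\bigr)\Thetav^\tau_{\text{res}}
\;+\;\Thetav^\tau_{\text{res}}\bigl(\tfrac14\Iv-\tfrac{\eta}{2}(\Thetav^\tau_{k})^\top\Thetav^\tau_{k}\bigr).
\end{equation*}
Each piece now admits a direct \emph{upper} bound on its spectral norm: the first is $g(\Thetav^\tau_{\text{res}})$ for $g(\sigma)=\sigma/2-\eta\sigma^3/2$ applied to singular values, and $g$ is monotone on $[0,\sqrt{2\lambda_1^*}]$ when $\eta\leq1/(6\lambda_1^*)$; the second and third are bounded by submultiplicativity, using exactly your two observations but one-sidedly in the correct direction. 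Summing via the triangle inequality and absorbing $\Rv^{\tau+1}_{2\dh-k}$ gives the lemma with equality in the leading order — no perturbation argument, no eigengap assumption, no $O(\eta^2)$ to chase down. If you want to salvage your approach, the lesson is that the non-commuting factor $(I-\tfrac{\eta}{2}(\Thetav^\tau)^\top\Thetav^\tau)$ should be \emph{split} so that the part commuting with $\Thetav^\tau_{\text{res}}$ (namely $(\Thetav^\tau_{\text{res}})^\top\Thetav^\tau_{\text{res}}$) can be handled by functional calculus and the rest by submultiplicativity; factoring and then trying to push a Rayleigh-quotient trial vector through both factors at once is going against the grain of what upper bounds allow.
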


\begin{proof}
[Proof of \Cref{lemma:upb-of-theta-res}]\label{proof:proof-of-lemma-10}
According to \Cref{eqn:Theta-res}, we can rewrite $\Thetav^{\tau+1}_{\text{res}}$ as
\begin{align}
    \Thetav^{\tau+1}_{\text{res}} 
    &=
    \frac{1}{2}\Thetav^{\tau}_{\text{res}} 
    -
    \frac{\eta}{2}\Thetav^{\tau}_{\text{res}} (\Thetav^{\tau}_{\text{res}} )^\top\Thetav^{\tau-1}_{\text{res}} 
    +
    \Big(\frac{1}{4}\Iv_{2\dh-k}+\frac{\eta}{2}\tilde{\Lambdav} 
 _{\text{res}} \Big)\Thetav^{\tau}_{\text{res}} 
    +
    \Thetav_{\text{res}}^{\tau} \Big(\frac{1}{4}\Iv_k-\frac{\eta}{2}(\Thetav^{\tau}_{k} )^\top\Thetav^{\tau}_{k}\Big)
    +
    \Rv^{\tau+1}_{2\dh-k}.\label{ineq:ineq62}
\end{align}
From Lemma A.5 in \citet{chen2023fast} we have the following inequalities:
\begin{align}
 &   \sigma_{1}\Big(\frac{1}{2}\Thetav^{\tau}_{\text{res}} -\frac{\eta}{2}\Thetav^{\tau}_{\text{res}} (\Thetav^{\tau}_{\text{res}} )^\top\Thetav^{\tau}_{\text{res}}\Big)
    \leq
    \frac{1}{2}\sigma_1(\Thetav^{\tau}_{\text{res}})
    -
    \frac{\eta}{2}\sigma_1^3(\Thetav^{\tau}_{\text{res}}),\label{ineq:ineq63}\\
&    \sigma_1\Big(\Big(\frac{1}{4}\Iv_{2\dh-k}+\frac{\eta}{2}\Lambdav_{\text{res}} \Big)\Thetav^{\tau}_{\text{res}}\Big)
    \leq
    \big(\frac{1}{4}+\frac{\eta\lambda^*_{k+1}}{2}\big)\sigma_1(\Thetav^{\tau}_{\text{res}}),\label{ineq:ineq64}\\
 &   \sigma_1\Big(\Thetav^{\tau}_{\text{res}} \Big(\frac{1}{4}\Iv_k-\frac{\eta}{2}(\Thetav^{\tau}_{k} )^\top\Thetav^{t}_{k} \Big)\Big)
    \leq
    \sigma_1(\Thetav^{\tau}_{\text{res}})\big(\frac{1}{4}-\frac{\eta}{2}\sigma_k^2(\Thetav^{\tau}_{k})\big).\label{ineq:ineq65}
\end{align}
Substituting \Cref{ineq:ineq63,ineq:ineq64,ineq:ineq65} into \Cref{ineq:ineq62} proves the lemma.
\end{proof}


\begin{lemma}
\label{lemma:lbd-of-theta-k}
    Suppose $\sigma_1(\Thetav^t)\leq\sqrt{2\lambda_1^*}$ and $\eta\leq\frac{\Delta^2}{16{\lambda^*_1}^3}$. Then, it holds that
    \begin{align*}
        \sigma_k^2(\Thetav^{\tau+1}_{k}) 
    \geq
    \Big(1+\eta\big(\lambda^*_k-\sigma_1^2(\Thetav^{\tau}_{\text{res}})-\sigma_k^2(\Thetav^{\tau}_{k})\big)-\frac{\eta\Delta}{4}\Big)\sigma_k^2(\Thetav^{\tau}_{k})
    -4\sqrt{\lambda^*_1}\sigma_1(\Rv^{\tau}_k).
    \end{align*}
\end{lemma}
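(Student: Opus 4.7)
\textbf{Proof proposal for \Cref{lemma:lbd-of-theta-k}.}

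The plan is to split the one-step update into a noise-free part and an additive perturbation, lower-bound the $k$-th squared singular value of the noise-free part by a direct Rayleigh-quotient expansion, and then absorb the perturbation via Weyl's inequality. Define $\tilde{\Thetav}_k^{\tau+1} := \Thetav^{\tau}_{k}+\tfrac{\eta}{2}\tilde{\Lambdav}_k\Thetav^{\tau}_{k}-\tfrac{\eta}{2}\Thetav^{\tau}_{k}(\Thetav^{\tau})^\top\Thetav^{\tau}$, so that \eqref{eqn:Theta-k} becomes $\Thetav_k^{\tau+1} = \tilde{\Thetav}_k^{\tau+1} + \Rv_k^{\tau+1}$. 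By Weyl's inequality for singular values, $\sigma_k(\Thetav_k^{\tau+1}) \geq \sigma_k(\tilde{\Thetav}_k^{\tau+1}) - \sigma_1(\Rv_k^{\tau+1})$. Squaring and applying $(a-b)^2 \geq a^2 - 2ab$, together with the crude operator-norm estimate $\sigma_k(\tilde{\Thetav}_k^{\tau+1}) \leq 2\sqrt{\lambda_1^*}$ that follows from $\sigma_1(\Thetav^\tau)\leq \sqrt{2\lambda_1^*}$ and the stepsize condition $\eta\lambda_1^*\leq 1/16$, yields $\sigma_k^2(\Thetav_k^{\tau+1}) \geq \sigma_k^2(\tilde{\Thetav}_k^{\tau+1}) - 4\sqrt{\lambda_1^*}\sigma_1(\Rv_k^{\tau+1})$; this accounts for the trailing $-4\sqrt{\lambda_1^*}\sigma_1(\Rv_k^\tau)$ term in the claim.

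The next step is the noise-free lower bound on $\sigma_k^2(\tilde{\Thetav}_k^{\tau+1})$. Using the block decomposition $(\Thetav^\tau)^\top\Thetav^\tau = (\Thetav_k^\tau)^\top\Thetav_k^\tau + (\Thetav_{\text{res}}^\tau)^\top\Thetav_{\text{res}}^\tau$, I rewrite
\begin{equation*}
\tilde{\Thetav}_k^{\tau+1} = \bigl(\Iv+\tfrac{\eta}{2}\tilde{\Lambdav}_k\bigr)\Thetav_k^\tau - \tfrac{\eta}{2}\Thetav_k^\tau(\Thetav_k^\tau)^\top\Thetav_k^\tau - \tfrac{\eta}{2}\Thetav_k^\tau(\Thetav_{\text{res}}^\tau)^\top\Thetav_{\text{res}}^\tau.
\end{equation*}
I then use the variational identity $\sigma_k^2(\tilde{\Thetav}_k^{\tau+1}) = \min_{\|v\|=1}\|\tilde{\Thetav}_k^{\tau+1}v\|^2$, fix the minimizer $v$, and expand $\|\tilde{\Thetav}_k^{\tau+1}v\|^2$. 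Because $\tilde{\Lambdav}_k$ has all eigenvalues of magnitude at least $\lambda_k^*$, the leading cross term $\langle (\Iv+\tfrac{\eta}{2}\tilde{\Lambdav}_k)\Thetav_k^\tau v,\;\Thetav_k^\tau v\rangle$ contributes $(1+\eta\lambda_k^*)\|\Thetav_k^\tau v\|^2$; the subtraction of $\tfrac{\eta}{2}\Thetav_k^\tau(\Thetav_k^\tau)^\top\Thetav_k^\tau v$ gives a further factor of $1-\eta\sigma_k^2(\Thetav_k^\tau)$ via $\|(\Thetav_k^\tau)^\top\Thetav_k^\tau v\|\leq \sigma_1^2(\Thetav_k^\tau)\|v\|$ and the lower bound $\|\Thetav_k^\tau v\|\geq \sigma_k(\Thetav_k^\tau)$; and the subtraction of $\tfrac{\eta}{2}\Thetav_k^\tau(\Thetav_{\text{res}}^\tau)^\top\Thetav_{\text{res}}^\tau v$ contributes $-\eta\sigma_1^2(\Thetav_{\text{res}}^\tau)\|\Thetav_k^\tau v\|^2$ in the worst case. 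Taking the minimum over $v$ produces exactly the principal term $\bigl(1+\eta(\lambda_k^* - \sigma_1^2(\Thetav_{\text{res}}^\tau) - \sigma_k^2(\Thetav_k^\tau))\bigr)\sigma_k^2(\Thetav_k^\tau)$.

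The remaining task is to collect the $O(\eta^2)$ remainders generated by squaring. Each such term is bounded in operator norm by some power of $\sigma_1(\Thetav^\tau)\leq\sqrt{2\lambda_1^*}$ and $\|\tilde{\Lambdav}\|\leq 2\lambda_1^*$, producing a worst-case bound of order $\eta^2(\lambda_1^*)^3\sigma_k^2(\Thetav_k^\tau)$. Invoking $\eta\leq \Delta^2/(16\lambda_1^{*3})$ gives $\eta^2(\lambda_1^*)^3 \leq \eta\Delta^2/16 \leq \eta\Delta/4$ (using $\Delta\leq\lambda_1^*$), so all second-order remainders are absorbed into $-\tfrac{\eta\Delta}{4}\sigma_k^2(\Thetav_k^\tau)$. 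Combining this with the Weyl reduction of the first paragraph yields the claimed inequality.

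I expect the main obstacle to be the careful bookkeeping in the second step: because $\Thetav_k^\tau$ is not simultaneously diagonalizable with $\tilde{\Lambdav}_k$, the cross terms $(\Iv+\tfrac{\eta}{2}\tilde{\Lambdav}_k)\Thetav_k^\tau$ and $\Thetav_k^\tau(\Thetav_{\text{res}}^\tau)^\top\Thetav_{\text{res}}^\tau$ do not align, so the worst-case unit vector $v$ must be handled via operator-norm Cauchy–Schwarz-type bounds rather than by diagonalizing. This is precisely the reason the looser $-\eta\sigma_1^2(\Thetav_{\text{res}}^\tau)\sigma_k^2(\Thetav_k^\tau)$ (rather than a tighter mixed eigenvalue quantity) appears in the statement.
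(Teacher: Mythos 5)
Your overall architecture (peel off the perturbation via Weyl, square, then lower bound the noise-free update) matches the paper's proof exactly, and the first paragraph is correct: $\sigma_k^2(\Thetav_k^{\tau+1})\geq \sigma_k^2(\tilde\Thetav_k^{\tau+1})-2\sigma_k(\tilde\Thetav_k^{\tau+1})\sigma_1(\Rv_k^{\tau+1})\geq \sigma_k^2(\tilde\Thetav_k^{\tau+1})-4\sqrt{\lambda_1^*}\sigma_1(\Rv_k^{\tau+1})$ using the crude bound $\sigma_k(\tilde\Thetav_k^{\tau+1})\leq 2\sqrt{\lambda_1^*}$, which is precisely the paper's inequality \eqref{ineq:83}.

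The gap is in the second step. You claim that at the minimizing $v$ the subtraction of $\tfrac{\eta}{2}\Thetav_k^\tau(\Thetav_k^\tau)^\top\Thetav_k^\tau v$ yields a factor $1-\eta\sigma_k^2(\Thetav_k^\tau)$, ``via'' the bounds $\|(\Thetav_k^\tau)^\top\Thetav_k^\tau v\|\leq \sigma_1^2(\Thetav_k^\tau)\|v\|$ and $\|\Thetav_k^\tau v\|\geq \sigma_k(\Thetav_k^\tau)$. Those two bounds can only produce the weaker factor $1-\eta\sigma_1^2(\Thetav_k^\tau)$: the minimizing $v$ of $\tilde\Thetav_k^{\tau+1}$ is not the minimizing singular direction of $\Thetav_k^\tau$, so there is no reason $(\Thetav_k^\tau)^\top\Thetav_k^\tau v$ should have norm $\approx\sigma_k^2(\Thetav_k^\tau)$ rather than $\sigma_1^2(\Thetav_k^\tau)$. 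The gap between $\sigma_1^2(\Thetav_k^\tau)$ and $\sigma_k^2(\Thetav_k^\tau)$ is $O(\lambda_1^*)$, which is first-order in $\eta$, not second-order, so it cannot be swept into your remainder term. The fact the paper needs here — cited from Lemmas A.6 and 2.3 of \citet{chen2023fast} (and in the same spirit as Lemma D.4 of \citet{jiang2022algorithmic}, used for the same purpose in the proof of \Cref{lemma:lemma14}) — relies on a spectral identity your Rayleigh-quotient argument misses: the self-interaction term factors as $\Thetav_k^\tau-\tfrac{\eta}{2}\Thetav_k^\tau(\Thetav_k^\tau)^\top\Thetav_k^\tau=\Thetav_k^\tau\bigl(\Iv-\tfrac{\eta}{2}(\Thetav_k^\tau)^\top\Thetav_k^\tau\bigr)$, and since the two factors share singular vectors (both are functions of the same SVD of $\Thetav_k^\tau$), the product has singular values exactly $\sigma_i(1-\tfrac{\eta}{2}\sigma_i^2)$. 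Because $x\mapsto x(1-\tfrac{\eta}{2}x^2)$ is increasing on the relevant range, the smallest is $\sigma_k(1-\tfrac{\eta}{2}\sigma_k^2)$, which is where the $\sigma_k^2(\Thetav_k^\tau)$ (and not $\sigma_1^2$) in the statement comes from. A Cauchy–Schwarz/operator-norm bookkeeping argument at a worst-case $v$, as you propose, cannot recover this.

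There is a second, smaller issue: your absorption of the $O(\eta^2)$ remainder. You write $\eta\Delta^2/16\leq \eta\Delta/4$ ``using $\Delta\leq\lambda_1^*$,'' but that inequality needs $\Delta\leq 4$, which is not assumed and has nothing to do with $\Delta\leq\lambda_1^*$. The paper's route is different: Chen et al.'s lemma produces an additive remainder $-\eta^2\lambda_1^{*3}$, and the step-size bound turns this into $-\eta\Delta^2/16$, which is then dominated by $-\tfrac{\eta\Delta}{4}\sigma_k^2(\Thetav_k^\tau)$ precisely because $\sigma_k^2(\Thetav_k^\tau)\geq\Delta/4$ holds in the absorbing region $\Rc$ (an assumption the lemma statement leaves implicit but which is always in force where the lemma is invoked). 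Without that last inequality the claimed absorption does not go through.
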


\begin{proof}[Proof of \Cref{lemma:lbd-of-theta-k}]
Denote $\tilde{\Thetav}^{\tau+1}_{k} = \Thetav^{\tau}_{k}+\frac{\eta}{2}\tilde{\Lambdav}_k\Thetav^{\tau}_{k}-\frac{\eta}{2}\Thetav^{\tau}_{k}(\Thetav^{\tau})^\top\Thetav^{\tau}$. Based on Lemma A.6 and Lemma 2.3 in \citet{chen2023fast}, we have
    \begin{align}
        \sigma_k^2(\tilde{\Thetav}^{\tau+1}_{k})
        &\geq 
        \Big(1+\eta\big(\lambda^*_k-\sigma_1^2(\Thetav^{\tau}_{\text{res}})-\sigma_k^2(\Thetav^{\tau}_{k})\big)\Big)\sigma_k^2(\Thetav^{\tau}_{k})-{\eta}^2{\lambda_1^*}^3\nonumber\\
        &\geq
        \Big(1+\eta\big(\lambda^*_k-\sigma_1^2(\Thetav^{\tau}_{\text{res}})-\sigma_k^2(\Thetav^{\tau}_{k})\big)-\frac{\eta\Delta}{4}\Big)\sigma_k^2(\Thetav^{\tau}_{k}).\label{ineq:77}
    \end{align}
Combining with $\eta\leq \frac{1}{16\lambda^*_1}$ gives
\begin{align*}
    \sigma_1(\tilde{\Thetav}^{\tau+1}_{k})
    &\leq
    \sigma_1(\Thetav^{\tau}_{k})
    +
    \sigma_1\Big(\frac{\eta}{2}\tilde{\Lambdav}_k\Thetav^{\tau}_{k}\Big)
    +
    \sigma_1\Big(\frac{\eta}{2}\Thetav^{\tau}_{k}(\Thetav^{\tau})^\top\Thetav^{\tau}\Big)\\
    &\leq
    \sqrt{2\lambda_1^*}+\frac{1}{32}\sqrt{\lambda_1^*}+\frac{\sqrt{2}}{16}\sqrt{\lambda_1^*}\\
    &\leq
    2\sqrt{\lambda_1^*}.
\end{align*}
Thus, $\sigma_{{k}}(\tilde{\Thetav}^{\tau+1}_{k})\leq\sigma_1(\tilde{\Thetav}^{\tau+1}_{k})\leq 2\sqrt{\lambda^*_1}$.
Combining with the fact that $\sigma_k(\Thetav^{\tau+1}_{k}) 
    \geq
    \sigma_k(\tilde{\Thetav}_k^{\tau+1})-\sigma_1(\Rv^{\tau+1}_k)$, we have
\begin{align}
    \sigma_k^2(\Thetav^{\tau+1}_{k}) 
    &\geq
    \Big(\sigma_k(\tilde{\Thetav}_k^{\tau+1})-\sigma_1(\Rv^{\tau+1}_k)\Big)^2\nonumber\\
    &\geq
    \sigma_k^2(\tilde{\Thetav}_k^{\tau+1})-2\sigma_k(\tilde{\Thetav}_k^{\tau+1})\cdot
    \sigma_1(\Rv^{\tau+1}_k)\nonumber\\
    &\geq
    \sigma_k^2(\tilde{\Thetav}_k^{\tau})-4\sqrt{\lambda^*_1}
    \sigma_1(\Rv^{\tau}_k).\label{ineq:83}
\end{align}
The lemma thus follows by substituting \Cref{ineq:77} into \Cref{ineq:83}.
\end{proof}


\begin{lemma}\label{lemma:second-upb-for-res}
Assume $\eta\leq\frac{1}{6\lambda_1^*}$ and $\sigma_1(\Thetav^0)\leq \sqrt{2{\lambda_1^*}}$ hold. Then, if 
\begin{equation}
    \begin{aligned}
        \sqrt{N}\geq\max\Big\{\frac{ \sqrt{\dh-\log\delta} }{\sqrt{c_1}}, \frac{96 \sqrt{\lambda_1^*}(\sqrt{k(\lambda_1^*)^2 +E}+\sqrt{k}\sigma_{\xi})\sqrt{\dh-\log\delta}}{\sigma_1(\Thetav_{\text{res}}^0)\Delta\sqrt{c_1}}\Big\},\label{ineq:160}
    \end{aligned}
\end{equation}
with probability at least $1-ct\delta$ for some constant $c$, we have
\begin{align}
    \sigma_1(\Thetav^\tau_{\text{res}})
       \leq
       \Big(1+\frac{\eta}{2}\lambda^*_{k+1}+\frac{\eta}{8}\Delta\Big)^\tau\sigma_1(\Thetav^{0}_{\text{res}})\nonumber
\end{align} holds for all $\tau\leq t$.
\end{lemma}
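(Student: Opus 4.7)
\textbf{Proof plan for Lemma \ref{lemma:second-upb-for-res}.} The plan is to induct on $\tau$, combining the deterministic one-step bound from Lemma \ref{lemma:upb-of-theta-res} with the concentration bounds in Lemmas \ref{lemma:concentration-of-Q} and \ref{lemma:concentration-of-tildeQ}. The base case $\tau=0$ is trivial. For the inductive step, assuming the claimed bound holds through step $\tau$ and that $\sigma_1(\Thetav^\tau)\le \sqrt{2\lambda_1^*}$ (which, together with $\eta\leq 1/(6\lambda_1^*)$, are exactly the hypotheses of Lemma \ref{lemma:upb-of-theta-res}), I would drop the nonnegative terms $-\sigma_1^2(\Thetav^\tau_{\text{res}})-\sigma_k^2(\Thetav^\tau_k)$ to obtain the clean recursion
\[
\sigma_1(\Thetav^{\tau+1}_{\text{res}})\;\le\;\Big(1+\tfrac{\eta}{2}\lambda^*_{k+1}\Big)\sigma_1(\Thetav^{\tau}_{\text{res}})+\sigma_1(\Rv^{\tau+1}_{2\dh-k}).
\]

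\textbf{Controlling the stochastic term.} The crux is showing that with the assumed sample size, $\sigma_1(\Rv^{\tau+1}_{2\dh-k})\le \tfrac{\eta\Delta}{8}\,\sigma_1(\Thetav^{0}_{\text{res}})$ at every iteration with high probability. I would invoke Lemma \ref{lemma:concentration-of-Q} and Lemma \ref{lemma:concentration-of-tildeQ} with the choice $\sigma=\sigma_1(\Thetav^0_{\text{res}})$; the condition $\sqrt{N}\gtrsim \frac{96\sqrt{\lambda_1^*}(\sqrt{k(\lambda_1^*)^2+E}+\sqrt{k}\sigma_\xi)\sqrt{\dh-\log\delta}}{\sigma_1(\Thetav^0_{\text{res}})\Delta\sqrt{c_1}}$ in the lemma's statement is precisely calibrated to make both $\|\Uv^\top_\star\Qv^{\tau+1}_\star\|$ and $\|\tilde\Qv^{\tau+1}_\star\Vv_\star\|$ no larger than $\sigma_1(\Thetav^0_{\text{res}})\eta\Delta/(16\sqrt{2})$, so that $\sigma_1(\Rv^{\tau+1}_{2\dh-k})\le \sqrt{\|\Uv^\top_\star\Qv^{\tau+1}_\star\|^2+\|\tilde\Qv^{\tau+1}_\star\Vv_\star\|^2}$ (by the interlacing-type bound used inside the proof of Lemma \ref{lemma:lemma10}) is at most $\sigma_1(\Thetav^0_{\text{res}})\eta\Delta/16 \le \frac{\eta\Delta}{8}\sigma_1(\Thetav^0_{\text{res}})$.

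\textbf{Closing the induction.} With this noise bound inserted, the recursion yields
\[
\sigma_1(\Thetav^{\tau+1}_{\text{res}})\le \Big(1+\tfrac{\eta}{2}\lambda^*_{k+1}\Big)\Big(1+\tfrac{\eta}{2}\lambda^*_{k+1}+\tfrac{\eta}{8}\Delta\Big)^{\tau}\sigma_1(\Thetav^{0}_{\text{res}})+\tfrac{\eta\Delta}{8}\sigma_1(\Thetav^0_{\text{res}}),
\]
and since $(1+\tfrac{\eta}{2}\lambda^*_{k+1}+\tfrac{\eta}{8}\Delta)^\tau\ge 1$, the right-hand side is upper bounded by $(1+\tfrac{\eta}{2}\lambda^*_{k+1}+\tfrac{\eta}{8}\Delta)^{\tau+1}\sigma_1(\Thetav^0_{\text{res}})$, which closes the induction. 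A union bound over $\tau\in\{0,1,\dots,t\}$ costs a factor $ct$ in the failure probability, giving the stated $1-ct\delta$ confidence.

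\textbf{Main obstacle.} The principal technical subtlety is bootstrapping the hypothesis $\sigma_1(\Thetav^\tau)\le\sqrt{2\lambda_1^*}$ that is needed to apply Lemma \ref{lemma:upb-of-theta-res} at each step: although the lemma only assumes $\sigma_1(\Thetav^0)\le\sqrt{2\lambda_1^*}$, the argument implicitly requires this to propagate through $t$ iterations. I would handle this by running Lemma \ref{lemma:lemma3} in parallel (its conditions are satisfied since it is used in the same regime $\Thetav^0\in\Rc_s$) and absorbing its failure probability into the $ct\delta$ budget via a joint union bound. A secondary subtlety is matching the constants: the numerator $96$ in the $N$-condition comes from using $\sigma=\sigma_1(\Thetav^0_{\text{res}})$ in the $(192\cdot 2)/(16\cdot\sqrt{2})$ calibration of Lemma \ref{lemma:concentration-of-Q}, which I would verify explicitly when assembling the final probability and sample-size bookkeeping.
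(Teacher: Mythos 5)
Your proposal matches the paper's proof exactly: apply the one-step bound from Lemma \ref{lemma:upb-of-theta-res}, drop the nonnegative terms $\sigma_1^2(\Thetav^\tau_{\text{res}})+\sigma_k^2(\Thetav^\tau_k)$, invoke Lemmas \ref{lemma:concentration-of-Q} and \ref{lemma:concentration-of-tildeQ} to bound $\sigma_1(\Rv^{\tau})\le\frac{\eta\Delta}{8}\sigma_1(\Thetav^0_{\text{res}})$, and close by induction. One small bookkeeping correction: the factor $96$ in \eqref{ineq:160} corresponds to choosing $\sigma=2\sigma_1(\Thetav^0_{\text{res}})$ in the concentration lemmas (since they have $192$ in the numerator), which gives $\|\Uv^\top_\star\Qv^{\tau+1}_\star\|,\|\tilde\Qv^{\tau+1}_\star\Vv_\star\|\le\frac{\sigma_1(\Thetav^0_{\text{res}})\eta\Delta}{8\sqrt{2}}$ per piece rather than $\frac{\sigma_1(\Thetav^0_{\text{res}})\eta\Delta}{16\sqrt{2}}$, hence $\sigma_1(\Rv^{\tau+1})\le\frac{\eta\Delta}{8}\sigma_1(\Thetav^0_{\text{res}})$ directly; the induction-closing step and your point that $\sigma_1(\Thetav^\tau)\le\sqrt{2\lambda_1^*}$ must be propagated in parallel via Lemma \ref{lemma:lemma3} (a fact the paper's proof leaves implicit) are both correct and go through unchanged.
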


\begin{proof}
[Proof of \Cref{lemma:second-upb-for-res}]
   Suppose $\eta\leq 1/6\lambda^*_1$ and $\sigma_1(\Thetav^\tau)\leq\sqrt{2\lambda_1^*}$. We have
   \begin{align*}
       \sigma_1(\Thetav^\tau_{\text{res}})
       &\leq
       \Big(1+\frac{\eta}{2}\big(\lambda^*_{k+1}-\sigma_1^2(\Thetav^{\tau-1}_{\text{res}})-\sigma_k^2(\Thetav^{\tau-1}_{k})\big)\Big)\sigma_1(\Thetav^{\tau-1}_{\text{res}})+\sigma_1(\Rv^{\tau}_{2\dh-k})\\
       &\leq
       \Big(1+\frac{\eta}{2}\lambda^*_{k+1}\Big)\sigma_1(\Thetav^{\tau-1}_{\text{res}})+\sigma_1(\Rv^{\tau}).
   \end{align*}
Combining \Cref{lemma:concentration-of-Q} and \Cref{lemma:concentration-of-tildeQ}, when $N$ satisfies \Cref{ineq:160}, we have $\sigma_1(\Rv^t)\leq \frac{\eta}{8}\Delta\sigma_1(\Thetav_{\text{res}}^0)$. The lemma follows by induction.
\end{proof}


\begin{lemma}\label{lemma:bridge}
    If $\|\Thetav(\Thetav)^\top-\diag (\tilde{\Lambdav}_k,\mathbf{0})\|_F\leq\delta$ for some $\delta>0$, then 
    $\|\Bv\Wv - \diag(\Lambdav_k,\mathbf{0})\|_F\leq \delta$.
\end{lemma}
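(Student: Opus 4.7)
The plan is to exploit an orthogonal change of basis that reduces $\Thetav\Thetav^\top$ to the Gram matrix of the stacked factor $\Yv := \begin{bmatrix} \Bv_\star \\ \Wv_\star^\top \end{bmatrix} \in \Rb^{2\dh\times k}$, whose off-diagonal block is precisely $\Bv_\star\Wv_\star$. Once the identification is made, the desired bound falls out of a block-wise Frobenius identity, so there is essentially no analytic content beyond careful bookkeeping.

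First, I would verify the identity $\Thetav = \Uv_H \Yv$ for the orthogonal, symmetric matrix $\Uv_H := \frac{1}{\sqrt{2}}\begin{bmatrix} \Iv_{\dh} & \Iv_{\dh} \\ \Iv_{\dh} & -\Iv_{\dh} \end{bmatrix}$; this is immediate from the definition of $\Thetav$ in \Cref{apdx:B-prelim}. Since $\Uv_H\Uv_H = \Iv_{2\dh}$, the unitary invariance of $\|\cdot\|_F$ gives
\begin{align*}
\|\Thetav\Thetav^\top - \diag(\tilde{\Lambdav}_k,\mathbf{0})\|_F = \|\Yv\Yv^\top - \Uv_H \diag(\tilde{\Lambdav}_k,\mathbf{0}) \Uv_H\|_F.
\end{align*}

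Second, I would compute the rotated target explicitly. From $\tilde{\Lambdav} = \diag(2\Lambdav_\star,-2\Lambdav_\star)$ the top-$k$ eigenvalues are the first $k$ entries of $2\Lambdav_\star$, i.e., $\tilde{\Lambdav}_k = 2\Lambdav_k$. Writing $\diag(\tilde{\Lambdav}_k,\mathbf{0})$ in $\dh\times\dh$ blocks as $\begin{bmatrix} 2\Lambdav_k^\star & \mathbf{0} \\ \mathbf{0} & \mathbf{0} \end{bmatrix}$ with $\Lambdav_k^\star := \diag(\Lambdav_k,\mathbf{0}_{\dh-k})$, a direct multiplication gives
\begin{align*}
\Uv_H \diag(\tilde{\Lambdav}_k,\mathbf{0}) \Uv_H = \begin{bmatrix} \Lambdav_k^\star & \Lambdav_k^\star \\ \Lambdav_k^\star & \Lambdav_k^\star \end{bmatrix}.
\end{align*}
Expanding $\Yv\Yv^\top$ block-wise as $\begin{bmatrix} \Bv_\star\Bv_\star^\top & \Bv_\star\Wv_\star \\ \Wv_\star^\top \Bv_\star^\top & \Wv_\star^\top\Wv_\star \end{bmatrix}$ and using $(\Lambdav_k^\star)^\top = \Lambdav_k^\star$ to equate the two off-diagonal block norms, I obtain
\begin{align*}
\delta^2 \;\geq\; \|\Bv_\star\Bv_\star^\top - \Lambdav_k^\star\|_F^2 + 2\|\Bv_\star\Wv_\star - \Lambdav_k^\star\|_F^2 + \|\Wv_\star^\top\Wv_\star - \Lambdav_k^\star\|_F^2.
\end{align*}
Keeping only the middle (non-negative) summand yields $\|\Bv_\star\Wv_\star - \Lambdav_k^\star\|_F \leq \delta/\sqrt{2} \leq \delta$, and since the zero-padding contributes nothing to the Frobenius norm, this coincides with $\|\Bv\Wv - \diag(\Lambdav_k,\mathbf{0})\|_F \leq \delta$.

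The only mildly delicate point is reconciling the two different block conventions in play---the $\dh$-vs-$\dh$ partition natural to $\Uv_H$ versus the $k$-vs-$(2\dh-k)$ partition $[\Thetav_k^\top,\Thetav_{\text{res}}^\top]^\top$ used earlier---together with verifying the identification $\tilde{\Lambdav}_k = 2\Lambdav_k$ and the fact that $k\leq\min\{d,M\}$ so that $\Lambdav_k^\star$ is nonzero only in the top-left $d\times M$ block. No inequality, concentration, or iterative argument is needed; the lemma is purely algebraic.
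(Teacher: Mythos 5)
Your proof is correct, and it is not merely a cosmetic reorganization of the paper's argument --- it is a sharper and, I think, more careful one. Both proofs start from the identity $\Thetav\Thetav^\top = \Uv_H \Yv\Yv^\top \Uv_H$ with $\Yv = \begin{bmatrix}\Bv_\star\\\Wv_\star^\top\end{bmatrix}$ and $\Uv_H$ the symmetric orthogonal involution, but they then diverge. The paper restricts attention to the two \emph{diagonal} $\dh\times\dh$ blocks of $\Thetav\Thetav^\top$, namely $\Av\Av^\top$ and $\Cv\Cv^\top$ with $\Av = \tfrac{1}{\sqrt 2}(\Bv_\star+\Wv_\star^\top)$, $\Cv = \tfrac{1}{\sqrt 2}(\Bv_\star-\Wv_\star^\top)$, bounds each by $\delta$, subtracts to obtain $\|\Bv_\star\Wv_\star + \Wv_\star^\top\Bv_\star^\top - 2\Lambdav_k^\star\|_F \le 2\delta$, and then invokes the identity $\|\Mv+\Mv^\top\|_F = 2\|\Mv\|_F$ with $\Mv = \Bv_\star\Wv_\star-\Lambdav_k^\star$. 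That identity is \emph{false} unless $\Mv$ is symmetric: in general $\|\Mv+\Mv^\top\|_F^2 = 2\|\Mv\|_F^2 + 2\,\text{tr}(\Mv^2)$, and $\text{tr}(\Mv^2)$ can be negative (take $\Mv$ antisymmetric to make $\Mv+\Mv^\top = 0$). There is no reason here for $\tilde\Bv^t\tilde\Wv^t$, and hence $\Mv$, to be symmetric, so the paper's diagonal-block argument only controls the symmetric part of $\Bv_\star\Wv_\star - \Lambdav_k^\star$ and has a genuine gap.

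Your route avoids this entirely by conjugating the target by $\Uv_H$ first, so that $\Bv_\star\Wv_\star$ appears directly as an off-diagonal block of $\Yv\Yv^\top$, and the Frobenius norm of the block \emph{is} a term in the block-wise Pythagorean decomposition:
\begin{align*}
\delta^2 \;\geq\; \|\Bv_\star\Bv_\star^\top - \Lambdav_k^\star\|_F^2 + 2\|\Bv_\star\Wv_\star - \Lambdav_k^\star\|_F^2 + \|\Wv_\star^\top\Wv_\star - \Lambdav_k^\star\|_F^2.
\end{align*}
In other words, you use all four blocks, and the two off-diagonal blocks (which the paper discards) are exactly what controls the antisymmetric part of $\Bv_\star\Wv_\star - \Lambdav_k^\star$. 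As a bonus, you get the slightly stronger constant $\delta/\sqrt{2}$ for free. The only minor wrinkle, which you flag correctly, is that the conclusion holds in the padded coordinates $\Bv_\star, \Wv_\star$ rather than the raw $\Bv,\Wv$; since the padding is with exact zeros and the lemma (as actually used in the paper, cf.\ \Cref{sec:proof-prop1}) is invoked for $\tilde\Bv^t\tilde\Wv^t$ with $\|\tilde\Bv^t\tilde\Wv^t - \diag(\Lambdav_k,\mathbf 0)\|_F$ appearing, the Frobenius norms agree and your argument closes. This is correct, and it would be worth recording in the paper since it simultaneously fixes the flawed last step of the original proof.
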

\begin{proof}[Proof of \Cref{lemma:bridge}]
    Note that $\|\Thetav(\Thetav)^T-\diag (\tilde{\Lambdav}_k,\mathbf{0})\|_F\leq\delta$ implies that
\begin{align*}
    \Big\|\Big(\frac{\Bv+\Wv^\top}{\sqrt{2}}\Big)\Big(\frac{\Bv+\Wv^\top}{\sqrt{2}}\Big)^\top-2\diag (\Lambdav_k,\mathbf{0})\Big\|_F\leq \delta,
\end{align*}
and
\begin{align*}
    \Big\|\Big(\frac{\Bv-\Wv^\top}{\sqrt{2}}\Big)\Big(\frac{\Bv-\Wv^\top}{\sqrt{2}}\Big)^\top\Big\|_F\leq \delta.
\end{align*}
Then, we have
\begin{align*}
   & \|\Bv\Wv+\Wv^\top\Bv^\top-2\diag(\Lambdav_k,\mathbf{0})\|_F\\
    &=
    \Big\|\Big(\frac{\Bv+\Wv^\top}{\sqrt{2}}\Big)\Big(\frac{\Bv+\Wv^\top}{\sqrt{2}}\Big)^\top-2\diag (\Lambdav_k,\mathbf{0})-
    \Big(\frac{\Bv-\Wv^\top}{\sqrt{2}}\Big)\Big(\frac{\Bv-\Wv^\top}{\sqrt{2}}\Big)^\top
    \Big\|_F\\
    &\leq
    \Big\|\Big(\frac{\Bv+\Wv^\top}{\sqrt{2}}\Big)\Big(\frac{\Bv+\Wv^\top}{\sqrt{2}}\Big)^\top-2\diag (\Lambdav_k,\mathbf{0})\Big\|_F+\Big\|\Big(\frac{\Bv-\Wv^\top}{\sqrt{2}}\Big)\Big(\frac{\Bv-\Wv^\top}{\sqrt{2}}\Big)^\top\Big\|_F\\
    &\leq 2\delta.
\end{align*}
    Combining with the fact $\|\Bv\Wv+\Wv^\top\Bv^\top-2\diag(\Lambdav_k,\mathbf{0})\|_F = 2\|\Bv\Wv-\diag(\Lambdav
 _k,\mathbf{0})\|_F$, the proof is complete.
\end{proof}


\begin{lemma}[Theorem 2.13 in~\citet{davidson2001local}]\label{lemma:lemma16}
    Let $N\geq n$ and $\Av$ be an $N\times n$ matrix whose entries are
    IID standard Gaussian random variables. Then, for any $\epsilon\geq0$, with probability at least $1-2\exp(-\epsilon^2/2)$, we have
    \begin{equation*}
        \begin{aligned}
            \sqrt{N}-\sqrt{n}-\epsilon
            \leq
            \sigma_{\min}(\Av)
            \leq
            \sigma_{\max}(\Av)
            \leq
            \sqrt{N}+\sqrt{n}+\epsilon.
        \end{aligned}
    \end{equation*}
\end{lemma}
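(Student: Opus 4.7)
The plan is to obtain the two-sided tail bound by combining a Gaussian concentration inequality with a Gordon-type comparison bound on the expected singular values. The advantage of this route is that the singular values of a Gaussian matrix are 1-Lipschitz in the entries, so concentration around the mean is essentially automatic; the only real work is to pin down the means.

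First I would view $\mathbf{A}$ as a vector in $\mathbb{R}^{Nn}$ with the standard Gaussian measure. The maps $\mathbf{A}\mapsto \sigma_{\max}(\mathbf{A})=\|\mathbf{A}\|_{\mathrm{op}}$ and $\mathbf{A}\mapsto -\sigma_{\min}(\mathbf{A})$ are both 1-Lipschitz with respect to the Frobenius (i.e.\ Euclidean) norm, since for any matrices $\mathbf{A},\mathbf{A}'$ and any unit vectors $u,v$ one has $|u^\top \mathbf{A}v - u^\top \mathbf{A}'v|\le \|\mathbf{A}-\mathbf{A}'\|_F$, and $\sigma_{\min}$ is similarly 1-Lipschitz by the Weyl/Mirsky perturbation bound. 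The Tsirelson--Ibragimov--Sudakov Gaussian concentration inequality then yields, for every $\epsilon\ge 0$,
\begin{equation*}
\mathbb{P}\bigl\{\sigma_{\max}(\mathbf{A})\ge \mathbb{E}\sigma_{\max}(\mathbf{A})+\epsilon\bigr\}\le e^{-\epsilon^2/2},
\qquad
\mathbb{P}\bigl\{\sigma_{\min}(\mathbf{A})\le \mathbb{E}\sigma_{\min}(\mathbf{A})-\epsilon\bigr\}\le e^{-\epsilon^2/2}.
\end{equation*}
A union bound gives the desired failure probability $2e^{-\epsilon^2/2}$.

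Next I would establish the mean bounds $\mathbb{E}\sigma_{\max}(\mathbf{A})\le \sqrt{N}+\sqrt{n}$ and $\mathbb{E}\sigma_{\min}(\mathbf{A})\ge \sqrt{N}-\sqrt{n}$. The standard tool is Gordon's min-max comparison inequality: writing $\sigma_{\max}(\mathbf{A})=\max_{u\in S^{N-1},v\in S^{n-1}} u^\top\mathbf{A}v$ and $\sigma_{\min}(\mathbf{A})=\min_{v\in S^{n-1}}\max_{u\in S^{N-1}} u^\top\mathbf{A}v$, one compares the Gaussian process $X_{u,v}=u^\top\mathbf{A}v$ with the decoupled process $Y_{u,v}=\langle g,u\rangle+\langle h,v\rangle$ where $g\sim\mathcal{N}(0,\mathbf{I}_N)$ and $h\sim\mathcal{N}(0,\mathbf{I}_n)$ are independent. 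A direct variance/increment calculation shows $\mathbb{E}(X_{u,v}-X_{u',v'})^2\le \mathbb{E}(Y_{u,v}-Y_{u',v'})^2$, with equality when $(u,v)=(u',v')$, which is exactly the hypothesis needed for Gordon's inequality. Taking expectations gives
\begin{equation*}
\mathbb{E}\sigma_{\max}(\mathbf{A})\le \mathbb{E}\|g\|+\mathbb{E}\|h\|\le \sqrt{N}+\sqrt{n},
\qquad
\mathbb{E}\sigma_{\min}(\mathbf{A})\ge \mathbb{E}\|g\|-\mathbb{E}\|h\|\ge \sqrt{N}-\sqrt{n},
\end{equation*}
using $\mathbb{E}\|g\|\le\sqrt{\mathbb{E}\|g\|^2}=\sqrt{N}$ (Jensen, upper side) and the reverse bound $\mathbb{E}\|g\|\ge \sqrt{N-1}$... actually for cleanliness one uses the two-sided estimates $\sqrt{N-1}\le \mathbb{E}\|g\|\le \sqrt{N}$, but the stated inequalities only need these rough forms. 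Combining these mean bounds with the concentration statements above yields the claimed two-sided tail bound.

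The main obstacle, and the only step that is not routine, is the Gordon comparison inequality itself; all other ingredients (Gaussian Lipschitz concentration and the elementary fact that singular values are 1-Lipschitz) are standard. Since the lemma is quoted verbatim from Davidson--Szarek (2001) and is used only as a black box in the paper, I would simply cite their Theorem 2.13 rather than reproduce the Gordon argument in full.
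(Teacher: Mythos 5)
The paper does not actually supply a proof of this lemma; it is quoted verbatim from \citet{davidson2001local} and used as a black box, which you recognize at the end of your proposal. Your sketch reconstructs exactly the Davidson--Szarek argument: singular values are $1$-Lipschitz functions of the Gaussian matrix (hence concentrate via Tsirelson--Ibragimov--Sudakov), and the means are controlled via Gordon's comparison inequality. So the route is the right one.

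One step in your sketch is glossed over in a way that, as written, does not quite close. You would conclude $\mathbb{E}\,\sigma_{\min}(\mathbf{A})\ge \mathbb{E}\|g\|-\mathbb{E}\|h\|$ from Gordon, and then want to lower bound this by $\sqrt{N}-\sqrt{n}$. But $\mathbb{E}\|g\|$ is strictly \emph{less} than $\sqrt{N}$ (Jensen goes the wrong way here), so the rough bound $\mathbb{E}\|g\|\ge\sqrt{N-1}$ you mention only yields $\mathbb{E}\,\sigma_{\min}(\mathbf{A})\ge\sqrt{N-1}-\sqrt{n}$, which is weaker than claimed. The way Davidson--Szarek close this is to write $a_m:=\mathbb{E}\|g_m\|=\sqrt{2}\,\Gamma\big(\tfrac{m+1}{2}\big)/\Gamma\big(\tfrac{m}{2}\big)$ and observe that $m\mapsto \sqrt{m}-a_m$ is nonincreasing; since $N\ge n$, this gives $a_N-a_n\ge\sqrt{N}-\sqrt{n}$, which is the inequality you actually need. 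Without that monotonicity observation (or some equivalent), the $\sigma_{\min}$ half of the bound has a genuine gap. Since the paper only cites the result, citing Theorem 2.13 of Davidson--Szarek directly, as you suggest, is the appropriate move.
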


\begin{lemma}[Eigenvalue Interlacing Theorem \citep{hwang2004cauchy}]\label{lemma:interlacing}
    For a symmetric matrix $\Av\in\Rb^{d\times d}$, let $\Bv\in\Rb^{k\times k}$, $k<d$, be a principal matrix of $\Av$. Denote the eigenvalues of $\Av$ as $\lambda_1\geq\cdots\geq\lambda_d$ and the eigenvalues of $\Bv$ as $\mu_1\geq\cdots\geq\mu_d$. Then, for any $i\in[k]$, it holds that
    \begin{equation*}
        \begin{aligned}
            \lambda_{i+d-k}\leq\mu_i\leq\lambda_i.
        \end{aligned}
    \end{equation*}
\end{lemma}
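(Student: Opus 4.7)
The plan is to prove this classical interlacing theorem via the Courant--Fischer min--max characterization of eigenvalues of symmetric matrices. Since $\Bv$ is a principal submatrix of $\Av$, there is an $n \times k$ isometric embedding $\Pv$ (whose columns are the standard basis vectors indexing the rows/columns used to form $\Bv$) such that $\Bv = \Pv^\top \Av \Pv$, and in particular $y^\top \Bv y = (\Pv y)^\top \Av (\Pv y)$ for every $y \in \mathbb{R}^k$, with $\|\Pv y\| = \|y\|$.

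For the upper bound $\mu_i \leq \lambda_i$, I would use the variational characterization
\[
\mu_i = \max_{\substack{S \subseteq \mathbb{R}^k \\ \dim S = i}} \; \min_{\substack{y \in S \\ \|y\|=1}} \; y^\top \Bv y
\;=\; \max_{\substack{S \subseteq \mathbb{R}^k \\ \dim S = i}} \; \min_{\substack{y \in S \\ \|y\|=1}} \; (\Pv y)^\top \Av (\Pv y).
\]
For each such $S$, the image $\Pv S$ is an $i$-dimensional subspace of $\mathbb{R}^d$, so the inner minimum is at least $\min_{x \in \Pv S, \|x\|=1} x^\top \Av x$. Taking the max over such $\Pv S$ is dominated by the max over \emph{all} $i$-dimensional subspaces of $\mathbb{R}^d$, which by Courant--Fischer equals $\lambda_i$.

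For the lower bound $\lambda_{i+d-k} \leq \mu_i$, I would apply the dual Courant--Fischer form:
\[
\mu_i = \min_{\substack{T \subseteq \mathbb{R}^k \\ \dim T = k-i+1}} \; \max_{\substack{y \in T \\ \|y\|=1}} \; (\Pv y)^\top \Av (\Pv y).
\]
The image $\Pv T$ is a $(k-i+1)$-dimensional subspace of $\mathbb{R}^d$, so the inner maximum is at least $\max_{x \in \Pv T, \|x\|=1} x^\top \Av x$, and taking the outer minimum restricts to a subclass of $(k-i+1)$-dimensional subspaces of $\mathbb{R}^d$; hence by Courant--Fischer this min is at least $\lambda_{d - (k-i+1) + 1} = \lambda_{i+d-k}$.

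This is a textbook result and there is essentially no obstacle; the only subtlety is bookkeeping with the index shift in the dual form ($\lambda_j = \min_{\dim T = d-j+1} \max_{x \in T,\|x\|=1} x^\top \Av x$) to confirm that $d - (k-i+1) + 1 = i + d - k$ lands exactly on the claimed index. Combining the two bounds yields $\lambda_{i+d-k} \leq \mu_i \leq \lambda_i$, as required.
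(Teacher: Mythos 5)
The paper states this lemma as a black-box citation to \citet{hwang2004cauchy} and supplies no proof of its own, so there is no in-paper argument to compare against; your task here was essentially to supply a standard proof, and you did so correctly. The Courant--Fischer route you take is the canonical one. Two small notational slips worth tidying: the embedding matrix should be $d\times k$, not $n\times k$ (the ambient dimension in the lemma is $d$), and in both bounds the phrase \emph{is at least} understates what you actually have --- since $\Pv$ is an isometry carrying the unit sphere of $S$ bijectively onto the unit sphere of $\Pv S$, the inner $\min$ (resp.\ $\max$) over $y\in S$ \emph{equals} the corresponding extremum over $x\in\Pv S$; the inequality enters only when you subsequently enlarge the class of subspaces from those of the form $\Pv S$ to all $i$-dimensional (resp.\ $(k-i+1)$-dimensional) subspaces of $\mathbb{R}^d$. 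The index bookkeeping $d-(k-i+1)+1 = d-k+i$ is correct, so both bounds land as claimed.
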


\if 0
\rp{
\subsection{Modified Proof}
}
Starting from a modified version of the concentration lemma.
Denote $\Ac^t = \|\tilde{\Bv}^t\tilde{\Wv}^t-\diag(\Lambdav_k,\mathbf{0})\|_F$.

\begin{lemma}[Concentration lemma]
    For any $t \geq0$, assume $\Thetav^t\in\Rc_s$. Then, for constant $c_1$ and any $\psi>0$, if  
\begin{align}
    \sqrt{n}\geq\max\Big\{\frac{\sqrt{\log(1/\delta)+d}}{\sqrt{c_1}}
    ,\quad 
    \frac{\sqrt{\lambda_1^*}\sqrt{\log(1/\delta)+d}}{\psi\sqrt{c_1}}\cdot \frac{\Ac^t+\lambda_{k+1}\sqrt{k}}{\Ac^t}\Big\},
\end{align}
with probability at least $1-\delta$, we have
$$
\|\Uv^\top_\star\Qv^{t+1}_\star\|
\leq
\psi\Ac^t,\qquad
\|\tilde{\Qv}_\star^{t+1}\Vv_\star\|
\leq
\psi\Ac^t.
$$

\end{lemma}

We claim that $\Ac^t\leq A(1-\frac{\eta\Delta}{16})^t$ with probability at least $1-t\delta$.
We prove it by induction. Suppose it holds for all times step $\tau\leq t$.

We prove the lemma by considering two cases. In the first case, we assume $\sigma_{1}(\Thetav^\tau_{\text{res}})\geq \sigma_{\text{ref}}^{t+1}$ for all $0\leq \tau\leq t$. In the second case, we assume there exists at least one time step in $[0, t]$ such that $\sigma_{1}(\Thetav^\tau_{\text{res}})<\sigma_{\text{ref}}^{t+1}$, and we denote the last time step satisfies this condition as $t'$.

We start with the first case.
Combining \Cref{ineq:upb-of-two-Rs} with \Cref{lemma:upb-of-theta-res} gives 
\begin{align}
        \sigma_1(\Thetav^{\tau+1}_{\text{res}})
        &\leq
        \Big(1+\frac{\eta}{2}\big(\lambda^*_{k+1}-\sigma_1^2(\Thetav^{\tau}_{\text{res}})-\sigma_k^2(\Thetav^{\tau}_{k})\big)\Big)\sigma_1(\Thetav^{\tau}_{\text{res}})
        +
        \sigma_1(\Rv^{t}_{2\dh-k})\label{ineq:ineq50a}\\
        &\leq
        \Big(1+\frac{\eta}{2}\big(\lambda^*_{k+1}-\sigma_1^2(\Thetav^{\tau}_{\text{res}})-\sigma_k^2(\Thetav^{\tau}_{k})\big)\Big)\sigma_1(\Thetav^{\tau}_{\text{res}})
        +
        \sqrt{2}\psi \Ac^t\\
        &=
       \Big(1+\frac{\eta}{2}\big(\lambda^*_{k+1}
       +\Delta/8
       -\sigma_1^2(\Thetav^{\tau}_{\text{res}})-\sigma_k^2(\Thetav^{\tau}_{k})\big)\Big)\sigma_1(\Thetav^{\tau}_{\text{res}}),\nonumber
\end{align}
where in \Cref{ineq:ineq42a} we use the assumption that $\sigma_{\text{ref}}^{t+1}\leq \sigma_1(\Thetav_{\text{res}}^\tau)$. 

Then, using the fact that $\sigma_1^2(\Thetav^\tau)\leq2\lambda_1^*$ and $\eta\leq\frac{\Delta}{16{\lambda^*_1}^2}$ we obtain
\begin{align}
    \sigma_1^2(\Thetav^{\tau+1}_{\text{res}})
    &\leq
    \Big(1+\eta\big(\lambda^*_{k+1}
       +\Delta/8
       -\sigma_1^2(\Thetav^{\tau}_{\text{res}})-\sigma_k^2(\Thetav^{\tau}_{k})\big)+4{\eta}^2{\lambda_1^*}^2\Big)\sigma_1^2(\Thetav^{\tau}_{\text{res}})\label{ineq:ineq44a}\\
       &\leq
       \Big(1+\eta\big(\lambda^*_{k+1}
       +\Delta/8
       -\sigma_1^2(\Thetav^{\tau}_{\text{res}})-\sigma_k^2(\Thetav^{\tau}_{k})\big)+\frac{\eta\Delta}{4}\Big)\sigma_1^2(\Thetav^{\tau}_{\text{res}})\nonumber\\
       &\leq
       \Big(1-\eta\Delta/8+\eta\big(\lambda^*_{k}
       -\Delta/2
       -\sigma_1^2(\Thetav^{\tau}_{\text{res}})-\sigma_k^2(\Thetav^{\tau}_{k})\big)\Big)\sigma_1^2(\Thetav^{\tau}_{\text{res}}),\label{ineq:93}
\end{align}
where \Cref{ineq:ineq44a} holds since $\big(\lambda^*_{k+1}+\Delta/8-\sigma_1^2(\Thetav^{\tau}_{\text{res}})-\sigma_k^2(\Thetav^{\tau}_{k})\big)^2\leq16{\lambda_1^*}^2$.

\clearpage
\fi

\if{0}
\subsection{Sample Complexity of \alg in Over-Parameterized Regime}
In this section, we discuss the sample complexity of \alg in the \textit{over-parameterized} regime, i.e., $\lambda_{k+1}=0$ and $\Delta = \lambda_k^*$.
We define $n$ as the number of samples participating in the training \textit{per client in each iteration} and define $\Ac^t = \|\Bv^t\Wv^t-\Phiv\|_F$. We start with a modified concentration result.

\begin{lemma}[Concentration lemma]\label{lemma:concentrations}
    For any $t \geq0$, assume $\Thetav^t\in\Rc_s$. Then, for constant $c_1$ and any $\psi>0$, if  
\begin{align}
    \sqrt{n}\geq\max\Big\{\frac{\sqrt{\log(1/\delta)+d}}{\sqrt{c_1}}
    ,\quad 
    \frac{\sqrt{\lambda_1^*}\sqrt{\log(1/\delta)+d}}{\psi\sqrt{c_1}}\Big\},
\end{align}
with probability at least $1-\delta$, we have
$$
\|\Uv^\top_\star\Qv^{t+1}_\star\|
\leq
\psi\Ac^t,\qquad
\|\tilde{\Qv}_\star^{t+1}\Vv_\star\|
\leq
\psi\Ac^t.
$$

\end{lemma}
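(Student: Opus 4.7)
The strategy is to mimic the covering-plus-Bernstein argument used in Lemmas~\ref{lemma:concentration-of-Q} and \ref{lemma:concentration-of-tildeQ}, but exploit the over-parameterized assumption $\Bv^*\Wv^*=\Phiv$ to replace the ``irreducible'' constant $\sqrt{k(\lambda_1^*)^2+E}$ in those lemmas with the instantaneous residual $\Ac^t=\|\Bv^t\Wv^t-\Phiv\|_F$. Since the lemma assumes a noiseless setting (the statement drops the $\sqrt{k}\sigma$ term), the bound will scale with $\Ac^t$ and shrink as the algorithm converges.

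First I would rewrite the quantity of interest. Recalling the definition in~\eqref{def:Q} and using $\Phiv=\Bv^*\Wv^*$,
\begin{equation*}
\Uv_\star^\top\Qv_\star^{t+1}=\eta\sum_{i\in[M]}\Uv_\star^\top\Bigl(\Iv-\tfrac{\Xv_{\star i}\Xv_{\star i}^\top}{n}\Bigr)(\Bv_\star^t w_i^t-\phi_{\star i})(w_i^t)^\top,
\end{equation*}
so bounding $\|\Uv_\star^\top\Qv_\star^{t+1}\|$ amounts to controlling a centered average of rank-one terms whose ``payload'' factors $\|(\Bv_\star^t w_i^t-\phi_{\star i})(w_i^t)^\top\|$ are controlled by $c_i^t:=\|\Bv^t w_i^t-\phi_i\|$ and $c_w^t:=\max_i\|w_i^t\|\le\sqrt{2\lambda_1^*}$ (since $\Thetav^t\in\Rc_s$ implies $\|\Wv^t\|\le\sqrt{2\lambda_1^*}$). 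I would take an $\tfrac14$-net $\Nc_d$ on $S^{d-1}$ and $\Nc_k$ on $S^{k-1}$, so that $\|\Uv_\star^\top\Qv_\star^{t+1}\|\le 2\sup_{u\in\Nc_d,v\in\Nc_k}|u^\top\Qv_\star^{t+1}v|/\eta$ up to the $\eta$ prefactor, and for each fixed $(u,v)$ apply Bernstein's inequality to the sum of centered sub-exponentials $u^\top(x_{i,j}x_{i,j}^\top-\Iv)(\Bv^t w_i^t-\phi_i)(w_i^t)^\top v$.

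The crucial step, and the main source of improvement over the under-parameterized version, is the variance computation:
\begin{equation*}
\sum_{i\in[M]}(c_i^t c_w^t)^2\le (c_w^t)^2\sum_{i\in[M]}\|\Bv^t w_i^t-\phi_i\|^2=(c_w^t)^2(\Ac^t)^2\le 2\lambda_1^*(\Ac^t)^2,\qquad \max_i c_i^t c_w^t\le\sqrt{2\lambda_1^*}\,\Ac^t.
\end{equation*}
With a union bound over $|\Nc_d||\Nc_k|\le 9^{d+k}$, choosing the Bernstein parameter $s=C\sqrt{\lambda_1^*}\,\Ac^t\sqrt{(d+\log(1/\delta))/(c_1 n)}$ absorbs the entropy cost so that with probability at least $1-\delta$,
\begin{equation*}
\|\Uv_\star^\top\Qv_\star^{t+1}\|\lesssim\sqrt{\tfrac{\lambda_1^*(d+\log(1/\delta))}{c_1 n}}\cdot\Ac^t.
\end{equation*}
The first lower bound on $\sqrt{n}$ ensures we land in the sub-Gaussian (rather than sub-exponential) regime of Bernstein, and the second lower bound on $\sqrt{n}$ drives the prefactor below $\psi$, yielding the claimed $\|\Uv_\star^\top\Qv_\star^{t+1}\|\le\psi\Ac^t$.

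For the second bound, I would repeat the argument verbatim on $\tilde{\Qv}_\star^{t+1}\Vv_\star$, using $\epsilon$-nets on $S^{M-1}$ and $S^{k-1}$ and replacing $c_w^t$ with $\|\Bv^t\|\le\sqrt{2\lambda_1^*}$; the same variance identity $\sum_i(c_i^t\|\Bv^t\|)^2\le 2\lambda_1^*(\Ac^t)^2$ holds, so the same scaling goes through. A final union bound over the two events, and a minor re-labelling of the constant $c_1$ to absorb $\eta$ and the factor $2$, yields the simultaneous guarantee. The main obstacle is the tightness of the variance bound: one must resist using the looser per-client upper bound $c_i^t\le\sqrt{2\lambda_1^*}+\|\phi_i\|$ (which would reintroduce the unwanted $\sqrt{k(\lambda_1^*)^2+E}$ term) and instead aggregate $\sum_i (c_i^t)^2$ exactly as $(\Ac^t)^2$; this is precisely where the over-parameterized identity $\Phiv=\Bv^*\Wv^*$ is doing the work.
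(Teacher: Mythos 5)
Your proposal is correct and takes essentially the same route as the paper: it replays the covering-net plus Bernstein argument from \Cref{lemma:concentration-of-Q} and \Cref{lemma:concentration-of-tildeQ}, but stops at the tight variance bound $\sum_{i}(c_i^t c_w^t)^2\le 2\lambda_1^*(\Ac^t)^2$ rather than relaxing it to the worst-case constant $4\lambda_1^*(k(\lambda_1^*)^2+E)$, and correctly notes that the statement drops the noise term. One small clarification of emphasis: the identity $\sum_i\|\Bv^t w_i^t-\phi_i\|^2=\|\Bv^t\Wv^t-\Phiv\|_F^2=(\Ac^t)^2$ is a tautology that holds in every regime, so the over-parameterized assumption $\Bv^*\Wv^*=\Phiv$ is not what \emph{proves} the tighter variance bound -- rather it is what makes the bound \emph{useful}, because it guarantees $\Ac^t\to 0$ and hence that the self-scaling estimate $\psi\Ac^t$ actually shrinks over iterations.
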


\begin{proof}
[Proof of lemma \ref{lemma:concentrations}]
From \Cref{lemma:concentration-of-Q}, we have following inequality holds with probability at least $1-\delta$
\begin{align}
    \frac{1}{\eta}\|\Qv^{t+1}_\star\|
    \leq
    2(\sqrt{\log(1/\delta)/d+2})c^tc_w^t\sqrt{\frac{d}{nc_1}}.
\end{align}
Recall that $c_t = \max_i\{\|\Bv^t w_i^t-\phi_i\|\}$ and $c_w^t = \max_i\{\|w_i^t\|\}$. Then, we have $c_t\leq\|\Bv^t\Wv^t-\Phiv_i\|_F$ and $c_w^t \leq \sqrt{2\lambda_1^*}$.
For constant $c_1$ and any $\psi>0$, if 
\begin{align}
    \sqrt{n}\geq\max\big\{\frac{\sqrt{\log(1/\delta)+d}}{\sqrt{c_1}}, 
    \frac{\sqrt{\lambda_1^*}\sqrt{\log(1/\delta)+d}}{\psi\sqrt{ c_1}}\big\},\label{ineq:ineq153}
\end{align}
with probability at least $1-\delta$, we have
\begin{align}
\|\Uv^\top_\star\Qv^{t+1}_\star\|
\leq
\psi\Ac^t.
\end{align}

Similarly, for $\tilde{\Qv}_\star^{t+1}$, if $n$ satisfies \Cref{ineq:ineq153}, then with probability at least $1-\delta$, we have
\begin{align}
    \|\tilde{\Qv}_\star^{t+1}\Vv_\star\|\leq\psi\Ac^t.
\end{align}
\end{proof}

Our goal is to show that $\Ac^t\leq A (1-\frac{\Delta\eta}{16})^t$ for constant $\Ac$, and we prove it by induction. First, define the following event
\begin{align}
    \Ec_t = \Big\{\forall 0<\tau\leq t\Big| \|\tilde{\Qv}_\star^{\tau+1}\Vv_\star\|\leq\psi\Ac^\tau,\quad
    \|\Uv^\top_\star\Qv^{\tau+1}_\star\|\leq\psi\Ac^\tau
    ,\quad
    \Thetav^\tau\in\Rc
    \Big\}.
\end{align}
We assume that event $\Ec_T$ is true. Also assume that $\Ac^t\leq A (1-\frac{\Delta\eta}{16})^t$ holds for all $0<t\leq T$.
Based on \Cref{ineq:ineq99}, we have $\Ac^t\leq \sqrt{k}\|\Dv^t\|+2\sqrt{k\lambda_1^*}\|\Thetav_{\text{res}}^t\|$. Now consider $\sigma_1(\Thetav_{\res}^t)$. From \Cref{lemma:upb-of-theta-res}, for $\Thetav^t\in\Rc$ and $\eta\leq 1/6\lambda^*_1$, we have
\begin{align}
    \sigma_1(\Thetav^{t+1}_{\text{res}})
       &\leq
       \Big(1+\frac{\eta}{2}\big(\lambda^*_{k+1}-\sigma_1^2(\Thetav^{t}_{\text{res}})-\sigma_k^2(\Thetav^{t}_{k})\big)\Big)\sigma_1(\Thetav^{t}_{\text{res}})+\sigma_1(\Rv^{t}_{2\dh-k})\\
       &\leq
       \Big(1-\frac{\eta\Delta}{8} \Big)\sigma_1(\Thetav^t_{\text{res}})+\sqrt{2}\psi\Ac^t\\
       &\leq
       \Big(1-\frac{\eta\Delta}{8} \Big)\sigma_1(\Thetav^t_{\text{res}})+\sqrt{2}\psi A \Big(1-\frac{\eta\Delta}{16}\Big)^t.
\end{align}

Let \rp{$\psi\leq\frac{\eta\Delta\sigma_1(\Thetav_{\text{res}}^0)}{16\sqrt{2} A }$}. By induction we have $\sigma_1(\Thetav_{\text{res}}^t)\leq\sigma_1(\Thetav_{\text{res}}^0)(1-\eta\Delta/16)^t$. 

Now consider $\sigma_1(\Dv^t)$. Based on \Cref{equ:equ83}, we have
\begin{align}
    \Dv^t = \tilde{\Thetav}_k^{t}(\tilde{\Thetav}_k^{t})^\top-\tilde{\Lambdav}_k   +\tilde{\Thetav}_k^{t}(\Rv^{t}_k)^\top+\Rv^{t}_k(\tilde{\Thetav}_k^{t})^T +
    \Rv^{t}_k(\Rv^{t}_k)^\top.
\end{align}
Combining with the proof in Section A.3 of \citet{chen2023fast}, we have
\begin{align}
    \sigma_{1}(\Dv^{t})
    &\leq
    (1-\frac{\eta\Delta}{8})\sigma_{1}(\Dv^{t-1})+
    \sigma_1^2(\Thetav_{\res}^{ t-1})+ 
    \sigma_1(\tilde{\Thetav}_k^{t}(\Rv^{t}_k)^\top+\Rv^{t}_k(\tilde{\Thetav}_k^{t})^T +
    \Rv^{t}_k(\Rv^{t}_k)^\top),
\end{align}

Let \rp{$\psi\leq \frac{\lambda_1^*}{ A }$}, we have 
\begin{align}
    \sigma_{1}(\Dv^{t})
    &\leq
    (1-\frac{\eta\Delta}{8})\sigma_{1}(\Dv^{t-1})+
    \sigma_1^2(\Thetav_{\res}^{ t-1})
    + 
    3\lambda_1^*\sigma_1(\Rv^t)\\
    &\leq
    (1-\frac{\eta\Delta}{8})\sigma_{1}(\Dv^{t-1})+
    \lambda_k^*\sigma_1(\Thetav_{\res}^{0})\Big(1-\frac{\eta\Delta}{16}\Big)^{t-1}
    + 
    3\lambda_1^*\sigma_1(\Rv^t)\\
    &\leq
    (1-\frac{\eta\Delta}{8})\sigma_{1}(\Dv^{t-1})+
    \lambda_k^*\sigma_1(\Thetav_{\res}^{0})\Big(1-\frac{\eta\Delta}{16}\Big)^{t-1}
    + 
    3\sqrt{2}\lambda_1^*\psi\Ac^{t-1}.
\end{align}

Let \rp{$\psi\leq \frac{\eta\Delta\sigma_1(\Dv^0)}{48\sqrt{2}\lambda_1^* A }$}. By induction, we have 
\begin{align}
    \sigma_{1}(\Dv^{t})
    \leq
    \Big(\sigma_1(\Dv^0)+\frac{16\lambda_k^*}{\eta\Delta}\sigma_1^{2}(\Thetav^0_{\text{res}})\Big)(1-\eta\Delta/16)^t
\end{align}

Then, for $t = T+1$, we have
\begin{align}
    \Ac^{T+1}
    &\leq \sqrt{k}\|\Dv^{T+1}\|+\sqrt{2k\lambda_1^*}\|\Thetav_{\text{res}}^{T+1}\|\\
    &\leq
    \sqrt{k}\Big(\sigma_1(\Dv^0)+\frac{16}{\eta\Delta}\sigma_1^{2}(\Thetav^0_{\text{res}})\Big)\Big(1-\frac{\eta\Delta}{16}\Big)^{T+1}
    +
    2\sqrt{k\lambda_1^*}\sigma_1(\Thetav_{\text{res}}^0)\Big(1-\frac{\eta\Delta}{16}\Big)^{T+1}\\
    &\leq
    A\Big(1-\frac{\eta\Delta}{16}\Big)^{T+1}
\end{align}
where the last inequality holds by setting 
\begin{align}
    A = \max\Big\{\sqrt{k}\Big(\sigma_1(\Dv^0)+\frac{16}{\eta\Delta}\sigma_1^{2}(\Thetav^0_{\text{res}})\Big)+ 2\sqrt{k\lambda_1^*}\sigma_1(\Thetav_{\text{res}}^0)
    ,\quad
    \Ac^0\Big\}.
\end{align}
Then we proved \jingc{?}that $\Ac^t\leq A(1-\eta\Delta/16)^t$. Denote \begin{align}
    \psi = \min\Big\{
    \frac{\eta\Delta\sigma_1(\Thetav_{\text{res}}^0)}{16\sqrt{2} A },\;
    \frac{\lambda_1^*}{ A },\;
    \frac{\eta\Delta\sigma_1(\Dv^0)}{48\sqrt{2}\lambda_1^* A },\;
    \sqrt{\Delta}
    \Big\}.
\end{align} 
Combining \Cref{thm:thm3-formal} with \Cref{lemma:concentrations}, 
if $\psi\leq\sqrt{\Delta}$, 
\begin{align}
    \sqrt{n}\geq c_2\frac{(\lambda_1^*)^2\sqrt{\log(1/\delta)+d}}{\psi\Delta^{\frac{3}{2}}},
\end{align}
for some constant $c_2$, then, with probability at least $1-c_3t\delta$, we have 
\begin{align}
    \Ac^t\leq c_4\sqrt{M}\Big(1-\frac{\eta\Delta}{16}\Big)^{t},
\end{align}
where $c_3$ is some constant independent with $d$, $k$, $M$, $n$, $t$ and $\delta$. 

Note that $\psi\sqrt{M}$ is also a constant independent with $d$, $k$, $M$, $n$, $t$ and $\delta$. Therefore, there exists constant $c_4$ such that for $n\geq c_4\frac{k(\log(t)-\log(\delta)+d)}{M}$, with probability at least $1-\delta$, we have
\begin{align}
    \|\Bv^t\Wv^t-\Phiv\|_F\leq c_3\sqrt{k}\Big(1-\frac{\eta\Delta}{16}\Big)^{t}.
\end{align}
This is equivalent to
\begin{equation}
        \begin{aligned}
    \frac{1}{M}\sum_{i\in[M]}\|\Bv^t w_i^t-\Bv^*w_i^*\|
    \leq
    c_3\sqrt{\frac{k}{M}}\Big(1-\frac{\eta\Delta}{16}\Big)^{t},
    \end{aligned}
\end{equation}

then for $n$
\fi

\section{General \alg}\label{appx:general}
\subsection{Details of General \alg}
\begin{algorithm}[H]
\caption{General \alg}\label{alg:general-flute}
\begin{algorithmic}[1]
\STATE {\bf Input:} Learning rates $\eta_{l}$ and $\eta_{r}$, regularization parameters $\lambda_1$, $\lambda_2$ and $\lambda_3$, communication round $T$
\STATE {\bf Initialization:} {Server initializes model parameters $\Bv^0, \{b_i^0\}, \{\Hv_i^0\}$}
\FOR{$t=\{0,\cdots,T-1\}$}
\STATE Server samples a batch of clients $\Ic^{t+1}$
\STATE Server sends $\Bv^{t}$, and $\Hv_i^{t}$ to all client $i\in\Ic^{t+1}$
\FOR{client $i\in [M]$ in parallel}
\IF{$i\in\Ic^{t+1}$}
\STATE $\Bv^{t,0}_i\leftarrow \Bv^{t}$, $b_i^{t,0}\leftarrow b_i^{t}$ and $\Hv_i^{t,0}\leftarrow \Hv_i^{}$
\FOR{$\tau=\{0,\cdots,\Tc-1\}$}
\STATE $\Hv_i^{t,\tau+1} \leftarrow \text{GRD}(\Lc_i(\Bv_i^{t, \tau},b_i^{t,\tau},\Hv_i^{t,\tau});\Hv_i^{t,\tau},\eta_l)$
\STATE $b_i^{t,\tau+1} \leftarrow \text{GRD}(\Lc_i(\Bv_i^{t,\tau},b_i^{t,\tau},\Hv_i^{t,\tau});b_i^{t,\tau},\eta_l)$
\STATE $\Bv_i^{t,\tau+1}  \leftarrow \text{GRD}(\Lc_i(\Bv_i^{t,\tau},b_i^{t,\tau},\Hv_i^{t,\tau});\Bv^{t,\tau},\eta_l)$
\ENDFOR
\STATE $\Bv_i^{t+1}\leftarrow \Bv_i^{t,\Tc}$, $b_i^{t+1}\leftarrow b_i^{t,\Tc}$ and $\Hv_i^{t+1}\leftarrow \Hv_i^{t,\Tc}$
\STATE Sends $\Bv^{t+1}_i$, $b_i^{t+1}$ and $\Hv_i^{t+1}$ to the server
\ELSE
\STATE $b_i^{t+1}\leftarrow b_i^{t}$
\ENDIF
\ENDFOR
\STATE  Server updates:
\STATE \quad$\Bv^{t+1} = \frac{1}{rM}\sum_{i\in\Ic^{t+1}}\Bv_i^{t+1}$
\STATE \quad$\{\Hv_i^{t+1}\}_{i\in\Ic^{t+1}} \leftarrow \text{GRD}(R(\{\Hv_i^{t+1}\}_{i\in\Ic^{t+1}}, \Bv^{t+1});\{\Hv_i^{t+1}\}_{i\in\Ic^{t+1}},\eta_r)$
\STATE \quad$\Hv_i^{t+1} \leftarrow \Hv_i^{t}, \forall i\notin\Ic^{t+1}$
\ENDFOR
\end{algorithmic}
\end{algorithm}

The General \alg is presented in \Cref{alg:general-flute}, where $\text{GRD}(f;\theta,\alpha)$ denotes the update of variable $\theta$  using the gradient of the function $f$ with respect to $\theta$ and the step size $\alpha$.
The local loss function $\Lc_i$ is defined as
\begin{align}
    \Lc_i(\Bv,b,\Hv) 
    =
    \frac{1}{N}\sum_{(x,y)\in\Dc_i}\Lc\big(\Hv^\top f_{\Bv}(x)+b, y\big).
\end{align}

In this work, we instantiate the general \alg by a federated multi-class classification problem. In this case, the local loss function is specialized as
\begin{align}\label{obj:gen-fedup}
\Lc_i(\Bv,b,\Hv) 
    =
\frac{1}{N}\sum_{(x,y)\in\Dc_i}
\Lc_{\text{CE}}\big(\Hv_i^\top f_{\Bv}(x)+b_i, y\big)+\lambda_1\|f_{\Bv}(x)\|_2^2+\lambda_2\|\Hv_i\|_F^2+\lambda_3\Nc\Cc_i(\Hv_i),
\end{align}
where $y\in\Rb^m$ is a one-hot vector whose $k$-th entry is $1$ if the corresponding $x$ belongs to class $k$ and $0$ otherwise, and $\lambda_1$, $\lambda_2$ and $\lambda_3$ are non-negative regularization parameters. $\Lc_{\text{CE}}(\cdot)$ is the cross-entropy loss, where {for a one-hot vector $y$ whose $k$-th entry is $1$, we have}:
\begin{align}
    \Lc_{\text{CE}}(\hat{y},y) = -\log\mleft(\frac{\exp(\hat{y}_k)}{\sum_{i\in[c]}\exp(\hat{y}_i)}\mright).
\end{align}

$\Nc\Cc_i(\Hv_i)$, inspired by the concept of neural collapse~\citep{Papyan_2020},  is defined as
\begin{align}
    \Nc\Cc_i(\Hv_i)
    =
    \norm{\frac{\Hv^\top_i\Hv_i}{\|\Hv^T_i\Hv_i\|_F} -
    \frac{1}{\sqrt{m-1}}\mathbf{u}_i\mathbf{u}_i^\top\odot\mleft(\Iv_{m}-\frac{1}{m}\mathbf{1}_m\mathbf{1}^\top_{m}\mright)}_F ,
\end{align}
where $\mathbf{u}_i$ is an $m$-dimensional one-hot vector whose $c$-th entry is $1$ if $c\in\Cc_i$ and $0$ otherwise. Also, we specialize the regularization term optimized on the server side as $R(\{\Hv_i\}) = \sum_{i}\Nc\Cc_i(\Hv_i)$.

\subsection{Additional Definition}
\begin{definition}[$k$-Simplex ETF, Definition 2.2 in~\citet{tirer2022extended}]
\label{def:ETF}
The standard simplex equiangular tight frame (ETF) is a collection of points in $\Rb^k$ specified by the columns of 
\begin{equation}
    \begin{aligned}
        \Mv=\sqrt{\frac{k}{k-1}}\mleft(\Iv_k-\frac{1}{k}\mathbf{1}_k\mathbf{1}_k^\top\mright).
    \end{aligned}
\end{equation}
Consequently, the standard simplex EFT obeys
\begin{equation}
    \begin{aligned}
        \Mv^\top\Mv = \Mv\Mv^\top=\frac{k}{k-1}\mleft(\Iv_k-\frac{1}{k}\mathbf{1}_k\mathbf{1}_k^\top\mright).
    \end{aligned}
\end{equation}
\end{definition}
In this work, we consider a (general) simplex ETF as a collection of points in $\Rb^d$, $d\geq k$ specified by the columns of $\tilde{\Mv}\propto \sqrt{\frac{k}{k-1}}\Pv\mleft(\Iv_k-\frac{1}{k}\mathbf{1}_k\mathbf{1}_k^\top\mright)$, where $\Pv\in\Rb^{d\times k}$ is an orthonormal matrix. Consequently, $\tilde{\Mv}^\top\tilde{\Mv}\propto \Mv\Mv^\top=\frac{k}{k-1}\mleft(\Iv_k-\frac{1}{k}\mathbf{1}_k\mathbf{1}_k^\top\mright)$.

\subsection{More Discussion on General \alg}\label{appx:nc}

Firstly, we explain the concept of \emph{neural collapse}.

\mypara{Neural collapse.} Neural collapse (NC) was experimentally identified in~\citet{Papyan_2020}, and they outlined four elements in the neural collapse phenomenon: 
\begin{itemize}
    \item (NC1) Features learned by the model (output of the representation layers) for samples within the same class tend to converge toward their average, essentially causing the within-class variance to diminish;
    \item (NC2) When adjusted for their overall average, the final means of different classes display a structure known as a simplex equiangular tight frame (ETF);
    \item (NC3) The weights of the final layer, which serves as the classifier, align with this simplex ETF structure;
    \item (NC4) Consequently, after this collapse occurs, classification decisions are made based on measuring the nearest class center in the feature space. 
\end{itemize}
   
Next, we discuss some observations on the vanilla multi-classification problem, i.e., no additional regularization term and no client-side optimization, which is given as
\begin{align}\label{def:vanilla-def}
\Lc_i(\Bv,b,\Hv) 
    =
\frac{1}{N}\sum_{(x,y)\in\Dc_i}
\Lc_{\text{CE}}\big(\Hv_i^\top f_{\Bv}(x)+b_i, y\big)+\lambda_1\|f_{\Bv}(x)\|_2^2+\lambda_2\|\Hv_i\|_F^2.
\end{align}
The first observation, which directly comes from Theorem 3.2 in~\citet{tirer2022extended}, describes the phenomena of local neural collapse, which could happen when the model is locally trained for long epochs. 

\begin{observation}\label{obs:local-nc}
When $f_{\Bv}(\cdot)$ is sufficiently expressive such that $f_\Bv(x)$ can be viewed as a free variable. and the feature dimension $k$ is no smaller than the number of total classes $m$, locally learned $\Bv$ and $\Hv_i$ that optimize the objective function \eqref{def:vanilla-def} must satisfy:
\begin{align}
 &   f_{\Bv^*}(x_1) = f_{\Bv^*}(x_2),\quad\forall x_1,x_2\in\Dc_i^c,  c\in \Cc_i\\
 &   \frac{\langle f_{\Bv^*}(x),h_{i,c}^*\rangle}{\|f_{\Bv^*}(x)\|\cdot\|h_{i,c}^*\|}=1,
    \quad\forall x\in\Dc_i^c, c\in \Cc_i\\
&    \frac{\Hv^\top_i\Hv_i}{\|\Hv^\top_i\Hv_i\|_F} 
    =
    \frac{1}{\sqrt{m'-1}}\mathbf{u}_i\mathbf{u}_i^\top\odot\mleft(\Iv_{m}-\frac{1}{m'}\mathbf{1}_m\mathbf{1}^\top_{m}\mright),
\end{align}
where $\mathbf{u}_i$ is a $m$-dimensional one-hot vector whose $c$-th entry is $1$ if $c\in\Cc_i$ and $0$ otherwise, and $m'$ is the number of classes per client.
\end{observation}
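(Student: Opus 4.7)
The plan is to invoke the layer-peeled/unconstrained-features viewpoint and reduce the claim to a symmetry argument on a finite-dimensional convex-like surrogate, essentially paralleling the strategy of Theorem 3.2 in \citet{tirer2022extended} but restricted to the client-local subset of classes $\Cc_i$. First I would use the expressiveness assumption to treat each feature $z_{i,j}=f_{\Bv}(x_{i,j})\in\Rb^k$ as a free variable and rewrite the local objective as a minimization over $\{z_{i,j}\}$, $\Hv_i=[h_{i,1},\ldots,h_{i,m}]$, and $b_i$, with the penalty $\lambda_1\|z_{i,j}\|^2+\lambda_2\|\Hv_i\|_F^2$. Since the columns $h_{i,c}$ with $c\notin \Cc_i$ never appear inside the CE loss but do contribute to $\|\Hv_i\|_F^2$, any minimizer must set $h_{i,c}=0$ for $c\notin\Cc_i$; hence the effective classifier acts on the $m'=|\Cc_i|$ active classes, and WLOG we may analyze the $k\times m'$ sub-matrix.

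Next I would establish NC1 (within-class feature collapse) by a standard convexity-plus-regularizer argument: fix $\Hv_i,b_i$ and note that the map $z\mapsto \Lc_{\text{CE}}(\Hv_i^\top z+b_i,y^{(c)})+\lambda_1\|z\|^2$ is strictly convex in $z$. Summing over the samples in class $c$ and applying Jensen's inequality, replacing every $z_{i,j}$ with the within-class mean $\bar z_c$ strictly decreases the objective unless the samples already coincide; this gives $f_{\Bv^*}(x_1)=f_{\Bv^*}(x_2)$ for all $x_1,x_2$ in the same client-class. After this reduction the objective depends only on $\{\bar z_c\}_{c\in\Cc_i}$, $\{h_{i,c}\}_{c\in\Cc_i}$, and $b_i$, and is symmetric under any permutation of the class labels inside $\Cc_i$ (assuming balanced class sizes, which the statement implicitly uses via~\citet{tirer2022extended}).

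I would then derive NC3 (feature-classifier alignment) by KKT stationarity in $z$: at an optimum the gradient of the CE loss in the $z$ direction points along $h_{i,c}$ minus a convex combination of $\{h_{i,c'}\}_{c'}$, and balancing it against the $2\lambda_1 z$ term forces $\bar z_c$ to be a positive scalar multiple of a single $h_{i,c}$ plus a term orthogonal to the classifier's row space. A polar-decomposition/rotational invariance argument (rotating $\Hv_i$ and the $\bar z_c$ jointly) then shows the orthogonal part is redundant, so $\bar z_c \parallel h_{i,c}$, giving $\langle f_{\Bv^*}(x),h_{i,c}^*\rangle/(\|f_{\Bv^*}(x)\|\|h_{i,c}^*\|)=1$ for $x\in\Dc_i^c$. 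Finally, for the ETF structure (NC2) I would exploit the permutation symmetry over $\Cc_i$ together with the strict convexity of the reduced objective in $\Hv_i^\top\Hv_i$ on the symmetric orbit: any minimizer must be invariant under the permutation action of $S_{m'}$, and among PSD matrices with fixed trace that are invariant under $S_{m'}$ and correspond to centered class means (the bias $b_i$ absorbs the common shift, yielding the $\mathbf{I}_{m'}-\tfrac{1}{m'}\mathbf{1}\mathbf{1}^\top$ projector), the only solution is the scaled simplex ETF Gram matrix; embedding this $m'\times m'$ block into the full $m\times m$ matrix via the selector $\mathbf{u}_i\mathbf{u}_i^\top\odot(\cdot)$ yields the stated formula for $\Hv_i^\top\Hv_i/\|\Hv_i^\top\Hv_i\|_F$.

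The main obstacle I anticipate is justifying the alignment/ETF characterization directly from the cross-entropy loss (which is only convex, not strictly so, in the classifier direction), rather than the more tractable MSE loss treated in parts of the neural-collapse literature. I expect to handle this by leaning on the precise statement of Theorem 3.2 in \citet{tirer2022extended}, which already proves the CE-case ETF geometry under the unconstrained-features/$\lambda_2\|\Hv\|_F^2+\lambda_1\|Z\|_F^2$ model; the novel work in this proposal is simply the reduction to the active sub-classifier on $\Cc_i$ and verifying that the balanced, label-symmetric hypotheses of that theorem remain satisfied for the client-local problem.
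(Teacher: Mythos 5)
Your high-level route is the same one the paper takes: reduce to the unconstrained-features/layer-peeled model and invoke Theorem~3.2 of \citet{tirer2022extended}, after arguing that the client-local problem effectively behaves like a balanced $m'$-class UFM. The paper offers nothing beyond that citation, so you are essentially reconstructing what would have to be checked.

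The one step you add that the citation does not directly cover is also the one that has a genuine gap. You claim that the columns $h_{i,c}$ with $c\notin\Cc_i$ ``never appear inside the CE loss but do contribute to $\|\Hv_i\|_F^2$,'' and conclude $h_{i,c}=0$ for inactive classes. This is false for the softmax cross-entropy: every column of $\Hv_i$ enters the normalization $\sum_{c'}\exp\bigl(h_{i,c'}^\top z + b_{i,c'}\bigr)$, so the inactive columns do influence the data term. Writing out stationarity in $h_{i,c}$ gives
\begin{equation}
    h_{i,c} \;=\; -\frac{1}{2\lambda_2 N}\sum_{j} p_{j,c}\, z_j ,
\end{equation}
where $p_{j,c}>0$ is the softmax probability mass on the (inactive) class $c$ at sample $j$; this is generically nonzero for finite parameters. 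The only route to $h_{i,c}=0$ is through the unregularized bias: the stationarity condition $\frac{1}{N}\sum_j p_{j,c}=0$ for $b_{i,c}$ cannot be satisfied at a finite point, so the infimum is approached as $b_{i,c}\to-\infty$, which drives $p_{j,c}\to 0$ and hence $h_{i,c}\to 0$. In other words, the inactive columns vanish only in a limit, the infimum of \eqref{def:vanilla-def} over $(\Hv_i,b_i)$ is not attained, and your Jensen/strict-convexity and KKT arguments need to be carried out along a minimizing sequence (or under an a priori bound on $b_i$) rather than at a literal minimizer. Once that is repaired, the remaining reduction to a balanced $m'$-class UFM on the active coordinates and the appeal to \citet{tirer2022extended} is sound and matches the paper's intent.
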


The above observation states that NC1, NC2, and NC3 happen locally, implying: 1) $h_{i,c}=\mathbf{0}$ if $c\notin \Cc_i$; and 2) the sub-matrix of $\Hv_i$ constructed by columns $h_{i,c}$ with $c\in \Cc_i$ will form a K-Simplex ETF (c.f. Definition \ref{def:ETF}) up to some scaling and rotation. We conclude that if there exist $\Bv$ and $\Hv_1,\cdots,\Hv_M$ such that they are the optimal models for all clients, then
the data from the same class may be mapped to different points in the feature space by $f_{\Bv^*}$ when data are drawn from different clients. However, this condition usually cannot be satisfied in the under-parameterized regime, due to the less expressiveness of the under-parameterized model.

To further demonstrate the phenomenon in the under-parameterized regime, we assume that in the under-parameterized regime, a well-performed representation $f_\Bv$ should map data from the same class but different clients to the same feature mean:
\begin{condition}\label{cond:under-para}
    For client $i$ and $j$, if class $c\in\Cc_i$ and $c\in\Cc_j$, then $\frac{1}{|\Dc_i^c|}\sum_{x:(x,y)\in\Dc_i^c}f_{\Bv}(x)=\frac{1}{|\Dc_j^c|}\sum_{x:(x,y)\in\Dc_j^c}f_{\Bv}(x)$.
\end{condition}
With this condition, we have the following observation that also comes from Theorem 3.1 in~\citet{zhu2021geometric}, which describes the neural collapse in the under-parameterized regime. 
\begin{observation}\label{obs:global-nc}
When \Cref{cond:under-para} holds and the feature dimension $k$ is no smaller than the number of total classes $m$, any global optimizer $\Bv^*, \Hv_1^*, \cdots, \Hv_M^*$ of \eqref{def:vanilla-def} satisfies
\begin{align}
  &  f_{\Bv^*}(x_1) = f_{\Bv^*}(x_2),\quad\forall x_1,x_2\in\Dc_i^c, i\in[M], c\in \Cc_i,\\
  &  \frac{\langle f_{\Bv^*}(x),h_{i,c}^*\rangle}{\|f_{\Bv^*}(x)\|\cdot\|h_{i,c}^*\|}=1,
    \quad\forall x\in\Dc_i^c, i\in[M], c\in \Cc_i,\\
   & \frac{\Hv^\top_i\Hv_i}{\|\Hv^\top_i\Hv_i\|_F} 
    =
    \frac{1}{\sqrt{m-1}}\mathbf{u}_i\mathbf{u}_i^\top\odot\mleft(\Iv_{m}-\frac{1}{m}\mathbf{1}_m\mathbf{1}^\top_{m}\mright), \quad \forall i\in[M],
\end{align}
where $\mathbf{u}_i$ is a $m$-dimensional one-hot vector whose $c$-th entry is $1$ if $c\in\Cc_i$ and $0$ otherwise.
\end{observation}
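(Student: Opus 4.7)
The plan is to reduce the federated objective in \eqref{def:vanilla-def} to a centralized Unconstrained Features Model (UFM) and then invoke the neural-collapse characterization of the regularized cross-entropy minimizer as in Theorem~3.1 of \citet{zhu2021geometric}. Condition~\ref{cond:under-para} is exactly what enables this reduction: it forces the per-class feature means across different clients to coincide, so the free variables effectively collapse from client-specific features $\{f_{\Bv}(x)\}_{x \in \Dc_i}$ to a single global dictionary $\{\mu_c\}_{c=1}^m \subset \Rb^k$ of class means. Given this, the argument becomes: (i) strict convexity collapses within-class features to $\mu_c$; (ii) regularization zeros out inactive classifier columns; (iii) the UFM neural-collapse inequality takes over and yields the stated ETF geometry over all $m$ classes.

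\textbf{Steps.} First, I would establish NC1: for any fixed $\{\Hv_i, b_i\}$, the per-client loss
\(\sum_{(x,y) \in \Dc_i} \Lc_{\text{CE}}(\Hv_i^{\top} f_{\Bv}(x) + b_i, y) + \lambda_1 \|f_{\Bv}(x)\|^2\)
is strictly convex in each feature vector $f_{\Bv}(x)$, so replacing the features on $\Dc_i^c$ by their average strictly decreases the objective unless they were already equal. At any global optimizer, therefore, $f_{\Bv^*}(x)$ is constant on every $\Dc_i^c$, and Condition~\ref{cond:under-para} then forces this constant to be a client-independent value $\mu_c$. Second, for $c \notin \Cc_i$ the column $h_{i,c}$ appears only through $\lambda_2 \|\Hv_i\|_F^2$, so $h_{i,c}^* = \mathbf{0}$. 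Third, substituting these two simplifications rewrites the federated objective as a sum over clients of regularized CE losses in the global variables $\{\mu_c\}$ and the active columns $\{h_{i,c}^* : c \in \Cc_i\}$ only, i.e., a multi-head UFM with shared features. Finally, applying the algebraic lower bound used in the proof of Theorem~3.1 of \citet{zhu2021geometric} (and similar arguments in \citet{tirer2022extended, Papyan_2020}) to this reduced objective yields both the alignment $h_{i,c}^* \parallel \mu_c$ for $c \in \Cc_i$ (NC3 at client level) and the simplex-ETF geometry of the global dictionary $\{\mu_c\}_{c=1}^m$ in $\Rb^k$ (NC2), from which the stated form of $\Hv_i^{\top}\Hv_i / \|\Hv_i^{\top}\Hv_i\|_F$ follows by direct computation using Definition~\ref{def:ETF}.

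\textbf{Main obstacle.} The principal difficulty is reconciling the client-heterogeneous heads $\{\Hv_i\}$ with a \emph{global} collapse structure. Unlike the purely centralized UFM, each client optimizes its own $\Hv_i$ over a different subset $\Cc_i$, yet the target pattern in the statement involves the normalization $1/\sqrt{m-1}$ and the full $m \times m$ mask $\Iv_m - \tfrac{1}{m}\mathbf{1}_m\mathbf{1}_m^{\top}$, rather than the per-client $1/\sqrt{|\Cc_i|-1}$ that appears in Observation~\ref{obs:local-nc}. The resolution hinges on using Condition~\ref{cond:under-para} a second time: because the shared means $\{\mu_c\}_{c=1}^m$ are the rays of one common simplex ETF in $\Rb^k$, and because the alignment from Step 4 forces $h_{i,c}^*$ to be a positive scalar multiple of $\mu_c$ for every $c \in \Cc_i$, the Gram matrix $\Hv_i^{\top}\Hv_i$ must be a scaled $\mathbf{u}_i\mathbf{u}_i^{\top}$-masked copy of the \emph{same} global ETF Gram, yielding the exact expression claimed. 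Verifying the scaling $1/\sqrt{m-1}$ under Frobenius normalization — and ruling out independently rotated per-client ETFs that would otherwise also be local optima — is the delicate bookkeeping step in the proof.
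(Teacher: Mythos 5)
The paper itself offers no proof of this observation: it simply attributes it to Theorem~3.1 of \citet{zhu2021geometric}, without spelling out how the federated, multi-head objective in \eqref{def:vanilla-def} reduces to the centralized unconstrained features model (UFM) that theorem addresses. Your proposal takes the same high-level route (reduce to UFM, cite Zhu et al.), but you attempt to fill in the missing reduction, which is genuinely useful. Steps (i)--(ii) of your argument --- the strict-convexity collapse of within-class features, the identification of per-class means $\mu_c$ via Condition~\ref{cond:under-para}, and the vanishing of inactive columns $h^*_{i,c}$ for $c\notin\Cc_i$ --- are sound and exactly the kind of bookkeeping the paper leaves implicit.

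The gap is in the step where you assert that ``the algebraic lower bound used in the proof of Theorem~3.1 of \citet{zhu2021geometric}\dots yields both the alignment\dots and the simplex-ETF geometry of the global dictionary.'' Theorem~3.1 of \citet{zhu2021geometric} concerns a \emph{single} classifier matrix coupled to a \emph{single} set of $m$ class means with balanced class sizes. After your reductions, the federated problem still consists of $M$ distinct heads $\Hv_i$, each active only on its own subset $\Cc_i \subsetneq [m]$ and each tied to the shared means $\{\mu_c\}$. This is not a single-head UFM, and the peeling/Cauchy--Schwarz lower bound in Zhu et al.\ does not transfer verbatim: there is no obvious way to ``stack'' the $M$ heads into one classifier over $m$ classes, because each head touches only $|\Cc_i|$ of them. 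Concretely, two things remain unproved after you invoke the theorem: (a) that the jointly optimal $\{\mu_c\}_{c=1}^m$ form an $m$-class simplex ETF in $\Rb^k$ (rather than, say, a configuration in which each subset $\{\mu_c\}_{c\in\Cc_i}$ is an independently rotated $|\Cc_i|$-class ETF, which is what the per-client UFM would give); and (b) that the positive scalars $\alpha_{i,c}$ in $h^*_{i,c}=\alpha_{i,c}\mu_c$ are equal across $c\in\Cc_i$, which is needed for $\Hv_i^\top\Hv_i$ to be a pure rescaling of the masked ETF Gram. You flag (b) as ``delicate bookkeeping'' but never resolve it. To close the gap one would need an explicit lower-bounding argument on the \emph{aggregated} objective $\sum_i\sum_{c\in\Cc_i}[\cdot]$ that is simultaneously tight for all clients at the global ETF, exploiting Condition~\ref{cond:under-para} to couple the per-client minimizations --- this is a genuinely new step beyond citing Zhu et al., and the paper simply elides it.
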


Comparing these two observations, we conclude that in the under-parameterized case, the optimal models $h_{i,c}$ and $h_{j,c}$ are of the same direction when class $c$ is included in both $\Cc_i$ and $\Cc_j$. It implies that the globally optimized model performs differently compared with the locally learned model.  In \Cref{fig:nc2}, we present an example to illustrate how $\Hv$ performs differently when it is globally or locally optimized. 

In \Cref{fig:nc2}, we consider the scenario that the number of clients $M=3$, total number of data classes $m=3$, number of data classes per client $m'=2$, client $1$ contains data of class $1$ and class $2$, client $2$ contains data of class $1$ and class $3$, and client $3$ contains data of class $2$ and class $3$. The first row of the three sub-figures shows the structure of normalized columns of $\Hv_1$, $\Hv_2$, and $\Hv_3$ when they are locally optimized, and the second row of the three sub-figures shows those optimize \eqref{def:vanilla-def}. We observe that under this setting, the locally optimized heads are in opposite directions, which perform differently compared with the global optimal heads.

Inspired by such observations, we add $\mathcal{NC}_i$ to the local loss function and also optimize $R(\{\Hv_i\})$, to ensure that the personalized heads also contribute to the global performance. This principle aligns with our motivation to design the linear \alg.
\begin{figure}[h]
 \centering
\includegraphics[width=0.8\linewidth]{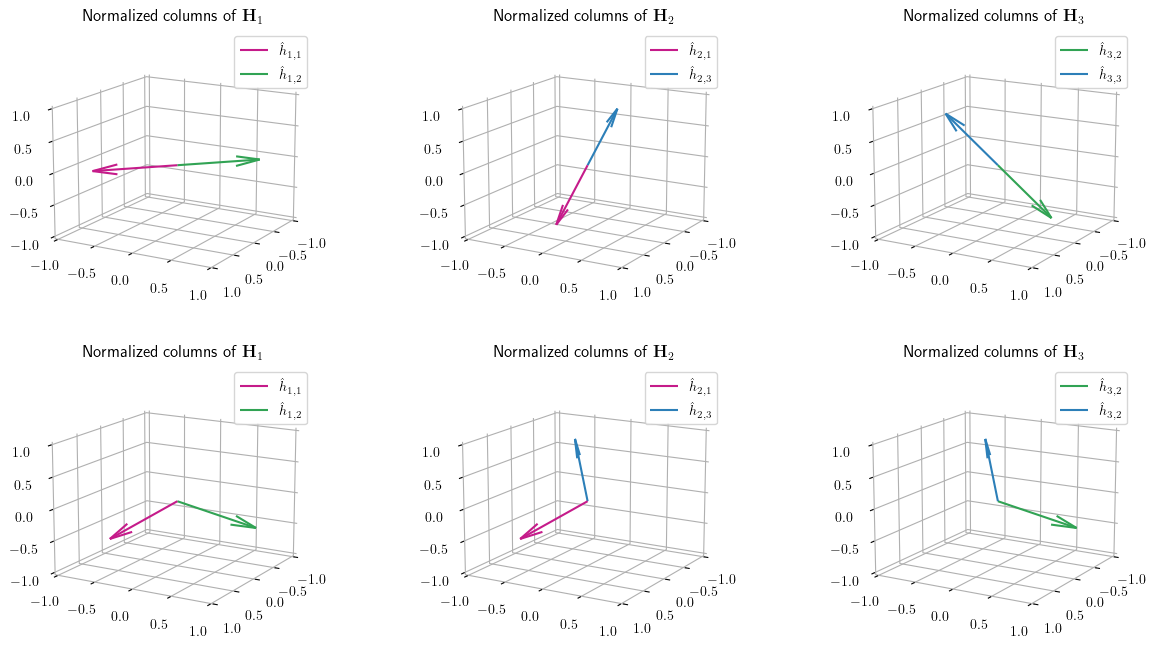} 
\caption{Behavior of locally optimized heads and globally optimized heads.}
 \label{fig:nc2}
\end{figure}

\if{0}
\begin{multicols}{2}
\begin{figure}[H]
 \begin{subfigure}{1\linewidth}
  \includegraphics[width=\textwidth]{figs/nc2-local-global.png}
 \end{subfigure}
 \caption{\small Structures of the optimal $\Hv$.}
\label{fig:nc2}
\end{figure}
\columnbreak
In Fig. \ref{fig:nc2}, we consider the scenario that the number of clients $M=3$, total number of data classes $m=3$, number of data classes per client $m'=2$, client $1$ contains data of class $1$ and class $2$, client $2$ contains data of class $1$ and class $3$ and client $3$ contains data of class $2$ and class $3$. The first column of the three sub-figures shows the structure of normalized columns of $\Hv_1$, $\Hv_2$, and $\Hv_3$ when optimizing objective function \eqref{obj:multi-classification} in over-parameterized case, and the second column of the three sub-figures shows those in under-parameterized case. We observe that under this
setting, the normalized optimal prediction head $\hat{h}_{i,c}$ of class $c$, i.e., the normalized $c$-th column of $\Hv_i$, is varying across different clients, which is contradicted by the under-parameterized case, such that $\hat{h}_{i,c}$ of the same class $c$ is identical across different clients.
\end{multicols}
\fi

\section{Additional Experimental Results}\label{apdx:experiment}


\subsection{Synthetic Datasets}\label{apx:c1-syn}

\mypara{Implementation Details.}
In the experiments conducted on synthetic datasets shown in \cref{fig:fig2-syn}, $\Lambdav \in \Rb^{\underline{d} \times \underline{d}}$ is generated by setting the $i$-th singular value to be $\frac{2\underline{d}}{i+1}$. We randomly generate $\Uv \in \Rb^{d \times \underline{d}}$ with $\underline{d}$ orthonormal columns and $\Vv \in \Rb^{\underline{d} \times M}$ with $\underline{d}$ orthonormal rows. The ground-truth model is then $\Phiv = \Uv \Lambdav \Vv^\top$, where each column $\phi_i$ represents the local ground-truth model for client $i$. Each client generates $N$ samples $(x, y)$ from $y = x^\top \phi_i + \xi_i$, where $x$ is sampled from a standard Gaussian distribution and every entry of $\xi_i$ is IID sampled from $\Nc(0, 0.3)$. The learning rate is set to $\eta = 0.03$, and for random initialization, we set $\alpha = \frac{1}{10d}$.

\mypara{Parameter Settings.}
For experiments on synthetic datasets shown in \cref{fig:fig2-syn}, we set $d=10$. We select the value of $k$ from the set $\{2, 4, 6, 8\}$, $M$ from the set $\{15, 30\}$, and $N$ from the set $\{12, 20\}$.

\mypara{Experimental Results.}
From the experiments in \cref{fig:fig2-syn}, we observe that, with the dimensions $d$, $M$, and $N$ fixed, an increase in $k$ results in a diminishing discrepancy in convergence speeds between \alg and FedRep. This trend demonstrates \alg's superior performance in under-parameterized settings. Furthermore, keeping $d$, $k$, and $N$ unchanged while increasing the number of clients $M$, we see a reduction in the average error of models generated by \alg. This observation aligns with our theoretical findings presented in \Cref{prop:prop1}.

\mypara{Varying $\gamma_1$ and $\gamma_2$.}
In \cref{fig:varying-gamma}, we report the results of the following experiments where $d=10$, $k=6$, $M=10$, and $N$ selected from the set $\{8, 9, 10, 11\}$. For comparison, we use three pairs of $\gamma_1$ and $\gamma_2$: $\gamma_1 = 2\gamma_2$, $\gamma_1 = \gamma_2$, and $\gamma_1 = \frac{2}{3}\gamma_2$. We do not set $\gamma_1 > 2\gamma_2$ because in this setting, $\|\mathbf{B}\mathbf{W}\|_F$ usually diverges. From the experimental results, we observe that when $N=8$, $9$, or $10$, $\gamma_1 = \gamma_2$ shows the best performance among the three settings of $\gamma_1$ and $\gamma_2$.

\begin{figure}[H]
 \begin{subfigure}{1\linewidth}
     \includegraphics[width=\textwidth]{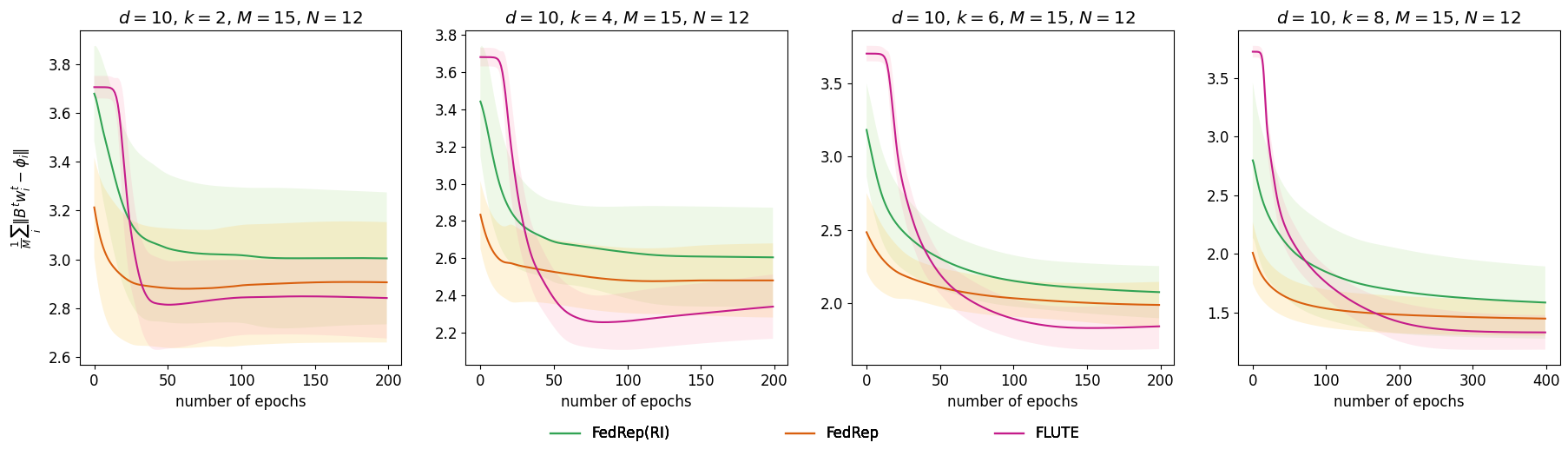}
 \end{subfigure}
 \begin{subfigure}{1\linewidth}
     \includegraphics[width=\textwidth]{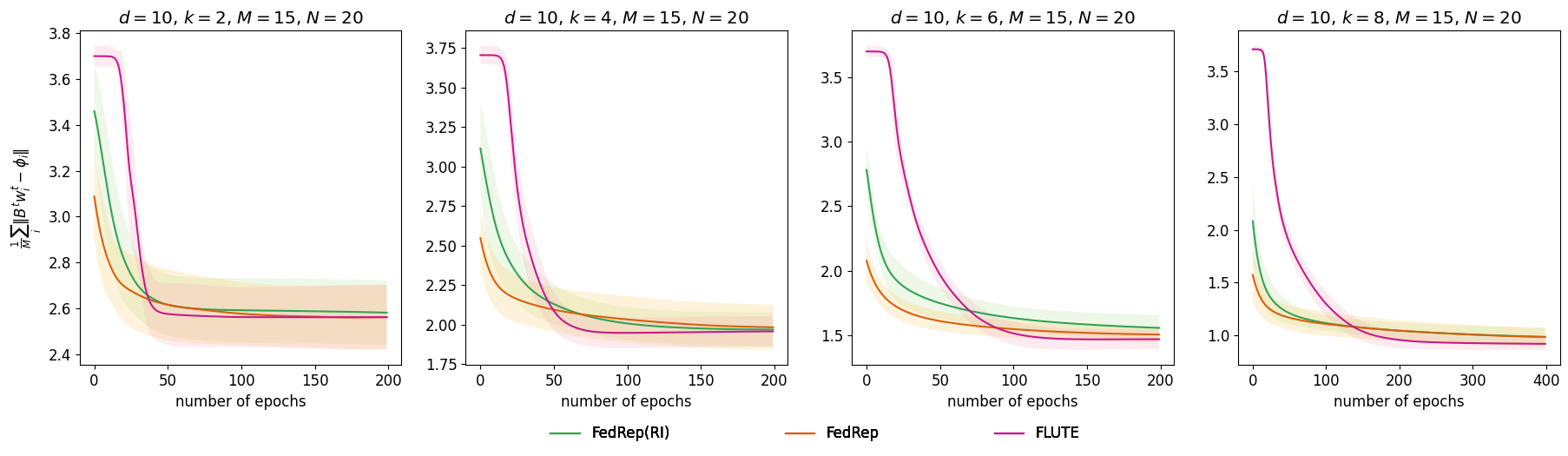}
 \end{subfigure}
 \begin{subfigure}{\linewidth}
     \includegraphics[width=\textwidth]{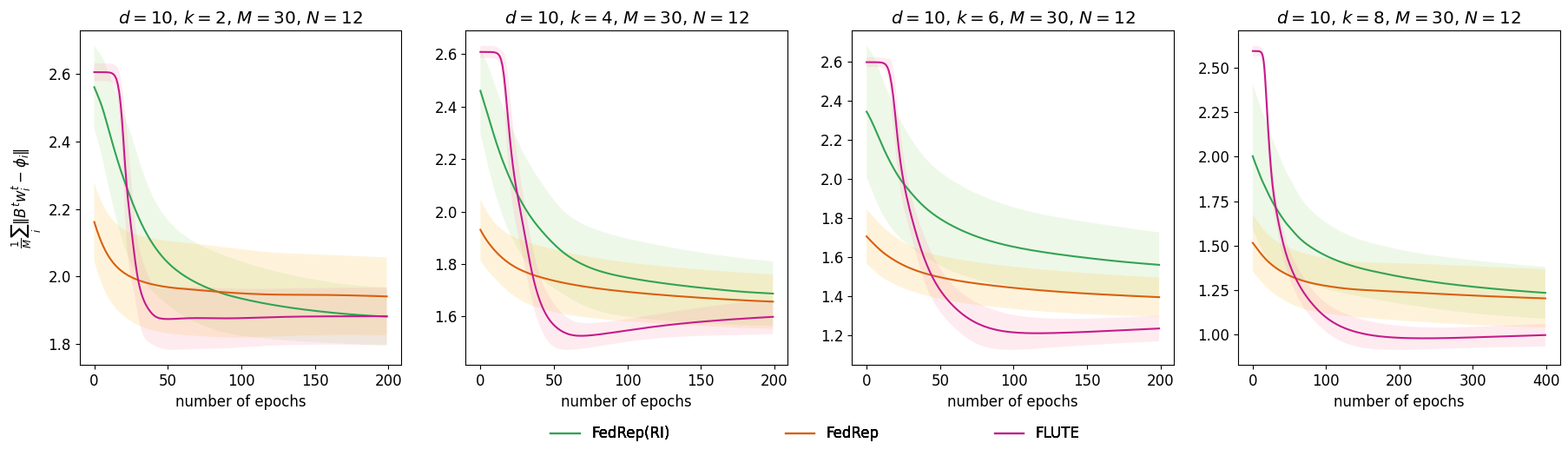}
 \end{subfigure}
 \begin{subfigure}{1\linewidth}
     \includegraphics[width=\textwidth]{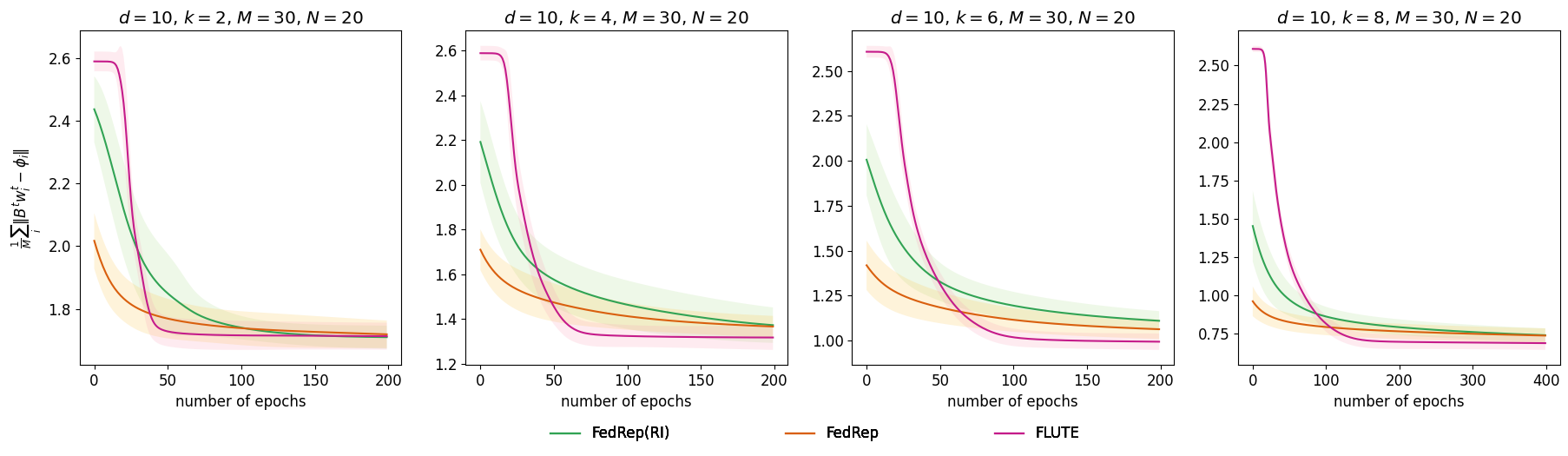}
 \end{subfigure}
 \caption{Experimental results with synthetic datasets.}
 \label{fig:fig2-syn}
\end{figure}


\begin{figure}[H]
 \begin{subfigure}{1\linewidth}
     \includegraphics[width=\textwidth]{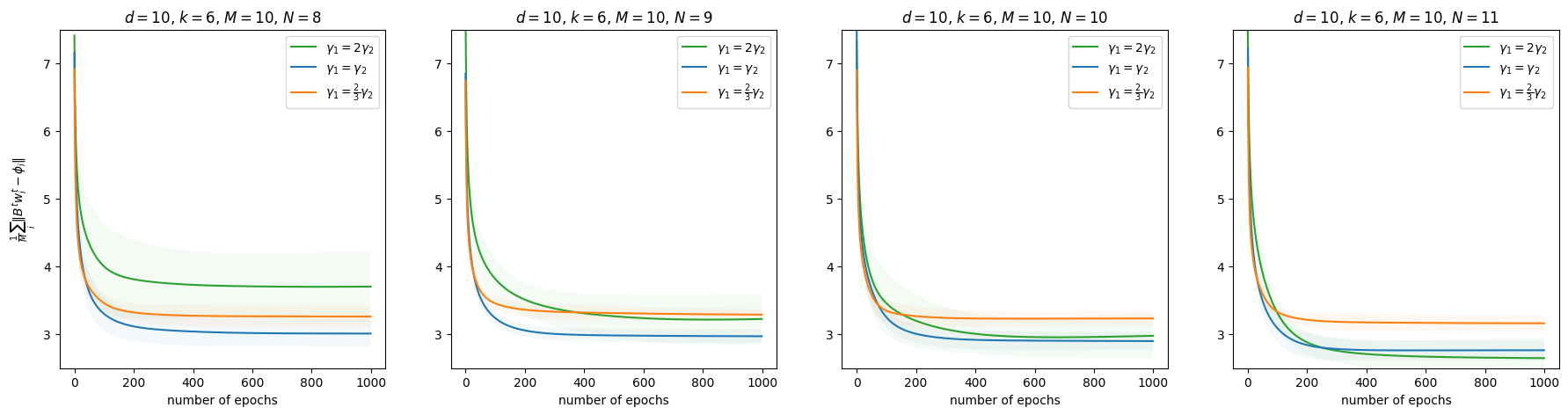}
 \end{subfigure}
 \caption{Experimental results with synthetic datasets.}
 \label{fig:varying-gamma}
\end{figure}

\subsection{Real-world Datasets}
\mypara{Implementation Details.}
For our experiments on the CIFAR-10 dataset, we employ a 5-layer CNN architecture. It begins with a convolutional layer \texttt{Conv2d(3, 64, 5)}, followed by a pooling layer \texttt{MaxPool2d(2, 2)}. The second convolutional layer is \texttt{Conv2d(64, 64, 5)}, which precedes three fully connected layers: \texttt{Linear(64*5*5, 120)}, \texttt{Linear(120, 64)}, and \texttt{Linear(64, 10)}. In contrast, for the CIFAR-100 dataset, we also use a 5-layer CNN, but with some modifications to accommodate the higher complexity of the dataset. The initial layer is \texttt{Conv2d(3, 64, 5)}, followed by pooling and dropout layers: \texttt{MaxPool2d(2, 2)} and \texttt{nn.Dropout(0.6)}. The subsequent convolutional layer is \texttt{Conv2d(64, 128, 5)}. This is succeeded by three fully connected layers: \texttt{Linear(128*5*5, 256)}, \texttt{Linear(256, 128)}, and \texttt{Linear(128, 100)}.


\mypara{Experimental Results.}
In this section, we plot 
\cref{fig:real1}  to \cref{fig:real8} 
to illustrate the detailed convergence behavior of the test accuracy of the trained models reported in \cref{tab:table-cifar} as a function of the training epochs. We augment the test accuracy results by introducing two different metrics.
The first one is \textit{Global NC2}, which is measured by 
\[ \frac{1}{M} \sum_{i \in [M]} \left\| \frac{\Hv^\top_i \Hv_i}{\|\Hv^\top_i \Hv_i\|_F} - \frac{1}{\sqrt{m-1}} \mathbf{u}_i \mathbf{u}_i^\top \odot \left( \Iv_{m} - \frac{1}{m} \mathbf{1}_m \mathbf{1}^\top_{m} \right) \right\|_F. \]
The second one is \textit{Averaged Local NC2}, referred to as 
\[ \frac{1}{M} \sum_{i \in [M]} \left\| \frac{\Hv^\top_i \Hv_i}{\|\Hv^\top_i \Hv_i\|_F} - \frac{1}{\sqrt{m'-1}} \mathbf{u}_i \mathbf{u}_i^\top \odot \left( \Iv_{m} - \frac{1}{m'} \mathbf{1}_m \mathbf{1}^\top_{m} \right) \right\|_F. \] 

These two metrics are inspired by \Cref{obs:global-nc} and \Cref{obs:local-nc}, respectively. \textit{Global NC2} aims to measure the similarity between the learned models and the optimal under-parameterized global model. In contrast, \textit{Averaged Local NC2} assesses the similarity between the learned models and the optimal local models. Note that these two metrics are positively correlated, meaning that when one is small, the other is usually small as well. In some results, such as those shown in \Cref{fig:real1} and \Cref{fig:real3}, the gaps between FedRep* and \algstar in terms of \textit{Averaged Local NC2} are significantly larger than those in terms of \textit{Global NC2}, suggesting that the models learned by \algstar are closer to the global optimizer than those learned by FedRep*.



\begin{figure}[H]
 \begin{subfigure}{0.32\linewidth}
     \includegraphics[width=\textwidth]{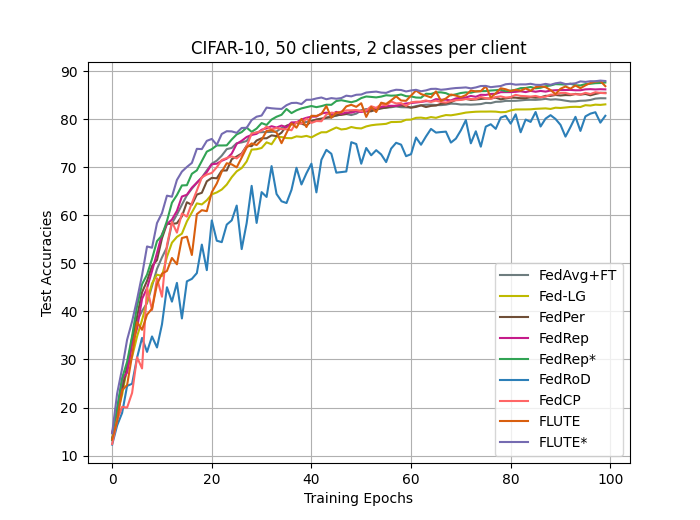}
 \end{subfigure}
\hfill
 \begin{subfigure}{0.32\linewidth}
     \includegraphics[width=\textwidth]{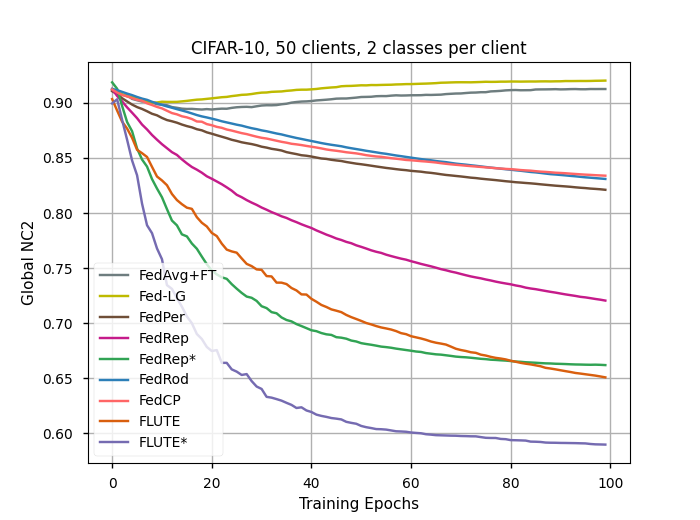}
 \end{subfigure}
 \hfill
  \begin{subfigure}{0.32\linewidth}
     \includegraphics[width=\textwidth]{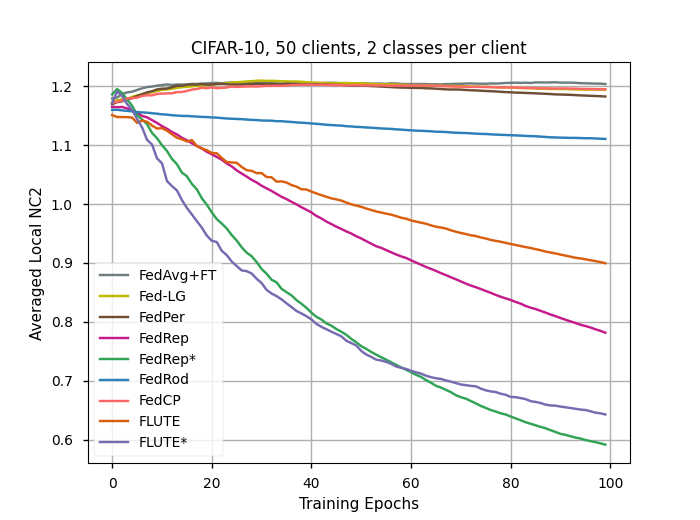}
 \end{subfigure}
 \caption{\small Experimental results for CIFAR10 when $M=50, m'=2$.}
\label{fig:real1}
\end{figure}
\begin{figure}[H]
 \begin{subfigure}{0.32\linewidth}
     \includegraphics[width=\textwidth]{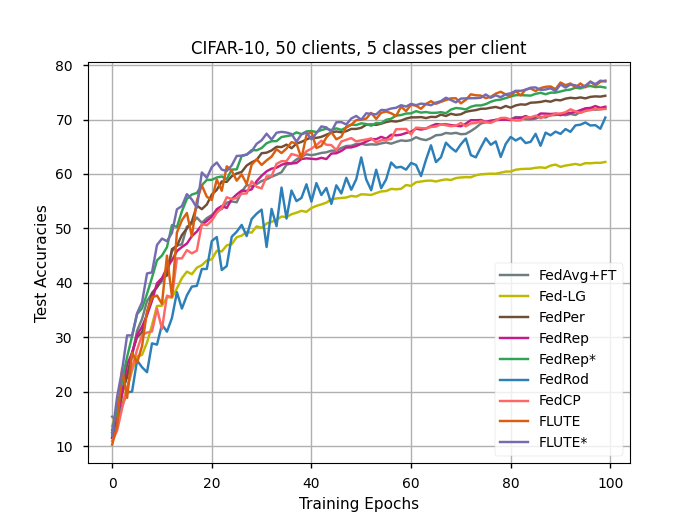}
 \end{subfigure}
\hfill
 \begin{subfigure}{0.32\linewidth}
     \includegraphics[width=\textwidth]{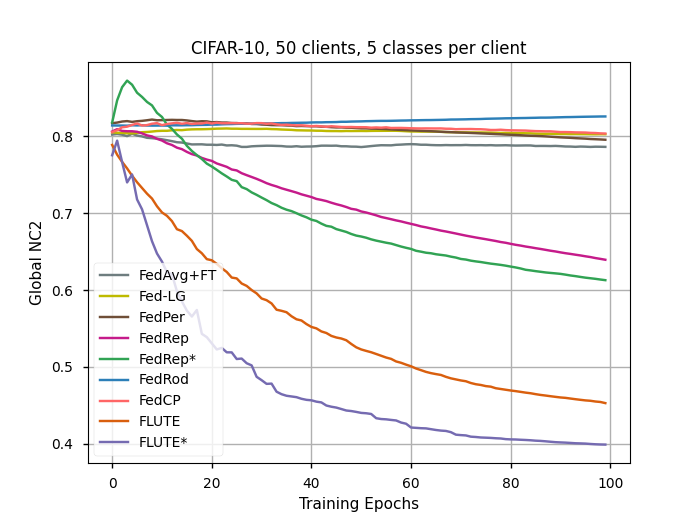}
 \end{subfigure}
 \hfill
  \begin{subfigure}{0.32\linewidth}
     \includegraphics[width=\textwidth]{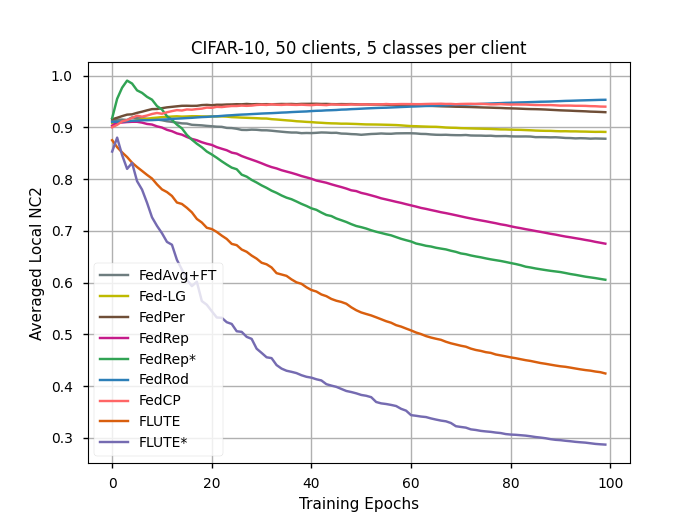}
 \end{subfigure}
 \caption{\small Experimental results for CIFAR10 when $M=50, m'=5$.}
\label{fig:real2}
\end{figure}
\begin{figure}[H]
 \begin{subfigure}{0.32\linewidth}
     \includegraphics[width=\textwidth]{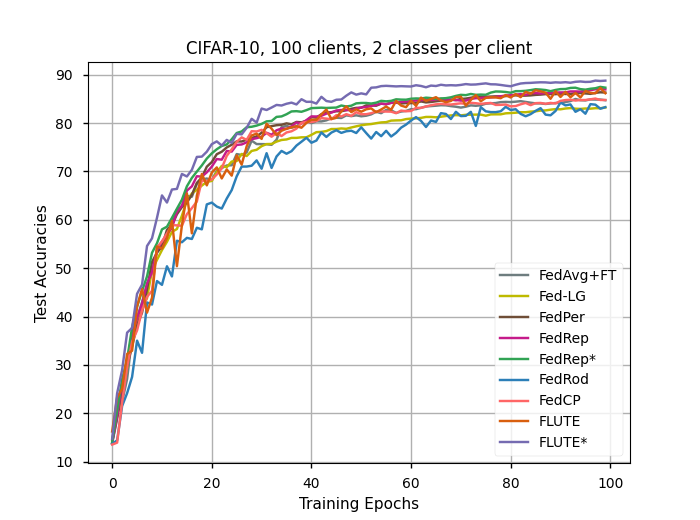}
 \end{subfigure}
\hfill
 \begin{subfigure}{0.32\linewidth}
     \includegraphics[width=\textwidth]{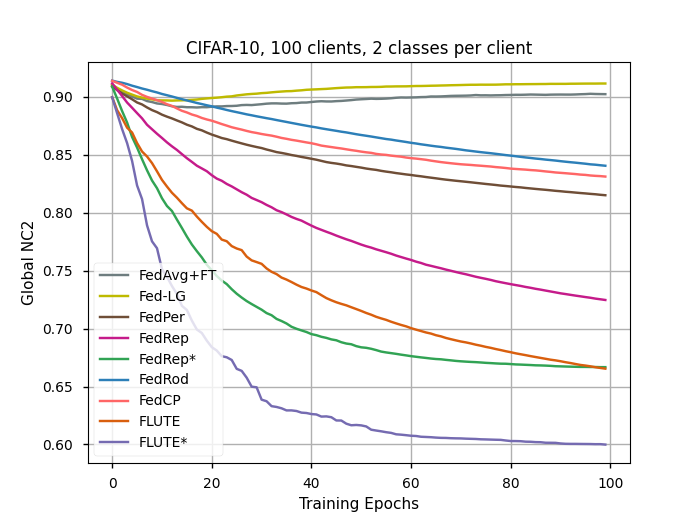}
 \end{subfigure}
 \hfill
  \begin{subfigure}{0.32\linewidth}
     \includegraphics[width=\textwidth]{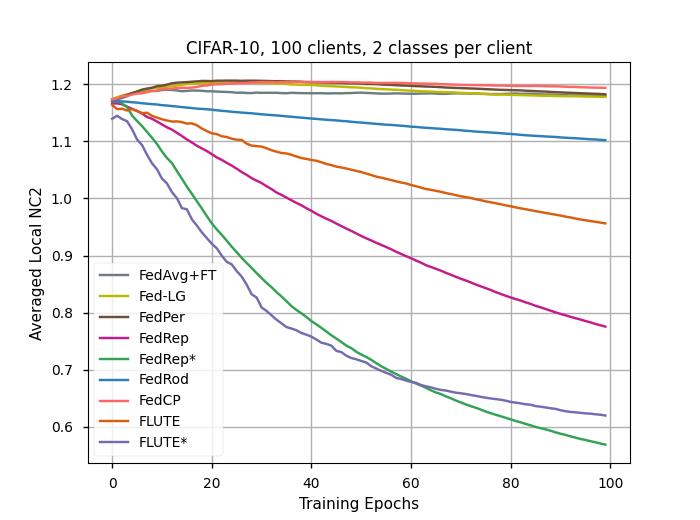}
 \end{subfigure}
 \caption{\small Experimental results for CIFAR10 when $M=100, m'=2$.}
\label{fig:real3}
\end{figure}
\begin{figure}[H]
 \begin{subfigure}{0.32\linewidth}
     \includegraphics[width=\textwidth]{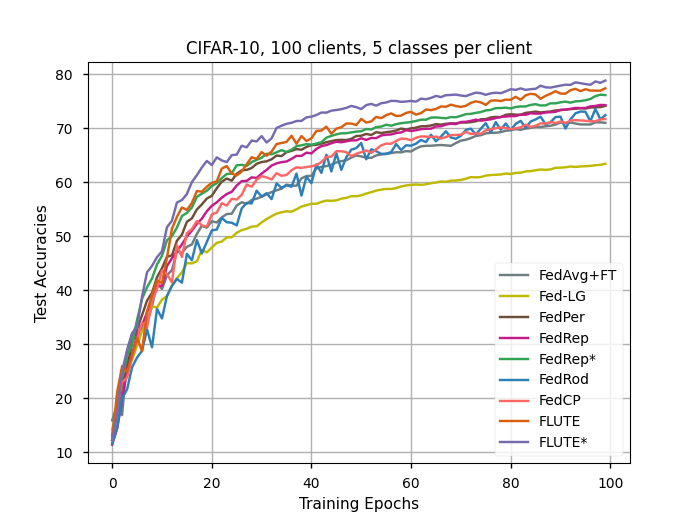}
 \end{subfigure}
\hfill
 \begin{subfigure}{0.32\linewidth}
     \includegraphics[width=\textwidth]{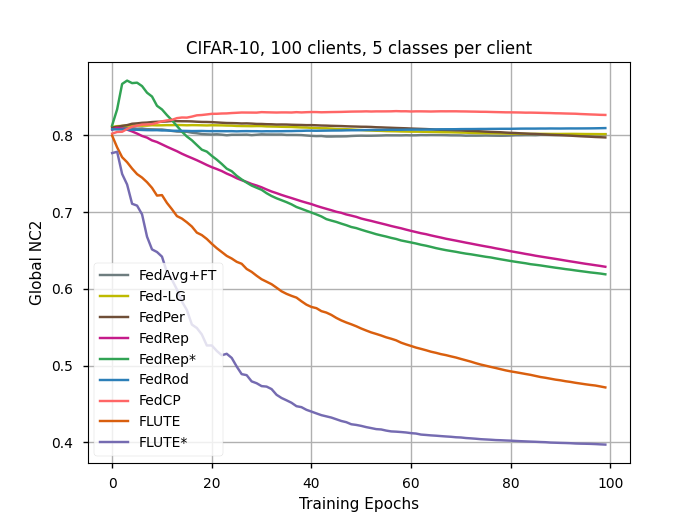}
 \end{subfigure}
 \hfill
  \begin{subfigure}{0.32\linewidth}
     \includegraphics[width=\textwidth]{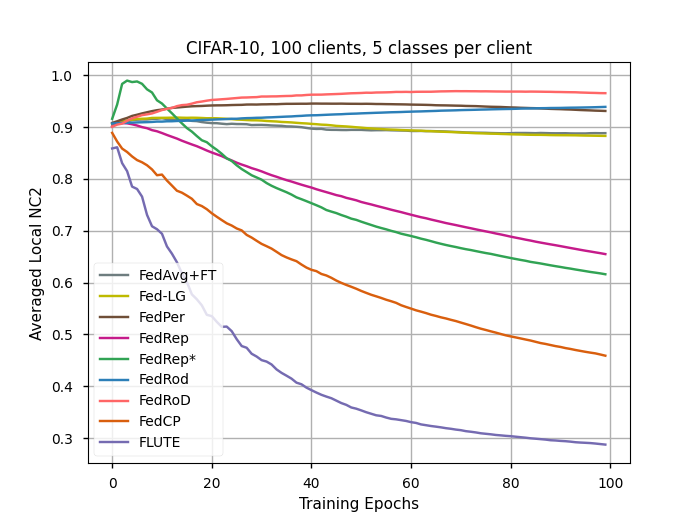}
 \end{subfigure}
 \caption{\small Experimental results for CIFAR10 when $M=50, m'=5$.}
\label{fig:real4}
\end{figure}
\begin{figure}[H]
 \begin{subfigure}{0.32\linewidth}
     \includegraphics[width=\textwidth]{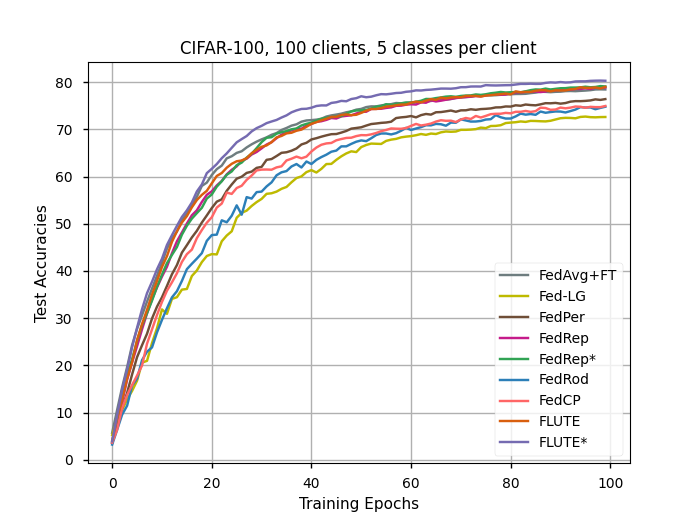}
 \end{subfigure}
\hfill
 \begin{subfigure}{0.32\linewidth}
     \includegraphics[width=\textwidth]{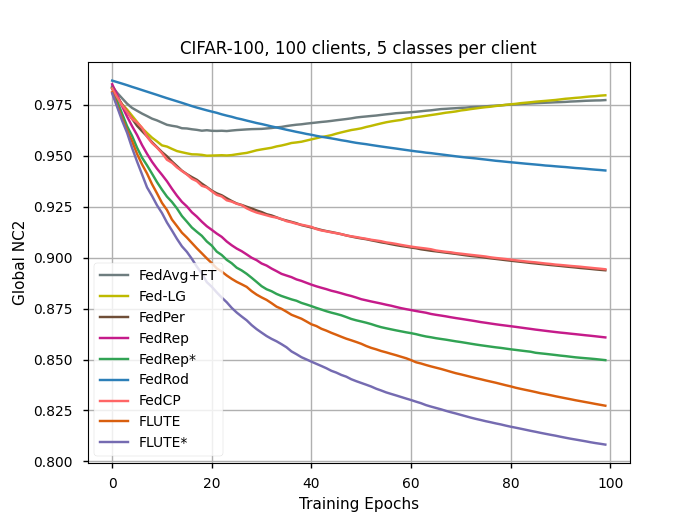}
 \end{subfigure}
 \hfill
 \begin{subfigure}{0.32\linewidth}
     \includegraphics[width=\textwidth]{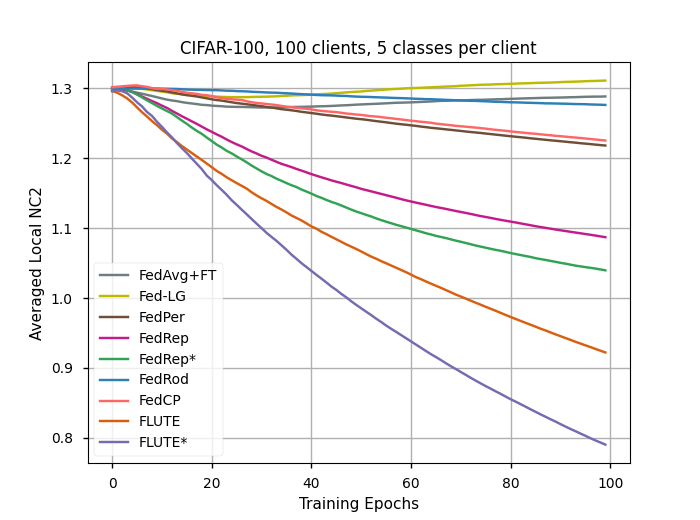}
 \end{subfigure}
 \caption{\small Experimental results for CIFAR100 when $M=100, m'=5$.}
\label{fig:real5}
\end{figure}
\begin{figure}[H]
 \begin{subfigure}{0.32\linewidth}
     \includegraphics[width=\textwidth]{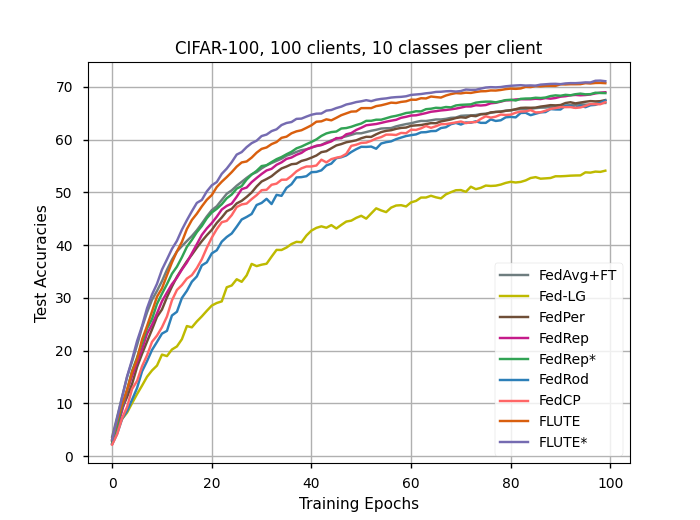}
 \end{subfigure}
\hfill
 \begin{subfigure}{0.32\linewidth}
     \includegraphics[width=\textwidth]{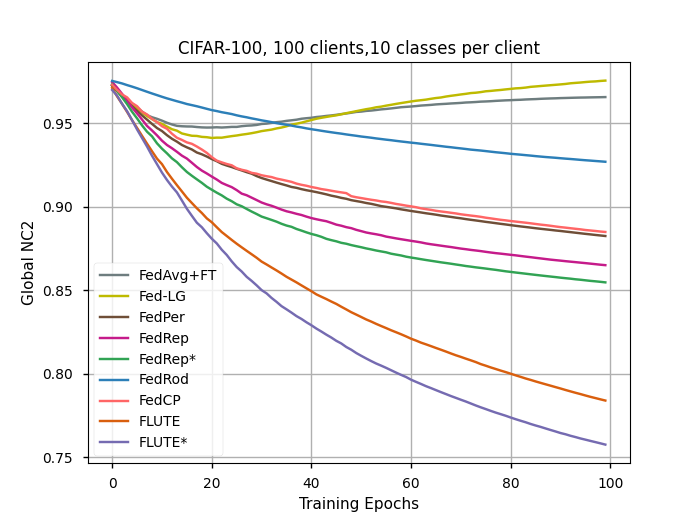}
 \end{subfigure}
 \hfill
 \begin{subfigure}{0.32\linewidth}
     \includegraphics[width=\textwidth]{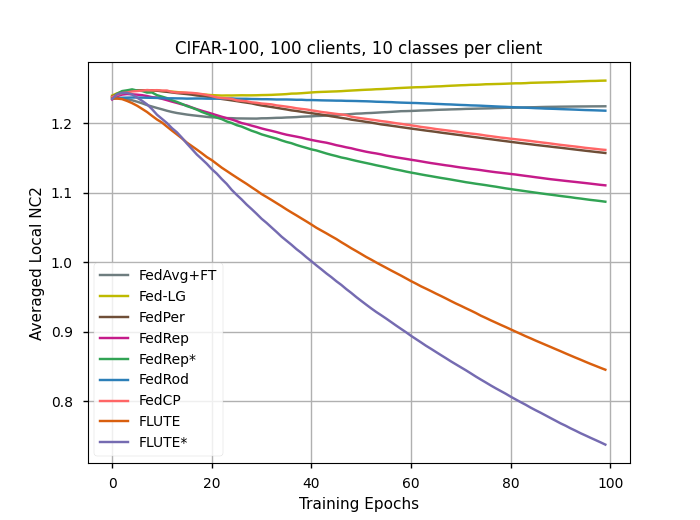}
 \end{subfigure}
 \caption{\small Experimental results for CIFAR100 when $M=100, m'=10$.}
\label{fig:real6}
\end{figure}
\begin{figure}[H]
 \begin{subfigure}{0.32\linewidth}
     \includegraphics[width=\textwidth]{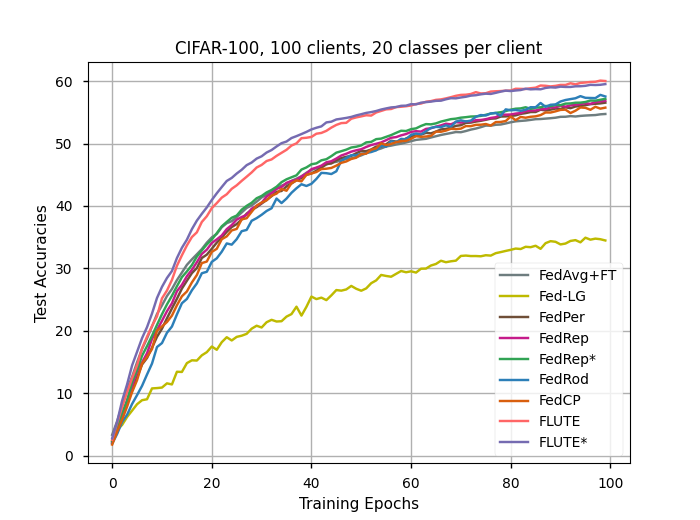}
 \end{subfigure}
\hfill
 \begin{subfigure}{0.32\linewidth}
     \includegraphics[width=\textwidth]{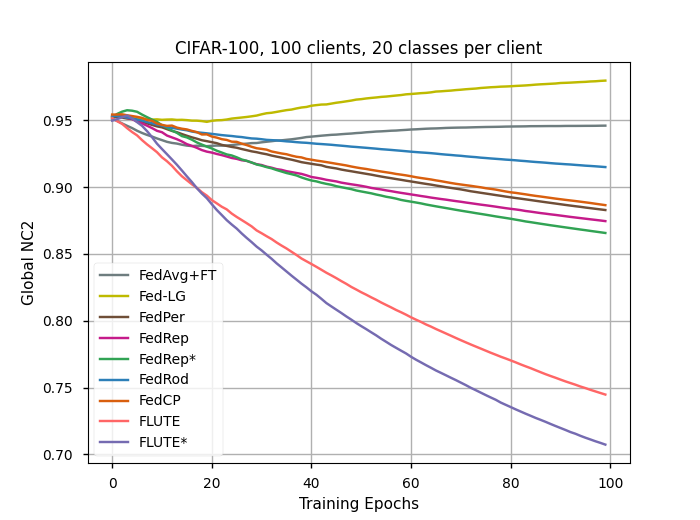}
 \end{subfigure}
 \hfill
 \begin{subfigure}{0.32\linewidth}
     \includegraphics[width=\textwidth]{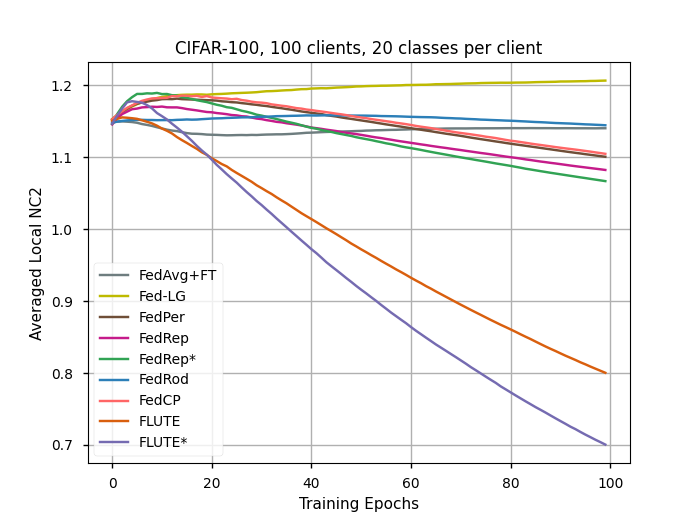}
 \end{subfigure}
 \caption{\small Experimental results for CIFAR100 when $M=100, m'=20$.}
\label{fig:real7}
\end{figure}
\begin{figure}[H]
 \begin{subfigure}{0.32\linewidth}
     \includegraphics[width=\textwidth]{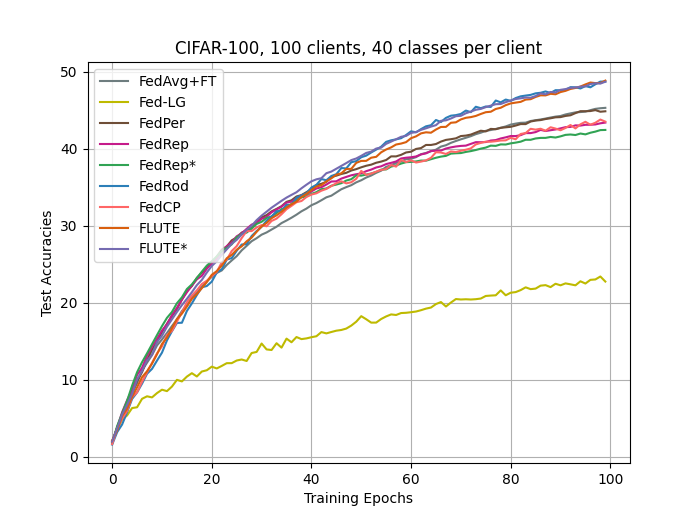}
 \end{subfigure}
\hfill
 \begin{subfigure}{0.32\linewidth}
     \includegraphics[width=\textwidth]{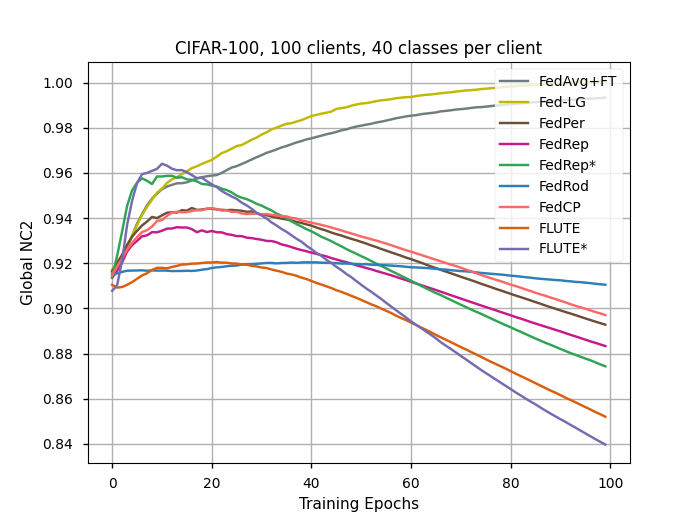}
 \end{subfigure}
 \hfill
 \begin{subfigure}{0.32\linewidth}
     \includegraphics[width=\textwidth]{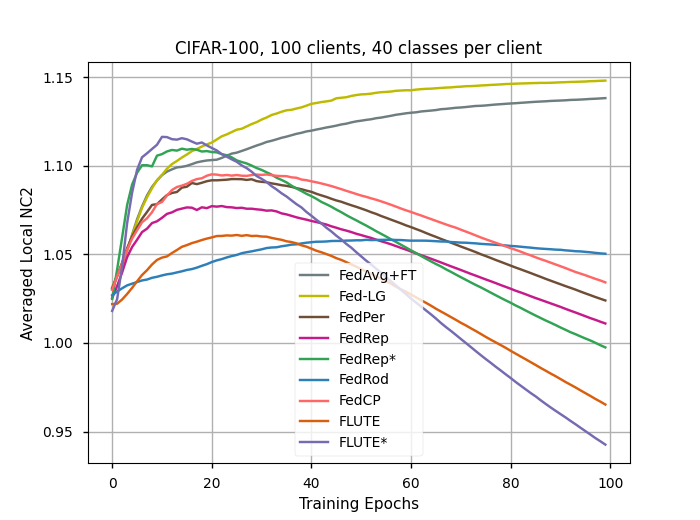}
 \end{subfigure}
 \caption{\small Experimental results for CIFAR100 when $M=100, m'=40$.}
\label{fig:real8}
\end{figure}


\end{document}